\DeclareSymbolFont{extraup}{U}{zavm}{m}{n}
\DeclareMathSymbol{\varheart}{\mathalpha}{extraup}{86}
\DeclareMathSymbol{\vardiamond}{\mathalpha}{extraup}{87}
\definecolor{bgcolor}{rgb}{0.76,0.88,0.50}
\definecolor{bgcolor0}{rgb}{0.93,0.99,1}
\definecolor{bgcolor1}{rgb}{0.8,1,1}
\definecolor{bgcolor2}{rgb}{0.8,1,0.8}
\definecolor{bgcolor3}{rgb}{0.50,0.90,0.50}
\definecolor{mydarkgreen}{RGB}{39,130,67}
\definecolor{mydarkorange}{RGB}{236,147,14}
\definecolor{mydarkred}{RGB}{192,47,25}
\definecolor{ruby}{RGB}{155,17,30}
\definecolor{chili}{RGB}{191,0,0}
\definecolor{sangria}{RGB}{146,0,10}
\definecolor{burgundy}{RGB}{128,0,32} 
\definecolor{darkred}{RGB}{132,0,0} 
\definecolor{cherry}{RGB}{192,0,0} 
\definecolor{grey}{RGB}{80,80,80} 
\definecolor{blue}{RGB}{0,0,255}
\newcommand{\algname}[1]{{\sf #1}}
\newcommand{\algnamebf}[1]{{\sffamily\bfseries #1}}
\newcommand{\norm}[1]{\left\| #1 \right\|}
\newcommand{\inp}[2]{\left\langle#1,#2\right\rangle} 
\newcommand{\abs}[1]{\left| #1 \right|}
\newcommand{\R}{\mathbb{R}} 
\newcommand{\N}{\mathbb{N}} 
\newcommand{\Exp}[1]{{\mathbb{E}}\left[#1\right]}
\newcommand{\ExpSub}[2]{{\mathbb{E}}_{#1}\left[#2\right]}
\newcommand{\cO}{\mathcal{O}}
\newcommand{\eqdef}{:=}
\newcommand{\vast}{\bBigg@{4}}
\def\<{\left\langle}
\def\>{\right\rangle}
\def\[{\left[}
\def\]{\right]}
\def\({\left(}
\def\){\right)}
\newcommand{\flr}[1]{\left\lfloor #1\right\rfloor} 
\newcommand{\ceil}[1]{\left\lceil #1\right\rceil} 
\theoremstyle{plain}
\newtheorem{theorem}{Theorem}[section]
\newtheorem*{theorem*}{Theorem}
\newtheorem{lemma}[theorem]{Lemma}
\newtheorem{corollary}[theorem]{Corollary}
\theoremstyle{definition}
\newtheorem{definition}[theorem]{Definition}
\newtheorem{assumption}[theorem]{Assumption}
\theoremstyle{remark}
\newtheorem{remark}[theorem]{Remark}
\definecolor{mydarkred}{RGB}{192,25,25}
\definecolor{mydarkgreen}{RGB}{36,124,4}
\definecolor{mydarkblue}{RGB}{25,25,192}
\definecolor{mydarkorange}{RGB}{255,153,51}
\title{Birch SGD: A Tree Graph Framework for \\ Local and Asynchronous SGD Methods}
\author{%
Alexander Tyurin \\
AXXX, Moscow, Russia \\
Applied AI Institute, Moscow, Russia
\And
Danil Sivtsov \\
AXXX, Moscow, Russia \\
Applied AI Institute, Moscow, Russia
}
\newtcolorbox{theorembox}{
  colback=gray!20,
  colframe=gray!20,
  boxrule=0.8pt,
  before skip=7pt,
  after skip=7pt,
  boxsep=-1mm,
}
\begin{document}

\maketitle

\begin{abstract}
   We propose a new unifying framework, \algname{Birch SGD}, for analyzing and designing distributed \algname{SGD} methods. The central idea is to represent each method as a weighted directed tree, referred to as a \emph{computation tree}. Leveraging this representation, we introduce a general theoretical result that reduces convergence analysis to studying the geometry of these trees. This perspective yields a purely graph-based interpretation of optimization dynamics, offering a new and intuitive foundation for method development. Using \algname{Birch SGD}, we design eight new methods and analyze them alongside previously known ones, with at least six of the new methods shown to have optimal computational time complexity. Our research leads to two key insights: (i) all methods share the same ``iteration rate'' of $\mathcal{O}\left(\nicefrac{(R + 1) L \Delta}{\varepsilon} + \nicefrac{\sigma^2 L \Delta}{\varepsilon^2}\right)$, where $R$ the maximum ``tree distance'' along the main branch of a tree; and (ii) different methods exhibit different trade-offs---for example, some update iterates more frequently, improving practical performance, while others are more communication-efficient or focus on other aspects. \algname{Birch SGD} serves as a unifying framework for navigating these trade-offs. We believe these results provide a unified foundation for understanding, analyzing, and designing efficient asynchronous and parallel optimization methods.
\end{abstract}  

\section{Introduction}

Optimization is central to machine learning (ML), data science (DS), and federated learning (FL) \citep{konevcny2016federated, bottou2018optimization, kairouz2021advances}. In these domains, stochastic optimization techniques such as stochastic gradient descent (\algname{SGD}) \citep{robbins1951stochastic} and its adaptive variants (\algname{Adam}, \algname{AdamW}, etc) \citep{kingma2014adam, loshchilov2017decoupled} have become the standard approach for tackling large-scale problems \citep{schmidt2021descending}. Due to the rising computational demands of modern functions, the theoretical foundation of distributed algorithms supporting a large number of workers (e.g., CPUs, GPUs, servers) is important \citep{mayer2020scalable, kairouz2021advances, douillard2023diloco}. 

We consider distributed optimization problems with smooth nonconvex optimization functions:
\begin{align}
\label{eq:main_problem}
\textstyle \min\limits_{x \in \R^d} f(x),
\end{align}
In nonconvex settings, the goal is to find an $\varepsilon$--stationary point, meaning we want to find a random vector $\bar{x}$ such that
$\mathbb{E}\left[\|\nabla f(\bar{x})\|^2\right] \leq \varepsilon$ \citep{nemirovskij1983problem,murty1985some}. The function $f:\R^d \rightarrow \R$ satisfies the following standard assumptions:
\begin{assumption}
 \label{ass:lipschitz_constant}
$f$ is differentiable and $L$--smooth: $\norm{\nabla f(x) - \nabla f(y)} \leq L \norm{x - y}$ $\forall x, y \in \R^d.$
\end{assumption}
\begin{assumption}
 \label{ass:lower_bound}
 There exist $f^* \in \R$ such that $f(x) \geq f^*$ for all $x \in \R^d$.
\end{assumption}

We focus on problems where workers are limited to computing stochastic gradients. Each worker has access to unbiased stochastic gradients, denoted by $\nabla f(x;\xi)$, whose variance is bounded by $\sigma^2$. In the context of ML, this implies that all workers can access the same data, which is practical when training large language and computer vision models. In such scenarios, privacy is not a critical concern, and devices can sample data from the Internet or shared datasets \citep{goodfellow2016deep}.

\begin{assumption}
 \label{ass:stochastic_variance_bounded}
 For all $x \in \R^d,$ stochastic gradients $\nabla f(x;\xi)$ are unbiased and $\sigma^2$-variance-bounded, i.e., ${\rm \mathbb{E}}_{\xi}[\nabla f(x;\xi)] = \nabla f(x)$ and 
   ${\rm \mathbb{E}}_{\xi}[\|\nabla f(x;\xi) - \nabla f(x)\|^2] \leq \sigma^2,$ where $\sigma^2 \geq 0.$
\end{assumption}

\subsection{Related work}
\label{sec:related_work}
\textbf{One worker and optimal oracle complexity.} With a single worker, the most standard optimization method is the \algname{Vanilla SGD} algorithm, which updates the iterate as
$w^{k+1} = w^k - \gamma \nabla f(w^k; \eta^k),$ where $\{\eta^k\}$ are i.i.d., $w^0 \in \mathbb{R}^d$ is a starting point, $\gamma$ is a step size, and $\Delta \eqdef f(w^0) - f^*.$ \citet{arjevani2022lower,carmon2020lower} showed that \algname{Vanilla SGD} is \emph{optimal} in terms of oracle complexity, which is given by
$\Theta\left(\nicefrac{L \Delta}{\varepsilon} + \nicefrac{\sigma^2 L \Delta}{\varepsilon^2}\right)$ for finding an $\varepsilon$-stationary point. 

\textbf{Multiple workers and optimal time complexity.} Consider now that we have $n$ workers computing stochastic gradients asynchronously and in parallel. In this setup, there are numerous ways to construct a distributed \algname{SGD} method. The most well-known celebrated and recent approaches include \algname{Synchronized SGD} (\algname{Minibatch SGD}), \algname{Local SGD} \citep{zinkevich2010parallelized,stich2018local}, \algname{Asynchronous SGD} \citep{recht2011hogwild}, \algname{Picky SGD} \citep{cohen2021asynchronous}, \algname{Rennala SGD} \citep{tyurin2023optimal}, and \algname{Ringmaster ASGD} \citep{maranjyan2025ringmaster}. The multi-worker setup is rich and versatile, offering numerous ways to design distributed \algname{SGD} methods.

One may naturally ask which method offers the best theoretical performance. In distributed settings, the standard oracle complexity becomes less informative, as workers compute stochastic gradients in parallel with varying speeds. A more suitable comparison uses the \emph{$h_i$-fixed computation model} \citep{mishchenko2022asynchronous}, where each worker~$i$ needs at most $h_i$ seconds to compute a gradient. In this model, \citet{mishchenko2022asynchronous,koloskova2022sharper} showed that \algname{Asynchronous SGD} outperforms \algname{Synchronized SGD}. Its time complexity is further improved by \algname{Rennala SGD} \citep{tyurin2023optimal} and \algname{Ringmaster ASGD}\footnote{\algname{Asynchronous SGD} with a key modification; see Alg.\ref{alg:ringmasternew}.} \citep{maranjyan2025ringmaster}, both optimal under this and the more general \emph{universal computation model} \citep{tyurin2024tight} (see Section~\ref{sec:related_work_fixed}). However, as we will discuss in more detail later, other factors come into play, such as communication complexity, support for \texttt{AllReduce}, peak bandwidth, and model update frequency.

These developments raise several important questions. \algname{Rennala SGD} and \algname{Ringmaster ASGD} are known to be optimal, yet differ in design and structure, each with distinct advantages and trade-offs. This leads to our central questions: \emph{Are there other optimal methods? Can we develop a unified framework that encompasses all distributed \algname{SGD} methods and offers theoretical guidance? What fundamental properties make these methods optimal? And, given different system constraints, which method should one choose?}



\subsection{Contributions}
\label{sec:contr}

$\spadesuit$ \textbf{New framework: \algnamebf{Birch SGD} (Section~\ref{sec:birch}).} We propose \algname{Birch SGD}, a unifying framework that captures a wide range of distributed \algname{SGD} methods. The key idea is that \algname{SGD} methods can be represented using weighted directed trees, which we refer to as \emph{computation trees} (see Figure~\ref{fig:threeimages}). We develop a new theoretical result, Theorem~\ref{thm:main}, that reduces the analysis of \algname{SGD} methods to analyzing of the structure of these computation trees. The proofs become purely geometric and topological in nature, offering geometric intuition for the design of new methods. Moreover, this geometric viewpoint leads to tighter time complexity guarantees even for \algname{Local SGD} (\algname{FedAvg}) approaches \citep{mcmahan2017communication}, as we illustrate in Section~\ref{sec:compare}.

$\clubsuit$ \textbf{Eight new methods (Table~\ref{tab:methods_summary_check} and Section~\ref{sec:alg}).} Using \algname{Birch SGD}, we identify eight new methods in addition to those already known. For the first time, we prove that at least \textbf{six of these newly discovered methods are computationally optimal}, matching the lower bound~\citep{tyurin2023optimal}. We compare all methods across several dimensions, including computational and communication complexity, \texttt{AllReduce} compatibility, peak bandwidth, and model update frequency. Our improvements: i) our newly developed \algname{Async-Local SGD} and \algname{Async-Batch SGD} provably improve the communication complexity of \algname{Ringmaster ASGD} while preserving asynchronicity; ii) we introduce \algname{Cycle SGD}, which provably reduces peak bandwidth compared to all existing methods; iii) we propose a key modification to the family of local methods and design \algname{Local SGD} and \algname{Dual-Process SGD} that, for the first time in the literature, achieve the optimal time complexities within this family and improve upon the classical approach (see Section~\ref{sec:compare}); iv) for multi-cluster settings, we introduce \algname{Local-Async SGD} and \algname{Nested Local-Async SGD}, incorporating a carefully designed synchronization mechanism that guarantees optimality in computational time complexity; v) we develop a flexible meta-algorithm, \algname{Meta Local SGD}, which supports arbitrary synchronization strategies, while incorporating a ``Hard Sync'' mechanism to guarantee convergence rates and to temper overly chaotic synchronization. As a byproduct, we prove that frequent model updates of fully asynchronous methods can lead to faster convergence and improve optimal \algname{Rennala SGD}.

$\vardiamond$ \textbf{Insights and Guidelines (Section~\ref{sec:conlusion}).} We observe that there is no silver bullet—each method has its own advantages and disadvantages. Some methods update the iterates more frequently, making them more appealing in practice, while others prioritize communication efficiency, support \texttt{AllReduce}, or focus on different aspects. Through our new framework, we uncover insights that provide deeper intuition and a simpler perspective on asynchronous, local, and parallel optimization methods.
\section{\algname{Birch SGD}: A General View of \algname{SGD} Methods}
\label{sec:birch}
We begin our work by observing that various \algname{SGD} methods, including \algname{Vanilla SGD}, \algname{Asynchronous SGD}, \algname{Local SGD}, among others, can be constructed in the manner described in Algorithm~\ref{alg:framework}.
\begin{figure}[t]
   \centering
   \begin{minipage}[t]{.76\textwidth}
     \begin{algorithm}[H]
     \caption{\algname{Birch SGD} framework}
     \label{alg:framework}
     \begin{algorithmic}
         \STATE \textbf{Input:} starting point $w^{0} \in \R^d$, step size $\gamma \geq 0$
         \STATE Initialize the set of computed points: $V = \{w^{0}\}$
         \STATE (and the set of edges $E = \emptyset$)
         \FOR{$k = 0, 1, 2, \dots$}
             \STATE Choose any point $w_{\textnormal{base}} \in V$ from which to compute a new point
             \STATE Choose any point $w_{\textnormal{grad}} \in V$ at which to compute a stochastic gradient
             \STATE Compute the new point\footnotemark: $w^{k + 1} = w_{\textnormal{base}} - \gamma \nabla f(w_{\textnormal{grad}}; \eta), \quad \eta \sim \mathcal{D}_{\xi}$
             \STATE Add $w^{k+1}$ to the set of computed points $V$
             \STATE (and add the edge with weight $(w_{\textnormal{base}}, w^{k + 1}, \nabla f(w_{\textnormal{grad}}; \eta))$ to the set of edges $E$)
         \ENDFOR
     \end{algorithmic}
   \end{algorithm}
   \end{minipage}\hfill
   \begin{minipage}[t]{.23\textwidth}
     \begin{figure}[H]
       \centering
       \includegraphics[width=0.8\linewidth]{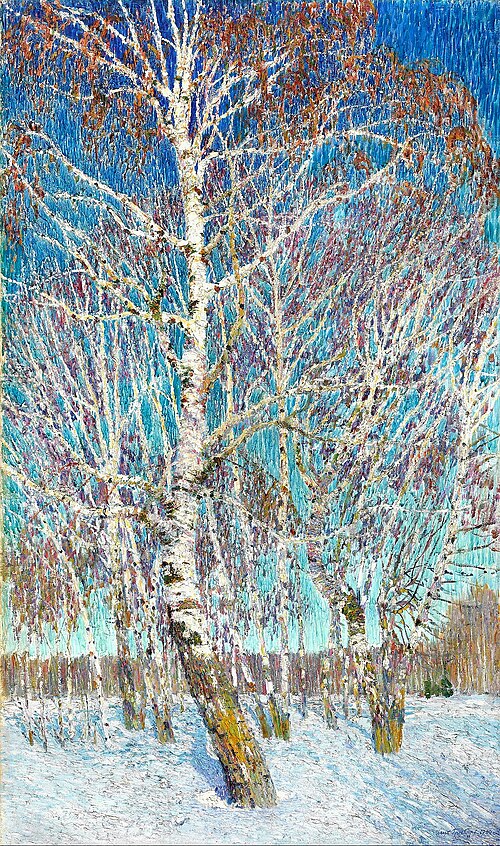}
       \vspace{-0.2cm}
       \caption*{\scriptsize \phantom{aaa}\emph{February Azure}, \\\phantom{aaa}Igor Grabar. 1904.}
     \end{figure}
   \end{minipage}
 \end{figure}

Let us explain it. Initially, any \algname{SGD} method starts at some point $w^{0} \in \R^d$, computes a stochastic gradient at $w^{0}$, and then finds a new point $w^{1} = w^{0} - \gamma \nabla f(w^{0}; \cdot),$ which is added to the set $V$ of computed points. In the next step, there are four options for choosing the subsequent point $w^{2}$: $w^{2} = w^{i} - \gamma \nabla f(w^{j}; \cdot)$ for $i, j \in \{0, 1\}.$ This process continues indefinitely, and the number of possible choices, and hence methods, grows exponentially (see an example in Figure~\ref{fig:threeimages}).

\begin{figure}[t]
 \centering
 \hspace*{\fill}%
 \begin{subfigure}[c]{0.12\textwidth}
   \centering
   \includegraphics[page=4,width=\textwidth]{./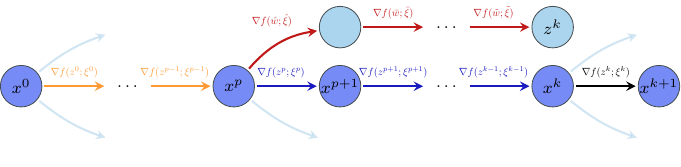}
 \end{subfigure}%
 \hspace*{\fill}%
 \begin{subfigure}[c]{0.12\textwidth}
   \centering
   \includegraphics[page=5,width=\textwidth]{./graphs/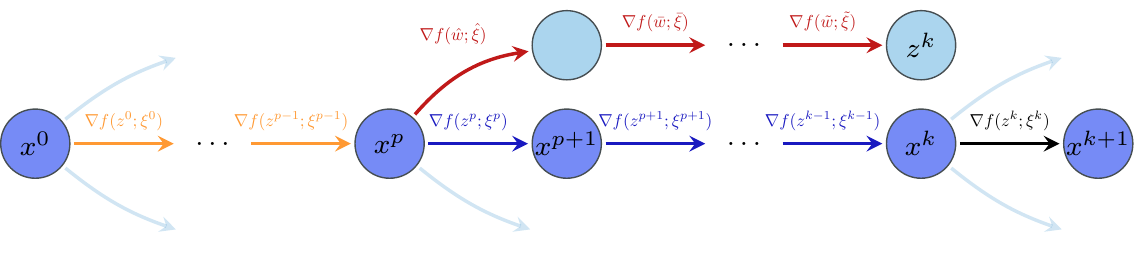}
 \end{subfigure}%
 \hspace*{\fill}%
 \begin{subfigure}[c]{0.16\textwidth}
   \centering
   \includegraphics[page=6,width=\textwidth]{./graphs/graphs_new.pdf}
 \end{subfigure}%
 \hspace*{\fill}%
 \begin{subfigure}[c]{0.16\textwidth}
   \centering
   \includegraphics[page=7,width=\textwidth]{./graphs/graphs_new.pdf}
 \end{subfigure}%
 \hspace*{\fill}%
 \begin{subfigure}[c]{0.05\textwidth}
   \centering
   \Large $\ldots$
 \end{subfigure}%
 \hspace*{\fill}%
 \begin{subfigure}[c]{0.22\textwidth}
   \centering
   \includegraphics[page=8,width=\textwidth]{./graphs/graphs_new.pdf}
 \end{subfigure}%
 \hspace*{\fill}
 \caption{A possible computation tree $G$ for \algname{SGD} method after four steps and beyond.}
 \label{fig:threeimages}
\end{figure}
Note that any instance of Algorithm~\ref{alg:framework}, after any steps, can be represented by a weighted directed tree $G = (V, E),$ called a \emph{computation tree}, where $V$ is the set of computed points and $E$ is the set of edges with weights given by the stochastic gradients used to compute the new points. Our main idea now is to take any computation tree $G$ and analyze its structure to provide convergence guarantees for the corresponding \algname{SGD} method. Intuitively, the structure of the tree, e.g., number of branches, length of branches, the tree distance between $w_{\textnormal{grad}}$ and $w_{\textnormal{base}}$ in Alg.~\ref{alg:framework} when we calculate a new point should be related to the convergence speed of the method.

\textbf{Example.} Consider \algname{Local SGD} from Figure~\ref{fig:local_sgd_graph}. There, we illustrate two global steps of the method with $2$ workers. In the first round, they compute $M_1 = 2$ and $M_2 = 2$ local steps (first figure in Figure~\ref{fig:local_sgd_graph}). During these steps, worker $1$ first calculates $\nabla f(x^0;\eta^{0,0}_{1}),$ finds $z^{0,1}_{1} = x^0 - \gamma \nabla f(x^0;\eta^{0,0}_{1}),$ then calculates $\nabla f(z^{0,1}_{1};\eta^{0,1}_{1})$ and $z^{0,2}_{1} = x^0 - \gamma \nabla f(z^{0,1}_{1};\eta^{0,1}_{1}).$ Similar steps are performed by worker $2.$ Then, via a parameter-server or \texttt{AllReduce}, \algname{LocalSGD} aggregates the stochastic gradients and performs the global step $x^0 - \gamma (\nabla f(x^0;\eta^{0,0}_{1}) + \nabla f(z^{0,1}_{1};\eta^{0,1}_{1}) + \nabla f(x^0;\eta^{0,0}_{2}) + \nabla f(z^{0,1}_{2};\eta^{0,1}_{2}))$ to obtain the new global point $x^4,$ from which the second global steps will start. The step of finding $x^4$ is equivalent to the steps $x^1 = x^0 - \gamma \nabla f(x^0;\eta^{0,0}_{1}),$ $x^2 = x^1 - \gamma \nabla f(z^{0,1}_{1};\eta^{0,1}_{1}),$ $x^3 = x^2 - \gamma \nabla f(z^{0,1}_{1};\eta^{0,1}_{1}),$ and $x^4 = x^3 - \gamma \nabla f(z^{0,1}_{2};\eta^{0,1}_{2}).$ This is how we construct the second figure in Figure~\ref{fig:local_sgd_graph}, which is a geometric representation of the first global step. Then, the workers compute $M_1 = 1$ and $M_2 = 3$ local steps, accordingly, and synchronize again (third figure in Figure~\ref{fig:local_sgd_graph}) to find $x^8,$ from which the third global steps will start.



\subsection{Main theoretical result on convergence rates}
\label{sec:main}
Before we state our main theorem, we need to introduce sequences and definitions that characterize the structure of computation trees $G$.


\begin{definition}[\emph{Main Branch} and \emph{Auxiliary Sequence}]
\label{def:branch}
For a given computation tree $G$, we call a sequence $\{x^k\}_{k \geq 0}$ a \emph{main branch} if it forms a path in $G$ starting at the initial node $w^0 \equiv x^0$. That is, for each $k \geq 0$, the node $x^{k+1}$ is a direct successor of $x^k$ in $G$. By the construction of tree $G,$ if $\{x^k\}_{k \geq 0}$ is a \emph{main branch}, then for each $k \geq 0$ there exists a unique pair $(z^k, \xi^k),$ where $z^k \in V$ and $\xi^k \sim \mathcal{D}_{\xi},$ such that $x^{k+1} = x^k - \gamma \nabla f(z^k; \xi^k).$ The sequence $\{(z^k, \xi^k)\}_{k \geq 0}$, which generates the main branch $\{x^k\}_{k \geq 0}$, is called an \emph{auxiliary sequence}.

\end{definition}

Although there may be several possible choices and any of them can be chosen in general, the selection of the \emph{main branch} is typically unique and straightforward in all reasonable \algname{SGD} methods, as it forms the backbone of the tree\footnote{A fitting analogy is the Git distributed version control system, which also has a central main branch.}. 

\noindent
\begin{minipage}{.45\textwidth}
Let us consider an example. In Figure~\ref{fig:threeimages}, we can take a main branch $\{x^k\}_{k \geq 0}$ as follows:
$x^0 = w^0, x^1 = w^2, x^2 = w^3, x^3 = w^8, x^4 = w^9, x^5 = w^{10}.$ Accordingly, the auxiliary sequence is given by $(z^0,\xi^0) = (w^0,\eta^0),$ $(z^1,\xi^1) = (w^1,\eta^1),$ $(z^2,\xi^2) = (w^2,\eta^2),$ $(z^3,\xi^3) = (w^4,\eta^6),$ $(z^4,\xi^4) = (w^1,\eta^3).$ See Figure~\ref{fig:main_branch}.
\end{minipage}%
\hfill
\begin{minipage}{.55\textwidth}
\centering
\includegraphics[page=9,width=0.8\linewidth]{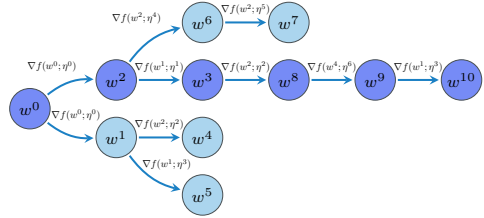}
\captionof{figure}{Visualization.}
\label{fig:main_branch}
\end{minipage}
Intuitively, the convergence rate should depend on the distance between $x^k$ and $z^k$. When these points are close (e.g., $x^k = z^k$), the stochastic gradient is computed near the update point, typically yielding descent on average. In contrast, if they are far apart, the gradient at $z^k$ may poorly approximate the local behavior of $f$ at $x^k$, making the update direction irrelevant. Thus, it is crucial to define a suitable distance metric that is both easy to evaluate for any point pair and directly related to the convergence speed of the \algname{SGD} method. We propose the following:
\begin{definition}
  \label{def:dist}
 For all $y, z \in V,$ the tree distance $\textnormal{dist}(y, z)$ between $y$ and $z$ is the maximum number of edges to the common closest ancestor of $y$ and $z$. 
\end{definition}
As an example, consider Figure~\ref{fig:main_branch}, where $\textnormal{dist}(w^9, w^4) = \max\{4, 2\} = 4,$ because the common ancestor is $w^0,$ the number of edges from $w^9$ to $w^0$ is $4,$ and the number of edges from $w^4$ to $w^0$ is $2.$ It is left to define the \emph{representation} of a point $y \in V.$
\begin{definition}
 For all $y \in V,$ the representation $\textnormal{repr}(y)$ is the multiset of stochastic gradients applied to $w^0$ to get $y.$ In other words, there exist $\{(m^{1}, \kappa^{1}), \dots, (m^{p}, \kappa^{p})\} =: \textnormal{repr}(y)$ for some $p \geq 0$ such that $y = w^0 - \gamma \sum_{j=1}^{p} \nabla f(m^{j}, \kappa^{j}).$
\end{definition}
We define the representation of points to understand how all points are related. An important relation that we need is that $\textnormal{repr}(x) \subseteq \textnormal{repr}(y),$ which essentially means that all stochastic gradients used to compute $x$ are also used to compute $y.$ For instance, in Figure~\ref{fig:main_branch}, $\textnormal{repr}(w^9) = \{(w^0, \eta^0), (w^1, \eta^1), (w^2, \eta^2), (w^4, \eta^6)\}$ and $\textnormal{repr}(w^4) = \{(w^0, \eta^0), (w^2, \eta^2)\},$ which allows to track the path from from the starting point $w^0$ to $w^9$ and $w^4,$ and show that $\textnormal{repr}(w^4) \subseteq \textnormal{repr}(w^9).$ 
\begin{theorembox}
\begin{restatable}[Main Theorem]{theorem}{MAINTHEOREM}
 \label{thm:main}
 Let Assumptions~\ref{ass:lipschitz_constant}, \ref{ass:lower_bound}, and \ref{ass:stochastic_variance_bounded} hold. Consider any \algname{SGD} method represented by \emph{computation tree} $G = (V, E)$. Let $\{x^k\}_{k \geq 0}$ be a \emph{main branch} of $G$ and $\{(z^k, \xi^k)\}_{k \geq 0}$ be the corresponding \emph{auxiliary sequence} (see Def.~\ref{def:branch}) that satisfy the following conditions: \\
 \textnormal{\bf Condition 1:} For all $k \geq 0,$ $\xi^k$ is statistically independent of 
 $\{(x^{i+1}, z^{i+1}, \xi^i)\}_{i=0}^{k-1}.$ \\
 \textnormal{\bf Condition 2:} The representation of $z^k$ is contained within that of $x^k$, i.e.,
     $
     \textstyle \textnormal{repr}(z^k) \subseteq \textnormal{repr}(x^k)
     $
     for all $k \geq 0.$ Equivalently, all stochastic gradients used in the computation of $z^k$ are also utilized in calculating $x^k$. \\
 \textnormal{\bf Condition 3:} There exists a constant $R \in [0, \infty]$ such that
     $
     \textstyle \textnormal{dist}(x^k, z^k) \le R
     $
     for all $k \geq 0.$ 
 Then $\frac{1}{K}\sum_{k=0}^{K-1}\Exp{\|\nabla f(x^k)\|^2} \le \varepsilon$ for all 
 $K \geq \frac{4 (R + 1) L \Delta}{\varepsilon} + \frac{8 \sigma^2 L \Delta}{\varepsilon^2}$
 with step size $\gamma = \min\{\frac{1}{2 L}, \frac{1}{2 R L}, \frac{\varepsilon}{4 \sigma^2 L}\},$ where $\Delta = f(x^0) - f^*.$
\end{restatable}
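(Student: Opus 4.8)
The plan is to run a smooth‑nonconvex descent argument along the main branch $\{x^k\}$, treating the surrogate direction $\nabla f(z^k;\xi^k)$ as a delayed but unbiased stand‑in for $\nabla f(x^k)$. First I would fix the filtration $\mathcal{H}_k := \sigma(\{(x^{i+1},z^{i+1},\xi^i)\}_{i=0}^{k-1})$, under which $x^k$ and $z^k$ are measurable; Condition~1 is exactly the statement that $\xi^k$ is independent of $\mathcal{H}_k$, so that $\Exp{\nabla f(z^k;\xi^k)\mid\mathcal{H}_k}=\nabla f(z^k)$ and, by Assumption~\ref{ass:stochastic_variance_bounded}, $\Exp{\sqnorm{\nabla f(z^k;\xi^k)}\mid\mathcal{H}_k}\le\sqnorm{\nabla f(z^k)}+\sigma^2$. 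Applying $L$‑smoothness to $x^{k+1}=x^k-\gamma\nabla f(z^k;\xi^k)$, taking conditional expectation, and rewriting the cross term with the polarization identity $\inp{\nabla f(x^k)}{\nabla f(z^k)}=\tfrac12\sqnorm{\nabla f(x^k)}+\tfrac12\sqnorm{\nabla f(z^k)}-\tfrac12\sqnorm{\nabla f(x^k)-\nabla f(z^k)}$ yields a one‑step inequality of the form $\Exp{f(x^{k+1})}\le\Exp{f(x^k)}-\tfrac{\gamma}{2}\Exp{\sqnorm{\nabla f(x^k)}}-\tfrac{\gamma}{2}(1-L\gamma)\Exp{\sqnorm{\nabla f(z^k)}}+\tfrac{\gamma}{2}\Exp{\sqnorm{\nabla f(x^k)-\nabla f(z^k)}}+\tfrac{L\gamma^2\sigma^2}{2}$. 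The whole game is to pay for the mismatch term $\sqnorm{\nabla f(x^k)-\nabla f(z^k)}$ using the reserved negative term $-\tfrac{\gamma}{2}(1-L\gamma)\sqnorm{\nabla f(z^k)}$ plus a small noise budget.

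Next I would convert the mismatch into a sum of past update directions. By $L$‑smoothness $\sqnorm{\nabla f(x^k)-\nabla f(z^k)}\le L^2\sqnorm{x^k-z^k}$, and since along the main branch $\textnormal{repr}(x^k)=\{(z^i,\xi^i)\}_{i<k}$, Condition~2 ($\textnormal{repr}(z^k)\subseteq\textnormal{repr}(x^k)$) lets me cancel the shared gradients and write $x^k-z^k=-\gamma\sum_{i\in I_k}\nabla f(z^i;\xi^i)$, where $I_k\subseteq\{0,\dots,k-1\}$ is the index set of $\textnormal{repr}(x^k)\setminus\textnormal{repr}(z^k)$. The geometric heart of the proof is the following claim about $I_k$: the closest common ancestor $c^k$ of $x^k$ and $z^k$ is an ancestor of the main‑branch node $x^k$ and therefore lies on the main branch itself, say $c^k=x^{k-a_k}$, with $a_k$ the $x$‑side edge count of $\textnormal{dist}(x^k,z^k)$ and hence $a_k\le R$ by Condition~3. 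Consequently every index in $I_k$ is one of the last $a_k$ main‑branch steps, i.e. $I_k\subseteq\{k-R,\dots,k-1\}$. This single fact delivers both bounds I need downstream: $\abs{I_k}\le R$, and---reading the same inclusion per index---each step $i$ lies in at most $R$ of the sets $I_k$, namely only for $k\in\{i+1,\dots,i+R\}$.

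With this structure in hand I would split each $\nabla f(z^i;\xi^i)=\nabla f(z^i)+n^i$ into mean and noise $n^i$. For the deterministic part, Cauchy--Schwarz over $\abs{I_k}\le R$ terms followed by swapping the order of summation (each $i$ appearing at most $R$ times) gives $\sum_k\Exp{\sqnorm{\sum_{i\in I_k}\nabla f(z^i)}}\lesssim R^2\sum_k\Exp{\sqnorm{\nabla f(z^k)}}$; crucially, after the $\tfrac{\gamma}{2}L^2\gamma^2$ prefactor this is dominated by the reserved term precisely because $\gamma\le\min\{\tfrac1{2L},\tfrac1{2RL}\}$ forces $L^2\gamma^2R^2\le\tfrac14\le1-L\gamma$, so it is absorbed with room to spare. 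For the stochastic part I would use that $\{n^i\}$ is a martingale‑difference sequence with respect to $\{\mathcal{H}_i\}$ (again by Condition~1) to obtain the orthogonality bound $\Exp{\sqnorm{\sum_{i\in I_k}n^i}}\le\sum_{i\in I_k}\Exp{\sqnorm{n^i}}\le R\sigma^2$, which is only \emph{linear} in $R$; summed over $k$ and scaled by $\gamma^3L^2$ this contributes a term of order $(L\gamma R)(L\gamma)\sigma^2$. Telescoping the one‑step inequality from $k=0$ to $K-1$, dropping the now‑nonpositive $\sum_k\Exp{\sqnorm{\nabla f(z^k)}}$ contribution, and dividing by $\gamma K/2$ then leaves $\tfrac1K\sum_k\Exp{\sqnorm{\nabla f(x^k)}}\le\tfrac{2\Delta}{\gamma K}+O(L\gamma\sigma^2)$; choosing $\gamma=\min\{\tfrac1{2L},\tfrac1{2RL},\tfrac{\varepsilon}{4\sigma^2L}\}$ makes the noise term at most $\varepsilon/2$, and requiring $K\ge\tfrac{4(R+1)L\Delta}{\varepsilon}+\tfrac{8\sigma^2L\Delta}{\varepsilon^2}$ (so that $\tfrac1\gamma=\max\{2L,2RL,\tfrac{4\sigma^2L}{\varepsilon}\}$ is absorbed) makes the optimization term at most $\varepsilon/2$.

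I expect the genuine difficulty to be concentrated in the stochastic term of the third paragraph: controlling $\Exp{\sqnorm{\sum_{i\in I_k}n^i}}$ so that $\sigma^2$ enters only \emph{linearly} in $R$. The naive Cauchy--Schwarz bound used for the deterministic part loses a factor $R$ here and would leave an uncontrolled $(L\gamma R)^2\sigma^2$ term that the step‑size choice cannot kill; avoiding this requires genuine orthogonality across the length‑$R$ window of recent steps. The subtle point is that the disagreement set $I_k$ is itself random and may be adapted to the past noise (Condition~1 permits $z^k$, and hence $I_k$, to depend on $\xi^0,\dots,\xi^{k-1}$), so the indicators $\mathbf 1[i\in I_k]$ are not predictable and the cross terms $\Exp{\inp{n^i}{n^j}\mathbf 1[i,j\in I_k]}$ do not vanish term‑by‑term; handling them cleanly---via conditioning on the appropriate $\mathcal{H}_j$ and a global summation that exploits the $n_i\le R$ multiplicity bound---is the crux. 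Everything else (the combinatorial claim and the two step‑size absorptions) is essentially bookkeeping, including the boundary cases $R=0$, where $z^k\equiv x^k$ and the argument collapses to the standard \algname{Vanilla SGD} analysis, and $R=\infty$, where the statement is vacuous; only tracking the constants so that each of the two budgets comes out to exactly $\varepsilon/2$ requires mild care.
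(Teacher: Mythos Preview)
Your proposal is correct and follows essentially the same route as the paper: the descent step, polarization identity, the reduction of $\sqnorm{x^k-z^k}$ via Conditions~2--3 to a sum over a window $I_k\subseteq\{k-R,\dots,k-1\}$ of past update directions, the mean/noise split with Jensen on the mean part and martingale orthogonality on the noise part, and the two step-size absorptions are all exactly what the paper does (with $I_k$ called $\bar S^k$ there). The one place you diverge is in flagging the randomness of $I_k$ as the crux; the paper simply invokes Condition~1 to get $\Exp{\sqnorm{\sum_{j\in\bar S^k}(\nabla f(z^j;\xi^j)-\nabla f(z^j))}}\le|\bar S^k|\sigma^2$ without further comment---implicitly treating the tree structure (and hence $\bar S^k$) as determined by scheduling rather than by the gradient noise---so your concern is a legitimate refinement but not an obstacle in the paper's intended setting.
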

\end{theorembox}
Assumptions~\ref{ass:lipschitz_constant}, \ref{ass:lower_bound}, and \ref{ass:stochastic_variance_bounded} are well-known and standard in the analysis of stochastic optimization methods \citep{lan2020first,arjevani2022lower}. Let us explain the conditions of the theorem.

\textbf{Condition 1.} The first condition condition requires that $\xi^k$ is independent of $\{(x^{i+1}, z^{i+1}, \xi^i)\}_{i=0}^{k-1},$ which is a weak assumption. In \algname{Vanilla SGD}, where $x^{k+1} = x^k - \gamma \nabla f(x^k;\xi^k)$, it is standard to assume that each $\xi^k$ is an independent sample. Our condition generalizes this to other \algname{SGD} variants. It guarantees that the stochastic gradient $\nabla f(\cdot;\xi^k)$ is not used in computing $x^k$ or $z^k$. Notably, this remains true even in methods like \algname{Local SGD}, where gradients may be reused.

\textbf{Condition 2.} The second condition is also weak in any \emph{reasonable and effective} \algname{SGD} method. 
\begin{figure}[t]
 \centering
 \includegraphics[page=1,width=0.9\textwidth]{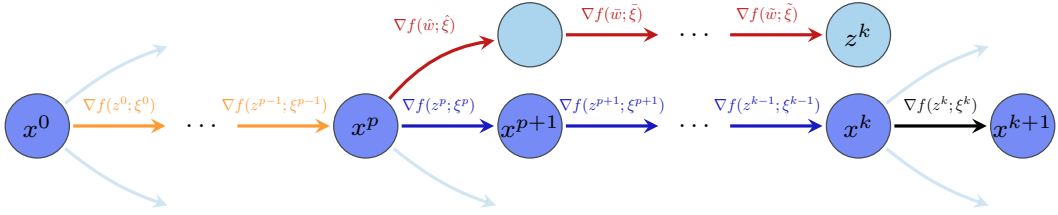}
 \caption{A general representation of the step $x^{k+1} = x^k - \gamma \nabla f(z^k; \xi^k)$ that shows how $x^k$ and $z^k$ are graph-geometrically related.}
 \label{fig:fork}
\end{figure}
Figure~\ref{fig:fork} illustrates that there exists $p \geq 0$ such that \\
 $\textstyle z^{k} = x^0 - {\color{mydarkorange}\gamma \sum\limits_{i=0}^{p - 1} \nabla f(z^i; \xi^i)} - {\color{mydarkred} \gamma \sum\limits_{(w,\xi) \in S^k} \nabla f(w; \xi)},
 x^{k} = x^0 - {\color{mydarkorange}\gamma \sum\limits_{i=0}^{p - 1} \nabla f(z^i; \xi^i)} - {\color{mydarkblue} \gamma \sum\limits_{i=p}^{k-1} \nabla f(z^i; \xi^i)},$
where $S^k$ is the set of points and random variables used to compute $z^k$ starting from $x^{p}$.

Computing each stochastic gradient is time-consuming, so it is desirable to utilize as many computed gradients as possible, including $\{\nabla f(w; \xi)\}_{(w,\xi) \in S^k}$. Once $\nabla f(z^k; \xi^k)$ has been used to compute $x^{k+1}$, the first condition prevents further use of $\{\nabla f(w; \xi)\}_{(w, \xi) \in S^k}$ in subsequent iterations because $z^k$ depends on $\xi$ for all $(w, \xi) \in S^k.$ Thus, it is reasonable to assume that if an \algname{SGD} method employs the stochastic gradient $\nabla f(z^k; \xi^k)$ to compute $x^{k+1}$, then it has already used the gradients $\{\nabla f(w; \xi)\}_{(w, \xi) \in S^k}$ in previous iterations to fully leverage all available information. In other words, all stochastic gradients used in the computation of $z^k$ are also utilized in calculating $x^k$.
This is equivalent to the second condition $\textnormal{repr}(z^k) \subseteq \textnormal{repr}(x^k).$

\textbf{Condition 3.} This condition is arguably the most important in Theorem~\ref{thm:main} because it determines the \emph{iteration rate} of the main branch $\{x^k\}_{k \geq 0}$. In fact, \emph{iteration rate}
$\cO\left(\nicefrac{(R + 1) L \Delta}{\varepsilon} + \nicefrac{\sigma^2 L \Delta}{\varepsilon^2}\right)$
depends on $R \eqdef \sup_{k \geq 0} \textnormal{dist}(x^k, z^k).$

\algnamebf{Vanilla SGD} (Section~\ref{sec:vanilla_sgd}).
For instance, consider the simplest method, the classical stochastic gradient descent (\algname{Vanilla SGD}) method: $w^{k+1} = w^{k} - \gamma \nabla f(w^k;\eta^k),$
where $w^0$ is a starting point and are $\{\eta^k\}$ are i.i.d. random variables. Taking $x^k = z^k = w^k$ and $\xi^k = \eta^k$ for all $k \geq 0.$ Clearly, all conditions of Theorem~\ref{thm:main} are satisfied: $\xi^k$ is independent of $\{(x^{i+1}, z^{i+1}, \xi^i)\}_{i=0}^{k-1},$ $\textnormal{repr}(x^k) = \textnormal{repr}(z^k)$ for all $k \geq 0,$ and $R = 0.$ We get the \emph{iteration rate} $\cO\left(\nicefrac{L \Delta}{\varepsilon} + \nicefrac{\sigma^2 L \Delta}{\varepsilon^2}\right).$ The corresponding tree is in Figure~\ref{fig:sgd_graph}.

Conversely, if an \algname{SGD} method is overly non-conservative, leading to a large tree distance $R$ between $x^k$ and $z^k$, the \emph{iteration rate} correspondingly increases. The further the maximum tree distance $R$ between $x^k$ and $z^k,$ the more iterations are required to achieve the desired accuracy $\varepsilon.$

\textbf{Proof novelties.} In Section~\ref{sec:novel}, we outline the key novelties, challenges, and the intuition guiding our choice of conditions. Although our proof in Section~\ref{sec:proof} is compact—which we view as a strength rather than a limitation—it unifies a broad class of methods and provides new insights. Notably, right at the beginning, we introduce a distinct approach to handling the staleness term $\|x^k - z^k\|,$ which naturally arises from the update $x^{k+1} = x^k - \gamma \nabla f(z^k; \xi^k)$ in asynchronous and local methods. This treatment fundamentally differs from prior work, as it analyzes staleness through geometric graph reasoning. Moreover, using our framework, we later present our version of \algname{Local SGD}, which yields tighter guarantees compared to the classical \algname{Local SGD} (see Sections~\ref{sec:alg} and \ref{sec:compare}), further validating both our framework and proof technique.
\section{Existing and New Algorithms: Summary and Comparison}
\label{sec:alg}
In this section, we consider examples of distributed methods. 
We will show that all of them can be represented by computation trees and analyzed using Theorem~\ref{thm:main}. The detailed analysis of each method is provided in Section~\ref{sec:algorithms}. 

\algnamebf{Rennala SGD} (Section~\ref{sec:rennala_sgd}). 
Consider \algname{Rennala SGD}, which can be written as
\begin{align}
 \textstyle w^{k+1} = w^{k} - \gamma \sum\limits_{i=1}^{B}  \nabla f(w^k;\eta^{k,i}),
 \label{eq:BxYrFsyLoVzhayw}
\end{align}
where $n$ workers collaboratively calculate the batch of size $B$ (see Alg.~\ref{alg:rennala}).
This method produces a computation tree constructed as follows:
 $x^{1}   = x^{0} - \gamma \nabla f(x^0;\xi^{0}), \dots, x^{B}   = x^{B-1} - \gamma \nabla f(x^0;\xi^{B-1}),
 x^{B+1} = x^{B} - \gamma \nabla f(x^{B};\xi^{B}), \dots, x^{2 B} = x^{2 B-1} - \gamma \nabla f(x^B;\xi^{2 B - 1}), \dots, $
where $B$ is a batch size (see Figure~\ref{fig:minibatch_sgd_graph}) and $\{\xi^k\}$ are i.i.d.~from $\mathcal{D}_{\xi}.$ Notice that the computation tree is equivalent to \eqref{eq:BxYrFsyLoVzhayw} because $x^{B} = w^1, x^{2B} = w^2,$ etc. Here, all conditions of Theorem~\ref{thm:main} are satisfied for the main branch $\{x^{k}\}$ with the auxiliary sequence $\{(z^k, \xi^k)\}$ such that $z^0 = \dots = z^{B - 1} = x^0,$ $z^B = \dots = z^{2 B - 1} = x^B,$ etc, and $\xi^0 = \eta^{0,0}, \dots, \xi^{B-1} = \eta^{0,B - 1}, \xi^{B} = \eta^{1,0}, $ etc. However, unlike \algname{Vanilla SGD}, $R = B - 1$ because $\textnormal{dist}(x^{0}, z^{0}) = 0, \textnormal{dist}(x^{1}, z^{1}) = 1, \dots, \textnormal{dist}(x^{B - 1}, z^{B - 1}) = B - 1, \textnormal{dist}(x^{B}, z^{B}) = 0,$ etc. Thus, the \emph{iteration rate} is $\cO\left(\nicefrac{B L \Delta}{\varepsilon} + \nicefrac{\sigma^2 L \Delta}{\varepsilon^2}\right).$

\begin{table}[t]
 \caption{\textbf{Summary of distributed optimization methods from Sections~\ref{sec:alg} and \ref{sec:algorithms}.} In this table, we compare methods across different metrics. A {\color{mydarkgreen} \ding{51}} indicates a favorable property in the corresponding metric. As can be seen, each method has its own advantages and disadvantages. Therefore, for any practical setup, one should choose the most suitable method based on the specific requirements. For all methods, we use the parameters from the theorems of Section~\ref{sec:proof_compt} when deriving the metrics.}
 \label{tab:methods_summary_check}
 \centering 
 \scriptsize
 \begin{adjustbox}{width=1.0\columnwidth,center}
 \begin{threeparttable}
 \begin{tabular}[t]{ccccccc}
   \toprule
   \textbf{Method} (Sec.~\ref{sec:algorithms}) & \makecell{\textbf{Optimal} \textbf{Computational} \\ \textbf{Complexity} (Sec.~\ref{sec:proof_compt})} 
   & \makecell{\textbf{Communication Com-}\textsuperscript{{\color{blue}(g)}} \\ \textbf{plexity with Equal Times}} 
   & \makecell{\textbf{Optimal} \textbf{Total} \\ \textbf{Complexity} (Sec.~\ref{sec:proof_communication_general})} 
   & \textbf{AllReduce}\textsuperscript{{\color{blue}(a)}} 
   & \makecell{\phantom{\textsuperscript{{\color{blue}(d)}}}\textbf{Update}\textsuperscript{{\color{blue}(d)}} \\ \textbf{Frequency}} & \makecell{\phantom{\textsuperscript{{\color{blue}(e)}}}\textbf{Peak}\textsuperscript{{\color{blue}(e)}} \\ \textbf{Bandwidth}} \\ \midrule 
   \makecell{\algname{Rennala SGD} (Alg.~\ref{alg:rennala}) \\ \citep{tyurin2023optimal}} & 
      {\color{mydarkgreen} \ding{51}}
     & $\tau \frac{L \Delta}{\varepsilon}$ {\color{mydarkgreen} \ding{51}}
     & \ding{55}
     & {\color{mydarkgreen} \ding{51}}
     & $\uparrow$
     & $n$ \\
   \midrule
   \makecell{\algname{Ringmaster ASGD} (Alg.~\ref{alg:ringmasternew}) \\ \citep{maranjyan2025ringmaster}} & 
      {\color{mydarkgreen} \ding{51}}
     & $\tau \left({\color{red}\frac{\sigma^2 L \Delta}{n \varepsilon^2}} \vee \frac{L \Delta}{\varepsilon}\right)$
     & \ding{55}
     & \ding{55}
     & {\phantom{\ding{51}}} $\uparrow\uparrow\uparrow$ {\color{mydarkgreen} \ding{51}}
     & $n$ \\
   \midrule
   \makecell{\algname{Local SGD} (Alg.~\ref{alg:local_sgd})
   \textbf{(new)}\textsuperscript{{\color{blue}(f)}}} & 
     {\color{mydarkgreen} \ding{51}}
     & $\tau \frac{L \Delta}{\varepsilon}$ {\color{mydarkgreen} \ding{51}}
     & \ding{55}
     & {\color{mydarkgreen} \ding{51}}
     & $\uparrow\uparrow$
     & $n$ \\
   \midrule
   \makecell{\algname{Cycle SGD} (Alg.~\ref{alg:circular_local_sgd}) \textbf{(new)}} & 
     \ding{55}
     & $\tau \left({\color{red}\frac{\sigma^2 L \Delta}{n \varepsilon^2}} \vee \frac{L \Delta}{\varepsilon}\right)$
     & \ding{55}
     & ---
     & $\uparrow\uparrow$
     & \phantom{\color{mydarkgreen} \ding{51}} $\frac{n^2\varepsilon}{\sigma^2} \vee 1$ {\color{mydarkgreen} \ding{51}} \\
   \midrule
   \makecell{\algname{Async-Local/Batch SGD}\\ (Alg.~\ref{alg:async_plus_local} and Sec.~\ref{sec:async_batch}) \textbf{(new)}} & 
      {\color{mydarkgreen} \ding{51}}
     & $\tau \frac{L \Delta}{\varepsilon}$ {\color{mydarkgreen} \ding{51}}
     & ---
     & \ding{55}
     & $\uparrow\uparrow$
     & $n$ \\
   \midrule
   \makecell{\algname{(Nested) Local-Async SGD} \\ (Alg.~\ref{alg:grouped_async_local_sgd} and \ref{alg:grouped_async_local_sgd_2}) \textbf{(new)}} & 
      {\color{mydarkgreen} \ding{51}}
     & \phantom{\textsuperscript{{\color{blue}(c)}}}---\textsuperscript{{\color{blue}(c)}}
     & \phantom{\textsuperscript{{\color{blue}(c)}}}---\textsuperscript{{\color{blue}(c)}}
     & ---
     & $\uparrow\uparrow$
     & \phantom{\textsuperscript{{\color{blue}(c)}}}---\textsuperscript{{\color{blue}(c)}} \\
   \midrule
   \makecell{\algname{Dual-Process SGD} \\ (Alg.~\ref{alg:bi_local_sgd}) \textbf{(new)}} & 
      {\color{mydarkgreen} \ding{51}}
     & $\tau \frac{L \Delta}{\varepsilon}$ {\color{mydarkgreen} \ding{51}}
     & {\color{mydarkgreen} \ding{51}}
     & \ding{55}
     & $\uparrow\uparrow$
     & $n$ \\
   \midrule
   \makecell{\algname{Meta Local SGD}\textsuperscript{{\color{blue}(b)}} \\ (Alg.~\ref{alg:adaptive_local_sgd}) \textbf{(new)}} & 
     ---
     & ---
     & ---
     & --- 
     & --- 
     & --- \\
   \bottomrule
 \end{tabular}
 \begin{tablenotes}
   \scriptsize
   \item [{\color{blue}(a)}] Does a method support \texttt{AllReduce}? Asynchronous SGD-like methods do not support it due to their greedy update nature. \algname{Cycle SGD} synchronizes only a subset of workers; thus, we cannot say definitively. We also cannot say definitively in the case of \algname{Local-Async SGD} because the local asynchronous steps cannot be implemented with \texttt{AllReduce}, while the global steps can be.
   \item [{\color{blue}(b)}] This is an abstract method where all metrics (Computational Complexity, Communication Complexity, etc) depend on the chosen strategy.
   \item [{\color{blue}(c)}] Similar to \texttt{AllReduce}, here we also can not say definitely since \algname{Local-Async SGD} and \algname{Nested Local-Async SGD} are specially designed multi-cluster learning methods.
   \item [{\color{blue}(d)}] This is a slightly less formal metric that indicates how often an algorithm updates its iterate/model. \algname{Rennala SGD} asks the workers to compute stochastic gradients at the same point; thus, it updates the iterates less frequently. In contrast, \algname{Ringmaster ASGD} updates the iterates immediately. All other methods fall somewhere in between. See the discussion in Section~\ref{sec:alg}.
   \item [{\color{blue}(e)}] In the \algname{Rennala SGD}, \algname{Ringmaster ASGD}, \algname{Local SGD}, and \algname{Async-Local SGD} methods, all workers can start communication simultaneously; thus, their peak bandwidth is $\cO(n)$ when $n \leq \nicefrac{\sigma^2}{\varepsilon}$ In the \algname{Cycle SGD} method, the workers communicate in a circular manner, so the peak bandwidth is $\cO(\nicefrac{n^2\varepsilon}{\sigma^2})$ when $n \leq \nicefrac{\sigma^2}{\varepsilon}$, which is smaller.
   \item [{\color{blue}(f)}] While we recognize that \algname{Local SGD} is well-known in the literature, what makes our version novel is the better time complexity compared to the classical version (Sec.~\ref{sec:compare}), the stopping criterion $\sum_{i=1}^{n} M_i = B$ in Alg.~\ref{alg:local_sgd}, and the analysis in Sec.~\ref{sec:local_sgd}, \ref{sec:proof_compt}, and \ref{sec:proof_communication}, which leads to the optimal computational time complexities with a proper choice of $B.$
   \item [{\color{blue}(g)}] We report the terms w.r.t. communication time $\tau$ under the \emph{$(h,\tau)$-fixed computation model} from Section~\ref{sec:proof_communication}.
 \end{tablenotes}
 \end{threeparttable}
 \end{adjustbox}
\end{table}

\algnamebf{Ringmaster ASGD} (Section~\ref{sec:ringmaster_asgd}).
This an \algname{Asynchronous SGD} method with the update rule
\begin{align}
 \textstyle w^{k+1} = w^k - \gamma \nabla f(w^{k - \delta^k}; \eta^{k - \delta^k}_i),
 \label{eq:async_sgd}
\end{align}
where $\delta^k$ is a delay such that $\delta^k \leq G - 1,$ where $G \geq 1$ is a hyperparameter (see Alg.~\ref{alg:ringmasternew}). 
We take $x^k = w^k$ for all $k \geq 0.$ Thus, the corresponding auxiliary sequence is defined by $z^k = x^{k - \delta^k} \equiv w^{k - \delta^k}$ and $\xi^k = \eta^{k - \delta^k}_i$ for all $k \geq 0.$ Constructing the computation tree (Figure~\ref{fig:async_sgd_graph}), we can show that the conditions of Theorem~\ref{thm:main} hold with $R = \max_{k \geq 0} \delta^k \leq G - 1$ and the \emph{iteration rate} is $\cO\left(\nicefrac{G L \Delta}{\varepsilon} + \nicefrac{\sigma^2 L \Delta}{\varepsilon^2}\right).$
Previously, we presented \algname{Rennala SGD} and \algname{Ringmaster ASGD} that can be analyzed using Theorem~\ref{thm:main}. This raises the question: Which method is most effective, and how should one choose the appropriate one? In the following sections, we discuss different factors one should consider when selecting a method, and present new algorithms. The discussion here is summarized in Table~\ref{tab:methods_summary_check}. Before we begin, it is important to note that the iteration complexity in Theorem~\ref{thm:main} does not reflect the true wall-clock performance. 
It serves as an intermediate result used to derive the time complexities presented below.

\textbf{1.~Computational complexity.} One way to compare the methods is to analyze their time complexity under the \emph{$h_i$-fixed computation model} (see Sec.~\ref{sec:related_work}, \ref{sec:related_work_fixed}, and \ref{sec:proof_compt}). With a proper choice of the corresponding parameters, i.e., $B = \max\{1, \lceil \nicefrac{\sigma^2}{\varepsilon}\rceil\}$, both \algname{Rennala SGD} and \algname{Ringmaster ASGD} are optimal in terms of wall-clock time with the time complexity 
$\Theta(\min_{m \in [n]} [(\nicefrac{1}{m} \sum_{i=1}^{m} \nicefrac{1}{h_i})^{-1} \left(\nicefrac{L \Delta}{\varepsilon} + \nicefrac{\sigma^2 L \Delta}{m \varepsilon^2}\right)])$
provided that communication times are negligible. In the worst-case scenario, on the ``very bad function'' \citep{arjevani2022lower}, all these methods perform equally well. Next, we discuss the strengths and weaknesses of the methods that are not captured by the $h_i$-fixed computation model.

\textbf{2.~Number of model updates.} At the same time, comparing \eqref{eq:async_sgd} and \eqref{eq:BxYrFsyLoVzhayw} reveals that \algname{Rennala SGD} computes $B$ stochastic gradients at the same point, while \algname{Ringmaster ASGD} both computes and “explores” more by immediately updating the model as in \eqref{eq:async_sgd}. This feature makes \algname{Ringmaster ASGD} more practically appealing. In fact, this intuition can be formalized using a simple two-dimensional strongly convex quadratic function $f : \mathbb{R}^2 \to \mathbb{R}$ such that $f(x,y) = \mu x^2 / 2 + L y^2 / 2$ for all $x,y \in \mathbb{R}.$ For this function, we prove that \algname{Rennala SGD} requires $\widetilde{\Theta}\left({\color{red}\nicefrac{\sigma^2}{\varepsilon n}} \times h \times \nicefrac{L}{\mu}\right)$
seconds to achieve an $\varepsilon$--solution under the $h_i$-fixed computation model with $h_i = h$ for all $i \in [n]$, while \algname{Ringmaster ASGD} needs $\widetilde{\Theta}\left(h \times \nicefrac{L}{\mu}\right),$ which is $\nicefrac{\sigma^2}{\varepsilon n}$ seconds less (see formal result in Section~\ref{sec:quadratic}). \textbf{This is the first result showing that \algname{Ringmaster ASGD}/\algname{Asynchronous SGD} can be \emph{strongly} better than \algname{Rennala SGD}.}

\textbf{3.~Communication complexity.} Communication delays are a major bottleneck in real-world distributed systems. Thus, minimizing communication and synchronization overhead is crucial. \algname{Ringmaster ASGD} is the least efficient in this regard, requiring frequent communication and lacking \texttt{AllReduce} support due to its asynchronous design. In a simple model where sending one stochastic gradient takes $\tau$ seconds and all workers have identical speed $h_i = h$, the time complexity of \algname{Ringmaster ASGD} is 
$\Omega\left(\tau \left(\nicefrac{L \Delta}{\varepsilon} + {\color{red}\nicefrac{\sigma^2 L \Delta}{n \varepsilon^2}}\right) + h \left(\nicefrac{L \Delta}{\varepsilon} + \nicefrac{\sigma^2 L \Delta}{n \varepsilon^2}\right)\right).$
In contrast, \algname{Rennala SGD} achieves
$\cO\left(\tau \nicefrac{L \Delta}{\varepsilon} + h \left(\nicefrac{L \Delta}{\varepsilon} + \nicefrac{\sigma^2 L \Delta}{n \varepsilon^2}\right)\right)$ (see Sec.~\ref{sec:proof_communication}),
which is better when $\tau$ and $\nicefrac{\sigma^2}{\varepsilon}$ are large.

This is the point where we asked ourselves if it is possible to design a method that has the optimal computational time complexity of \algname{Rennala SGD} and updates the iterates more frequently than it. It turns out this method is well-known and is called \algname{Local SGD}:

\algnamebf{Local SGD} (Section~\ref{sec:local_sgd}). 
\begin{figure}[t]
    \centering
    \begin{subfigure}[c]{0.2\textwidth}
        \centering
        \includegraphics[page=18,width=\textwidth]{./graphs/graphs_new.pdf}
    \end{subfigure}
    \begin{subfigure}[c]{0.35\textwidth}
        \centering
        \Large $\overset{\shortstack{\small \text{Sync. via server}\\\text{\small or \texttt{AllReduce}}}}{\Rightarrow}$
    \end{subfigure}%
    \begin{subfigure}[c]{0.4\textwidth}
        \centering
        \includegraphics[page=19,width=\textwidth]{./graphs/graphs_new.pdf}
    \end{subfigure}
    \\
    \vspace{0.3cm}
    \begin{subfigure}[c]{0.25\textwidth}
        \centering
        \Large $\overset{\shortstack{\small \text{After two global steps}}}{\Rightarrow}$
    \end{subfigure}%
    \begin{subfigure}[c]{0.75\textwidth}
        \centering
        \includegraphics[page=13,width=\textwidth]{./graphs/graphs_new.pdf}
    \end{subfigure}
 \caption{An evolution of a \algname{Local SGD} computation tree with $B = 4$ and 2 workers, each performing local steps over 2 global steps. In the first round, they compute $M_1 = 2$ and $M_2 = 2$ local steps (first figure), after which they synchronize (second figure). In the second round, they compute $M_1 = 1$ and $M_2 = 3$ local steps and synchronize again (third figure). Note that the maximum distances $\textnormal{dist}(x^3,z_1^{0,1})$ and $\textnormal{dist}(x^7,z_2^{1,2})$, when applying $\nabla f(z_1^{0,1};\eta^{0,1}_1)$ to $x^3$ and $\nabla f(z_2^{1,2};\eta_2^{1,2})$ to $x^7$, are equal to $B - 1 = \sum_{i=1}^{n} M_i - 1 = 3$. Notice that each stochastic gradient is used $2$ times in the tree.}
 \label{fig:local_sgd_graph}
\end{figure}
We consider the classical \algname{Local SGD} strategy, where each worker $i \in [n]$ performs $M_i$ local steps, after which the server aggregates the results (see Alg.~\ref{alg:local_sgd}). Unlike most previous approaches, however, the number of local steps ${M_i}$ may vary across workers. Moreover, the server waits for a specific condition before aggregating: $\sum_{i=1}^{n} M_i = B.$ This strategy is adaptive to fluctuations in the number of local steps performed by individual workers, as the server ensures the total number of steps across all workers reaches the target sum $\sum_{i=1}^{n} M_i = B,$ where $B$ is a hyperparameter. An example of the computation tree is shown in Figure~\ref{fig:local_sgd_graph}. In Section~\ref{sec:local_sgd}, we establish the \emph{iteration rate} of \algname{Local SGD} as $\cO\left(\nicefrac{B L \Delta}{\varepsilon} + \nicefrac{\sigma^2 L \Delta}{\varepsilon^2}\right)$. This result follows directly from Theorem~\ref{thm:main} via a simple geometric argument. In fact, looking at Figure~\ref{fig:local_sgd_graph} reveals that all the conditions of Theorem~\ref{thm:main} are satisfied. The only minor difficulty is to show that $R \eqdef \sup_{k \geq 0} \textnormal{dist}(x^k, z^k) \leq B - 1,$ which is guaranteed by the condition $\sum_{i=1}^{n} M_i = B.$

What is novel in our version of \algname{Local SGD} is that it achieves better theoretical guarantees within the family of \algname{Local SGD} approaches (see Section~\ref{sec:compare}). Moreover, our stopping condition and the choice of $B$ together ensure its optimality under the $h_i$-fixed computation model (see Theorem~\ref{thm:local_computation}).

\algnamebf{Async-Local SGD} (Section \ref{sec:async_plus_local}). Another idea to leverage the practical benefits of \algname{Ringmaster ASGD}, while at the same time reducing the communication overhead, is to use \algname{Ringmaster ASGD} with local steps. The idea is to run $M$ local steps on each worker instead of immediately sending the computed stochastic gradients to the server in an asynchronous fashion (See Figure~\ref{fig:async_plus_local}). We formalize this algorithm and prove the iteration rate in Section~\ref{sec:async_plus_local}. Moreover, in Sections~\ref{sec:proof_compt} and~\ref{sec:proof_communication}, we suggest an optimal choice of parameters that leads to optimal computational complexity and reduced communication complexity, which is better than that of \algname{Ringmaster ASGD}. We get as similar result with a new method, \algname{Async-Batch SGD} (Section~\ref{sec:async_batch}).

\algnamebf{Dual-Process SGD} (Section~\ref{sec:bi_local_sgd}). We took a step further and developed a new local method inspired by \algname{Local SGD} and \algname{Async-Local SGD}. It is the first local method to achieve the optimal time complexity in the distributed setting, where workers have varying computation and communication times (see Section~\ref{sec:proof_communication_general}). However, unlike \algname{Local SGD}, it is not \texttt{AllReduce}-friendly.

\textbf{4.~Peak bandwidth.} Another critical factor is the peak bandwidth. The number of workers the parameter-server or the \texttt{AllReduce} operation can synchronize may be limited when the number of workers $n$ is huge. Notice that the worst-case peak bandwidth of \algname{Rennala SGD}, \algname{Ringmaster ASGD}, \algname{Local SGD}, and \algname{Async-Local SGD} is $\Theta\left(n\right).$

\algnamebf{Cycle SGD} (Section~\ref{sec:circular_sgd}). To mitigate this issue, we propose a new method called \algname{Cycle SGD}. Similar to \algname{Local SGD}, each worker performs local steps. However, once the workers finish computing the initial stochastic gradients $\{\nabla f(z^{0}_i; \eta^{0}_i)\}$, only the first group of $s$ workers sends their gradients to the server, where $s$ is a hyperparameter. The server then aggregates these gradients and performs the update $w^{1} = w^0 - \gamma \sum_{i=1}^{s} \nabla f(z^{0}_i; \eta^{0}_i).$
Meanwhile, the first $s$ workers begin computing their local steps starting from $w^{1}$, while the remaining workers continue their current local computations. Next, the second group of $s$ workers sends their locally computed vectors, and this process continues in a circular manner. A computation tree presented in Figure~\ref{fig:circular_local_sgd_graph}. 
The peak bandwidth of \algname{Cycle SGD} is $\cO\left(s\right)$ with $s = \min\left\{\max\{\lceil\nicefrac{n^2 \varepsilon}{\sigma^2}\rceil, 1\}, n\right\},$ which is smaller than $\Theta\left(n\right)$ when $\nicefrac{\sigma^2}{\varepsilon} \geq n.$ 
\textbf{5.~Optimization with clusters.} Consider a setup with many clusters of workers, where intra-cluster communication (e.g., InfiniBand) is fast and inter-cluster communication (e.g., Ethernet) is slow.

\begin{figure}[t]
 \centering
 \includegraphics[page=15,width=0.8\textwidth]{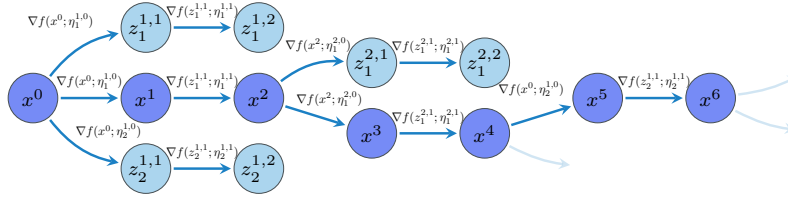}
 \caption{An example of the computation tree for \algname{Async-Local SGD} with $M = 2$. In this example, the first worker is significantly faster: before the second worker completes its first set of local steps, $x^0 \rightarrow z^{1,1}_2 \rightarrow z^{1,2}_2$, the first worker already completes two rounds of local updates and sends the corresponding stochastic gradients, $(\nabla f(x^0;\eta^{1,0}_{1}), \nabla f(z^{1,1}_{1};\eta^{1,1}_{1}))$ and $(\nabla f(x^{2};\eta^{2,0}_{1}), \nabla f(z^{2,1}_{1};\eta^{2,1}_{1}))$.}
 \label{fig:async_plus_local}
\end{figure}

\algnamebf{Local-Async SGD} (Section~\ref{sec:local_plus_async})~We run \algname{Asynchronous SGD} within each cluster and synchronize clusters after a fixed number of local steps. This setup is feasible due to fast intra-cluster links, while slower inter-cluster links necessitate infrequent synchronization. In Section~\ref{sec:local_plus_async}, we formalize this method, \algname{Local-Async SGD}, and establish its iteration rate. Section~\ref{sec:proof_compt} proves it achieves optimal computational time complexity. A key novelty lies in the synchronization mechanism (see Alg.~\ref{alg:grouped_async_local_sgd}).

\algnamebf{Nested Local-Async SGD} (Section~\ref{sec:hierarchy})~Our framework extends to a two-level hierarchy: within each cluster, servers with 4–8 GPUs run \algname{Asynchronous SGD} locally, synchronize at the server level, and then synchronize across clusters. Analyzing such a setup using classical optimization tools would be highly challenging. In contrast, our framework enables a straightforward analysis through geometric graph reasoning.



\textbf{6.~Flexible synchronization and} \algnamebf{Meta Local SGD} (Section~\ref{sec:adaptive_local_sgd}). We noticed that in all previous methods, the workers are synchronized in a predefined manner or rule. We want to add more flexibility to the synchronization process. Our idea is that the server (or the workers themselves, in a decentralized setup) can select any subset of workers based on any strategy (e.g., randomly or according to current communication speeds), gather their computed stochastic gradients, update the global model, and ask these workers to continue performing local steps from the new point. 
However, such ``anarchic synchronization'' can result in a computation tree with a large $R$ if the selected strategy is not chosen carefully. To ensure that $R$ is bounded, in our meta-algorithm (Algorithm~\ref{alg:adaptive_local_sgd}), we track the current distances $\{d_i\}$ to the head of the main branch and the local steps $\{M_i\}$ performed by each worker. Then, by tracking the value $d_i + \sum_{i=1}^{n} M_i$ for all $i \in [n]$ and comparing it to a parameter $B$, we compulsorily synchronize (Hard Sync) all workers for which $d_i + \sum_{i=1}^{n} M_i = B$. 
This way, we can ensure that $R$ is bounded by $B$, and the iteration rate of this method is $\mathcal{O}\left(\nicefrac{B L \Delta}{\varepsilon} + \nicefrac{\sigma^2 L \Delta}{\varepsilon^2}\right)$. 
\section{Insights and Guidelines}
\label{sec:conlusion}
All proposed methods share the same iteration rate of $\mathcal{O}\left(\nicefrac{(R + 1) L \Delta}{\varepsilon} + \nicefrac{\sigma^2 L \Delta}{\varepsilon^2}\right),$ where $R$ is controlled by a method-specific hyperparameter and, at the same time, $R$ is the largest tree distance between $x^k$ and $z^k$. For \algname{Rennala SGD}, $R = B - 1,$ where $B$ denotes the batch size; for \algname{Ringmaster ASGD}, $R = B - 1,$ where $B$ is the delay threshold; for \algname{Local SGD}, $R = B - 1,$ where $B$ corresponds to the number of local steps; for \algname{Cycle SGD}, $R = \nicefrac{n^2}{s},$ where $s$ is the group size, etc. In all these methods, $R$ can be controlled, and to achieve the best possible computational and communication guarantees, one should always choose $R = \Theta\left(\nicefrac{\sigma^2}{\varepsilon}\right)$ (see Sections~\ref{sec:proof_communication} and~\ref{sec:proof_compt}). We believe this is a fundamental principle underlying all parallel optimization methods, and it should be considered a guiding rule when developing new algorithms. This choice is also theoretically justified: by taking $R = \Theta\left(\nicefrac{\sigma^2}{\varepsilon}\right)$, the iteration rate does not change asymptotically: $\mathcal{O}\left(\nicefrac{(R + 1) L \Delta}{\varepsilon} + \nicefrac{\sigma^2 L \Delta}{\varepsilon^2}\right) = \mathcal{O}\left(\nicefrac{L \Delta}{\varepsilon} + \nicefrac{\sigma^2 L \Delta}{\varepsilon^2}\right)$. Larger values of $R$ allow the methods to be more ``parallel-friendly''. For instance, a large $R$ enables \algname{Ringmaster ASGD} to consider stochastic gradients with larger delays, while a large $R$ in \algname{Local SGD} allows the method to run more local steps. However, taking $R > \nicefrac{\sigma^2}{\varepsilon}$ results in a worse iteration rate, suggesting that the corresponding method operates in an overly ``anarchic'' asynchronous regime, which may lead to performance degradation. Geometrically, the theory suggests that, to achieve good performance, the tree distance between $x^k$ and $z^k$ in Figure~\ref{fig:fork} should not exceed $\nicefrac{\sigma^2}{\varepsilon}$.

Notice that there is no single ``best'' method in Table~\ref{tab:methods_summary_check}, which we believe is another fundamental law. Each method has its own strengths and weaknesses, and one should develop or choose the most appropriate method for the specific task. This process becomes easier with the help of our new framework, \algname{Birch SGD}, and insights.

While our primary focus is on training large-scale language and vision models, where the i.i.d. assumption is usually appropriate, we acknowledge that non-i.i.d. scenarios are also important and should be investigated in future work, where it may be necessary to add additional assumptions such as first-order and second-order similarity of the functions \citep{arjevani2015communication,mishchenko2022asynchronous}. Moreover, it would be interesting to extend our framework to methods with preconditioning \citep{kingma2014adam} and non-Euclidean geometry \citep{jordan6muon}.

We hope our observations will support the future development and analysis of asynchronous optimization methods. Building on these insights, we designed at least eight new methods using our proposed \algname{Birch SGD} framework and the main result, Theorem~\ref{thm:main}. By reducing the analysis and design of these methods to computation trees, our entire development becomes purely graph-geometrical, offering a new and simpler view on asynchronous optimization methods.

\section{Summary of experimental results}
In Section~\ref{sec:exp}, we provide a detailed comparison of methods on logistic regression, image classification with ResNet18 \citep{he2016deep}, and next-token prediction with GPT2 \citep{radford2019language}. When communication times are negligible (Fig.~\ref{fig:compute_heterogeneous_regime_nw16_main}), as expected from Table~\ref{tab:methods_summary_check} and the previous discussion, \algname{Ringmaster ASGD} and \algname{Async-Local SGD} converge faster on the logistic regression problem. However, when communication times are large (Fig.~\ref{fig:bandwidth_bound_regime_nw16_main}), \algname{Ringmaster ASGD} becomes less practical due to its frequent updates. \algname{Synchronized SGD} exhibits the worst performance across all setups. \algname{Rennala SGD} and \algname{Local SGD} are more stable, while \algname{Async-Local SGD} performs well due to its effective balance between frequent updates and local steps.
\begin{minipage}{\textwidth}
 \begin{figure}[H] 
   \centering
   \begin{minipage}[b]{0.48\textwidth}
     \centering
     \includegraphics[width=0.9\textwidth]{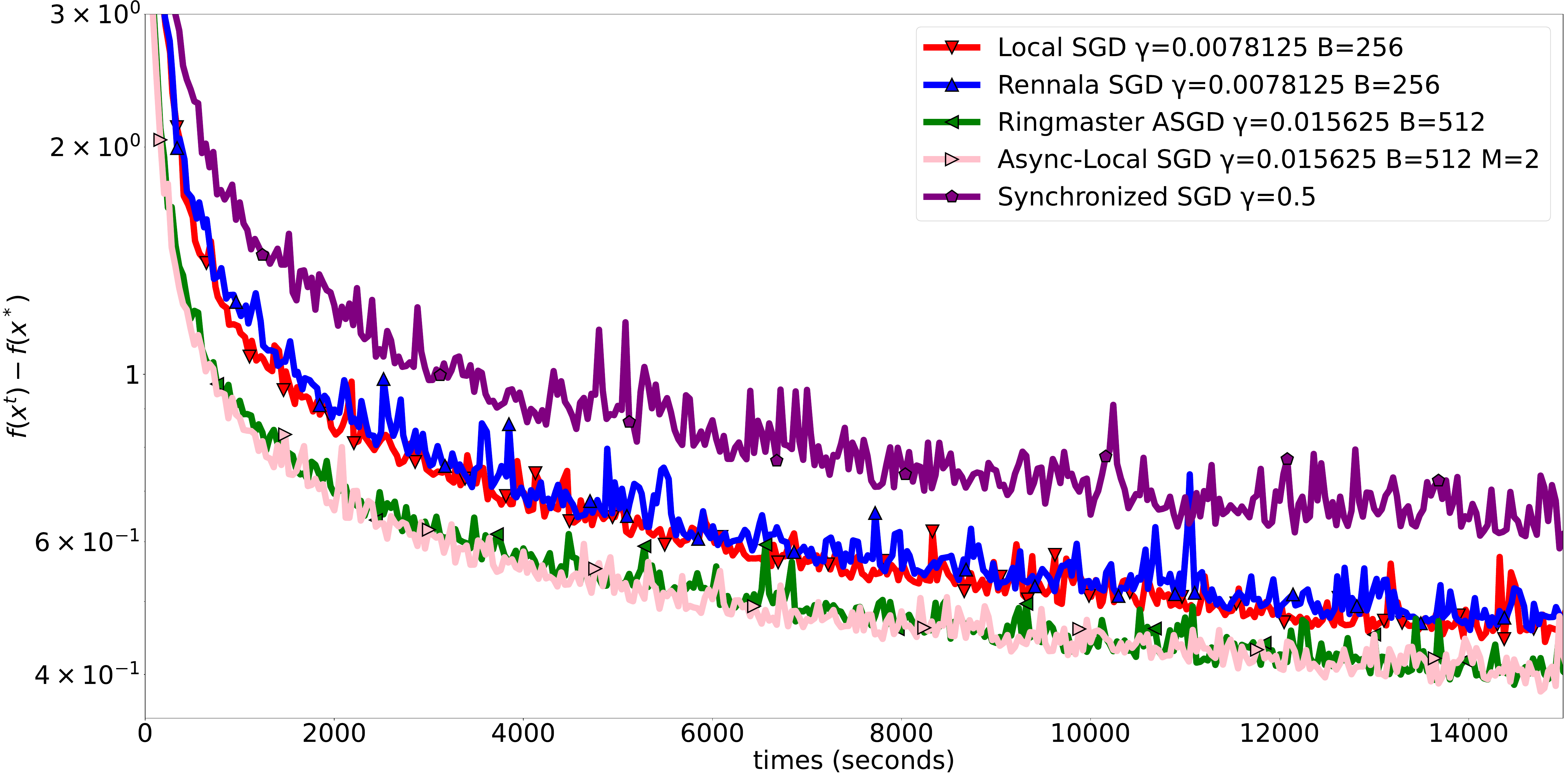}
     \caption{Computation times $h_i = 1$ or $10$ randomly, communication times $\tau_i = 0.$}
     \label{fig:compute_heterogeneous_regime_nw16_main}
   \end{minipage}
   \hfill
   \begin{minipage}[b]{0.48\textwidth}
     \centering
     \includegraphics[width=0.9\textwidth]{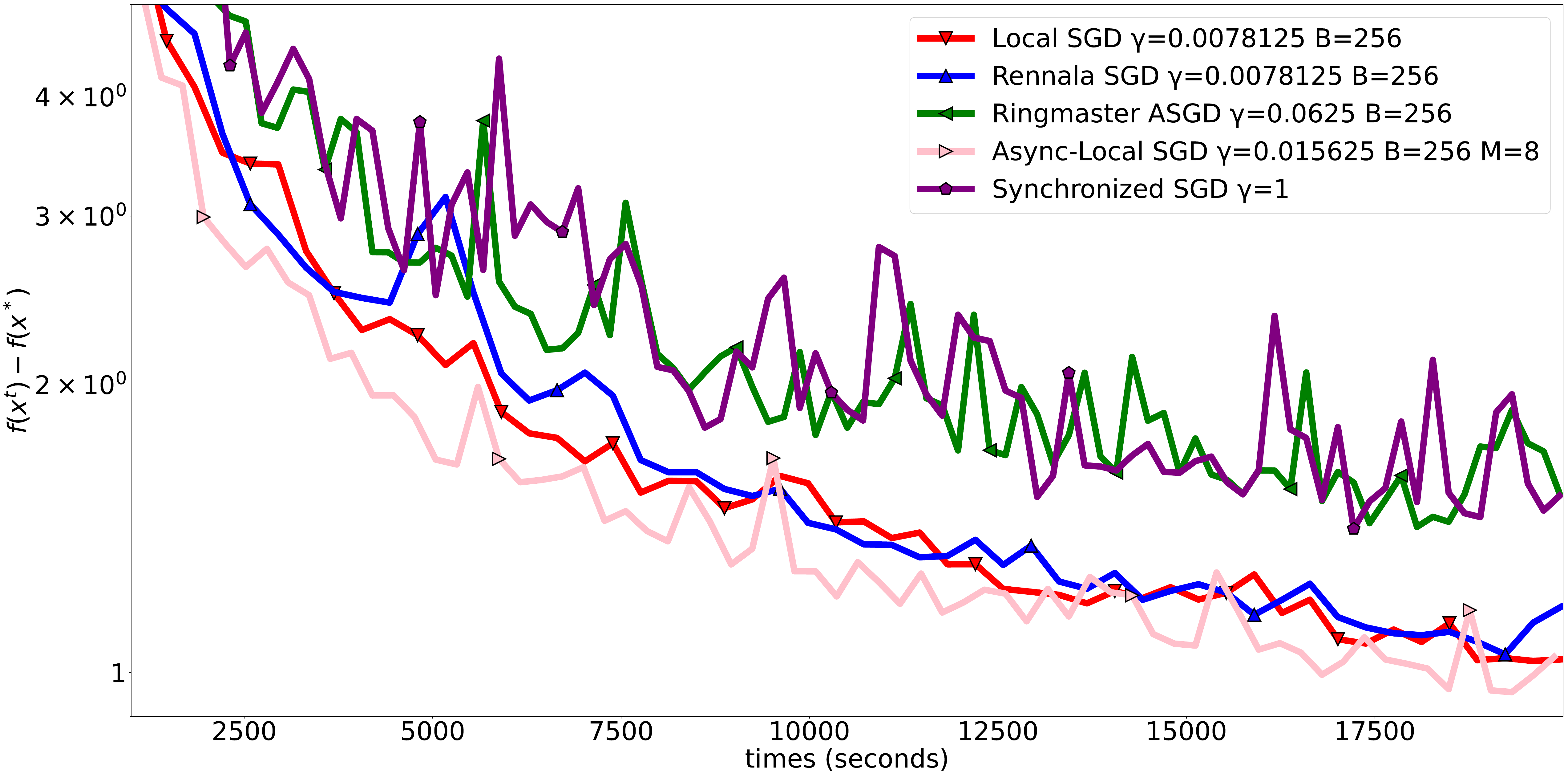}
     \caption{Computation times $h_i = 10$, communication times $\tau_i = 100.$}
     \label{fig:bandwidth_bound_regime_nw16_main}
   \end{minipage}
 \end{figure}
\end{minipage}

\section*{Acknowledgements}
The work was supported by the grant for research centers in the field of AI provided by the Ministry of Economic Development of the Russian Federation in accordance with the agreement 000000C313925P4F0002 and the agreement №139-10-2025-033.

\bibliography{iclr2026_conference}

@book{goodfellow2016deep,
title={Deep learning},
author={Goodfellow, Ian and Bengio, Yoshua and Courville, Aaron and Bengio, Yoshua},
volume={1},
year={2016},
publisher={MIT Press}
}

@article{stich2020error,
  title={The error-feedback framework: {SGD} with delayed gradients},
  author={Stich, Sebastian U and Karimireddy, Sai Praneeth},
  journal={Journal of Machine Learning Research},
  volume={21},
  number={237},
  pages={1--36},
  year={2020}
}

@inproceedings{mcmahan2017communication,
  title={Communication-efficient learning of deep networks from decentralized data},
  author={McMahan, Brendan and Moore, Eider and Ramage, Daniel and Hampson, Seth and y Arcas, Blaise Aguera},
  booktitle={Artificial intelligence and statistics},
  pages={1273--1282},
  year={2017},
  organization={PMLR}
}

@inproceedings{he2016deep,
	title={Deep residual learning for image recognition},
	author={He, Kaiming and Zhang, Xiangyu and Ren, Shaoqing and Sun, Jian},
	booktitle={Proceedings of the IEEE Conference on Computer Vision and Pattern Recognition (CVPR)},
	pages={770--778},
	year={2016}
}

@techreport{krizhevsky2009learning,
	title={Learning multiple layers of features from tiny images},
	author={Krizhevsky, Alex and Hinton, Geoffrey and others},
	year={2009},
	jnumber = {Technical Report TR-2009},
	institution = {University of Toronto,  Toronto}
}

@article{lecun2010mnist,
  title={{MNIST} handwritten digit database},
  author={LeCun, Yann and Cortes, Corinna and Burges, CJ},
  journal={ATT Labs [Online]. Available: http://yann.lecun.com/exdb/mnist},
  volume={2},
  year={2010}
}

@article{konevcny2016federated,
  title={Federated learning: Strategies for improving communication efficiency},
  author={Kone{\v{c}}n{\'y}, Jakub and McMahan, H Brendan and Yu, Felix X and Richt{\'a}rik, Peter and Suresh, Ananda Theertha and Bacon, Dave},
  journal={arXiv preprint arXiv:1610.05492},
  year={2016}
}

@article{kairouz2021advances,
  title={Advances and open problems in federated learning},
  author={Kairouz, Peter and McMahan, H Brendan and Avent, Brendan and Bellet, Aur{\'e}lien and Bennis, Mehdi and Bhagoji, Arjun Nitin and Bonawitz, Kallista and Charles, Zachary and Cormode, Graham and Cummings, Rachel and others},
  journal={Foundations and Trends{\textregistered} in Machine Learning},
  volume={14},
  number={1--2},
  pages={1--210},
  year={2021},
  publisher={Now Publishers, Inc.}
}

@article{carmon2020lower,
  title={Lower bounds for finding stationary points I},
  author={Carmon, Yair and Duchi, John C and Hinder, Oliver and Sidford, Aaron},
  journal={Mathematical Programming},
  volume={184},
  number={1},
  pages={71--120},
  year={2020},
  publisher={Springer}
}

@article{khaled2020unified,
  title={Unified analysis of stochastic gradient methods for composite convex and smooth optimization},
  author={Khaled, Ahmed and Sebbouh, Othmane and Loizou, Nicolas and Gower, Robert M and Richt{\'a}rik, Peter},
  journal={arXiv preprint arXiv:2006.11573},
  year={2020}
}

@article{recht2011hogwild,
  title={{Hogwild}!: A lock-free approach to parallelizing stochastic gradient descent},
  author={Recht, Benjamin and Re, Christopher and Wright, Stephen and Niu, Feng},
  journal={Advances in Neural Information Processing Systems},
  volume={24},
  year={2011}
}

@inproceedings{gorbunov2020unified,
  title={A unified theory of {SGD}: Variance reduction, sampling, quantization and coordinate descent},
  author={Gorbunov, Eduard and Hanzely, Filip and Richt{\'a}rik, Peter},
  booktitle={International Conference on Artificial Intelligence and Statistics},
  pages={680--690},
  year={2020},
  organization={PMLR}
}

@book{lan2020first,
  title={First-order and stochastic optimization methods for machine learning},
  author={Lan, Guanghui},
  year={2020},
  publisher={Springer}
}

@article{arjevani2022lower,
  title={Lower bounds for non-convex stochastic optimization},
  author={Arjevani, Yossi and Carmon, Yair and Duchi, John C and Foster, Dylan J and Srebro, Nathan and Woodworth, Blake},
  journal={Mathematical Programming},
  pages={1--50},
  year={2022},
  publisher={Springer}
}

@article{nemirovskij1983problem,
  title={Problem complexity and method efficiency in optimization},
  author={Nemirovskij, Arkadij Semenovi{\v{c}} and Yudin, David Borisovich},
  year={1983},
  publisher={Wiley-Interscience}
}

@article{feyzmahdavian2016asynchronous,
  title={An asynchronous mini-batch algorithm for regularized stochastic optimization},
  author={Feyzmahdavian, Hamid Reza and Aytekin, Arda and Johansson, Mikael},
  journal={IEEE Transactions on Automatic Control},
  volume={61},
  number={12},
  pages={3740--3754},
  year={2016},
  publisher={IEEE}
}

@article{mishchenko2022asynchronous,
  title={Asynchronous {SGD} beats minibatch {SGD} under arbitrary delays},
  author={Mishchenko, Konstantin and Bach, Francis and Even, Mathieu and Woodworth, Blake},
  journal={Advances in Neural Information Processing Systems},
  year={2022}
}

@inproceedings{wu2022delay,
  title={Delay-adaptive step-sizes for asynchronous learning},
  author={Wu, Xuyang and Magnusson, Sindri and Feyzmahdavian, Hamid Reza and Johansson, Mikael},
  booktitle={International Conference on Machine Learning},
  year={2022}
}

@inproceedings{kingma2014adam,
  title={Adam: A method for stochastic optimization},
  author={Kingma, Diederik P and Ba, Jimmy},
  booktitle={International Conference on Learning Representations},
  year={2015}
}

@article{cohen2021asynchronous,
  title={Asynchronous Stochastic Optimization Robust to Arbitrary Delays},
  author={Cohen, Alon and Daniely, Amit and Drori, Yoel and Koren, Tomer and Schain, Mariano},
  journal={Advances in Neural Information Processing Systems},
  volume={34},
  pages={9024--9035},
  year={2021}
}

@article{tyurin2023optimal,
  title={Optimal time complexities of parallel stochastic optimization methods under a fixed computation model},
  author={Tyurin, Alexander and Richt\'{a}rik, Peter},
  journal={Advances in Neural Information Processing Systems},
  volume={36},
  year={2023}
}

@article{koloskova2022sharper,
  title={Sharper convergence guarantees for {A}synchronous {SGD} for distributed and federated learning},
  author={Koloskova, Anastasiia and Stich, Sebastian U and Jaggi, Martin},
  journal={Advances in Neural Information Processing Systems},
  volume={35},
  pages={17202--17215},
  year={2022}
}

@inproceedings{tyurin2024tight,
      title={Tight Time Complexities in Parallel Stochastic Optimization with Arbitrary Computation Dynamics}, 
      author={Alexander Tyurin},
      booktitle={International Conference on Learning Representations},
      year={2025},
}

@article{tyurin2024shadowheart,
  title={{Shadowheart {SGD}}: Distributed Asynchronous {SGD} with Optimal Time Complexity Under Arbitrary Computation and Communication Heterogeneity},
  author={Tyurin, Alexander and Pozzi, Marta and Ilin, Ivan and Richt{\'a}rik, Peter},
  journal={Advances in Neural Information Processing Systems},
  volume={37},
  year={2024}
}

@article{tyurin2024optimalgraph,
  title={On the Optimal Time Complexities in Decentralized Stochastic Asynchronous Optimization},
  author={Tyurin, Alexander and Richt{\'a}rik, Peter},
  journal={Advances in Neural Information Processing Systems},
  volume={37},
  year={2024}
}

@article{maranjyan2025ringmaster,
  title={Ringmaster {ASGD}: The First Asynchronous {SGD} with Optimal Time Complexity},
  author={Maranjyan, Artavazd and Tyurin, Alexander and Richt{\'a}rik, Peter},
  journal={arXiv preprint arXiv:2501.16168},
  year={2025}
}

@article{bottou2018optimization,
  title={Optimization methods for large-scale machine learning},
  author={Bottou, L{\'e}on and Curtis, Frank E and Nocedal, Jorge},
  journal={SIAM review},
  volume={60},
  number={2},
  pages={223--311},
  year={2018},
  publisher={SIAM}
}

@inproceedings{loshchilov2017decoupled,
  title={Decoupled weight decay regularization},
  author={Loshchilov, Ilya and Hutter, Frank},
  booktitle={International Conference on Learning Representations},
  year={2019}
}

@inproceedings{schmidt2021descending,
  title={Descending through a crowded valley-benchmarking deep learning optimizers},
  author={Schmidt, Robin M and Schneider, Frank and Hennig, Philipp},
  booktitle={International Conference on Machine Learning},
  pages={9367--9376},
  year={2021},
  organization={PMLR}
}

@article{mayer2020scalable,
  title={Scalable deep learning on distributed infrastructures: Challenges, techniques, and tools},
  author={Mayer, Ruben and Jacobsen, Hans-Arno},
  journal={ACM Computing Surveys (CSUR)},
  volume={53},
  number={1},
  pages={1--37},
  year={2020},
  publisher={ACM New York, NY, USA}
}

@article{zinkevich2010parallelized,
  title={Parallelized stochastic gradient descent},
  author={Zinkevich, Martin and Weimer, Markus and Li, Lihong and Smola, Alex},
  journal={Advances in Neural Information Processing Systems},
  volume={23},
  year={2010}
}

@article{robbins1951stochastic,
  title={A stochastic approximation method},
  author={Robbins, Herbert and Monro, Sutton},
  journal={The Annals of Mathematical Statistics},
  pages={400--407},
  year={1951},
  publisher={JSTOR}
}

@inproceedings{stich2018local,
  title={Local {SGD} converges fast and communicates little},
  author={Stich, Sebastian U},
  booktitle={International Conference on Learning Representations},
  year={2019}
}

@techreport{murty1985some,
  title={Some {NP}-complete problems in quadratic and nonlinear programming},
  author={Murty, Katta G and Kabadi, Santosh N},
  year={1985}
}

@inproceedings{islamov2024asgrad,
  title={{AsGrad}: A sharp unified analysis of asynchronous-{SGD} algorithms},
  author={Islamov, Rustem and Safaryan, Mher and Alistarh, Dan},
  booktitle={International Conference on Artificial Intelligence and Statistics},
  pages={649--657},
  year={2024},
  organization={PMLR}
}

@article{lian2015asynchronous,
  title={Asynchronous parallel stochastic gradient for nonconvex optimization},
  author={Lian, Xiangru and Huang, Yijun and Li, Yuncheng and Liu, Ji},
  journal={Advances in Neural Information Processing Systems},
  volume={28},
  year={2015}
}

@inproceedings{sra2016adadelay,
  title={Adadelay: Delay adaptive distributed stochastic optimization},
  author={Sra, Suvrit and Yu, Adams Wei and Li, Mu and Smola, Alex},
  booktitle={Artificial Intelligence and Statistics},
  pages={957--965},
  year={2016},
  organization={PMLR}
}

@article{douillard2023diloco,
  title={Diloco: Distributed low-communication training of language models},
  author={Douillard, Arthur and Feng, Qixuan and Rusu, Andrei A and Chhaparia, Rachita and Donchev, Yani and Kuncoro, Adhiguna and Ranzato, Marc'Aurelio and Szlam, Arthur and Shen, Jiajun},
  journal={arXiv preprint arXiv:2311.08105},
  year={2023}
}

@article{matloff2008introduction,
  title={Introduction to discrete-event simulation and the simpy language},
  author={Matloff, Norm},
  journal={Davis, CA. Dept of Computer Science. University of California at Davis. Retrieved on August},
  volume={2},
  number={2009},
  pages={1--33},
  year={2008}
}

@article{radford2019language,
  title={Language models are unsupervised multitask learners},
  author={Radford, Alec and Wu, Jeffrey and Child, Rewon and Luan, David and Amodei, Dario and Sutskever, Ilya and others},
  journal={OpenAI blog},
  volume={1},
  number={8},
  pages={9},
  year={2019}
}

@article{merity2016pointer,
  title={Pointer sentinel mixture models},
  author={Merity, Stephen and Xiong, Caiming and Bradbury, James and Socher, Richard},
  journal={arXiv preprint arXiv:1609.07843},
  year={2016}
}

@inproceedings{even2024asynchronous,
  title={Asynchronous {SGD} on graphs: a unified framework for asynchronous decentralized and federated optimization},
  author={Even, Mathieu and Koloskova, Anastasia and Massouli{\'e}, Laurent},
  booktitle={International Conference on Artificial Intelligence and Statistics},
  pages={64--72},
  year={2024},
  organization={PMLR}
}

@inproceedings{koloskova2020local,
author = {Koloskova, Anastasia and Loizou, Nicolas and Boreiri, Sadra and Jaggi, Martin and Stich, Sebastian U.},
title = {A unified theory of decentralized {SGD} with changing topology and local updates},
year = {2020},
abstract = {Decentralized stochastic optimization methods have gained a lot of attention recently, mainly because of their cheap per iteration cost, data locality, and their communication-efficiency. In this paper we introduce a unified convergence analysis that covers a large variety of decentralized SGD methods which so far have required different intuitions, have different applications, and which have been developed separately in various communities.Our algorithmic framework covers local SGD updates and synchronous and pairwise gossip updates on adaptive network topology. We derive universal convergence rates for smooth (convex and non-convex) problems and the rates interpolate between the heterogeneous (nonidentically distributed data) and iid-data settings, recovering linear convergence rates in many special cases, for instance for over-parametrized models. Our proofs rely on weak assumptions (typically improving over prior work in several aspects) and recover (and improve) the best known complexity results for a host of important scenarios, such as for instance coorperative SGD and federated averaging (local SGD).},
booktitle = {Proceedings of the 37th International Conference on Machine Learning},
articleno = {499},
numpages = {13},
}

@inproceedings{luo2025revisiting,
  title={Revisiting {LocalSGD} and {SCAFFOLD}: Improved Rates and Missing Analysis},
  author={Luo, Ruichen and Stich, Sebastian U and Horv{\'a}th, Samuel and Tak{\'a}{\v{c}}, Martin},
  booktitle={International Conference on Artificial Intelligence and Statistics},
  year={2025}
}

@article{jordan6muon,
  title={Muon: An optimizer for hidden layers in neural networks, 2024},
  author={Jordan, Keller and Jin, Yuchen and Boza, Vlado and Jiacheng, You and Cecista, Franz and Newhouse, Laker and Bernstein, Jeremy},
  journal={URL https://kellerjordan. github. io/posts/muon},
  volume={6}
}

@article{arjevani2015communication,
  title={Communication complexity of distributed convex learning and optimization},
  author={Arjevani, Yossi and Shamir, Ohad},
  journal={Advances in Neural Information Processing Systems},
  volume={28},
  year={2015}
}

@inproceedings{gorbunov2021local,
  title={Local SGD: Unified theory and new efficient methods},
  author={Gorbunov, Eduard and Hanzely, Filip and Richt{\'a}rik, Peter},
  booktitle={International Conference on Artificial Intelligence and Statistics},
  pages={3556--3564},
  year={2021},
  organization={PMLR}
}

@article{wang2020tackling,
  title={Tackling the objective inconsistency problem in heterogeneous federated optimization},
  author={Wang, Jianyu and Liu, Qinghua and Liang, Hao and Joshi, Gauri and Poor, H Vincent},
  journal={Advances in neural information processing systems},
  volume={33},
  pages={7611--7623},
  year={2020}
}

@article{wang2018cooperative,
  title={Cooperative SGD: A unified framework for the design and analysis of communication-efficient SGD algorithms},
  author={Wang, Jianyu and Joshi, Gauri},
  journal={arXiv preprint arXiv:1808.07576},
  year={2018}
}

@article{huang2022tackling,
  title={Tackling data heterogeneity: A new unified framework for decentralized sgd with sample-induced topology},
  author={Huang, Yan and Sun, Ying and Zhu, Zehan and Yan, Changzhi and Xu, Jinming},
  journal={arXiv preprint arXiv:2207.03730},
  year={2022}
}
\bibliographystyle{iclr2026_conference}

\appendix

\newpage

\tableofcontents

\newpage

\section{Additional Discussion}
\label{sec:related_work_fixed}

\subsection{Discussion of the computational time complexities}
In this section, we extend our discussion about the computational time complexities of the methods discussed in the main part of the paper.

To compare parallel and asynchronous methods, \citet{mishchenko2022asynchronous} proposed using the \emph{$h_i$-fixed computation model}. The idea is to assume that worker $i$ requires at most $h_i$ seconds to calculate one stochastic gradient for all $i \in [n] \eqdef \{1, \dots, n\}$ (without loss of generality, $h_1 \leq h_2 \leq \dots \leq h_n$). The authors considered \algname{Synchronized SGD}, an iterative process defined as $w^{k+1} = w^{k} - \frac{\gamma}{n} \sum_{i=1}^n \nabla f(w^k; \eta^k_i),$
where each worker calculates one stochastic gradient, synchronize, and a parameter server aggregates them to update the iterate\footnote{Alternatively, there is no physical parameter server, and all workers perform an Allreduce.}. Using the $h_i$-fixed computation model, it can be easily shown that \algname{Synchronized SGD} converges after 
\begin{align}
\cO\left(\max\limits_{i \in [n]} h_i \times \left(\frac{L \Delta}{\varepsilon} + \frac{\sigma^2 L \Delta}{n \varepsilon^2}\right)\right)
\label{eq:nsqmTfjy}
\end{align}
seconds, because the method waits for the slowest worker, whose time is $\max_{i \in [n]} h_i = h_n$. 
\begin{algorithm}[H]
  \caption{\algname{Asynchronous SGD}}
  \label{alg:asgd}
  \begin{algorithmic}
      \STATE \textbf{Input:} point $w^0 \in \R^d$, stepsizes $\gamma_k \geq 0$
      \STATE Workers start computing stochastic gradients at $w^0$
      \FOR{$k = 0, 1, \dots$}
          \STATE Gradient $\nabla f(w^{k-\delta^k}; \eta^{k-\delta^k}_{i})$ arrives from worker $i$
          \STATE Update: $w^{k+1} = w^{k} - \gamma_k \nabla f(w^{k-\delta^k}; \eta^{k-\delta^k}_{i})$
          \STATE Worker $i$ begins calculating at $w^{k+1}$
      \ENDFOR
  \end{algorithmic}
\end{algorithm}
\citet{mishchenko2022asynchronous,koloskova2022sharper} provided new analyses of \algname{Asynchronous SGD} (see Algorithm~\ref{alg:asgd}) and \citet{cohen2021asynchronous} developed \algname{Picky SGD} to show that this time complexity can be improved to 
$$\textstyle \cO\left(\left(\frac{1}{n} \sum \limits_{i=1}^n \frac{1}{h_i}\right)^{-1}\left(\frac{L \Delta}{\varepsilon} + \frac{\sigma^2 L \Delta}{n \varepsilon^2}\right)\right),$$
where the dependence on $\{h_i\}$ is harmonic instead of being based on the maximum. It turns out that this complexity can be further improved\footnote{Note that $\min\limits_{m \in [n]} g(m) \leq g(n)$ for any function $g \,:\, \N \to \R$} to
\begin{align}
  \label{eq:orig_rennala}
  \textstyle \Theta\left(\min\limits_{m \in [n]} \left[\left(\frac{1}{m} \sum\limits_{i=1}^{m} \frac{1}{h_i}\right)^{-1} \left(\frac{L \Delta}{\varepsilon} + \frac{\sigma^2 L \Delta}{m \varepsilon^2}\right)\right]\right),
\end{align}
which is achieved by the \algname{Rennala SGD} method \citep{tyurin2023optimal}. Moreover, \citet{tyurin2023optimal} proved a matching lower bound demonstrating that both this complexity and \algname{Rennala SGD} are optimal. Recently, \citet{maranjyan2025ringmaster} developed a new optimal \algname{Ringmaster ASGD} method, which is essentially \algname{Asynchronous SGD} with a key modification. Additionally, under the \emph{universal computation model}, \citet{tyurin2024tight,maranjyan2025ringmaster} showed that both \algname{Rennala SGD} and \algname{Ringmaster ASGD} remain optimal even when computation times are arbitrary, time-varying, and random.

\subsection{More related work}
Our focus is on the homogeneous setting, where all workers have access to the same data distribution or dataset. The heterogeneous data setting is equally important, especially in federated learning (FL) \citep{konevcny2016federated} due to privacy constraints. In this context, many other methods have been proposed, including \algname{Asynchronous SGD} \citep{mishchenko2022asynchronous,koloskova2022sharper}, \algname{Asgrad} \citep{islamov2024asgrad}, \algname{PIAG} \citep{wu2022delay}, and \algname{Malenia SGD} \citep{tyurin2023optimal}. Notably, \citet{tyurin2023optimal,tyurin2024tight} showed that \algname{Malenia SGD} is optimal under both the fixed and universal computation models, without requiring assumptions of bounded gradients or gradients dissimilarity.

In the homogeneous setting, numerous other works have studied asynchronous \algname{SGD} methods, including \citep{lian2015asynchronous,feyzmahdavian2016asynchronous,stich2020error,sra2016adadelay}. However, these methods typically require the assumption that the delays in the indices of stochastic gradients are bounded (on average in \citep{sra2016adadelay}). As a result, their theoretical guarantees in terms of computational time complexity are weaker than those in \citep{cohen2021asynchronous,koloskova2022sharper,mishchenko2022asynchronous,tyurin2023optimal,maranjyan2025ringmaster}, which do not rely on such assumptions.

\subsection{Relation to other frameworks}
There were several previous approaches to unify \algname{SGD} methods. \citet{gorbunov2021local} proposed using a parametric assumption to unify the analysis of local methods, \citet{wang2020tackling} unified \algname{FedAvg}-like methods, \citet{wang2018cooperative} proposed \algname{Cooperative SGD} to analyze different synchronization mechanisms through mixing matrices, \citet{huang2022tackling} analyzed a general sample-wise Push–Pull framework in the heterogeneous setting using two-level augmented graphs, and \citet{khaled2020unified,gorbunov2020unified} analyzed \algname{SGD} methods with variance-reduction and compression techniques. These approaches are related to, but not directly comparable with ours, and, to the best of our knowledge, our approach is new and orthogonal.

Another interesting work that analyzes SGD methods is \citep{even2024asynchronous}. Their work and ours both use graphs; however, we use graphs in completely different, orthogonal, and unrelated contexts. In their case, nodes represent computers (GPUs, CPUs, servers), and edges represent communication links. In our case, nodes represent points of an algorithm, (directed) edges indicate how one point is calculated from another, and the graphs evolve with every iteration. Similarly, \citet{huang2022tackling} also use a graph abstraction but with a different meaning for nodes and edges: in their case, nodes correspond to devices and iterates of data samples, edges represent both communication and computation links, and the number of nodes is fixed from the beginning since every node corresponds to one sample or worker. These are different and orthogonal approaches. Another important difference is that they compare methods using iteration complexities, whereas we use time complexities in Table~\ref{tab:methods_summary_check}, which is a more robust and suitable metric for asynchronous and parallel methods.

\section{Notations}
$\N \eqdef \{1, 2, \dots\};$ $\norm{x}$ is the output of the standard Euclidean norm for all $x \in \R^d$; $\inp{x}{y} = \sum_{i=1}^{d} x_i y_i$ is the standard dot product; $g = \cO(f):$ exist $C > 0$ such that $g(z) \leq C \times f(z)$ for all $z \in \mathcal{Z};$ $g = \Omega(f):$ exist $C > 0$ such that $g(z) \geq C \times f(z)$ for all $z \in \mathcal{Z};$ $g = \Theta(f):$ $g = \cO(f)$ and $g = \Omega(f);$ $g = \widetilde{\Theta}(f):$ the same as $g = \Omega(f)$ but up to logarithmic factors; $a \vee b \eqdef \max\{a, b\}.$

\section{Experiments}
\label{sec:exp}

\subsection{Setup}

The experiments were prepared in Python. The distributed environment was simulated with the Simpy Python library \citep{matloff2008introduction}. 
There are two hardware setups:

$\bullet$ CPU Setup: Intel(R) Xeon(R) Gold 6348 CPU @ 2.60GHz 52 cores (for logistic regression experiments)

$\bullet$ GPU Setup: 2 $\times$ Nvidia A100 80 Gb, CPU: Intel(R) Xeon(R) Platinum 8358 CPU @ 2.60GHz 128 cores (for ResNet18 and GPT2 experiments)

The distributed environment is simulated with the help of Simpy. To compare the methods, we consider different computation and communication scenarios by taking different computation times $\{h_i\}$ and communication times $\{\tau_i\}$ of the workers.

For each task, we perform a grid search to identify the best parameters and report the top results across all runs of each algorithm.
The individual grid search parameters are drawn from a set of values specified in Section~\ref{sec:exp:parameters}. We plot the convergence rates against the elapsed time.

We evaluate the convergence speeds of all algorithms in four regimes:

  $\bullet$ \emph{Classical}: $h_i = 10$ and $\tau_i = 0$ for all $i \in [n].$ All workers have the same computation times, and the communication times are ignored.

  $\bullet$ \emph{Slow Communications}: $h_i = 10$ and $\tau_i = 100$ for all $i \in [n].$ The communication takes time.

  $\bullet$ \emph{Heterogeneous Computations}: $h_i = $ random\_choice($\{1,10\}$) and $\tau_i = 0$ for all $i \in [n].$ All workers have the different computation times randomly sampled from the set $\{1, 10\}.$

  $\bullet$ \emph{Heterogeneous Communications}: $h_i = 10$ and $\tau_i = $ random\_choice([1,100]) for all $i \in [n].$ All workers have the different communication times randomly sampled from the set $\{1, 10\}.$

This setup allows us to observe how different algorithms perform across various regimes and to compare their convergence behaviors under differing computational and communication conditions.





\subsection{Experiments with logistic regression}

We begin our experiments with one the simplest ML problems---logistic regression on the MNIST dataset~\cite{lecun2010mnist}. In this setting, we evaluate three different numbers of workers: $n \in \{16, 64, 256\}$. We use the standard linear model with the logistic loss.

Starting with $n = 16$ workers, we perform a grid search over the parameters specified in Table~\ref{tab:mnist_nw16_config} across all four regimes. The corresponding results are shown in Figure~\ref{fig:all_regimes_nw16}. In the \emph{classical} setup (Figure~\ref{fig:symmetric_regime_nw16}), all algorithms perform similarly. However, \algname{Rennala SGD} and \algname{Local SGD} underperform slightly due to the inability to interrupt an already initiated local step, resulting in occasional update losses. In the \emph{slow communications} setup (Figure~\ref{fig:bandwidth_bound_regime_nw16}), \algname{Rennala SGD}, \algname{Local SGD}, and \algname{Async-Local SGD} perform better, as they aggregate local steps and reduce communication overhead. In contrast, \algname{Synchronized SGD} and \algname{Ringmaster ASGD} perform poorly due to excessive communication. In both the \emph{heterogeneous computations} (Figure~\ref{fig:compute_heterogeneous_regime_nw16}) and \emph{heterogeneous communications} (Figure~\ref{fig:communication_heterogeneous_regime_nw16}) regimes, \algname{Async-Local SGD} and \algname{Ringmaster ASGD} achieve the fastest performance. \algname{Synchronized SGD}, as expected, is the slowest because it is not robust to heterogeneous computations and communications.

For $n = 64$ (grid search parameters in Table~\ref{tab:mnist_nw64_config}, results shown in Figure~\ref{fig:all_regimes_nw64}) and $n = 256$ (grid search parameters in Table~\ref{tab:mnist_nw256_config}, results shown in Figure~\ref{fig:all_regimes_nw256}), we observe behavior similar to the $n = 16$ case across all four regimes. 

\begin{figure}[ht]
  \centering
  \begin{subfigure}[b]{0.48\textwidth}
      \centering
      \includegraphics[width=\textwidth]{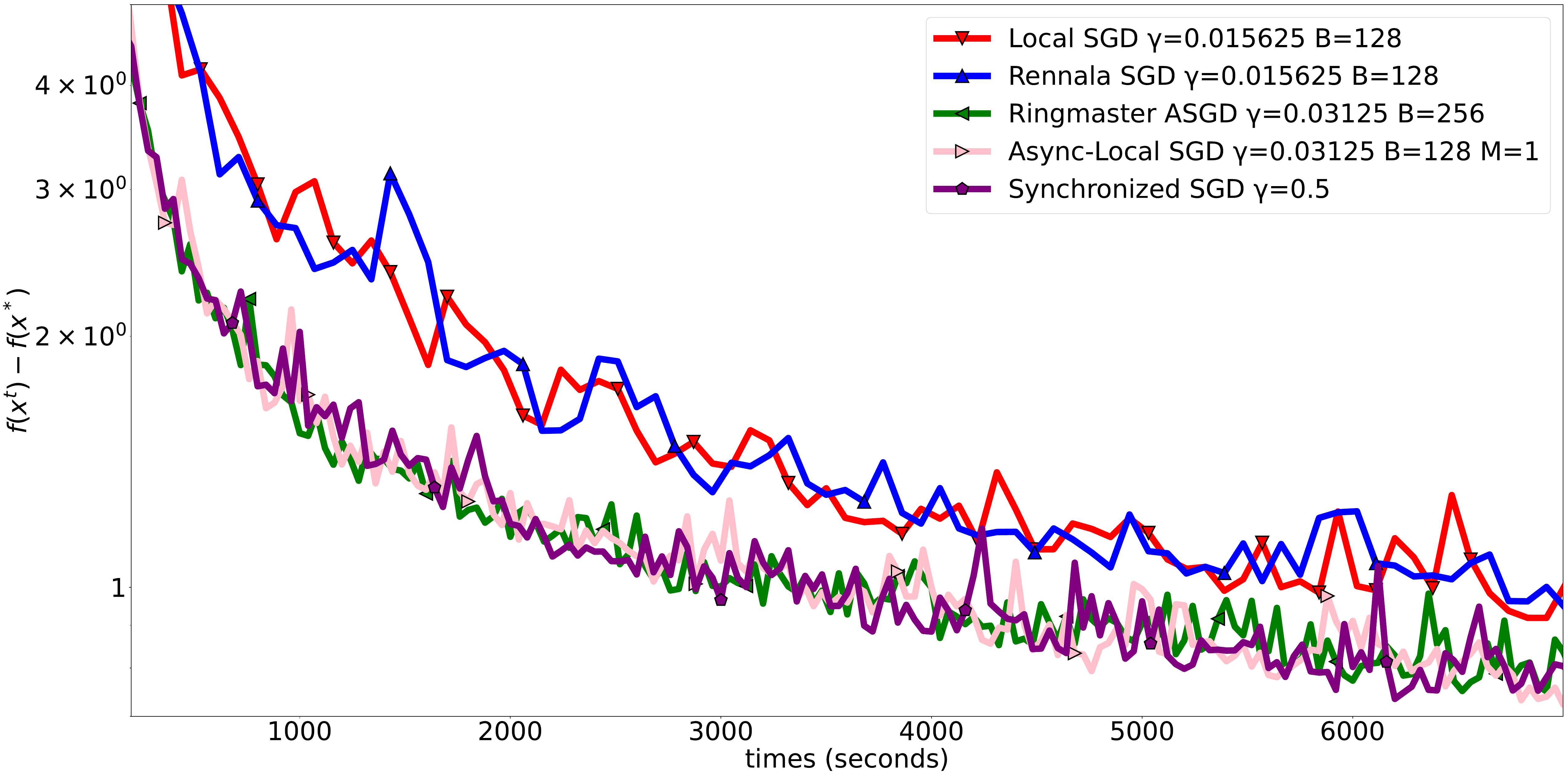}
      \caption{\emph{Classical} regime: $h_i = 10$, $\tau_i = 0$}
      \label{fig:symmetric_regime_nw16}
  \end{subfigure}
  \hfill
  \begin{subfigure}[b]{0.48\textwidth}
      \centering
      \includegraphics[width=\textwidth]{paper_pdfs/draft_mnist_logreg_nw16/3103_basic_algos_comp_10_comm_100__3103_subset_ring_reduce_comp_10_comm_100__1105_ringmaster_sgd_compcomm_comp_10_comm_100_top1_nw16.pdf}
      \caption{\emph{Slow Communications} regime: $h_i = 10$, $\tau_i = 100$ }
      \label{fig:bandwidth_bound_regime_nw16}
  \end{subfigure}
  
  \vspace{0.5cm}
  
  \begin{subfigure}[b]{0.48\textwidth}
      \centering
      \includegraphics[width=\textwidth]{paper_pdfs/draft_mnist_logreg_nw16/3103_basic_algos_comp_1_10_comm_0__3103_subset_ring_reduce_comp_1_10_comm_0__1105_ringmaster_sgd_compcomm_comp_1_10_comm_0_top1_nw16.pdf}
      \caption{\emph{Heterogeneous Computations} regime: $h_i = $ random\_choice($\{1,10\}$), $\tau_i = 0$}
      \label{fig:compute_heterogeneous_regime_nw16}
  \end{subfigure}
  \hfill
  \begin{subfigure}[b]{0.48\textwidth}
      \centering
      \includegraphics[width=\textwidth]{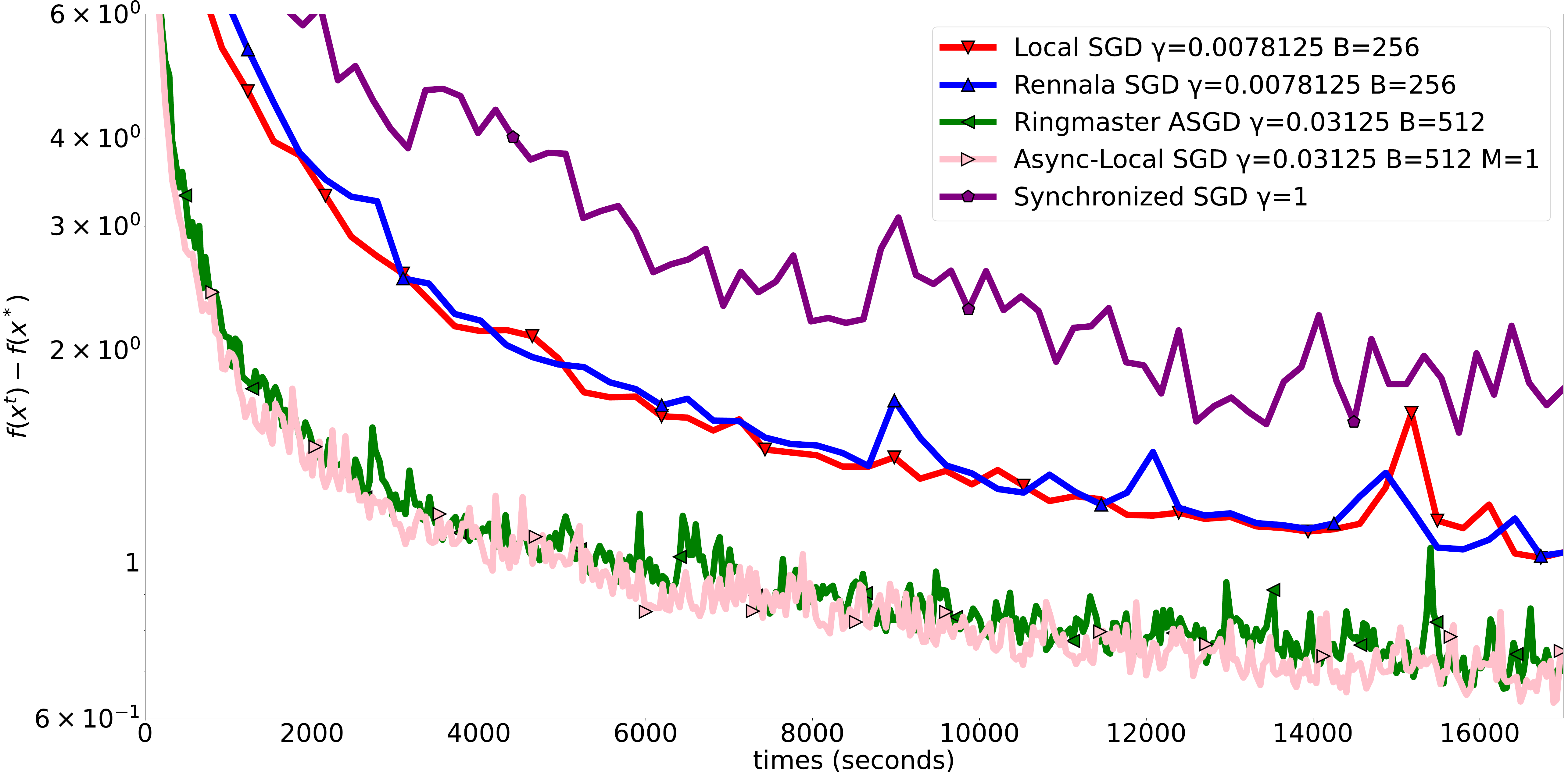}
      \caption{\emph{Heterogeneous Communications} regime: $h_i  = 10$, $\tau_i = $ random\_choice([1,100])}
      \label{fig:communication_heterogeneous_regime_nw16}
  \end{subfigure}
  \caption{Comparison of different optimization algorithms across various distributed computing regimes with $n = 16$. Each plot shows the convergence behavior in terms of loss versus simulated wall-clock time.}
  \label{fig:all_regimes_nw16}
\end{figure}

\begin{figure}[ht]
  \centering
  \begin{subfigure}[b]{0.48\textwidth}
      \centering
      \includegraphics[width=\textwidth]{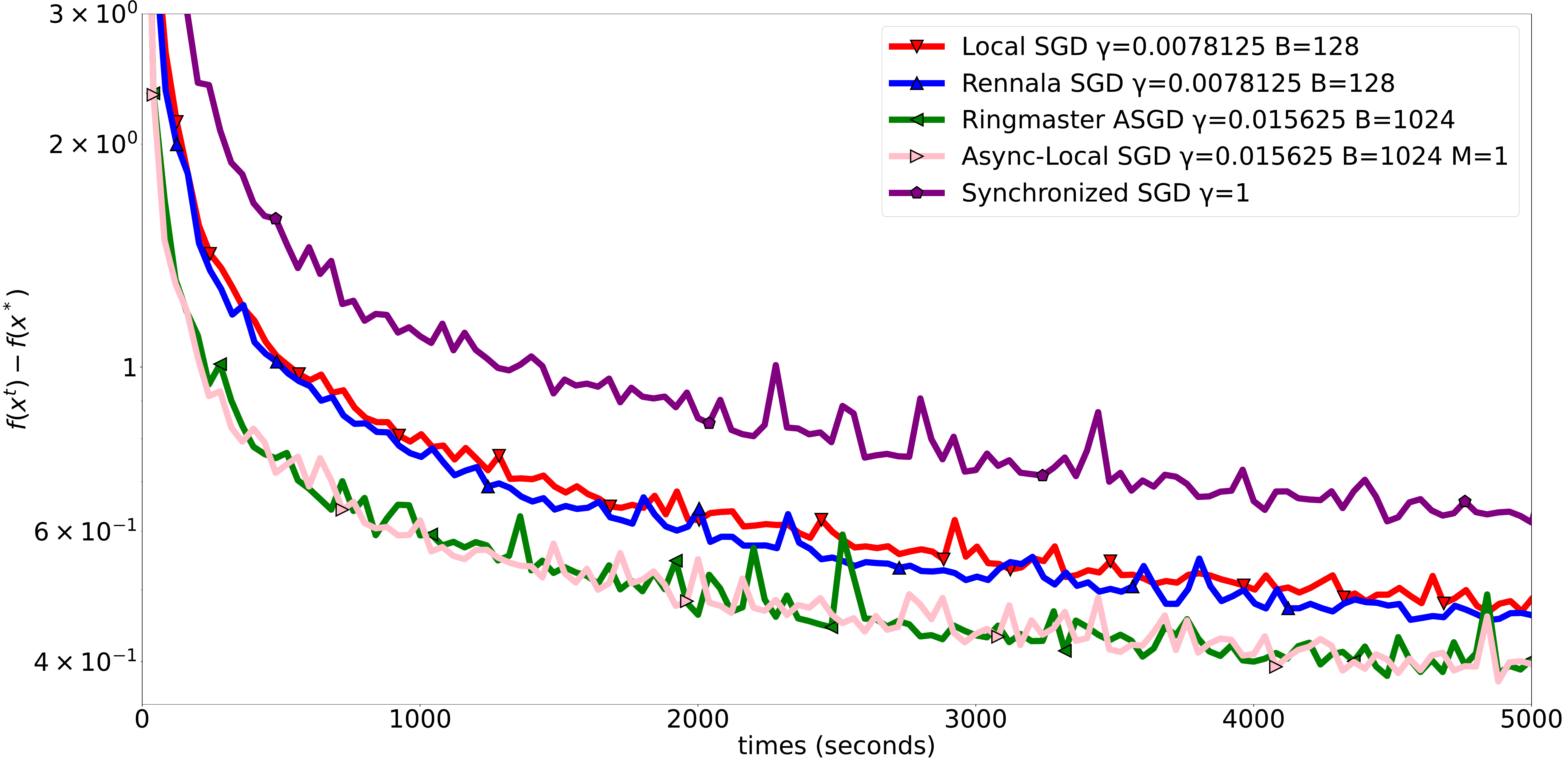}
      \caption{\emph{Classical} regime: $h_i = 10$, $\tau_i = 0$}
      \label{fig:symmetric_regime_nw64}
  \end{subfigure}
  \hfill
  \begin{subfigure}[b]{0.48\textwidth}
      \centering
      \includegraphics[width=\textwidth]{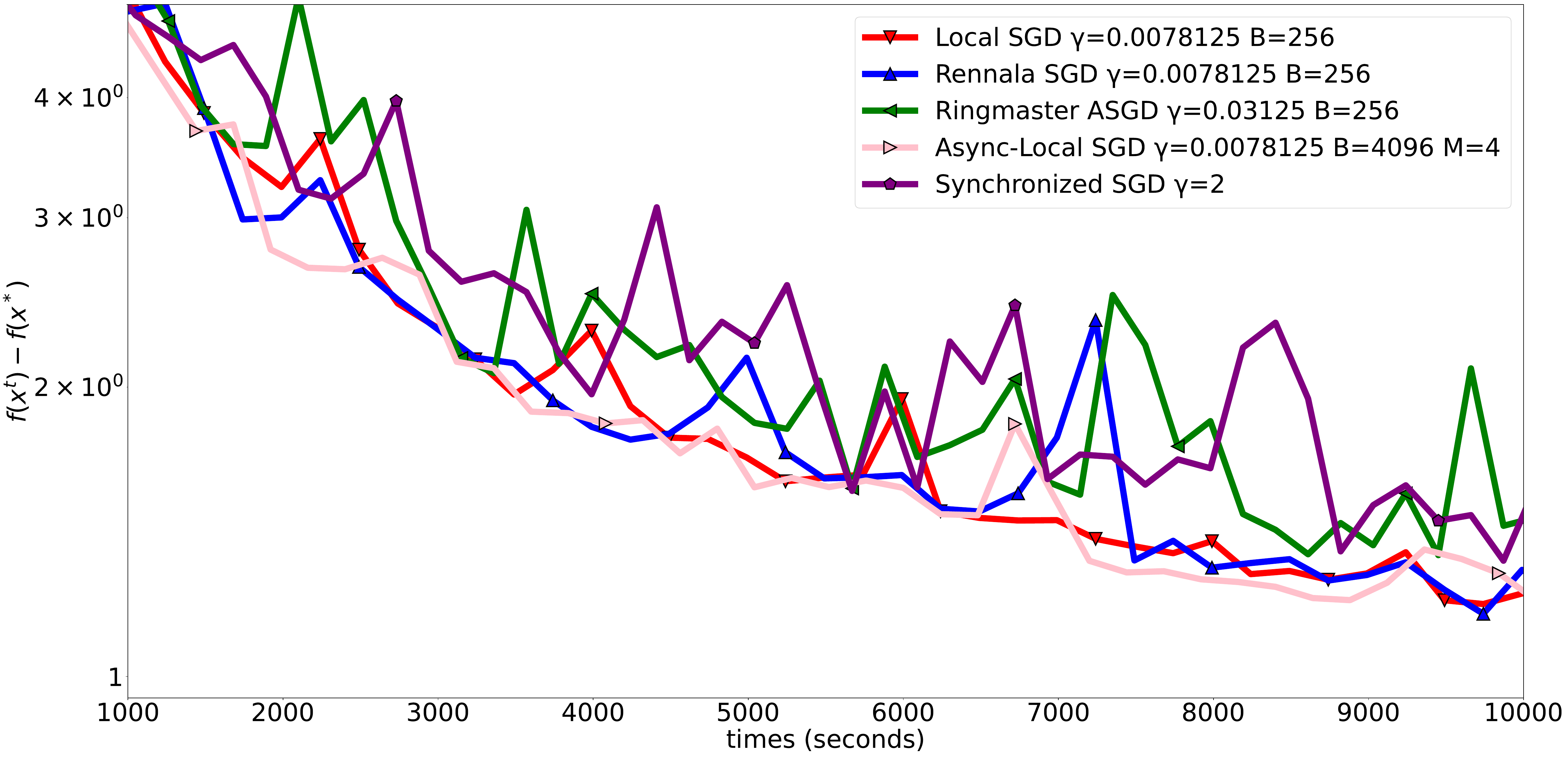}
      \caption{\emph{Slow Communications} regime: $h_i = 10$, $\tau_i = 100$ }
      \label{fig:bandwidth_bound_regime_nw64}
  \end{subfigure}
  
  \vspace{0.5cm}
  
  \begin{subfigure}[b]{0.48\textwidth}
      \centering
      \includegraphics[width=\textwidth]{paper_pdfs/draft_mnist_logreg_nw64/0805_basic_algos_comp_1_10_comm_0__1105_ringmaster_sgd_compcomm_comp_1_10_comm_0_top1_nw64.pdf}
      \caption{\emph{Heterogeneous Computations} regime: $h_i = $ random\_choice($\{1,10\}$), $\tau_i = 0$}
      \label{fig:compute_heterogeneous_regime_nw64}
  \end{subfigure}
  \hfill
  \begin{subfigure}[b]{0.48\textwidth}
      \centering
      \includegraphics[width=\textwidth]{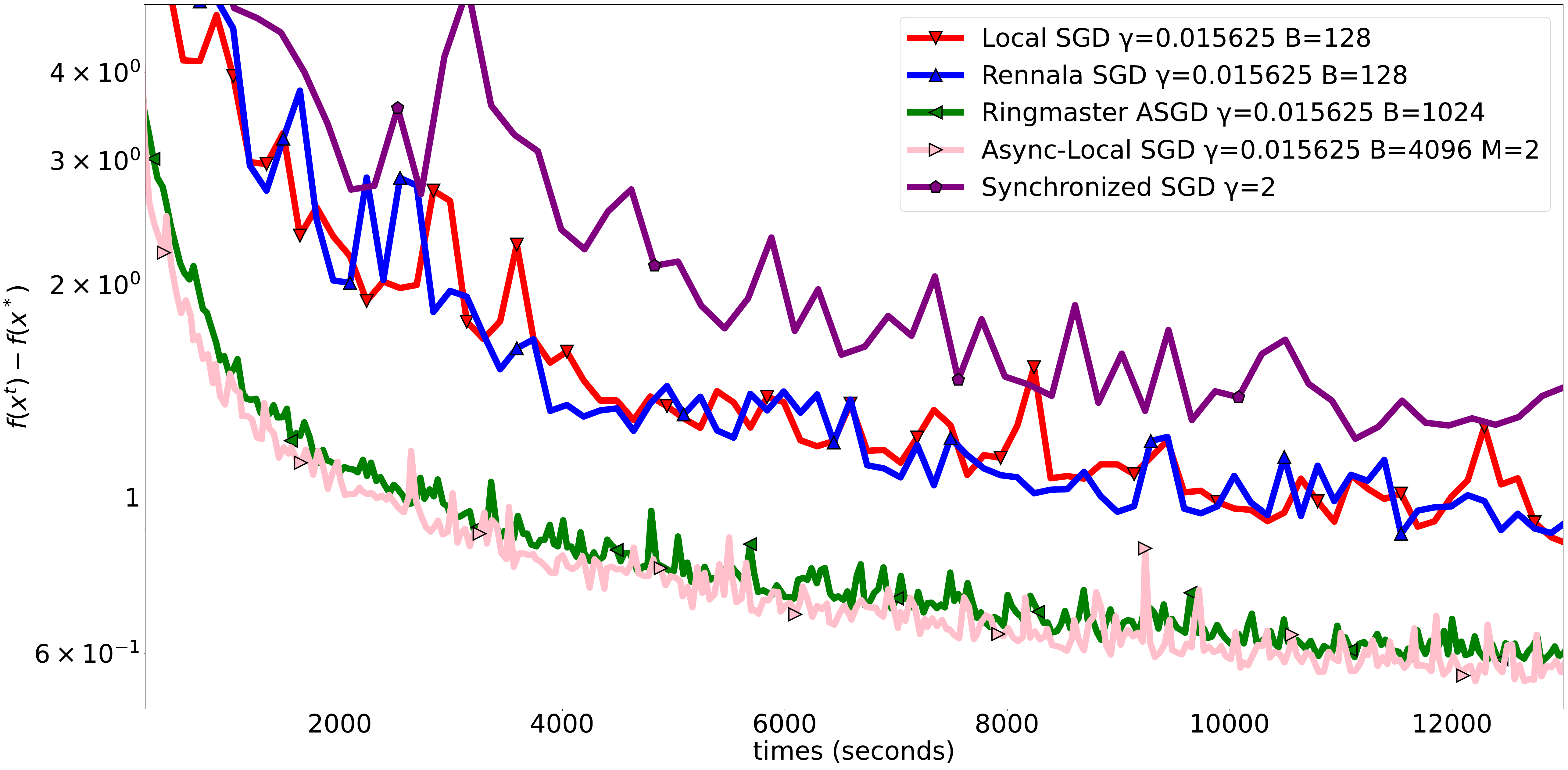}
      \caption{\emph{Heterogeneous Communications} regime: $h_i  = 10$, $\tau_i = $ random\_choice([1,100])}
      \label{fig:communication_heterogeneous_regime_nw64}
  \end{subfigure}
  \caption{Comparison of different optimization algorithms across various distributed computing regimes with $n = 64$. Each plot shows the convergence behavior in terms of loss versus simulated wall-clock time.}
  \label{fig:all_regimes_nw64}
\end{figure}

\begin{figure}[ht]
  \centering
  \begin{subfigure}[b]{0.48\textwidth}
      \centering
      \includegraphics[width=\textwidth]{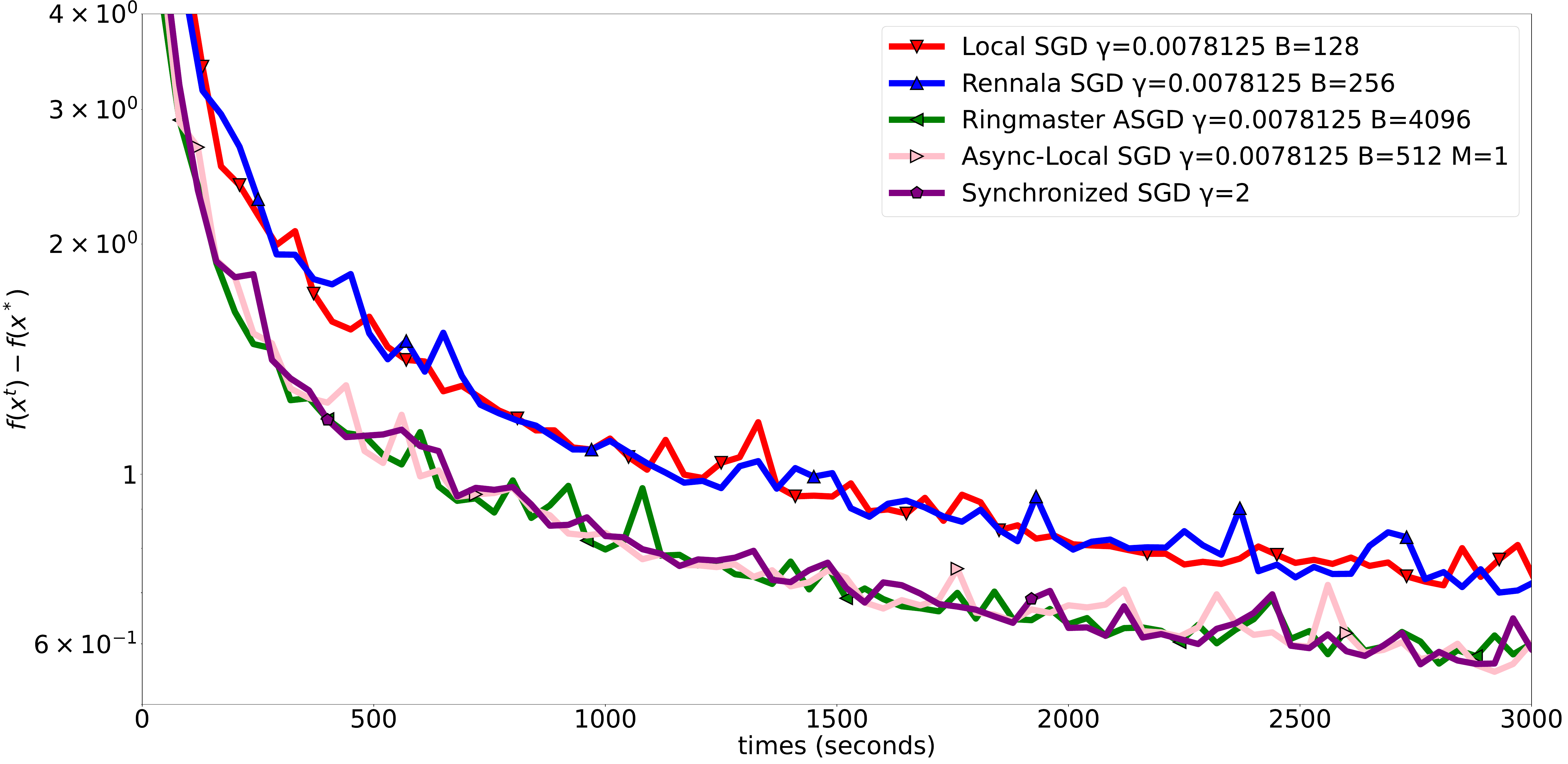}
      \caption{\emph{Classical} regime: $h_i = 10$, $\tau_i = 0$}
      \label{fig:symmetric_regime_nw256}
  \end{subfigure}
  \hfill
  \begin{subfigure}[b]{0.48\textwidth}
      \centering
      \includegraphics[width=\textwidth]{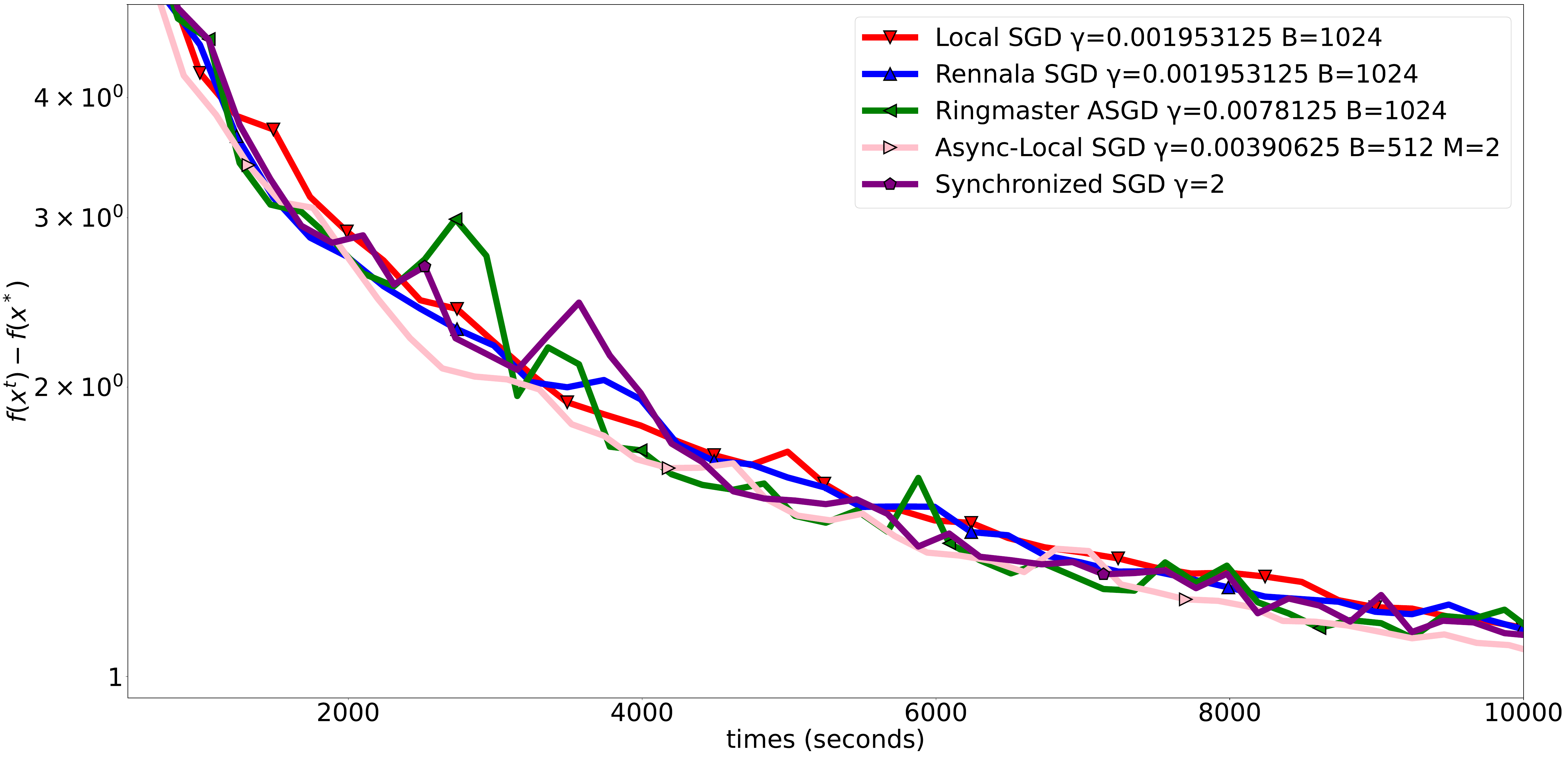}
      \caption{\emph{Slow Communications} regime: $h_i = 10$, $\tau_i = 100$ }
      \label{fig:bandwidth_bound_regime_nw256}
  \end{subfigure}
  
  \vspace{0.5cm}
  
  \begin{subfigure}[b]{0.48\textwidth}
      \centering
      \includegraphics[width=\textwidth]{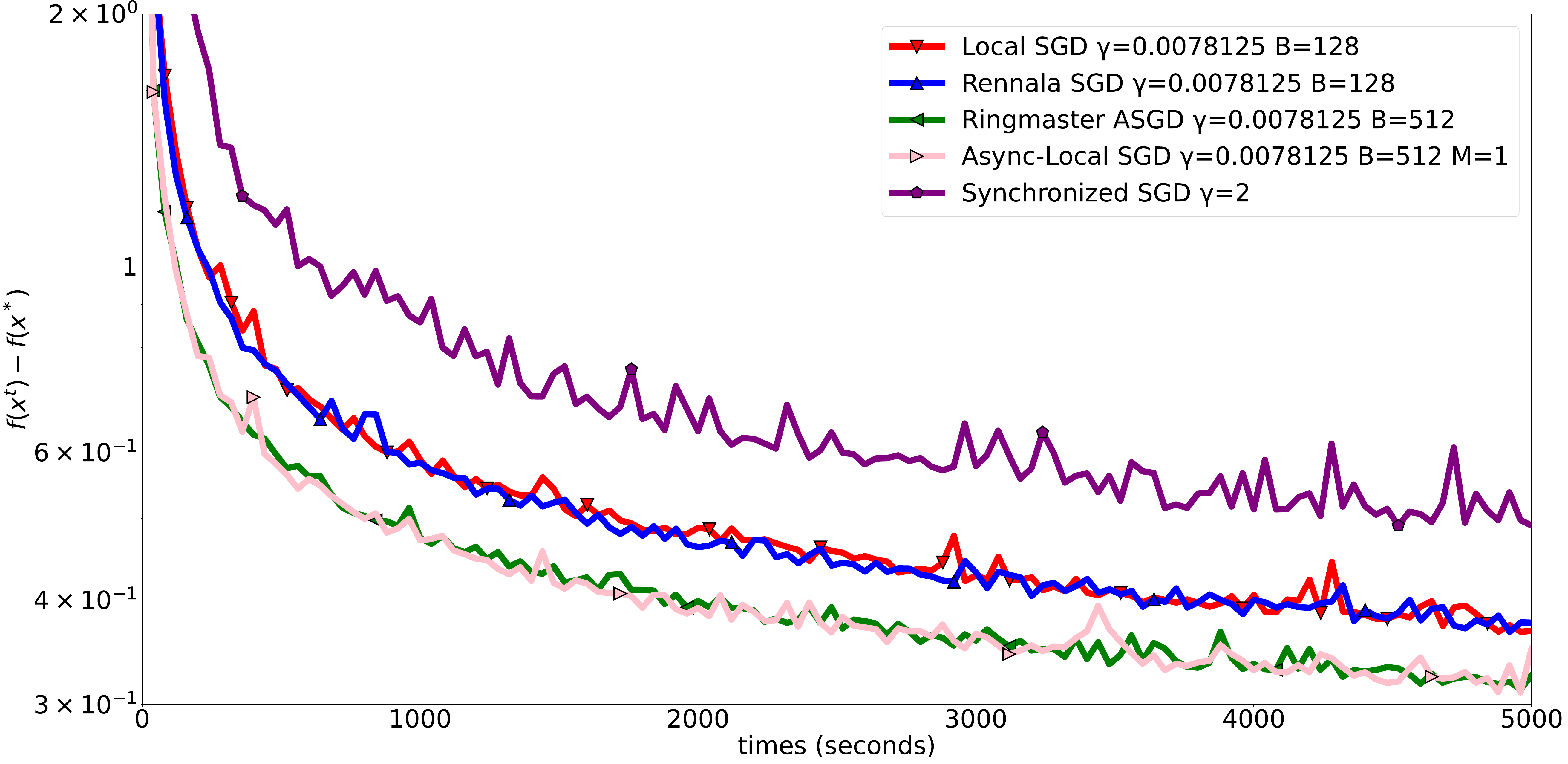}
      \caption{\emph{Heterogeneous Computations} regime: $h_i = $ random\_choice($\{1,10\}$), $\tau_i = 0$}
      \label{fig:compute_heterogeneous_regime_nw256}
  \end{subfigure}
  \hfill
  \begin{subfigure}[b]{0.48\textwidth}
      \centering
      \includegraphics[width=\textwidth]{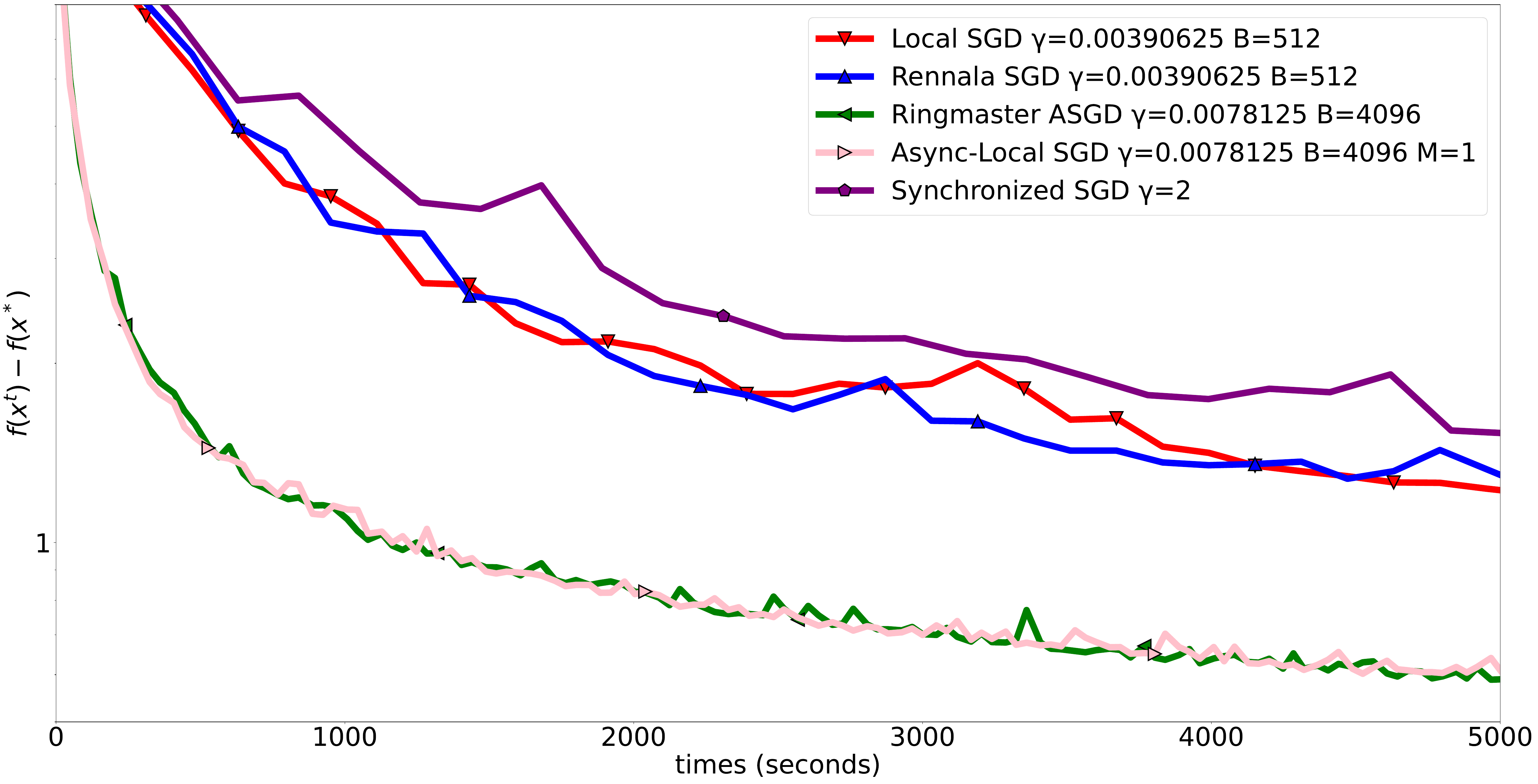}
      \caption{\emph{Heterogeneous Communications} regime: $h_i  = 10$, $\tau_i = $ random\_choice([1,100])}
      \label{fig:communication_heterogeneous_regime_nw256}
  \end{subfigure}
  \caption{Comparison of different optimization algorithms across various distributed computing regimes with $n = 256$. Each plot shows the convergence behavior in terms of loss versus simulated wall-clock time.}
  \label{fig:all_regimes_nw256}
\end{figure}

\clearpage
\newpage
\subsection{Experiments with ResNet18 and image classification}
We test the algorithms on the CIFAR10 \citep{krizhevsky2009learning} image recognition task with the ResNet18 \citep{he2016deep} deep neural network. For ResNet18, we similarly report the best convergence results from a grid search over the parameters listed in Table~\ref{tab:resnet18_nw8_config}, using a setup with $n = 8$ workers. Results for all algorithms across the four regimes are presented in Figure~\ref{fig:resnet18_all_regimes_nw8}.

The conclusions largely mirror those from the logistic regression experiments, with a few additional observations. In the \emph{classical} setup (Figure~\ref{fig:resnet18_symmetric_regime_nw8}), \algname{Ringmaster ASGD} outperforms all other methods. We believe that this is due to the frequent updates of the method. In the \emph{slow communications} regime (Figure~\ref{fig:resnet18_bandwidth_bound_regime_nw8}), the trends are consistent with those observed in the MNIST experiments: \algname{Ringmaster ASGD} becomes slower, while methods that are less communication-intensive achieve better performance. In the \emph{heterogeneous communications} (Figure~\ref{fig:resnet18_communication_heterogeneous_regime_nw8}) regime, \algname{Async-Local SGD} has the best performance due to the good balance of frequent model updates and local steps.

\begin{figure}[h]
  \centering
  \hspace*{\fill}%

  \begin{subfigure}[c]{0.48\textwidth}
    \centering
    \includegraphics[width=\textwidth]{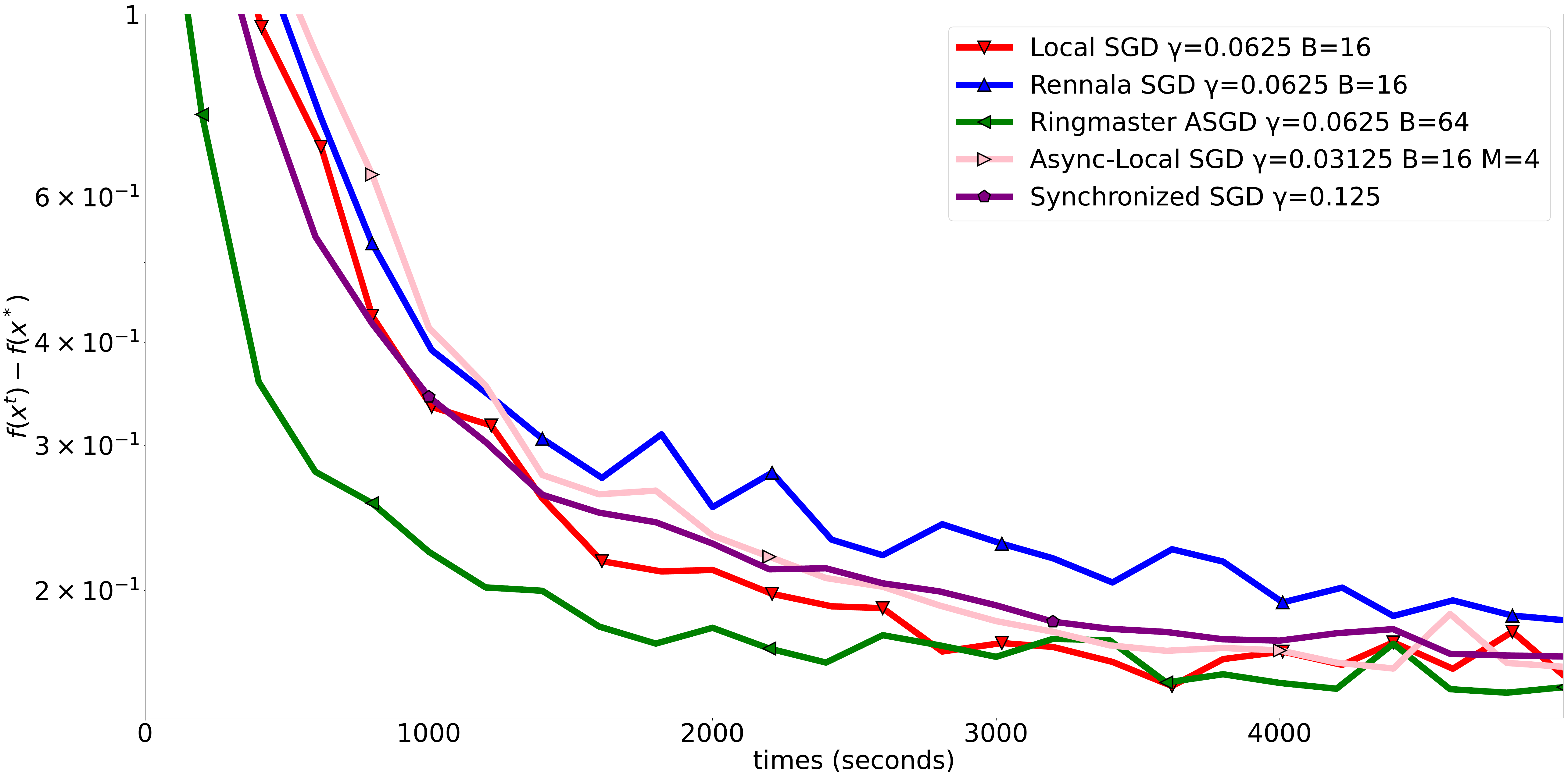}
    \caption{\emph{Classical} regime: $h_i = 10$, $\tau_i = 0$}
    \label{fig:resnet18_symmetric_regime_nw8}
  \end{subfigure}%
  \hfill
  \begin{subfigure}[c]{0.45\textwidth}
    \centering
    \includegraphics[width=\textwidth]{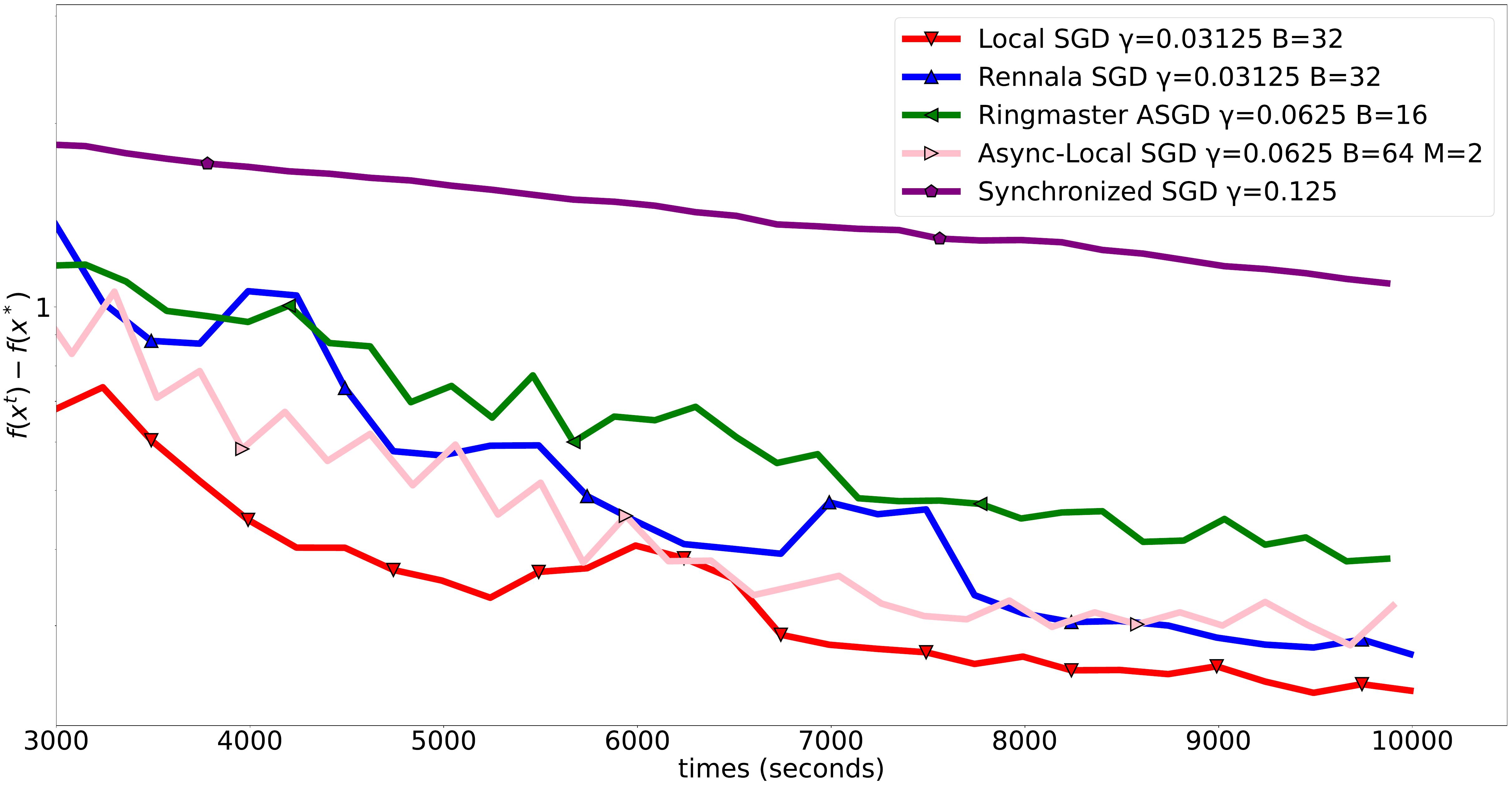}
    \caption{\emph{Slow Communications} regime: $h_i = 10$, $\tau_i = 100$}
    \label{fig:resnet18_bandwidth_bound_regime_nw8}
  \end{subfigure}%

  \hspace{0.5cm}
  \begin{subfigure}[c]{0.45\textwidth}
    \centering
    \includegraphics[width=\textwidth]{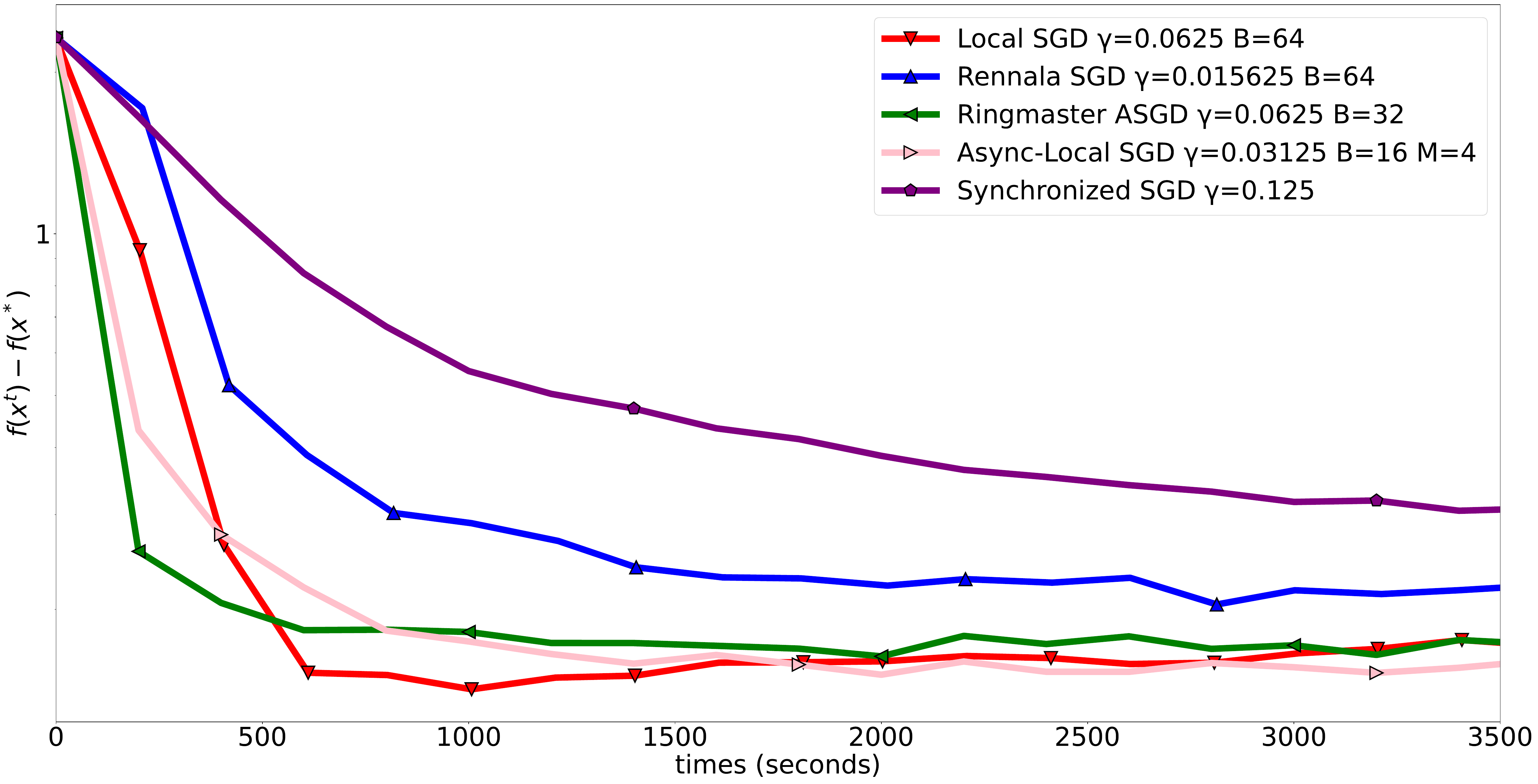}
    \caption{\emph{Heterogeneous Computations} regime: $h_i = $ random\_choice($\{1,10\}$), $\tau_i = 0$}
    \label{fig:resnet18_compute_heterogeneous_regime_nw8}
  \end{subfigure}
  \hfill
  \begin{subfigure}[c]{0.43\textwidth}
    \centering
    \includegraphics[width=\textwidth]{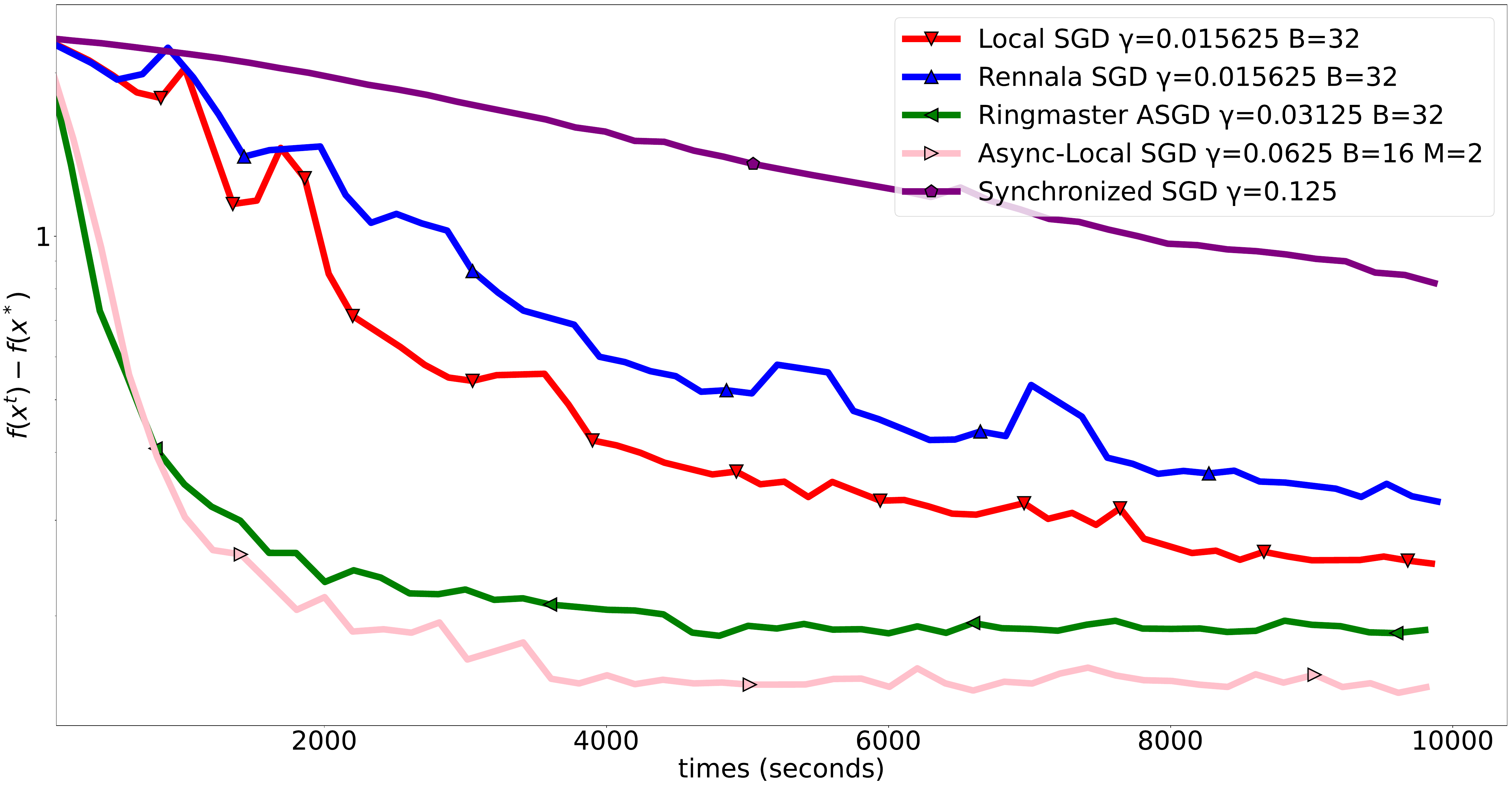}
    \caption{\emph{Heterogeneous Communications} regime: $h_i = 10$, $\tau_i = $ random\_choice($\{1,100\}$)}
    \label{fig:resnet18_communication_heterogeneous_regime_nw8}
  \end{subfigure}%
  \caption{ResNet18 experiments with $n = 8$}
  \label{fig:resnet18_all_regimes_nw8}
\end{figure}

\newpage
\subsection{Experiments with GPT2 and token prediction}
We also evaluate the algorithms on the Wikitext-2 \citep{merity2016pointer} next token prediction task with GPT2 \citep{radford2019language}. For GPT2, we evaluate all four regimes using a setup with $n = 8$ workers. To achieve faster and more robust convergence, we use the \algname{AdamW} normalization strategy\footnote{Instead of the \algname{SGD} step $w^{k+1} = w^{k} - \gamma g^k,$ where $g^k$ is a descent direction, we use the AdamW strategy with $g^k$.} only in these experiments with GPT2. The resulting convergence curves are shown in Figure~\ref{fig:gpt2_all_regimes_nw8}. Once again, the results are similar to those of the previous experiments. Due to hardware limitations, a narrower grid search range is used; therefore, it is possible that convergence could be further improved with a more extensive search.

\begin{figure}[H]
  \centering
  \hspace*{\fill}%

  \begin{subfigure}[c]{0.48\textwidth}
    \centering
    \includegraphics[width=\textwidth]{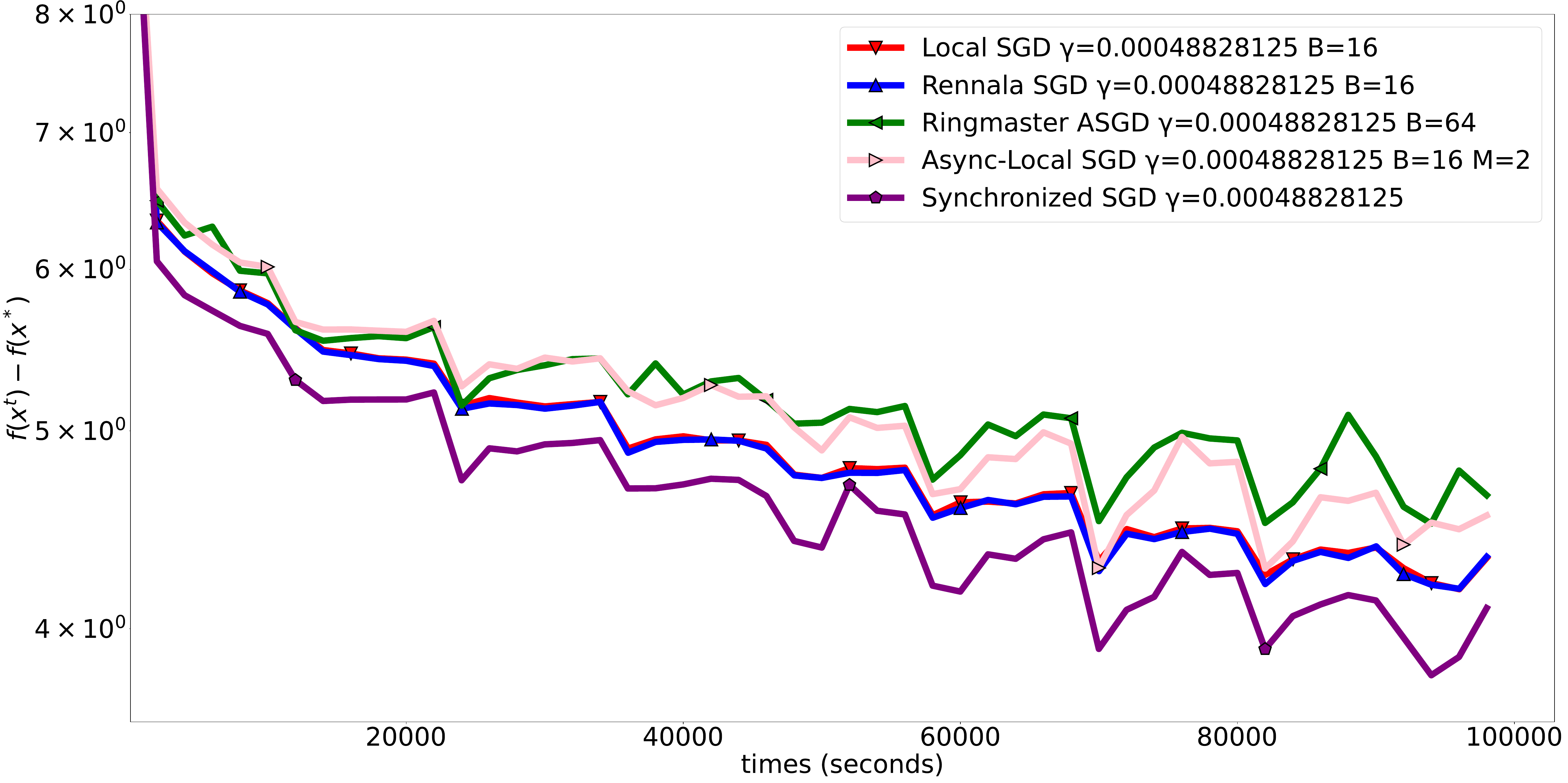}
    \caption{\emph{Classical} regime: $h_i = 10$, $\tau_i = 0$}
    \label{fig:gpt2_symmetric_regime_nw8}
  \end{subfigure}%
  \hfill
  \begin{subfigure}[c]{0.48\textwidth}
    \centering
    \includegraphics[width=\textwidth]{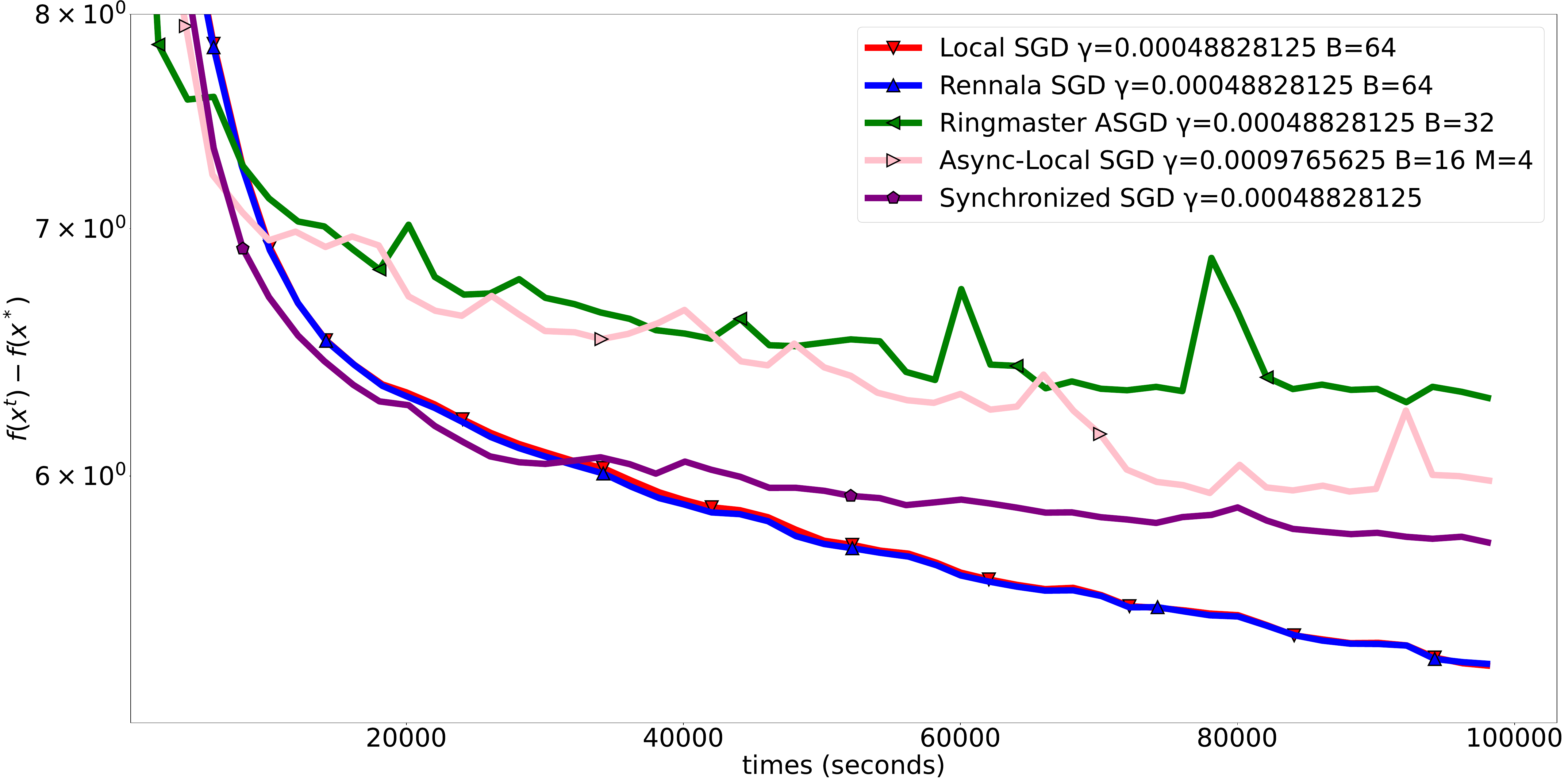}
    \caption{\emph{Slow Communications} regime: $h_i = 10$, $\tau_i = 100$}
    \label{fig:gpt2_bandwidth_bound_regime_nw8}
  \end{subfigure}%

  \begin{subfigure}[c]{0.48\textwidth}
    \centering
    \includegraphics[width=\textwidth]{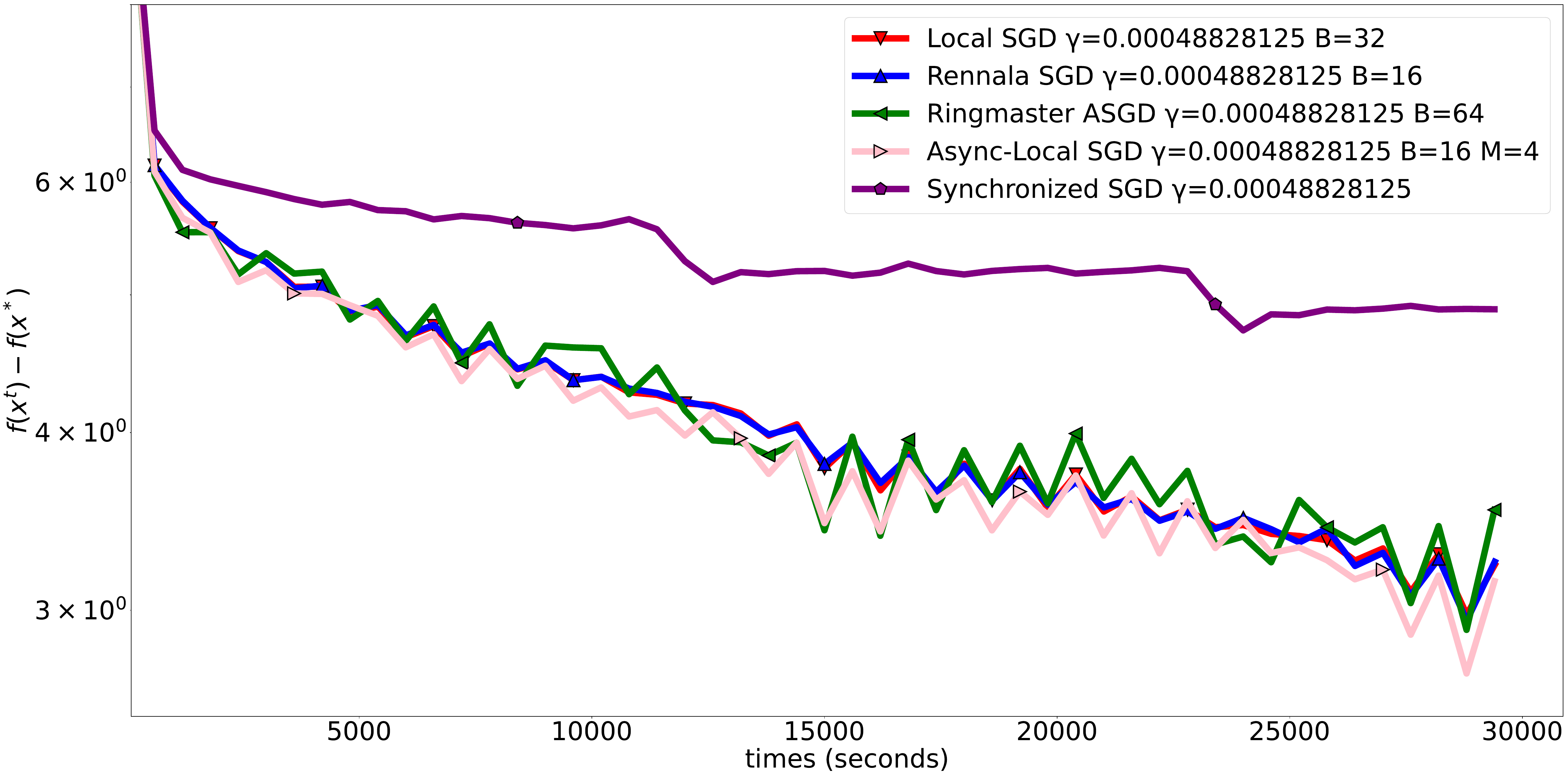}
    \caption{\emph{Heterogeneous Computations} regime: $h_i = $ random\_choice($\{1,10\}$), $\tau_i = 0$}
    \label{fig:gpt2_compute_heterogeneous_regime_nw8}
  \end{subfigure}
  \hfill
  \begin{subfigure}[c]{0.48\textwidth}
    \centering
    \includegraphics[width=\textwidth]{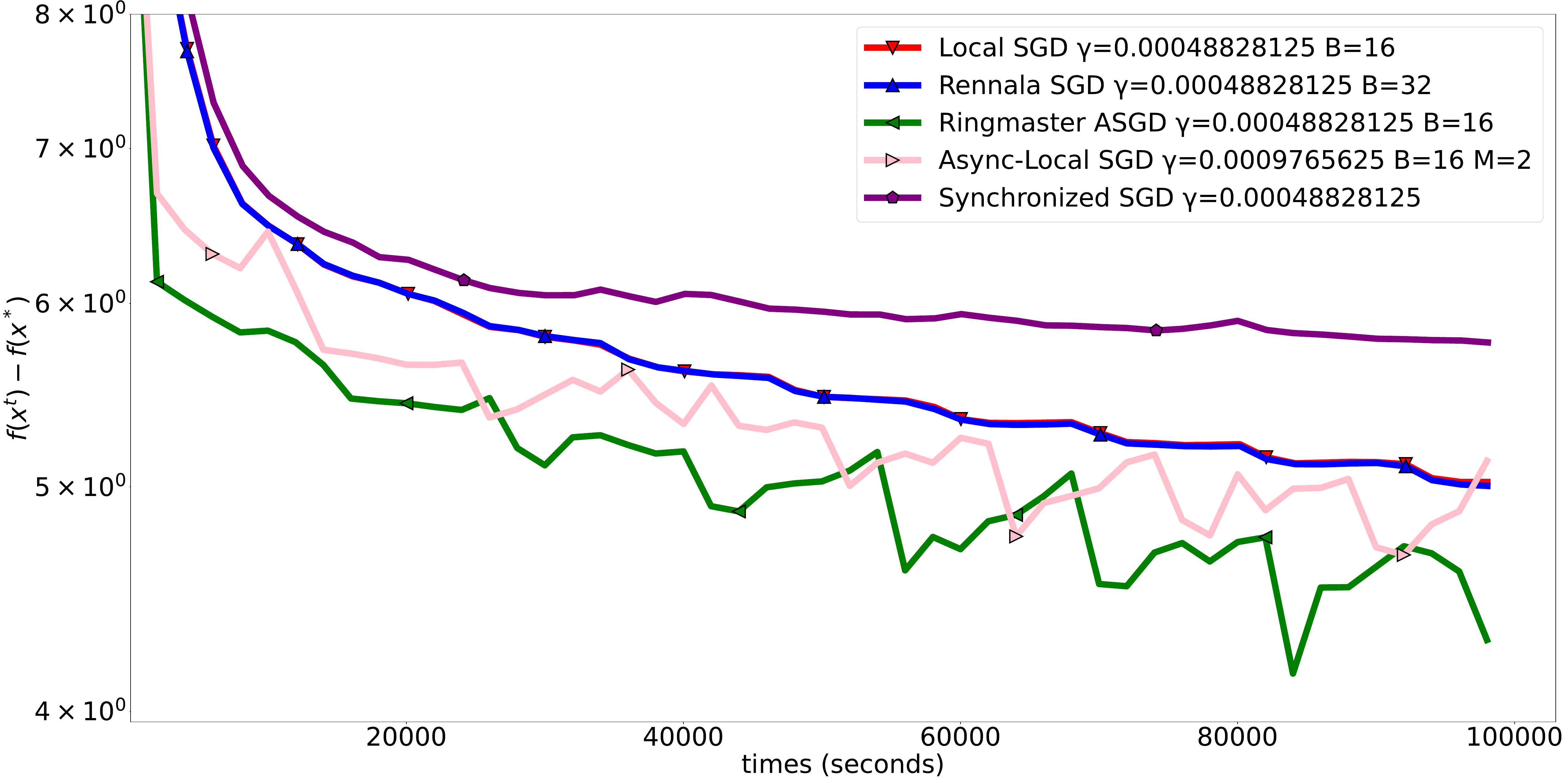}
    \caption{\emph{Heterogeneous Communications} regime: $h_i = 10$, $\tau_i = $ random\_choice($\{1,100\}$)}
    \label{fig:gpt2_communication_heterogeneous_regime_nw8}
  \end{subfigure}
  \caption{GPT-2 experiments with $n = 8$}
  \label{fig:gpt2_all_regimes_nw8}
\end{figure}

\subsection{Experiments with \algname{Cycle SGD} and peak bandwidth}
\begin{figure}[H]
  \centering
  \includegraphics[width=0.5\textwidth]{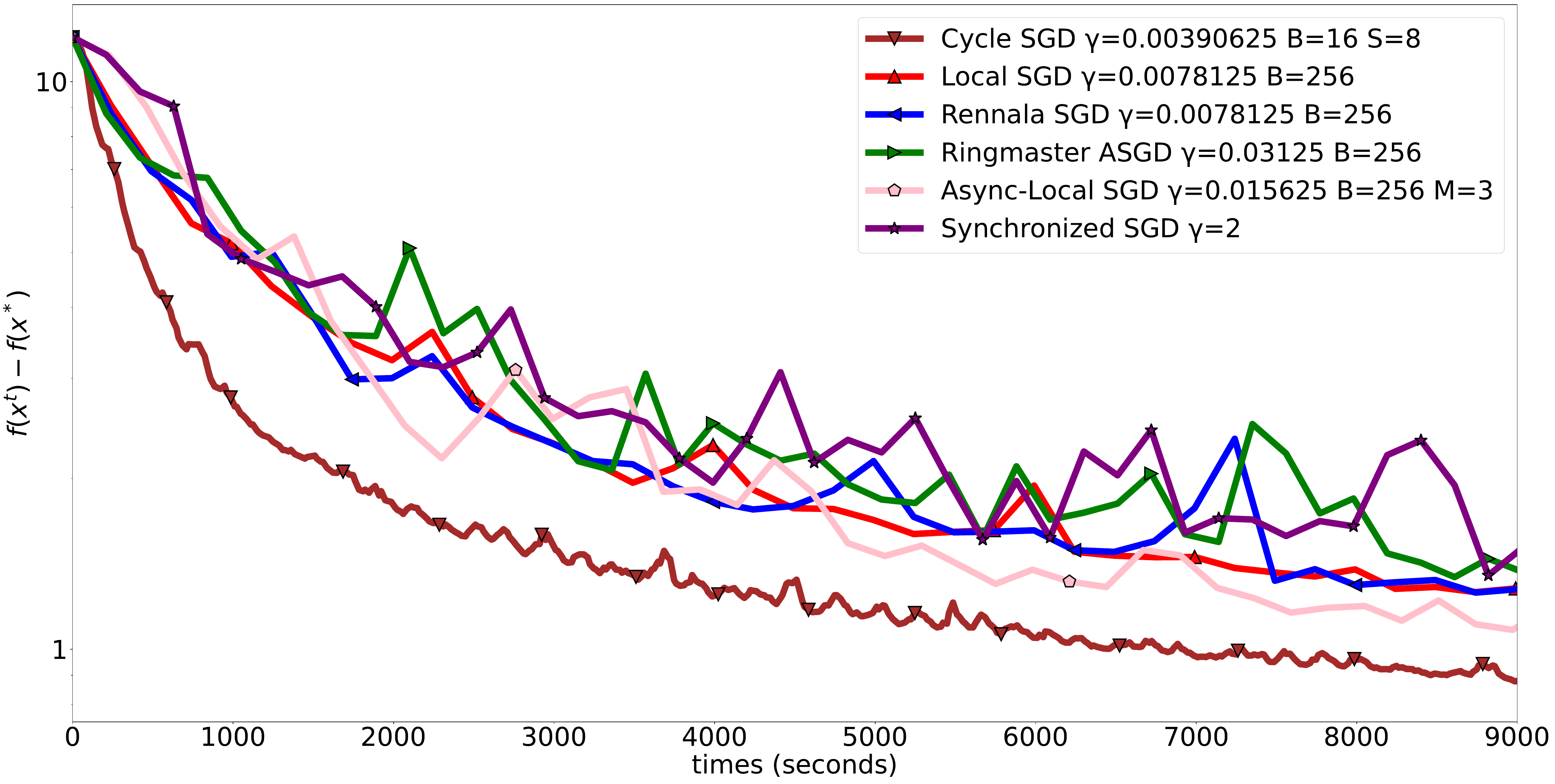}
  \caption{Comparison of methods in the setup, where the communication time depends on the number of synchronized workers. It takes $h_i = 10$ seconds to compute a stochastic gradient, and the communication time is $1.5 \times k$ seconds, where $k$ is the number of synchronized workers.}
  \label{fig:cycle}
\end{figure}

In Figure~\ref{fig:cycle}, we present the comparison of \algname{Cycle SGD} with other methods to verify the advantageous theoretical property of \algname{Cycle SGD}. For this particular regime, when the computational times are the same, all methods except \algname{Cycle SGD} require all the $64$ workers to synchronize at the same time, leading to high peak bandwidth. At the same time, \algname{Cycle SGD} synchronizes only $8$ workers and reduces the communication time and the total time complexity, which we observe in Figure~\ref{fig:cycle} with logistic regression on the MNIST dataset.

\subsection{Sensitivity to the choice of $B$ in \algname{Rennala SGD}}
\begin{figure}[H]
  \centering
  \includegraphics[width=0.5\textwidth]{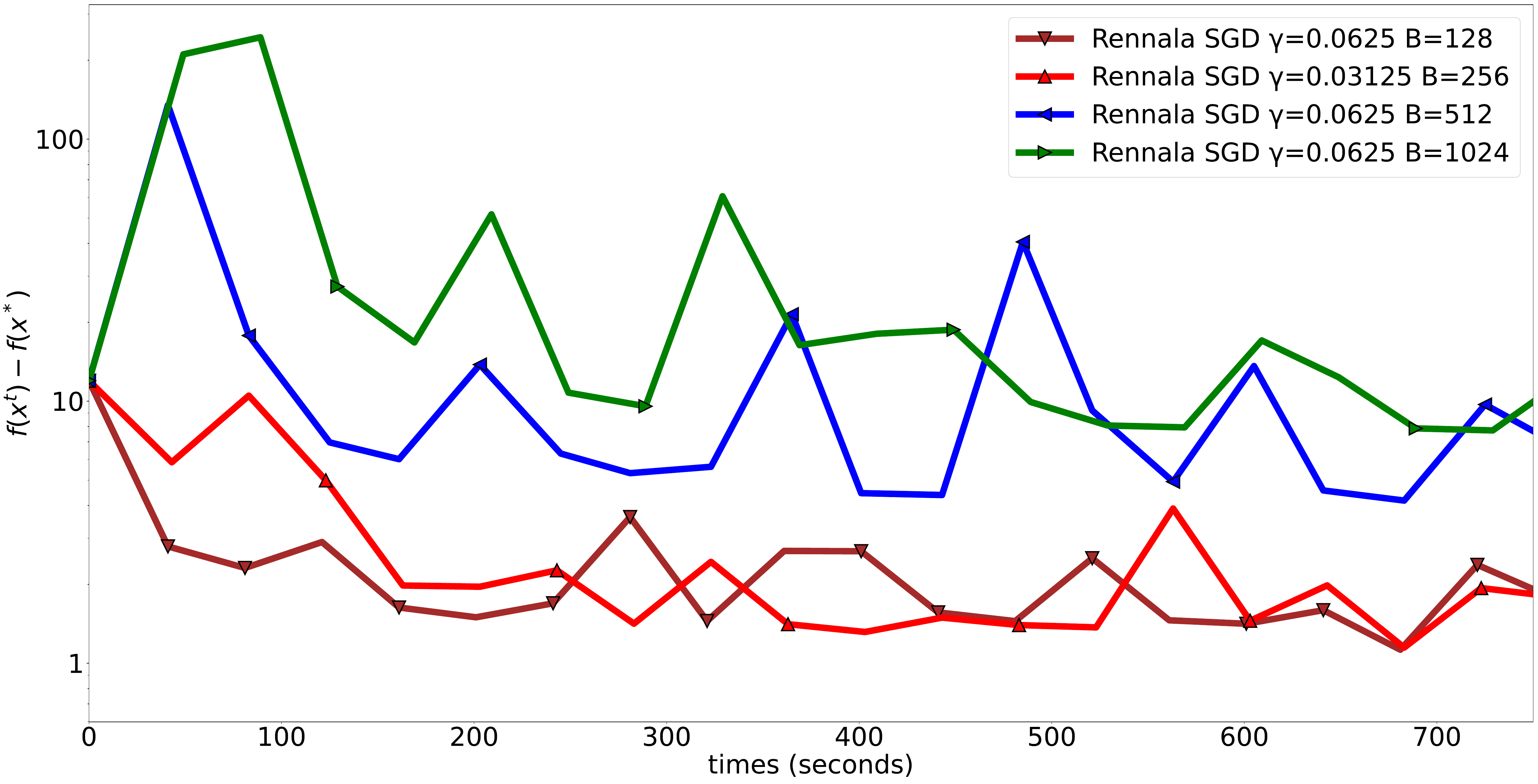}
  \caption{\emph{Slow Communications} regime: $h_i = 10$, $\tau_i = 100$}
  \label{fig:slowdonw}
\end{figure}
In this section, we aim to understand the sensitivity of the choice of the value $B$ in \algname{Rennala SGD}. From the proof of Theorem~\ref{thm:rennala}, we know that $R \eqdef \sup_{k \geq 0} \textnormal{dist}(x^k, z^k) = \Theta(B).$
By increasing $B = \Theta(R),$ we can examine whether the method starts to slow down. This slowdown is expected due to the discussion in Section~\ref{sec:conlusion}. In Figure~\ref{fig:slowdonw}, we present our results on logistic regression problem on the MNIST dataset and $n = 100$ workers, where we see that if we take $B$ too large, the method slows down accordingly.

\clearpage
\newpage
\subsection{Parameters of the experiments}
\label{sec:exp:parameters}

\begin{table}[!htbp]
\caption{Experimental configuration for logistic regression on MNIST with $n = 16$ workers.}
\label{tab:mnist_nw16_config}
\centering
\begin{tabular}{ll}
\toprule
\textbf{Parameter} & \textbf{Value} \\
\midrule
Batch size & 1 \\
Optimizer & SGD \\
Number of workers & 16 \\
\midrule
\multicolumn{2}{l}{\textbf{Algorithm-specific configurations:}} \\
\midrule
\textbf{Synchronized SGD} & $\gamma$ range: $[2^{-5}, 2^4]$ \\
\midrule
\textbf{Rennala SGD} & $\gamma$ range: $[2^{-15}, 2^{-3}]$ \\
& B set: \{128, 256, 512, 1024\} \\
\midrule
\textbf{Local SGD} & $\gamma$ range: $[2^{-15}, 2^{-3}]$ \\
& B set: \{128, 256, 512, 1024\} \\
\midrule
\textbf{Ringmaster ASGD} & $\gamma$ range: $[2^{-15}, 2^1]$ \\
& B set: \{128, 256, 512, 1024\} \\
\midrule
\textbf{Async-Local SGD} & $\gamma$ range: $[2^{-10}, 2^{1}]$ \\
& B set: \{64, 128, 256, 512, 1024\} \\
& M set: \{1, 2, 4, 8\} \\
\bottomrule
\end{tabular}
\end{table}

\begin{table}[!htbp]
  \caption{Experimental configuration for logistic regression on MNIST with $n = 64$ workers.}
  \label{tab:mnist_nw64_config}
  \centering
  \begin{tabular}{ll}
  \toprule
  \textbf{Parameter} & \textbf{Value} \\
  \midrule
  Batch size & 1 \\
  Optimizer & SGD \\
  Number of workers & 64 \\
  \midrule
  \multicolumn{2}{l}{\textbf{Algorithm-specific configurations:}} \\
  \midrule
  \textbf{Synchronized SGD} & $\gamma$ range: $[2^{-5}, 2^4]$ \\
  \midrule
  \textbf{Rennala SGD} & $\gamma$ range: $[2^{-15}, 2^{-3}]$ \\
  & B set: \{1024, 2048, 4096\} \\
  \midrule
  \textbf{Local SGD} & $\gamma$ range: $[2^{-15}, 2^{-3}]$ \\
  & B set: \{1024, 2048, 4096\} \\
  \midrule
  \textbf{Ringmaster ASGD} & $\gamma$ range: $[2^{-15}, 2^1]$ \\
  & B set: \{512, 1024, 2048, 4096\} \\
  \midrule
  \textbf{Async-Local SGD} & $\gamma$ range: $[2^{-10}, 2^{1}]$ \\
  & B set: \{64, 128, 256, 512, 1024, 4096\} \\
  & M set: \{1, 2, 4, 8\} \\
  \bottomrule
  \end{tabular}
  \end{table}

  \begin{table}[!htbp]
    \caption{Experimental configuration for logistic regression on MNIST with $n = 256$ workers.}
    \label{tab:mnist_nw256_config}
    \centering
    \begin{tabular}{ll}
    \toprule
    \textbf{Parameter} & \textbf{Value} \\
    \midrule
    Batch size & 1 \\
    Optimizer & SGD \\
    Number of workers & 256 \\
    \midrule
    \multicolumn{2}{l}{\textbf{Algorithm-specific configurations:}} \\
    \midrule
    \textbf{Synchronized SGD} & $\gamma$ range: $[2^{-5}, 2^4]$ \\
    \midrule
    \textbf{Rennala SGD} & $\gamma$ range: $[2^{-15}, 2^{-3}]$ \\
    & B set: \{1024, 2048, 4096, 8192, 16384\} \\
    \midrule
    \textbf{Local SGD} & $\gamma$ range: $[2^{-15}, 2^{-3}]$ \\
    & B set: \{1024, 2048, 4096, 8192, 16384\} \\
    \midrule
    \textbf{Ringmaster ASGD} & $\gamma$ range: $[2^{-15}, 2^1]$ \\
    & B set: \{512, 1024, 2048, 4096, 8192\} \\
    \midrule
    \textbf{Async-Local SGD} & $\gamma$ range: $[2^{-10}, 2^{1}]$ \\
    & B set: \{128, 256, 512, 1024, 4096, 8192\} \\
    & M set: \{1, 2, 4, 8\} \\
    \bottomrule
    \end{tabular}
    \end{table}

  \begin{table}[!htbp]
    \begin{minipage}[t]{0.48\textwidth}
    \caption{Experimental configuration for ResNet18 with $n = 8$ workers.}
    \label{tab:resnet18_nw8_config}
    \centering
    \begin{tabular}{ll}
    \toprule
    \textbf{Parameter} & \textbf{Value} \\
    \midrule
    Batch size & 16 \\
    Optimizer & SGD \\
    Number of workers & 8 \\
    \midrule
    \multicolumn{2}{l}{\textbf{Algorithm-specific configurations:}} \\
    \midrule
    \textbf{Synchronized SGD} & $\gamma$ range: $[2^{-6}, 2^{-3}]$ \\
    \midrule
    \textbf{Rennala SGD} & $\gamma$ range: $[2^{-6}, 2^{-3}]$ \\
    & B set: \{16, 32, 64\} \\
    \midrule
    \textbf{Local SGD} & $\gamma$ range: $[2^{-6}, 2^{-3}]$ \\
    & B set: \{16, 32, 64\} \\
    \midrule
    \textbf{Ringmaster ASGD} & $\gamma$ range: $[2^{-6}, 2^{-3}]$ \\
    & B set: \{16, 32, 64\} \\
    \midrule
    \textbf{Async-Local SGD} & $\gamma$ range: $[2^{-6}, 2^{-3}]$ \\
    & B set: \{16, 32, 64\} \\
    & M set: \{2, 4, 8\} \\
    \bottomrule
    \end{tabular}
    \end{minipage}
    \hfill
    \begin{minipage}[t]{0.48\textwidth}
    \caption{Experimental configuration for GPT-2 with $n = 8$ workers.}
    \label{tab:gpt2_nw8_config}
    \centering
    \begin{tabular}{ll}
    \toprule
    \textbf{Parameter} & \textbf{Value} \\
    \midrule
    Batch size & 32 \\
    Optimizer & AdamW \\
    Number of workers & 8 \\
    \midrule
    \multicolumn{2}{l}{\textbf{Algorithm-specific configurations:}} \\
    \midrule
    \textbf{Synchronized SGD} & $\gamma$ range: $[2^{-11}, 2^{-10}]$ \\
    \midrule
    \textbf{Rennala SGD} & $\gamma$ range: $[2^{-11}, 2^{-10}]$ \\
    & B set: \{16, 32, 64\} \\
    \midrule
    \textbf{Local SGD} & $\gamma$ range: $[2^{-11}, 2^{-10}]$ \\
    & B set: \{16, 32, 64\} \\
    \midrule
    \textbf{Ringmaster ASGD} & $\gamma$ range: $[2^{-11}, 2^{-10}]$ \\
    & B set: \{16, 32, 64\} \\
    \midrule
    \textbf{Async-Local SGD} & $\gamma$ range: $[2^{-11}, 2^{-10}]$ \\
    & B set: \{16, 32, 64\} \\
    & M set: \{2, 4\} \\
    \bottomrule
    \end{tabular}
    \end{minipage}
  \end{table}  

\clearpage
\newpage

\section{Proof of Theorem~\ref{thm:main}}
\subsection{Proof technique and reasons for choosing the conditions}
\label{sec:novel}
Before we state the main theorem and provide the proof, let us explain the intuition, the novelty, and how we identified the conditions of the theorem. The proof of the result is given in Section~\ref{sec:proof} and is relatively compact. We believe that the simplicity of our result, together with its ability to unify methods, constitutes an important contribution to the optimization community. While the initial part of the proof follows the same structure as in most related works, starting from \eqref{eq:tyfEHNd}, our treatment of the staleness term $\|x^k - z^k\|,$ which naturally arises from the step $x^{k+1} = x^k - \gamma \nabla f(z^k; \xi^k),$ is novel. 

After many attempts to develop a universal theory, let us illustrate how we arrived at our conditions. Looking at Figure~\ref{fig:fork}, which provides all possible relations between $x^k$ and $z^k,$ one can easily get 
$$\norm{x^k - z^k} = \gamma \norm{{\color{mydarkblue} \sum_{i=p}^{k - 1} \nabla f(z^i;\xi^i)} - {\color{mydarkred} \sum_{(w,\xi) \in S^k} \nabla f(w; \xi)}}.$$ First, we noticed that any reasonable method should utilize ${\color{mydarkred} \sum_{(w,\xi) \in S^k} \nabla f(w; \xi)}$ in the computation of $z^k$ before applying $\nabla f(z^k; \xi^k)$ (see the previous discussion about Condition 2 in Section~\ref{sec:main}). This implies ${\color{mydarkred} \{(w; \xi)\}_{(w,\xi) \in S^k}} \subseteq {\color{mydarkblue} \{(z^i; \xi^i)\}_{i = p}^{k-1}},$ leading to the following \emph{identity}:
$$\norm{x^k - z^k} = \gamma \norm{\sum_{j \in \bar{S}^k} \nabla f(z^j;\xi^j)}$$
for some set $\bar{S}^k \subseteq {\color{mydarkblue} \{p, \dots, k - 1\}}$ such that $\bar{S}^k \cup S^k = {\color{mydarkblue} \{p, \dots, k - 1\}}.$ The \emph{identity} says that the distance is roughly proportional to the number $\abs{\bar{S}^k}$ of stochastic gradients applied after $x^p$ and before $z^k,$ which is \emph{tightly} bounded by the tree distance from $x^k$ to the common ancestor $x^p,$ i.e., it is bounded by $\abs{{\color{mydarkblue} \{p, \dots, k - 1\}}}$ since $\bar{S}^k \subseteq {\color{mydarkblue} \{p, \dots, k - 1\}}.$

Under Condition 2, notice that $\abs{{\color{mydarkblue} \{p, \dots, k - 1\}}} = \max\{\abs{{\color{mydarkblue} \{p, \dots, k - 1\}}}, \abs{S^k}\} =: \textnormal{dist}(x^k, z^k),$ where we use $S^k \subseteq {\color{mydarkblue} \{p, \dots, k - 1\}}$ and Definition~\ref{def:dist}. Thus, to get a bound for $\|x^k - z^k\|,$ it is natural to introduce Condition 3, which allows us to conclude that $\abs{\bar{S}^k} \leq \textnormal{dist}(x^k, z^k) \leq R.$ It remains to use classical mathematical tools to obtain 
$$\Exp{\norm{x^k - z^k}^2} \leq 2 \gamma^2 R \sum_{j=k - R}^{k - 1} \Exp{\norm{\nabla f(z^j)}^2} + 2 \gamma^2 R \sigma^2,$$ 
where the first term will be canceled by the corresponding term $- \frac{\gamma}{4} \Exp{\norm{\nabla f(z^k)}^2}$ from \eqref{eq:KexjcUoRoQTYKqIDO}.

\subsection{Full proof}
\label{sec:proof}
\MAINTHEOREM*

\begin{proof}
  As the beginning, the analysis is standard. Using Assumption~\ref{ass:lipschitz_constant}, we have
  \begin{align*}
    f(x^{k+1}) \leq f(x^k) - \gamma \inp{\nabla f(x^k)}{\nabla f(z^k;\xi^k)} + \frac{L \gamma^2}{2} \norm{\nabla f(z^k;\xi^k)}^2
  \end{align*}
  for $x^{k+1} = x^{k} - \gamma \nabla f(z^k;\xi^k).$ Due to Condition 1 of the theorem and the variance decomposition equality,
  \begin{align*}
    &\ExpSub{k}{f(x^{k+1})} 
    \leq f(x^k) - \gamma \inp{\nabla f(x^k)}{\nabla f(z^k)} + \frac{L \gamma^2}{2} \ExpSub{k}{\norm{\nabla f(z^k;\xi^k)}^2} \\
    &\quad= f(x^k) - \gamma \inp{\nabla f(x^k)}{\nabla f(z^k)} + \frac{L \gamma^2}{2} \norm{\nabla f(z^k)}^2 + \frac{L \gamma^2}{2} \ExpSub{k}{\norm{\nabla f(z^k;\xi^k) - \nabla f(z^k)}^2} \\
    &\quad\leq f(x^k) - \gamma \inp{\nabla f(x^k)}{\nabla f(z^k)} + \frac{L \gamma^2}{2} \norm{\nabla f(z^k)}^2 + \frac{L \gamma^2 \sigma^2}{2},
  \end{align*}
  where $\ExpSub{k}{\cdot}$ is the expectation conditioned on $(x^k, z^k).$
  In the last inequality, we use Assumption~\ref{ass:stochastic_variance_bounded}. Rewriting the dot product and using $\gamma \leq \frac{1}{2 L}$, we obtain
  \begin{align}
    &\ExpSub{k}{f(x^{k+1})} \nonumber \\
    &\leq f(x^k) - \frac{\gamma}{2} \left(\norm{\nabla f(x^k)}^2 + \norm{\nabla f(z^k)}^2 - \norm{\nabla f(x^k) - \nabla f(z^k)}^2\right) + \frac{L \gamma^2}{2} \norm{\nabla f(z^k)}^2 + \frac{L \gamma^2 \sigma^2}{2} \nonumber \\
    &\leq f(x^k) - \frac{\gamma}{2}\norm{\nabla f(x^k)}^2 - \frac{\gamma}{4} \norm{\nabla f(z^k)}^2 + \frac{\gamma}{2} \norm{\nabla f(x^k) - \nabla f(z^k)}^2 + \frac{L \gamma^2 \sigma^2}{2}. \label{eq:KexjcUoRoQTYKqIDO}
  \end{align}
  In the rest of the proof, we focus on $\norm{\nabla f(x^k) - \nabla f(z^k)}^2.$ Using Assumption~\ref{ass:lipschitz_constant}, we obtain
  \begin{align}
    \label{eq:tyfEHNd}
    &\norm{\nabla f(x^k) - \nabla f(z^k)}^2 \leq L^2 \norm{x^k - z^k}^2.
  \end{align}
  Notice that there exist $p \in \{0, \dots, k\}$ and the closest common ancestor $x^{p}$ such that 
  \begin{align*}
    x^k = x^{p} - \gamma \sum_{i=p}^{k - 1} \nabla f(z^i;\xi^i) = x^0 -\gamma \sum_{i=0}^{p - 1} \nabla f(z^i; \xi^i) - \gamma \sum_{i=p}^{k - 1} \nabla f(z^i;\xi^i)
  \end{align*}
  and 
  \begin{align*}
    z^k = x^{p} - \gamma \sum_{(w,\xi) \in S^k} \nabla f(w; \xi) = x^0 -\gamma \sum_{i=0}^{p - 1} \nabla f(z^i; \xi^i) - \gamma \sum_{(w,\xi) \in S^k} \nabla f(w; \xi),
  \end{align*}
  where $S^k$ is the set of points and random variables used to compute $z^k$ starting from $x^{p}$ (see Figure~\ref{fig:fork}). Moreover, due to Condition 3, we have $\textnormal{dist}(x^{k}, z^{k}) \leq \max\{k - p, \abs{S^k}\} \leq R,$ meaning $p \geq k - R.$ In total,
  \begin{align}
    \label{eq:VIHiAoGKxHvVihzN}
    k \geq p \geq k - R,
  \end{align}
  which we use later. Condition 2 assumes  
  \begin{align*}
    &\textnormal{repr}(z^k) \eqdef \underbrace{\{(z^i; \xi^i)\}_{i = 0}^{p - 1}}_{A} \uplus \underbrace{\{(w; \xi)\}_{(w,\xi) \in S^k}}_{C} \\
    &\subseteq \textnormal{repr}(x^k) \eqdef \underbrace{\{(z^i; \xi^i)\}_{i = 0}^{p - 1}}_{A} \uplus \underbrace{\{(z^i; \xi^i)\}_{i = p}^{k-1}}_{B},
  \end{align*}
  where $\uplus$ is the multiset union operation. Thus 
  \begin{align*}
    \underbrace{\{(w; \xi)\}_{(w,\xi) \in S^k}}_{C} \subseteq \underbrace{\{(z^i; \xi^i)\}_{i = p}^{k-1}}_{B}
  \end{align*}
  and
  \begin{align}
    \label{eq:ITXwDTcpKnui}
    x^k - z^k = - \gamma \left(\sum_{i=p}^{k - 1} \nabla f(z^i;\xi^i) - \sum_{(w,\xi) \in S^k} \nabla f(w; \xi)\right) = - \gamma \sum_{j \in \bar{S}^k} \nabla f(z^j;\xi^j),
  \end{align}
  where $\bar{S}^k$ is a set such that $\bar{S}^k \subseteq \{p, \dots, k - 1\}.$ 
  Substituting \eqref{eq:ITXwDTcpKnui} to \eqref{eq:tyfEHNd},
  \begin{align*}
    &\norm{\nabla f(x^k) - \nabla f(z^k)}^2 \leq L^2 \gamma^2 \norm{\sum_{j \in \bar{S}^k} \nabla f(z^j;\xi^j)}^2.
  \end{align*}
  Next, using Young's inequality $\norm{x + y}^2 \leq 2 \norm{x}^2 + 2 \norm{y}^2$ for all $x, y \in \R^d$, we get
  \begin{align*}
    &\Exp{\norm{\nabla f(x^k) - \nabla f(z^k)}^2} \leq 2 L^2 \gamma^2 \Exp{\norm{\sum_{j \in \bar{S}^k} \nabla f(z^j)}^2} + 2 L^2 \gamma^2 \Exp{\norm{\sum_{j \in \bar{S}^k} (\nabla f(z^j;\xi^j) - \nabla f(z^j))}^2}.
  \end{align*}
  Since $\xi^j$ is statistically independent of 
  $\{(x^{i+1}, z^{i+1}, \xi^i)\}_{i=0}^{j-1}$
  for all $j \in \bar{S}^k$ (Condition 1) and using Assumption~\ref{ass:stochastic_variance_bounded},
  \begin{align*}
    \Exp{\norm{\nabla f(x^k) - \nabla f(z^k)}^2} 
    &\leq 2 L^2 \gamma^2 \Exp{\norm{\sum_{j \in \bar{S}^k} \nabla f(z^j)}^2} + 2 L^2 \gamma^2 \abs{\bar{S}^k} \sigma^2 \\
    &\overset{\textnormal{Jensen's ineq.}}{\leq} 2 L^2 \gamma^2 \abs{\bar{S}^k} \sum_{j \in \bar{S}^k} \Exp{\norm{\nabla f(z^j)}^2} + 2 L^2 \gamma^2 \abs{\bar{S}^k} \sigma^2.
  \end{align*}
  Due to $\bar{S}^k \subseteq \{p, \dots, k - 1\}$ and \eqref{eq:VIHiAoGKxHvVihzN}:
  \begin{align*}
    \Exp{\norm{\nabla f(x^k) - \nabla f(z^k)}^2} 
    &\leq 2 L^2 \gamma^2 R \sum_{j=k - R}^{k - 1} \Exp{\norm{\nabla f(z^j)}^2} + 2 L^2 \gamma^2 R \sigma^2.
  \end{align*}
  Substituting this inequality to \eqref{eq:KexjcUoRoQTYKqIDO} and taking the full expectation, we obtain
  \begin{align}
    \Exp{f(x^{k+1})}
    &\leq \Exp{f(x^k)} - \frac{\gamma}{2}\Exp{\norm{\nabla f(x^k)}^2} - \frac{\gamma}{4} \Exp{\norm{\nabla f(z^k)}^2} + \frac{L \gamma^2 \sigma^2}{2} \nonumber \\
    &\quad + \frac{\gamma}{2} \left(2 L^2 \gamma^2 R \sum_{j=k - R}^{k - 1} \Exp{\norm{\nabla f(z^j)}^2} + 2 L^2 \gamma^2 R \sigma^2\right) \nonumber \\
    &\leq \Exp{f(x^k)} - \frac{\gamma}{2}\Exp{\norm{\nabla f(x^k)}^2} - \frac{\gamma}{4} \Exp{\norm{\nabla f(z^k)}^2} + L \gamma^2 \sigma^2 \nonumber \\
    &\quad + L^2 \gamma^3 R \sum_{j=k - R}^{k - 1} \Exp{\norm{\nabla f(z^j)}^2} \label{eq:icrcqHMvMJ}
  \end{align}
  because $\gamma \leq \frac{1}{2 R L}.$ Note that $\sum_{k = 0}^{K - 1} \sum_{j=k - R}^{k - 1} \Exp{\norm{\nabla f(z^j)}^2} \leq R \sum_{k = 0}^{K - 1} \Exp{\norm{\nabla f(z^k)}^2}.$ Thus, summing \eqref{eq:icrcqHMvMJ} for $k = 0, \dots, K - 1$ and substituting $f^*,$
  \begin{align*}
    \Exp{f(x^{K}) - f^*}
    &\leq f(x^0) - f^* - \frac{\gamma}{2} \sum_{k = 0}^{K - 1}\Exp{\norm{\nabla f(x^k)}^2} - \frac{\gamma}{4} \sum_{k = 0}^{K - 1} \Exp{\norm{\nabla f(z^k)}^2} + K L \gamma^2 \sigma^2 \nonumber \\
    &\quad + L^2 \gamma^3 R^2 \sum_{k = 0}^{K - 1} \Exp{\norm{\nabla f(z^k)}^2} \\
    &\leq f(x^0) - f^* - \frac{\gamma}{2} \sum_{k = 0}^{K - 1}\Exp{\norm{\nabla f(x^k)}^2} + K L \gamma^2 \sigma^2
  \end{align*}
  because $\gamma \leq \frac{1}{2 L R}.$ Finally, since $\Exp{f(x^{K}) - f^*} \geq 0,$
  \begin{align*}
    \frac{1}{K}\sum_{k = 0}^{K - 1}\Exp{\norm{\nabla f(x^k)}^2} \leq \frac{2 \Delta}{K \gamma} + 2 L \gamma \sigma^2.
  \end{align*}
  It is left to use that $\gamma = \min\{\frac{1}{2 L}, \frac{1}{2 R L}, \frac{\varepsilon}{4 \sigma^2 L}\}$ and the bound on $K$ from the theorem statement.
\end{proof}

\newpage

\section{Detailed Description of Algorithms and Iteration Rates}
\label{sec:algorithms}
In this section, we provide a detailed description together with theoretical analysis of the algorithms from the main part.
\subsection{\algname{Vanilla SGD}}
\label{sec:vanilla_sgd}
We start we the celebrated \algname{Vanilla SGD} algorithm, which formally can be implemented in the following way:
\begin{algorithm}[H]
  \caption{\algname{Vanilla SGD}}
  \label{alg:vanilla_sgd}
  \begin{algorithmic}[1]
      \STATE \textbf{Input:} starting point $w^0 \in \R^d$, step size $\gamma > 0$
      \FOR{$k = 0, 1, 2, \dots$}
          \STATE Sample $\eta^k \sim \mathcal{D}_{\xi}$ \hfill ($\{\eta^k\}$ are i.i.d.)
          \STATE Compute stochastic gradient $\nabla f(w^k; \eta^k)$
          \STATE Update $w^{k+1} = w^k - \gamma \nabla f(w^k; \eta^k)$
      \ENDFOR
  \end{algorithmic}
\end{algorithm}

The corresponding computation tree can defined by the recursion
\begin{align}
  \label{eq:ksGyF}
  w^{k+1} = w^k - \gamma \nabla f(w^k; \eta^k)
\end{align}
for all $k \geq 0.$

\begin{figure}[h]
  \centering
  \includegraphics[page=10,width=0.5\textwidth]{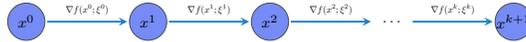}
  \caption{The computation tree of \algname{Vanilla SGD}}
  \label{fig:sgd_graph}
\end{figure}

While the iteration rate of \algname{Vanilla SGD} is well-known \citep{lan2020first}, we prove its convergence using our new framework for clarity.
\begin{theorem}
  Let Assumptions~\ref{ass:lipschitz_constant}, \ref{ass:lower_bound}, and \ref{ass:stochastic_variance_bounded} hold. Consider the computation tree \eqref{eq:ksGyF} of \algname{Vanilla SGD}. Then, $\{x^k\}_{k \geq 0}$ is a main branch with $x^k = w^k$, $\{(z^k, \xi^k)\}_{k \geq 0}$ is the corresponding auxiliary sequence with $(z^k, \xi^k) = (w^k, \eta^k)$ (see Def.~\ref{def:branch}), and $\frac{1}{K}\sum_{k=0}^{K-1}\Exp{\|\nabla f(x^k)\|^2} \le \varepsilon$ for all 
  $$
  K \geq \frac{4 L \Delta}{\varepsilon} + \frac{8 \sigma^2 L \Delta}{\varepsilon^2}.
  $$
  with step size $\gamma = \min\{\frac{1}{2 L}, \frac{\varepsilon}{4 \sigma^2 L}\}.$
\end{theorem}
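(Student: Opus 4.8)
The plan is to recognize this statement as a direct corollary of the Main Theorem (Theorem~\ref{thm:main}) specialized to the degenerate case $R = 0$. First I would fix the main branch $x^k = w^k$ together with the auxiliary sequence $(z^k, \xi^k) = (w^k, \eta^k)$, which is the natural and essentially forced choice for the linear tree defined by the recursion \eqref{eq:ksGyF}: every node $w^{k+1}$ is the unique successor of $w^k$, so the path from $w^0$ is uniquely determined and is trivially a main branch in the sense of Definition~\ref{def:branch}. Everything then reduces to verifying the three conditions of Theorem~\ref{thm:main} and reading off the resulting step size and iteration count.

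I would check the conditions in order. Condition~1 follows from the i.i.d.\ assumption on $\{\eta^k\}$: each tuple $(x^{i+1}, z^{i+1}, \xi^i)$ with $i \le k-1$ is a deterministic function of $\eta^0, \dots, \eta^{k-1}$ (indeed $x^{i+1} = w^{i+1}$ depends only on $\eta^0, \dots, \eta^i$, and $\xi^i = \eta^i$), so $\xi^k = \eta^k$ is independent of the entire history $\{(x^{i+1}, z^{i+1}, \xi^i)\}_{i=0}^{k-1}$. Condition~2 is immediate, since $z^k = x^k = w^k$ forces $\textnormal{repr}(z^k) = \textnormal{repr}(x^k)$, making the containment trivial. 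Condition~3 holds with $R = 0$ because $x^k$ and $z^k$ coincide as nodes, so $\textnormal{dist}(x^k, z^k) = 0$ for every $k$.

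With the conditions in place, I would invoke Theorem~\ref{thm:main} with $R = 0$. Substituting $R = 0$ collapses the step size to $\gamma = \min\{\frac{1}{2L}, \frac{\varepsilon}{4\sigma^2 L}\}$ (the middle term $\frac{1}{2RL}$ becomes vacuous as $R \to 0$), and the iteration threshold simplifies to $K \ge \frac{4 L \Delta}{\varepsilon} + \frac{8 \sigma^2 L \Delta}{\varepsilon^2}$, exactly matching the claim. This yields $\frac{1}{K}\sum_{k=0}^{K-1}\Exp{\|\nabla f(x^k)\|^2} \le \varepsilon$ directly.

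I do not anticipate any genuine obstacle here: all the analytic work lives inside Theorem~\ref{thm:main}, and this result functions as a sanity check illustrating the framework on the simplest method. The only point demanding mild care is the bookkeeping in Condition~1 --- namely ensuring independence of $\xi^k$ from the \emph{full} past $\{(x^{i+1}, z^{i+1}, \xi^i)\}_{i=0}^{k-1}$ rather than merely from $x^k$ --- but this is guaranteed immediately by the i.i.d.\ structure of the samples $\{\eta^k\}$.
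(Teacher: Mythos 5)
Your proposal is correct and follows essentially the same route as the paper's own proof: verify that $x^k = w^k$ and $(z^k,\xi^k) = (w^k,\eta^k)$ satisfy Definition~\ref{def:branch}, check Conditions 1--3 of Theorem~\ref{thm:main} (Condition 1 from the i.i.d.\ samples, Condition 2 from $\textnormal{repr}(z^k) = \textnormal{repr}(x^k)$, Condition 3 with $R = 0$), and then specialize the theorem's step size and iteration bound to $R = 0$. Your added bookkeeping for Condition 1 (expressing the history as a deterministic function of $\eta^0,\dots,\eta^{k-1}$) is slightly more explicit than the paper's one-line justification, but the argument is the same.
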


\begin{proof}
  Indeed, $\{x^k\}_{k \geq 0}$ and $\{(z^k, \xi^k)\}_{k \geq 0}$ satisfy Def.~\ref{def:branch} (see Fig.~\ref{fig:sgd_graph}). Moreover, all conditions of Theorem~\ref{thm:main} are fulfilled: Condition 1 holds because the sequence $\{\eta^k\}$ is i.i.d., we have $\textnormal{repr}(z^k) = \textnormal{repr}(x^k)$ since $x^k = z^k$, and consequently, $R = \sup_{k \geq 0} \textnormal{dist}(x^k, z^k) = 0$.
\end{proof}

\newpage

\subsection{\algname{Rennala SGD}}
\label{sec:rennala_sgd}
We now apply Theorem~\ref{thm:main} to \algname{Rennala SGD}. The iteration rate of \algname{Rennala SGD} is also well-known \citep{tyurin2023optimal}, but we provide a proof for completness. \algname{Rennala SGD} can be formally described as follows:
\begin{algorithm}[H]
  \caption{\algname{Rennala SGD} \citep{tyurin2023optimal}}
  \label{alg:rennala}
  \begin{algorithmic}[1]
      \STATE \textbf{Input:} point $w^0 \in \R^d$, stepsize $\gamma > 0$, batch size $B\in \N$
      \STATE Workers start computing stochastic gradients at $w^0$
      \FOR{$k = 0, \dots, K - 1$}
          \STATE $g^k_i = 0$ for all $i \in [n]$; $b=0$
          \WHILE{$b < B$}
              \STATE Wait for the moment when stochastic gradient is computed by worker
              \STATE Gradient $\nabla f(w^{k - \delta}; \eta)$ is computed by worker $i,$ $\quad \eta \sim \mathcal{D}_{\xi}$ \\
              \IF{$\delta = 0$}
                  \color{black}
                  \STATE Update $g^k_i = g^k_i + \nabla f(w^{k - \delta}; \eta)$ locally in worker $i$
                  \STATE $b=b+1$
              \ELSE
                  \STATE Ignore $\nabla f(w^{k - \delta}; \eta)$
              \ENDIF
              \STATE Worker $i$ begins calculating gradient at $w^{k}$ \\ 
          \ENDWHILE
          \STATE Aggregate: $g^k = \sum_{i=1}^n g^k_i$ \hfill (e.g, via \texttt{AllReduce})
          \STATE Update: $w^{k+1} = w^{k} - \gamma g^k$
      \ENDFOR
  \end{algorithmic}
\end{algorithm}

To use Theorem~\ref{thm:main}, we have to construct the computation tree of \algname{Rennala SGD}. It can be constructed in the following way:
\begin{alignat}{2}
  x^{1}   &= x^{0} - \gamma \nabla f(x^0;\xi^{0}), \quad  &\dots,\quad & x^{B}   = x^{B-1} - \gamma \nabla f(x^0;\xi^{B-1}), \label{eq:dOBxW}\\
  x^{B+1} &= x^{B} - \gamma \nabla f(x^{B};\xi^{B}), \quad &\dots,\quad & x^{2 B} = x^{2 B-1} - \gamma \nabla f(x^B;\xi^{2 B - 1}), \quad \dots \nonumber,
\end{alignat}
where $\{\xi^i\}$ are i.i.d.~from $\mathcal{D}_{\xi}.$ See also a visualization in Figure~\ref{fig:minibatch_sgd_graph}. One can easily show that $w^{1} = x^{0}, w^{1} = x^{B}, w^{2} = x^{2 B},$ etc.

\begin{figure}[h]
  \centering
  \includegraphics[page=11,width=0.6\textwidth]{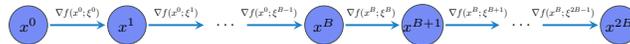}
  \caption{The computation tree of \algname{Rennala SGD}}
  \label{fig:minibatch_sgd_graph}
\end{figure}

\begin{theorem}
  Let Assumptions~\ref{ass:lipschitz_constant}, \ref{ass:lower_bound}, and \ref{ass:stochastic_variance_bounded} hold. Consider the computation tree \eqref{eq:dOBxW} of \algname{Rennala SGD}, then $\{x^k\}_{k \geq 0}$ is a main branch, $\{(z^k, \xi^k)\}_{k \geq 0}$ with $(z^k, \xi^k) = (x^{B\flr{k / B}}, \xi^k)$ is the corresponding auxiliary sequence, and $\frac{1}{K}\sum_{k=0}^{K-1}\Exp{\|\nabla f(x^k)\|^2} \le \varepsilon$ for all 
  $$
  K \geq \frac{4 B L \Delta}{\varepsilon} + \frac{8 \sigma^2 L \Delta}{\varepsilon^2}.
  $$
  with step size $\gamma = \min\{\frac{1}{2 B L}, \frac{\varepsilon}{4 \sigma^2 L}\}.$
  \label{thm:rennala}
\end{theorem}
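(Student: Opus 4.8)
The plan is to show that the computation tree \eqref{eq:dOBxW} of \algname{Rennala SGD}, together with the main branch $\{x^k\}_{k\ge 0}$ and the auxiliary sequence $(z^k,\xi^k) = (x^{B\flr{k/B}},\xi^k)$, satisfies all three conditions of Theorem~\ref{thm:main} with $R = B-1$, and then to invoke that theorem directly. The key structural observation, visible in Figure~\ref{fig:minibatch_sgd_graph}, is that the indices $\{0,1,2,\dots\}$ partition into consecutive blocks of length $B$, and within the block starting at $x^{m}$ (where $m = B\flr{k/B}$) every new point is obtained by applying a fresh gradient at the \emph{same} base point $x^{m}$; that is, $z^i = x^{m}$ for all $i \in \{m,\dots,m+B-1\}$, so that $x^{m+j} = x^{m} - \gamma\sum_{i=m}^{m+j-1}\nabla f(x^{m};\xi^i)$.

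First I would verify Condition~1. Because the samples $\{\xi^i\}$ are i.i.d.\ (each block collects $B$ independent stochastic gradients evaluated at its block-start point), every triple $(x^{i+1},z^{i+1},\xi^i)$ with $i<k$ is a deterministic function of $\xi^0,\dots,\xi^{k-1}$, and $\xi^k$ is independent of all of these. Next I would verify Condition~2: since $z^k = x^{m}$ with $m = B\flr{k/B}\le k$ lies on the main branch as an ancestor of $x^k$, the multiset of gradients used to reach $z^k$ is a prefix of the multiset used to reach $x^k$, so $\textnormal{repr}(z^k)\subseteq\textnormal{repr}(x^k)$.

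The only genuine computation is Condition~3, i.e.\ bounding the tree distance. Here I would use that $z^k = x^{m}$ is an ancestor of $x^k$, so their closest common ancestor is $x^{m}$ itself; the number of edges from $x^k$ back to $x^{m}$ is exactly $k-m = k-B\flr{k/B}$, while the number of edges from $x^{m}$ to itself is $0$. By Definition~\ref{def:dist}, $\textnormal{dist}(x^k,z^k) = \max\{k-m,0\} = k-B\flr{k/B} \in \{0,1,\dots,B-1\}$, which establishes $R \eqdef \sup_{k\ge 0}\textnormal{dist}(x^k,z^k) = B-1$. This matches the informal count in Section~\ref{sec:alg}, where the distance ramps from $0$ to $B-1$ across each block and then resets.

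Finally, I would apply Theorem~\ref{thm:main} with $R = B-1$. Since $R+1 = B$, the iteration bound becomes $K \ge \frac{4BL\Delta}{\varepsilon} + \frac{8\sigma^2 L\Delta}{\varepsilon^2}$. For the step size, I would note that $\gamma = \min\{\frac{1}{2BL},\frac{\varepsilon}{4\sigma^2 L}\}$ is no larger than the step size $\min\{\frac{1}{2L},\frac{1}{2RL},\frac{\varepsilon}{4\sigma^2 L}\}$ prescribed by Theorem~\ref{thm:main} (because $\frac{1}{2BL}\le\frac{1}{2(B-1)L}$ and $\frac{1}{2BL}\le\frac{1}{2L}$), so it still satisfies every step-size constraint used in that proof and the conclusion carries over. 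I do not expect any real obstacle beyond the distance count; the one subtlety is recognizing that all gradients within a block share the single base point $x^{m}$, which is precisely what forces the distance to reset to $0$ at each block boundary and keeps $R$ pinned at $B-1$ rather than growing unboundedly.
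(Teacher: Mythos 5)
Your proposal is correct and takes essentially the same route as the paper's own proof: verify Conditions 1--3 for the block structure of \eqref{eq:dOBxW} (i.i.d.\ samples for Condition 1, the ancestor/prefix relation $\textnormal{repr}(z^k) \subseteq \textnormal{repr}(x^k)$ for Condition 2, and the tree-distance count $\textnormal{dist}(x^k,z^k) = k - B\flr{k/B} \le B-1$ for Condition 3), then apply Theorem~\ref{thm:main} with $R = B-1$. The only difference is that you explicitly justify why the slightly smaller step size $\gamma = \min\{\frac{1}{2BL}, \frac{\varepsilon}{4\sigma^2 L}\}$ remains compatible with the step-size constraints of Theorem~\ref{thm:main}, a point the paper's proof passes over silently.
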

\begin{proof}
  Clearly, $\{x^k\}_{k \geq 0}$ is a main branch and $\{(z^k, \xi^k)\}_{k \geq 0}$ is the corresponding sequence by the construction in \eqref{eq:dOBxW}. Moreover, $\xi^k$ is independent of $\{(x^{i+1}, z^{i+1}, \xi^i)\}_{i=0}^{k-1}$ in \eqref{eq:dOBxW} because $\{\xi^i\}$ are i.i.d. (Condition 1 is satisfied). Next, notice that
  \begin{align*}
  &\textnormal{repr}(z^0) = \textnormal{repr}(x^0) = \emptyset, \\
  &\textnormal{repr}(z^1) = \textnormal{repr}(x^0) = \emptyset \subseteq \textnormal{repr}(x^1), \\
  &\qquad\qquad \vdots \\
  &\textnormal{repr}(z^{B-1}) = \textnormal{repr}(x^0) = \emptyset \subseteq \textnormal{repr}(x^{B-1})
  \end{align*}
  because $z^k = x^0$ for all $k < B.$ Next, 
  \begin{align*}
    &\textnormal{repr}(z^B) = \textnormal{repr}(x^B), \\
    &\textnormal{repr}(z^{B+1}) = \textnormal{repr}(x^B) \subseteq \textnormal{repr}(x^{B+1}), \\
    &\quad\qquad\qquad \vdots \\
    &\textnormal{repr}(z^{2 B-1}) = \textnormal{repr}(x^B) \subseteq \textnormal{repr}(x^{2 B-1}),
  \end{align*}
  because $z^k = x^B$ for all $B \leq k < 2 B,$ where $\textnormal{repr}(x^B) \subseteq \textnormal{repr}(x^{B+1}), \dots, \textnormal{repr}(x^B) \subseteq \textnormal{repr}(x^{2 B - 1})$ due to \eqref{eq:dOBxW}. We can continue and show that $\textnormal{repr}(z^k) \subseteq \textnormal{repr}(x^k)$ for all $k \geq 0$ (Condition 2 is satisfied). It is left to notice that 
  \begin{align*}
    \sup_{k \geq 0} \textnormal{dist}(x^k, z^k) \leq B - 1,
  \end{align*}
  because 
  \begin{align*}
    \textnormal{dist}(x^0, z^0) &= 0, \\
    \textnormal{dist}(x^1, z^1) = \textnormal{dist}(x^1, x^0) &= 1, \\
    &\vdots \\
    \textnormal{dist}(x^{B-1}, z^{B-1}) = \textnormal{dist}(x^{B-1}, x^0) &= B - 1,  \\
    \textnormal{dist}(x^{B}, z^{B}) = \textnormal{dist}(x^{B}, x^{B}) &= 0, \\
    \textnormal{dist}(x^{B + 1}, z^{B + 1}) = \textnormal{dist}(x^{B + 1}, x^{B}) &= 1, \\
    &\vdots
  \end{align*}
  The maximum tree distance between $x^k$ and $z^k$ is $B - 1$. Thus, $R = B - 1$ in Condition 3.
\end{proof}

\newpage

\subsection{\algname{Local SGD}}
\label{sec:local_sgd}

The \algname{Local SGD} method is described in the following algorithm:
\begin{algorithm}[H]
  \caption{\algname{Local SGD}}
  \label{alg:local_sgd}
  \begin{algorithmic}[1]
  \REQUIRE Initial model $w^0$, step size $\gamma$, parameter $B$
  \FOR{$k = 0, 1, 2, \dots$}
      \STATE Broadcast $w^k$ to all workers
      \FOR{each worker $i \in [n]$ \textbf{in parallel}}
          \STATE Worker $i$ starts \texttt{LocalSGDWorker}($w^k, \gamma$) from Algorithm~\ref{alg:local_sgd_worker_function}
      \ENDFOR
      \STATE Wait for the moment when $\sum_{i=1}^{n} M_i = B$ \hfill ($\{M_i\}$ from \texttt{LocalSGDWorker}($w^k, \gamma$))
      \STATE Ask workers to stop\footnotemark running \texttt{LocalSGDWorker}($w^k, \gamma$)
      \STATE Aggregate $\gamma \sum_{i=1}^{n} \sum_{j=0}^{M_i - 1} \nabla f(z^{k,j}_i; \eta^{k,j}_i)$ from the workers  (e.g, via \texttt{AllReduce})
      \STATE Update $w^{k+1} = w^{k} - \gamma \sum_{i=1}^{n} \sum_{j=0}^{M_i - 1} \nabla f(z^{k,j}_i; \eta^{k,j}_i)$
  \ENDFOR
  \end{algorithmic}
\end{algorithm}

\newcommand{\footnotesametext}{Alternatively, allow the workers to finish computing their stochastic gradients without waiting for them (since \texttt{AllReduce} can be run in parallel), but discard these gradients in subsequent iterations, as they are no longer relevant. This approach may introduce a delay before the workers begin their next local steps. \\

Another option is to allow the workers to finish computing their stochastic gradients without waiting for them, and send these gradients in the next iteration. If some gradients are still not computed by then due to delays, simply discard them.}

\footnotetext{\footnotesametext \label{foot:local_sgd_stop}}

\begin{algorithm}[H]
  \caption{\texttt{LocalSGDWorker}($w, \gamma$) in worker $i$ at round $k$}
  \label{alg:local_sgd_worker_function}
  \begin{algorithmic}[1]
    \STATE $z^{k,0}_i = w$
    \STATE $M_i \gets 0$
    \WHILE{True}
        \STATE $z^{k,M_i + 1}_i = z^{k,M_i}_i - \gamma \nabla f(z^{k,M_i}_i; \eta^{k,M_i}_i), \quad \eta^{k,M_i}_i \sim \mathcal{D}_{\xi}$
        \STATE $M_i = M_i + 1$
    \ENDWHILE
  \end{algorithmic}
\end{algorithm}

One key change compared to the previous work is that individual local steps $M_i$ are not predefined. Moreover, the server tracks the sum $\sum_{i=1}^{n} M_i$ and waits for the moment $\sum_{i=1}^{n} M_i = B$ before collecting the locally calculated gradients. With a proper choice of $B,$ we will prove the optimal computational time complexity of the method in Section~\ref{sec:proof_compt}.

The corresponding computation tree of \algname{Local SGD} can be constructed in the following way. Define $N_k \eqdef k \times B$ and take $k = 0.$ Then
\begin{eqnarray}
  \begin{aligned}
  &z^{k,1}_i = z^{k,0}_i - \gamma \nabla f(x^{N_k}; \eta^{k,0}_i), \\
  &z^{k,2}_i = z^{k,1}_i - \gamma \nabla f(z^{k,1}_i; \eta^{k,1}_i), \\
  &\qquad \vdots \\
  &z^{k,M_i}_i = z^{k,M_i - 1}_i - \gamma \nabla f(z^{k,M_i - 1}_i; \eta^{k,M_i - 1}_i),
  \label{eq:pbHgXapdjZGUQs}
\end{aligned}
\end{eqnarray}
for all $i \in [n],$ and
\begin{eqnarray}
\begin{aligned}
  x^{N_k + 1}   &= x^{N_k} - \gamma \nabla f(z^{k, 0}_1; \eta^{k, 0}_1), \\
  &\vdots\\
  x^{N_k + M_1} &= x^{N_k + M_1 - 1} - \gamma \nabla f(z^{k, M_1 - 1}_1; \eta^{k, M_1 - 1}_1), \\
  x^{N_k + M_1+1} &= x^{N_k + M_1} - \gamma \nabla f(z^{k, 0}_2; \eta^{k, 0}_2), \\
  &\vdots\\
  x^{N_k + M_1 + M_2} &= x^{N_k + M_1 + M_2 - 1} - \gamma \nabla f(z^{k, M_2 - 1}_2; \eta^{k, M_2 - 1}_2), \\
  &\vdots\\
  x^{N_{k + 1}} &= x^{N_k + \sum_{i=1}^{n} M_i - 1} - \gamma \nabla f(z^{k, M_n - 1}_{n}; \eta^{k, M_n - 1}_{n}).
  \label{eq:yapgNwHcHepGaY}
\end{aligned}
\end{eqnarray}
Repeat the previous steps with $k = k + 1$ starting at $x^{N_{k + 1}} = x^{N_k + B}.$ See illustration in Figure~\ref{fig:local_sgd_graph}. One can easily show that $w^{1} = x^{B}, w^{2} = x^{2 B}, \dots, w^{k} = x^{k B}, \dots, $ where $w^{k}$ is the sequence from Algorithm~\ref{alg:local_sgd}.

\begin{theorem}
  Let Assumptions~\ref{ass:lipschitz_constant}, \ref{ass:lower_bound}, and \ref{ass:stochastic_variance_bounded} hold. Consider the computation tree (\eqref{eq:pbHgXapdjZGUQs} and \eqref{eq:yapgNwHcHepGaY}) of \algname{Local SGD}, then $\{x^k\}_{k \geq 0}$ is a main branch
  and $\frac{1}{K}\sum_{k=0}^{K-1}\Exp{\|\nabla f(x^k)\|^2} \le \varepsilon$ for all 
  $$
  K \geq \frac{4 B L \Delta}{\varepsilon} + \frac{8 \sigma^2 L \Delta}{\varepsilon^2}.
  $$
  with step size $\gamma = \min\{\frac{1}{2 B L}, \frac{\varepsilon}{4 \sigma^2 L}\}.$
  \label{thm:local_sgd}
\end{theorem}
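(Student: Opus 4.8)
The plan is to instantiate Theorem~\ref{thm:main} directly on the computation tree defined by \eqref{eq:pbHgXapdjZGUQs}--\eqref{eq:yapgNwHcHepGaY}, taking $\{x^k\}_{k \geq 0}$ as the main branch and reading the auxiliary sequence off the construction. Write $N_k \eqdef kB$ for the synchronization point opening global round $k$, and set $S_0 \eqdef 0$, $S_i \eqdef \sum_{\ell=1}^{i} M_\ell$. The main-branch edge that applies $\nabla f(z^{k,j}_i; \eta^{k,j}_i)$ has base point $x^{N_k + S_{i-1} + j}$, so the corresponding auxiliary pair is $(z^k, \xi^k) = (z^{k,j}_i, \eta^{k,j}_i)$. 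The whole proof then reduces to verifying the three conditions of Theorem~\ref{thm:main}, and essentially all the content lives in Condition~3.

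First I would dispatch Conditions~1 and~2, which are structural. Condition~1 holds because every $\eta^{k,j}_i$ is a fresh i.i.d.\ sample from $\mathcal{D}_\xi$, and neither $z^{k,j}_i$ (which uses only $\eta^{k,0}_i, \dots, \eta^{k,j-1}_i$) nor the base point $x^{N_k + S_{i-1}+j}$ (which uses the local gradients of workers $1, \dots, i-1$ together with steps $0, \dots, j-1$ of worker $i$) depends on $\eta^{k,j}_i$. The same accounting gives Condition~2: beyond $\textnormal{repr}(x^{N_k})$, the representation of $z^{k,j}_i$ consists exactly of worker $i$'s first $j$ local gradients, while that of $x^{N_k + S_{i-1}+j}$ additionally contains all local gradients of workers $1, \dots, i-1$, so $\textnormal{repr}(z^{k,j}_i) \subseteq \textnormal{repr}(x^{N_k + S_{i-1}+j})$.

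The heart of the argument---and the step I expect to be the main obstacle---is the distance bound $R = \sup_k \textnormal{dist}(x^k, z^k) \leq B - 1$ (Condition~3), which is exactly where the stopping rule $\sum_{i=1}^n M_i = B$ enters. The key point is that within round $k$ the synchronization point $x^{N_k}$ is a common ancestor of both the base point $x^{N_k + S_{i-1}+j}$ and the gradient point $z^{k,j}_i$: the former lies $S_{i-1}+j$ edges below $x^{N_k}$ along the main branch, while the latter lies $j$ edges below $x^{N_k}$, being the $j$-th local iterate of worker $i$ with $z^{k,0}_i = x^{N_k}$. Since the tree distance is measured to the \emph{closest} common ancestor, using $x^{N_k}$ yields the upper bound $\textnormal{dist}(x^{N_k+S_{i-1}+j}, z^{k,j}_i) \leq \max\{S_{i-1}+j,\, j\} = S_{i-1}+j$. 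Maximizing over the local index $j \in \{0, \dots, M_i - 1\}$ gives $S_{i-1} + M_i - 1 = S_i - 1$, and maximizing over workers gives $S_n - 1 = \sum_{i=1}^n M_i - 1 = B - 1$. Because every round repeats this structure regardless of how the individual $M_i$ fluctuate---only their sum is pinned to $B$ by the stopping rule---the bound holds uniformly in $k$, so $R \leq B - 1$.

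With all three conditions verified for $R = B - 1$, the claimed iteration count and step size follow by substituting $R + 1 = B$ into the conclusion of Theorem~\ref{thm:main}. The one subtlety worth flagging is the bookkeeping consistency across rounds: one must check that the depth counts reset cleanly at each synchronization point $x^{N_{k+1}} = x^{N_k + B}$, which is precisely what the common-ancestor argument guarantees. (A minor degeneracy---worker $1$'s local branch coincides with the main branch, making its distances $0$ rather than the generic $S_{i-1}+j$---only \emph{decreases} the distances, so it does not affect the upper bound $R \leq B-1$; the bound is in fact attained at the final step of each round when $n \geq 2$.)
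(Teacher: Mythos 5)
Your proposal is correct and takes essentially the same route as the paper's proof: identify the auxiliary sequence from \eqref{eq:pbHgXapdjZGUQs}--\eqref{eq:yapgNwHcHepGaY}, check Conditions 1 and 2 structurally, and obtain Condition 3 with $R = B-1$ from the stopping rule $\sum_{i=1}^n M_i = B$, then invoke Theorem~\ref{thm:main}. The only difference is one of detail, not of substance---your explicit common-ancestor bound $\textnormal{dist}(x^{N_k+S_{i-1}+j}, z^{k,j}_i) \le \max\{S_{i-1}+j,\, j\} \le B-1$ and the worker-$1$ degeneracy remark make rigorous what the paper argues by pointing to Figure~\ref{fig:local_sgd_graph}.
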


Although the proof may seem technical due to the heavy notation in \eqref{eq:yapgNwHcHepGaY}, it is actually straightforward when you refer to Figure~\ref{fig:local_sgd_graph}. This figure clearly shows that all conditions of Theorem~\ref{thm:main} are satisfied with $R = B - 1$ because $\sum_{i=1}^n M_i = B$ in every global iteration. The condition $\sum_{i=1}^n M_i = B$ helps us to insure that the maximum tree distance $\sup_{k \geq 0} \textnormal{dist}(x^k, z^k) \leq B - 1.$

\begin{proof}
  Clearly, $\{x^k\}_{k \geq 0}$ is a main branch by Definition~\ref{def:branch}. The corresponding auxiliary sequence can be inferred from \eqref{eq:yapgNwHcHepGaY}: $(z^0, \xi^0) = (z^{0, 0}_1, \eta^{0, 0}_1), \dots, (z^{M_1}, \xi^{M_1}) = (z^{0, M_1}_1, \eta^{0, M_1}_1),$ and etc. Condition 1 is satisfied because $\{\eta^{k, j}_{i}\}$ are i.i.d., and by the construction \eqref{eq:yapgNwHcHepGaY}. Condition 2 of Theorem~\ref{thm:main} holds because the same stochastic gradients used for computing $z^{k}$ are also used for $x^{k}$, as shown in Figure~\ref{fig:local_sgd_graph}. This can be formally verified using \eqref{eq:yapgNwHcHepGaY} and \eqref{eq:pbHgXapdjZGUQs}.
  It is left to notice that 
  \begin{align*}
    \sup_{k \geq 0} \textnormal{dist}(x^k, z^k) \leq B - 1
  \end{align*}
  because the maximum number of edges to the common closest ancestor can not exit $B - 1$.
\end{proof}


\newpage

\subsection{\algname{Ringmaster ASGD}}
\label{sec:ringmaster_asgd}

\begin{algorithm}[H]
  \caption{\algname{Ringmaster ASGD} \citep{maranjyan2025ringmaster}}
  \label{alg:ringmasternew}
  \begin{algorithmic}[1]
      \STATE \textbf{Input:} point $w^0 \in \R^d$, stepsize $\gamma > 0,$ delay threshold $G \in \N$
      \STATE Set $k = 0$
      \STATE Workers start computing stochastic gradients at $w^0$
      \WHILE{True}
          \STATE Gradient $\nabla f(w^{k-\delta^k}; \eta^{k-\delta^k}_{i})$ arrives from worker $i$
          \IF{$\delta^k < G$}
          \STATE Update: $w^{k+1} = w^{k} - \gamma \nabla f(w^{k-\delta^k}; \eta^{k-\delta^k}_{i})$
          \STATE Worker $i$ begins calculating at $w^{k+1}$ \hfill ($\{\eta^{k}_{i}\}$ are i.i.d.)
          \STATE Update the iteration number $k = k + 1$
          \ELSE
          \STATE Ignore the outdated gradient $\nabla f(w^{k-\delta^k}; \eta^{k-\delta^k}_{i})$
          \STATE Worker $i$ begins calculating at $w^{k}$
          \ENDIF
      \ENDWHILE
  \end{algorithmic}
\end{algorithm}

In this method, a main branch can be defined as 
\begin{align}
  x^k = w^k
  \label{eq:JYByrgqAiCTpxi}
\end{align}
and the auxiliary sequence is defined as $(z^k, \xi^k) = (x^{k - \delta^k}, \eta^{k - \delta^k}_{i})$ for all $k \geq 0.$

\begin{figure}[h]
  \centering
  \includegraphics[page=12,width=0.5\textwidth]{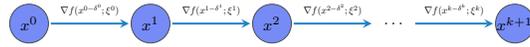}
  \caption{The computation tree of \algname{Ringmaster ASGD}}
  \label{fig:async_sgd_graph}
\end{figure}

\begin{theorem}
  \label{thm:ringmaster_asgd}
  Let Assumptions~\ref{ass:lipschitz_constant}, \ref{ass:lower_bound}, and \ref{ass:stochastic_variance_bounded} hold. Consider the computation tree of \algname{Ringmaster ASGD}, then $\{x^k\}_{k \geq 0},$ defined in \eqref{eq:JYByrgqAiCTpxi}, is a main branch
  and $\frac{1}{K}\sum_{k=0}^{K-1}\Exp{\|\nabla f(x^k)\|^2} \le \varepsilon$ for all 
  $$
  K \geq \frac{4 G L \Delta}{\varepsilon} + \frac{8 \sigma^2 L \Delta}{\varepsilon^2}.
  $$
  with step size $\gamma = \min\{\frac{1}{2 G L}, \frac{\varepsilon}{4 \sigma^2 L}\}.$
\end{theorem}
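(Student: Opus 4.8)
The plan is to obtain this theorem as a direct corollary of the Main Theorem (Theorem~\ref{thm:main}): I would verify that the main branch $x^k = w^k$ together with the auxiliary sequence $(z^k,\xi^k) = (x^{k-\delta^k}, \eta^{k-\delta^k}_i)$ from \eqref{eq:JYByrgqAiCTpxi} satisfies Conditions~1--3, with the distance bound $R = G-1$, and then read off the rate by substituting $R+1 = G$. The structural fact doing all the work is that, because the iterate sequence $w^0, w^1, w^2, \dots$ is exactly the sequence of accepted server updates, the point $z^k = x^{k-\delta^k}$ lies on the main branch itself and is a direct ancestor of $x^k$, reached by walking back $\delta^k$ edges (see Figure~\ref{fig:async_sgd_graph}). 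This makes Conditions~2 and~3 essentially immediate.

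For Condition~3, since $z^k$ sits on the branch, it is its own closest common ancestor with $x^k$; the path from $x^k$ back to $z^k$ has exactly $\delta^k$ edges and the path from $z^k$ to itself has none, so $\textnormal{dist}(x^k,z^k) = \max\{\delta^k,0\} = \delta^k$. The acceptance rule $\delta^k < G$ on line~6 of Algorithm~\ref{alg:ringmasternew} then forces $\textnormal{dist}(x^k,z^k) \le G-1$, giving $R \eqdef \sup_{k\ge 0}\textnormal{dist}(x^k,z^k) \le G-1$. For Condition~2, because $z^k$ is an ancestor of $x^k$ on the branch, the multiset of stochastic gradients applied to $w^0$ to reach $z^k$ is an initial segment of those applied to reach $x^k$; concretely $\textnormal{repr}(z^k) = \{(z^i,\xi^i)\}_{i=0}^{k-\delta^k-1} \subseteq \{(z^i,\xi^i)\}_{i=0}^{k-1} = \textnormal{repr}(x^k)$, so the inclusion holds with room to spare.

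The main obstacle is Condition~1, the independence of $\xi^k = \eta^{k-\delta^k}_i$ from the history $\{(x^{i+1}, z^{i+1}, \xi^i)\}_{i=0}^{k-1}$. The idea is that this sample is freshly drawn and not yet ``spent'': worker $i$ begins computing the gradient it delivers at step $k$ only after the server has produced $w^{k-\delta^k}$ and instructed it to restart there (lines~7--8), so $\eta^{k-\delta^k}_i$ is an i.i.d.\ draw taken strictly after $x^{k-\delta^k}$ is fixed and is consumed by the main branch for the first time precisely at step $k$. Hence it cannot have entered the computation of any $x^{i+1}$, $z^{i+1}$, or earlier accepted sample $\xi^i$ for $i \le k-1$. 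The delicate part, which I would make rigorous with a filtration generated by the accepted updates, is that the back-index $k-\delta^k$ is itself a random quantity determined by the (random) arrival order of the workers; one must argue that the sample realizing $\xi^k$ is measurable with respect to the worker's private randomness drawn after the $(k-\delta^k)$-th accepted update and is therefore independent of that filtration. This is the standard but subtle measurability point in asynchronous analyses, and it is exactly the property the acceptance branch is designed to guarantee---a gradient is used at most once to advance the branch.

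With all three conditions verified and $R = G-1$, Theorem~\ref{thm:main} applies verbatim. Substituting $R+1 = G$ yields the iteration bound $K \ge \frac{4 G L \Delta}{\varepsilon} + \frac{8 \sigma^2 L \Delta}{\varepsilon^2}$ and the step size $\gamma = \min\{\frac{1}{2GL}, \frac{\varepsilon}{4\sigma^2 L}\}$ (the slightly conservative choice $\frac{1}{2GL} \le \frac{1}{2(G-1)L}$ still respects the step-size constraints of Theorem~\ref{thm:main}), completing the proof.
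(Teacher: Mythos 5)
Your proposal is correct and follows essentially the same route as the paper's proof: verify Conditions 1--3 for the main branch $x^k = w^k$ with auxiliary sequence $(z^k,\xi^k) = (x^{k-\delta^k}, \eta^{k-\delta^k}_i)$, note that $z^k$ lies on the main branch so that $\textnormal{repr}(z^k) \subseteq \textnormal{repr}(x^k)$ and $\textnormal{dist}(x^k,z^k) = \delta^k \leq G-1$, and invoke Theorem~\ref{thm:main} with $R = G-1$. Your treatment is in fact slightly more careful than the paper's, which dispatches Condition~1 in one sentence without the filtration/measurability discussion and does not remark on the mildly conservative step size $\frac{1}{2GL} \leq \frac{1}{2(G-1)L}$.
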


\begin{proof}
  Condition 1 is satisfied because $\{\eta^{k-\delta^k}_{i}\}$ are i.i.d., $x^{k} = w^k$ and $z^k = x^{k-\delta^k}$ do not depend on $\xi^k = \eta^{k-\delta^k}_{i}.$ Condition 2 is satisfied because $\textnormal{repr}(z^{k}) = \textnormal{repr}(w^{k-\delta^k}) \subseteq \textnormal{repr}(w^{k}) = \textnormal{repr}(x^{k}).$ Condition 3 is satisfied with $R = G - 1$ because
  \begin{align*}
    \textnormal{dist}(x^k, z^k) = \textnormal{dist}(x^{k}, x^{k-\delta^k}) = \delta^k \leq G - 1,
  \end{align*}
  where the second equality due to the number of edges between $x^{k}$ and $x^{k-\delta^k}$ and the last inequality due to the fact that $\delta^k$ is bounded by $B$ in Algorithm~\ref{alg:ringmasternew}.
\end{proof}

\newpage

\subsection{\algname{Cycle SGD}}
\label{sec:circular_sgd}

We now present a new method, called \algname{Cycle SGD}:

\begin{algorithm}[H]
  \caption{\algname{Cycle SGD}}
  \label{alg:circular_local_sgd}
  \begin{algorithmic}[1]
  \REQUIRE Initial model $w^0$, step size $\gamma$, group size $s$
  \STATE Partition workers into groups of size $s$: 
      $$G_1 = \{1, \dots, s\}, G_2 = \{s + 1, \dots, 2 s\}, \dots, G_{\lceil n/s \rceil} = \{(\lceil n/s \rceil - 1) s + 1, \dots, n\}$$ 
      in a circular manner.
  \STATE Broadcast $w^{0}$ to all workers and assign the local variables $z^0_i = w^{0}$ and $M_i = 0$ for all $i \in [n]$
  \WHILE{True}
      \FOR{group index $g = 1$ to $\lceil n/s \rceil$}
          \FOR{each worker $i \in [n]$ \textbf{in parallel}}
            \STATE $z_i^{M_i + 1} = z_i^{M_i} - \gamma \nabla f(z_i^{M_i}; \eta_i^{M_i}), \quad \eta_i^{M_i} \sim \mathcal{D}_{\xi}$
            \STATE $M_i = M_i + 1$
          \ENDFOR
          \STATE Aggregate $\gamma \sum_{i \in G_g} \sum_{j=1}^{M_i} \nabla f(z^{j}_i; \eta^{j}_i)$ from the workers of group $G_g$ only
          \STATE Server aggregates and updates the model:
          $$
              w^{r+1} = w^r - \gamma \sum_{i \in G_g} \sum_{j=0}^{M_i - 1} \nabla f(z^{j}_i; \eta^{j}_i)
          $$
          \STATE Broadcast $w^{r + 1}$ to all workers of group $g$ and assign the local variables $z^0_i = w^{r + 1}$ and $M_i = 0$ for all $i \in G_g$
          \STATE $r = r + 1$
      \ENDFOR
  \ENDWHILE
  \end{algorithmic}
\end{algorithm}

This method operates similarly to \algname{Local SGD}, with workers performing local steps. However, a key difference is that only $s$ workers synchronize at each step, rather than all $n$ workers. This strategy can be advantageous in scenarios where reducing peak bandwidth is desirable. A visualization of the corresponding computation tree is in Figure~\ref{fig:circular_local_sgd_graph}. For this algorithm, the first $\sum_{i=1}^{n} M_i$ nodes of the main branch can be defined as
\begin{equation}
\begin{aligned}
  \label{eq:whoDleMteaoI}
  x^{1} &= x^{0} - \gamma \nabla f(z^{0}_1; \eta^{0}_1), \\
  &\vdots\\
  x^{M_1} &= x^{M_1 - 1} - \gamma \nabla f(z^{M_1 - 1}_1; \eta^{M_1 - 1}_1), \\
  &\vdots\\
  x^{\sum_{i=1}^{s-1} M_i + 1} &= x^{\sum_{i=1}^{s-1} M_i} - \gamma \nabla f(z^{0}_{s}; \eta^{0}_{s}), \\
  &\vdots \\
  x^{\sum_{i=1}^{s} M_i} &= x^{\sum_{i=1}^{s} M_i - 1} - \gamma \nabla f(z^{M_s - 1}_s; \eta^{M_s - 1}_s), \\
  &\vdots \\
\end{aligned}
\end{equation}
Notice that $x^{\sum_{i=1}^{s} M_i} \equiv w^{1},$ where we capture and unroll all stochastic gradients from the first group. The next nodes of the main branch can be defined in a similar way going through all groups circularly.

\begin{figure}[h]
  \centering
  \includegraphics[page=14,width=0.5\textwidth]{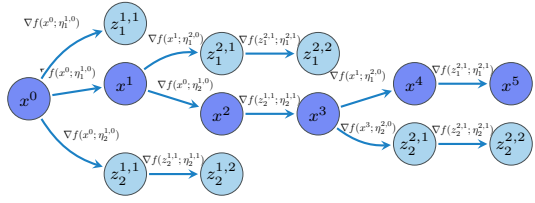}
  \caption{An example of \algname{Cycle SGD} computation tree.}
  \label{fig:circular_local_sgd_graph}
\end{figure}

\begin{theorem}
  Let Assumptions~\ref{ass:lipschitz_constant}, \ref{ass:lower_bound}, and \ref{ass:stochastic_variance_bounded} hold. Consider the computation tree of \algname{Cycle SGD} (Alg.~\ref{alg:circular_local_sgd}), then $\{x^k\}_{k \geq 0},$ defined in \eqref{eq:whoDleMteaoI}, is a main branch
  and $\frac{1}{K}\sum_{k=0}^{K-1}\Exp{\|\nabla f(x^k)\|^2} \le \varepsilon$ for all 
  $$
  K \geq \frac{8 n^2 L \Delta}{s \varepsilon} + \frac{8 \sigma^2 L \Delta}{\varepsilon^2}.
  $$
  with step size $\gamma = \min\{\frac{s}{4 n^2 L}, \frac{\varepsilon}{4 \sigma^2 L}\}.$
  \label{thm:circular_local_sgd}
\end{theorem}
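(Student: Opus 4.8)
The plan is to obtain this as a direct corollary of Theorem~\ref{thm:main}, in exactly the same spirit as the \algname{Rennala SGD} and \algname{Local SGD} analyses. The main branch $\{x^k\}_{k \geq 0}$ is already fixed by \eqref{eq:whoDleMteaoI}, and the auxiliary sequence $\{(z^k, \xi^k)\}$ is read off directly: whenever the main branch applies the gradient $\nabla f(z_i^{j}; \eta_i^{j})$ of a worker $i$ belonging to the currently harvested group, we set $z^k = z_i^{j}$ and $\xi^k = \eta_i^{j}$. It then suffices to verify Conditions~1--3 and to extract the constant $R$; the stated $K$ and $\gamma$ follow by substituting $R$ into the conclusion of Theorem~\ref{thm:main}.

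First I would dispatch Conditions~1 and~2, which are essentially identical to the \algname{Local SGD} case. Condition~1 holds because every local step in Algorithm~\ref{alg:circular_local_sgd} draws a fresh i.i.d.\ sample $\eta_i^{M_i} \sim \mathcal{D}_{\xi}$, while $x^k$ and $z^k$ are assembled from strictly earlier samples; hence $\xi^k$ is independent of $\{(x^{i+1}, z^{i+1}, \xi^i)\}_{i=0}^{k-1}$. Condition~2, namely $\textnormal{repr}(z^k) \subseteq \textnormal{repr}(x^k)$, I would justify from the tree geometry of Figure~\ref{fig:circular_local_sgd_graph}: the local iterate $z^k = z_i^{j}$ is produced from a synchronization point $z_i^0 = w^{r}$ (an ancestor of $x^k$ on the main branch) by local steps using $\nabla f(z_i^0), \dots, \nabla f(z_i^{j-1})$, and within a harvest block these gradients are unrolled onto the main branch \emph{before} $\nabla f(z_i^{j})$ is applied. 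Consequently every gradient in $\textnormal{repr}(z^k) = \textnormal{repr}(w^{r}) \uplus \{\nabla f(z_i^0), \dots, \nabla f(z_i^{j-1})\}$ already belongs to $\textnormal{repr}(x^k)$.

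The crux of the argument, and the step I expect to be the main obstacle, is Condition~3: bounding $R = \sup_{k \geq 0} \textnormal{dist}(x^k, z^k)$. Writing $x^p$ for the closest common ancestor of $x^k$ and $z^k = z_i^{j}$ (the main-branch node corresponding to the synchronization point from which worker $i$ began its current block), we have $\textnormal{dist}(x^k, z^k) = \max\{k - p,\, j\}$, so both arms must be controlled. The local arm obeys $j \leq \ceil{n/s}$, since under the circular schedule a worker accumulates at most $\ceil{n/s}$ local steps between two harvests of its own group. For the main-branch arm $k - p$ the key is periodicity: between two consecutive synchronizations of any fixed group exactly one full cycle of the outer loop elapses. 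A conservation count then pins down the number of gradients appended to the main branch per cycle: each inner iteration has all $n$ workers take one step, the outer loop runs $\ceil{n/s}$ times per cycle, and in steady state every computed gradient is eventually harvested, so exactly $n\ceil{n/s}$ gradients land on the main branch per cycle. Hence $k - p \leq n\ceil{n/s}$, and therefore
\begin{align*}
  \textnormal{dist}(x^k, z^k) = \max\{k - p,\, j\} \leq n \ceil{n/s} \leq \frac{2 n^2}{s},
\end{align*}
using $\ceil{n/s} \leq 2n/s$ for $1 \leq s \leq n$, which yields $R \leq 2n^2/s$.

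With $R \leq 2n^2/s$ in hand the result is immediate from Theorem~\ref{thm:main}. Since $R \leq 2n^2/s$ gives $\frac{s}{4 n^2 L} \leq \frac{1}{2 R L}$ and trivially $\frac{s}{4 n^2 L} \leq \frac{1}{2 L}$, the choice $\gamma = \min\{\frac{s}{4 n^2 L}, \frac{\varepsilon}{4 \sigma^2 L}\}$ is admissible (a step size no larger than the one required by the theorem), and the threshold $K \geq \frac{4(R+1) L \Delta}{\varepsilon} + \frac{8 \sigma^2 L \Delta}{\varepsilon^2}$ reduces to the stated bound $K \geq \frac{8 n^2 L \Delta}{s \varepsilon} + \frac{8 \sigma^2 L \Delta}{\varepsilon^2}$ after substituting $R+1 \leq 2n^2/s$. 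The only genuine delicacy is the bookkeeping of the circular schedule — establishing the one-cycle periodicity and the conservation count that produces exactly $n\ceil{n/s}$ main-branch edges per cycle — while everything else is a routine instantiation of the general theorem.
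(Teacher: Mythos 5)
Your proposal is correct and follows essentially the same route as the paper's proof: Conditions 1 and 2 are dispatched as in the \algname{Local SGD} analysis, the local arm is bounded by $\lceil n/s\rceil$ local steps per harvest, the main-branch arm by one cycle's growth of roughly $n\lceil n/s\rceil \leq 2n^2/s$ edges, and the result follows by substituting $R \leq 2n^2/s$ into Theorem~\ref{thm:main}. Your per-cycle conservation count is just an equivalent repackaging of the paper's per-worker count (each of the other $n-1$ workers contributes at most $\lceil n/s\rceil$ gradients before worker $i$ is resynchronized), and your explicit check that $R+1 \leq 2n^2/s$ is, if anything, slightly more careful than the paper's.
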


\begin{proof}
  Once again, the proof is geometric. As an example, consider Figure~\ref{fig:circular_local_sgd_graph} together with Algorithm~\ref{alg:circular_local_sgd}. One can easily show that Conditions 1 and 2 are satisfied similarly to the proof of Theorem~\ref{alg:local_sgd}. However, the maximum tree distance is different since we synchronize the workers in a circular manner.

  First, the number of local steps $M_i \leq \left\lceil\frac{n}{s}\right\rceil \leq \frac{2n}{s}$ because each worker computes one stochastic gradient in the inner loop and synchronizes every $\left\lceil\frac{n}{s}\right\rceil$ loops.

  Next, the maximum tree distance between a point $x^{k}$ on the main branch and the corresponding point of the auxiliary sequence $z^{k}$ is at most $\frac{2n^2}{s}$. Let us explain this step. Consider any $x^{k}$ and $z^{k}$, and their closest common ancestor $w^k$ (in Figure~\ref{fig:circular_local_sgd_graph}, for instance, take $x^{5}$, $z^{2,2}_2$, and $x^{3}$ accordingly).

  The number of edges from $z^{k}$ to $w^k$ never exceeds $\frac{2n}{s}$ due to the bound on the number of local steps. The number of edges from $x^{k}$ to $w^k$ never exceeds $\frac{2n^2}{s}$ because, while one worker performs local steps, other workers can grow the main branch by at most $\left\lceil\frac{n}{s}\right\rceil \times (n - 1) \leq \frac{2n (n - 1)}{s}$ points before the worker that computed $z^k$ is synchronized\footnote{For instance, see Figure~\ref{fig:circular_local_sgd_graph}, where, before the algorithm applies $\nabla f(z^{2,1}_{2};\xi^{2,1}_{2})$ from the second worker, the main branch grows by two edges, from $x^{3}$ to $x^{5}$, due to gradients computed by the first worker.}.
  
  Thus, we can take $R = \frac{2n^2}{s}$ in Condition 3 of Theorem~\ref{thm:main}.
\end{proof}



\newpage

\subsection{\algname{Async-Local SGD}}
\label{sec:async_plus_local}
The following algorithm is a mixture of \algname{Asynchronous SGD} and \algname{Local SGD}, which we formalize in the following way.

\begin{algorithm}[H]
  \caption{\algname{Async-Local SGD}}
  \label{alg:async_plus_local}
  \begin{algorithmic}[1]
      \STATE \textbf{Input:} point $x^0 \in \R^d$, stepsize $\gamma > 0,$ delay threshold $B \in \N,$ number of local steps $M$
      \STATE Set $k = 0$
      \STATE Workers start running local steps at $w^0$ with Alg.~\ref{alg:local_sgd_worker_function_mix} for $M$ steps
      \WHILE{True}
          \STATE Sum $\gamma \sum_{p = 0}^{M - 1} \nabla f(z^{p}_{i_k}; \eta^{p}_{i_k})$ arrives from some worker $i_k$
          \STATE Find the tree distance $\delta^k = \textnormal{dist}(w^{k}, z^{0}_{i_k})$ \\ (delay $\delta^k$ of $w^{k - \delta^k},$ at which point worker $i_k$ started local steps)
        \IF{$\delta^k < B$}
          \STATE Update: $w^{k+1} = w^{k} - \gamma \sum_{p = 0}^{M - 1} \nabla f(z^{p}_{i_k}; \eta^{p}_{i_k})$
          \STATE Worker $i$ starts running local steps at $w^{k+1}$ with Alg.~\ref{alg:local_sgd_worker_function_mix} for $M$ steps
          \STATE Update the iteration number $k = k + 1$
          \ELSE
          \STATE Ignore the outdated sum $\gamma \sum_{p = 0}^{M - 1} \nabla f(z^{p}_{i_k}; \eta^{p}_{i_k})$
          \STATE Worker $i$ starts running local steps at $w^{k}$ with Alg.~\ref{alg:local_sgd_worker_function_mix} for $M$ steps
          \ENDIF
      \ENDWHILE
  \end{algorithmic}
\end{algorithm}

\begin{algorithm}[H]
  \caption{\texttt{LocalSGDWorker}($w, \gamma, M$) in worker $i$}
  \label{alg:local_sgd_worker_function_mix}
  \begin{algorithmic}[1]
    \STATE $z^{0}_i = w$
    \FOR{$p = 0, \dots M - 1$}
        \STATE $z^{p + 1}_i = z^{p}_i - \gamma \nabla f(z^{p}_i; \eta^{p}_i), \quad \eta^{p}_i \sim \mathcal{D}_{\xi}$
    \ENDFOR
    \STATE Send to the server $\gamma \sum_{p = 0}^{M - 1} \nabla f(z^{p}_i; \eta^{p}_i)$
  \end{algorithmic}
\end{algorithm}

If $M = 1,$ then this method reduces to \algname{Ringmaster ASGD} (Alg.~\ref{alg:ringmasternew}). Taking $M > 1,$ we can improve the time complexity of \algname{Ringmaster ASGD} by decreasing the number of times when workers synchronize with the server. For this method, it is natural to take a main branch as
\begin{equation}
\begin{aligned}
  x^{1} &= x^{0} - \gamma \nabla f(z^{0}_{i_1}; \eta^{0}_{i_1}), \\
  &\vdots \\
  x^{M} &= x^{M - 1} - \gamma \nabla f(z^{M - 1}_{i_1}; \eta^{M - 1}_{i_1}),\\
  &\vdots \\
  x^{M (k - 1) + 1} &= x^{M (k - 1)} - \gamma \nabla f(z^{0}_{i_k}; \eta^{0}_{i_k}), \\
  &\vdots, \\
  x^{M k} &= x^{M k - 1} - \gamma \nabla f(z^{M - 1}_{i_k}; \eta^{M - 1}_{i_k}), \\
  &\vdots
  \label{eq:kfHzyNb}
\end{aligned}
\end{equation}
and so on. Notice that $x^0 \equiv w^0, x^{M} \equiv w^1,$ etc.

\begin{theorem}
  Let Assumptions~\ref{ass:lipschitz_constant}, \ref{ass:lower_bound}, and \ref{ass:stochastic_variance_bounded} hold. Consider the computation tree of \algname{Async-Local SGD} (Alg.~\ref{alg:async_plus_local}), then $\{x^k\}_{k \geq 0},$ defined in \eqref{eq:kfHzyNb}, is a main branch
  and $\frac{1}{K}\sum_{k=0}^{K-1}\Exp{\|\nabla f(x^k)\|^2} \le \varepsilon$ for all 
  $$
  K \geq \frac{4 (B + M - 1) L \Delta}{\varepsilon} + \frac{8 \sigma^2 L \Delta}{\varepsilon^2}.
  $$
  with step size $\gamma = \min\{\frac{1}{4 (B + M - 1) L}, \frac{\varepsilon}{4 \sigma^2 L}\}.$
  \label{thm:async_plus_local}
\end{theorem}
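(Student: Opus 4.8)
The plan is to obtain the theorem as an immediate application of the Main Theorem (Theorem~\ref{thm:main}) to the main branch $\{x^k\}_{k\ge 0}$ of \eqref{eq:kfHzyNb}, with the constant taken as $R = B + M - 2$. First I read off the auxiliary sequence from the construction. The $(j+1)$-th accepted update, arriving from worker $i_{j+1}$, appends the $M$ nodes $x^{Mj+1},\dots,x^{M(j+1)}$, where $x^{Mj+p+1} = x^{Mj+p} - \gamma\nabla f(z^p_{i_{j+1}};\eta^p_{i_{j+1}})$; hence for $k = Mj+p$ with $p\in\{0,\dots,M-1\}$ the auxiliary pair is $(z^k,\xi^k) = (z^p_{i_{j+1}},\eta^p_{i_{j+1}})$, and the server iterates satisfy $w^j = x^{Mj}$, i.e.\ they are exactly the heads of the main branch. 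That $\{x^k\}$ is a main branch in the sense of Definition~\ref{def:branch} is then immediate from \eqref{eq:kfHzyNb}. It remains to verify the three conditions.

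Conditions 1 and 2 are routine and mirror the arguments for \algname{Ringmaster ASGD} (Theorem~\ref{thm:ringmaster_asgd}) and \algname{Local SGD} (Theorem~\ref{thm:local_sgd}). For Condition 1, $\xi^k = \eta^p_{i_{j+1}}$ is a fresh i.i.d.\ sample: it enters neither $x^k$ (built from strictly earlier gradients) nor $z^k = z^p_{i_{j+1}}$ (built from $\eta^0_{i_{j+1}},\dots,\eta^{p-1}_{i_{j+1}}$), so it is independent of $\{(x^{i+1},z^{i+1},\xi^i)\}_{i=0}^{k-1}$. For Condition 2, let the worker $i_{j+1}$ have begun its local run at the head $z^0_{i_{j+1}} = x^{Mj-\delta}$, where $\delta$ is its delay. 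Then $\textnormal{repr}(z^p_{i_{j+1}})$ consists of $\textnormal{repr}(x^{Mj-\delta})$ together with the first $p$ local gradients of that worker, and both ingredients also appear in $\textnormal{repr}(x^{Mj+p})$: the ancestor part because $x^{Mj-\delta}$ lies on the path to $x^{Mj+p}$, and the $p$ local gradients because they are precisely the edges $x^{Mj}\to\cdots\to x^{Mj+p}$. Thus $\textnormal{repr}(z^k)\subseteq\textnormal{repr}(x^k)$.

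The crux is Condition 3, i.e.\ the bound $\sup_k\textnormal{dist}(x^k,z^k)\le B+M-2$, where the tree distance splits additively into the two sources of staleness. Fix $k = Mj+p$. The auxiliary point $z^k = z^p_{i_{j+1}}$ forks off the main branch at the common ancestor $z^0_{i_{j+1}} = x^{Mj-\delta}$ and lies $p$ edges further along the worker's local chain, while $x^k = x^{Mj+p}$ lies $\delta + p$ edges past that same ancestor along the main branch. Hence $\textnormal{dist}(x^k,z^k) = \max\{\delta+p,\,p\} = \delta + p$. The acceptance test $\delta^k < B$ in Algorithm~\ref{alg:async_plus_local} forces $\delta \le B-1$, and $p \le M-1$ by definition, so $\textnormal{dist}(x^k,z^k)\le(B-1)+(M-1)=B+M-2$ for every $k$. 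Setting $R = B+M-2$ in Theorem~\ref{thm:main} gives the stated iteration bound (since $R+1 = B+M-1$); the listed step size $\gamma = \min\{\frac{1}{4(B+M-1)L},\frac{\varepsilon}{4\sigma^2 L}\}$ is a valid instantiation, being no larger than the $\min\{\frac{1}{2L},\frac{1}{2RL}\}$ demanded in the proof of Theorem~\ref{thm:main} and remaining well-defined in the degenerate case $B=M=1$. As sanity checks, $M=1$ recovers $R = B-1$ (\algname{Ringmaster ASGD}) and $B=1$ gives $R = M-1$ (a synchronized batch of size $M$).

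I expect the only real obstacle to be the bookkeeping behind Condition 3: one must argue uniformly in $k$ that (i) each worker resumes from a genuine head $x^{Mj-\delta}$ of the main branch — true because workers restart only from the server iterates $w^{\cdot} = x^{M\cdot}$, and the recorded delay $\delta^k = \textnormal{dist}(w^k,z^0_{i_k})$ is exactly this main-branch distance — so that this head is the lowest common ancestor of $x^k$ and $z^k$, and (ii) the main-branch distance from that ancestor to $x^{Mj+p}$ equals $\delta + p$. Once these two facts are read off from Figure~\ref{fig:async_plus_local}, the additive decomposition $\textnormal{dist} = \delta + p$ and the resulting bound $R = B+M-2$ follow, and the theorem is just Theorem~\ref{thm:main} specialized to this $R$.
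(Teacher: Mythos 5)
Your proof is correct and takes essentially the same route as the paper's: read off the auxiliary sequence from \eqref{eq:kfHzyNb}, verify Conditions 1 and 2 as in the \algname{Ringmaster ASGD} and \algname{Local SGD} proofs, bound Condition 3 by combining the acceptance test $\delta^k < B$ with the $M$ local steps to get $R = B + M - 2$, and invoke Theorem~\ref{thm:main}. Your additive decomposition $\textnormal{dist}(x^k, z^k) = \delta + p \le (B-1) + (M-1)$ is exactly the paper's geometric argument, stated somewhat more carefully (the paper's listed intermediate distances $B-1, B-2, \dots$ contain a typo; the correct sequence is $B-1, B, \dots, B+M-2$, which is what you derive), and your handling of the slightly smaller stated step size matches the paper's implicit convention.
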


\begin{proof}
  Similar to the previous proofs, Condition 1 is satisfied for the main branch $\{x^k\}_{k \geq 0}$ because all random variables $\{\eta^{i}_{j}\}$ in \eqref{eq:kfHzyNb} are i.i.d., and $x^{0}$ and $z^{0}_{i_1}$ do not depend on $\eta^{0}_{i_1}.$ Points $x^{M - 1}$ and $z^{M - 1}_{i_1}$ do not depend on $\eta^{M - 1}_{i_1},$ and so on. Conditions 2 is satisfied because all stochastic gradients used to compute $z^{p}_{i_k}$ are also used to compute the corresponding point on the main branch for all $p \in \{0, \dots, M - 1\}$ and $k \geq 0$ (see Figure~\ref{fig:async_plus_local}). Condition 3 is satisfied with $R = B - 1 + M - 1 = B + M - 2$ due to the inequality $\delta^k = \textnormal{dist}(w^{k}, z^{0}_{i_k}) < B$ in Algorithm~\ref{alg:async_plus_local} and the fact every worker calculates $M$ stochastic gradients, which ensures that the tree distance between $z^{0}_{i_k}$ and the corresponding point from the main brain branch is at most $B - 1,$ the tree distance between $z^{1}_{i_k}$ and the corresponding point from the main brain branch is at most $B - 2,$ $\dots,$ the tree distance between $z^{M - 1}_{i_k}$ and the corresponding point from the main brain branch is at most $B + M - 2.$
\end{proof}

\subsection{\algname{Async-Batch SGD}}
\label{sec:async_batch}
This method does the same steps as \algname{Async-Local SGD} with the only difference that the workers calculate mini-batches instead of local steps:
\begin{algorithm}[H]
\caption{\texttt{BatchSGDWorker}($w, \gamma, M$) in worker $i$}
\label{alg:local_sgd_worker_function_mix_batch}
\begin{algorithmic}[1]
    \STATE $z^{0}_i = w$
    \FOR{$p = 0, \dots M - 1$}
        \STATE Calculate $\nabla f(z^{p}_i; \eta^{p}_i), \quad \eta^{p}_i \sim \mathcal{D}_{\xi}$
        \STATE $z^{p + 1}_i = z^{p}_i$
    \ENDFOR
    \STATE Send to the server $\gamma \sum_{p = 0}^{M - 1} \nabla f(z^{p}_i; \eta^{p}_i)$
\end{algorithmic}
\end{algorithm}

One can easily show that these methods share the same theoretical guarantees (Sections~\ref{sec:async_plus_local}, \ref{sec:proof_compt}, and \ref{sec:proof_communication}) as \algname{Async-Local SGD}.

\newpage

\subsection{\algname{Local-Async SGD}}
\label{sec:local_plus_async}
One way to interpret the following algorithm is that the workers are partitioned into groups, with each group running \algname{Asynchronous SGD}. Then, at certain points, all workers synchronize, and start running \algname{Asynchronous SGD} at a new point. One of the important novelties here is the condition $\sum_{g=1}^{s} m_g = B,$ which, with a proper $B,$ leads to the optimal computational time complexity (Section~\ref{sec:proof_compt}).

\begin{algorithm}[H]
  \caption{\algname{Local-Async SGD}}
  \label{alg:grouped_async_local_sgd}
  \begin{algorithmic}[1]
  \REQUIRE Initial model $w^0$, step size $\gamma$, parameter $B,$ group partitions $G_1, \dots, G_s$
  \FOR{$k = 0, 1, 2, \dots$}
      \STATE Broadcast $w^k$ to all groups
      \FOR{each worker $g \in [s]$ \textbf{in parallel}}
          \STATE Group $g$ starts AsynchronousSGDGroup($w^k, \gamma$) from Algorithm~\ref{alg:local_sgd_worker_function_mix_sgd}
      \ENDFOR
      \STATE Wait for the moment when $\sum_{g=1}^{s} m_g = B$ \hfill ($\{m_g\}$ from AsynchronousSGDGroup($w^k, \gamma$))
      \STATE Ask the groups to stop\footnotemark running AsynchronousSGDGroup($w^k, \gamma$)
      \STATE Aggregate $\gamma \sum_{g=1}^{s} \sum_{j=0}^{m_g - 1} \nabla f(v^{j-\delta^{j}}_g; \eta^{j}_{g})$ from the groups \hfill ($\{\eta^{j}_g\}$ are i.i.d.)
      \STATE Update $w^{k+1} = w^{k} - \gamma \sum_{g=1}^{s} \sum_{j=0}^{m_g - 1} \nabla f(v^{j-\delta^{j}}_g; \eta^{j}_{g})$
  \ENDFOR
  \end{algorithmic}
\end{algorithm}

\footnotetext{\footnotesametext}

\begin{algorithm}[H]
  \caption{AsynchronousSGDGroup($w, \gamma$) in group $g$}
  \label{alg:local_sgd_worker_function_mix_sgd}
  \begin{algorithmic}
      \STATE \textbf{Input:} point $v^0_g \in \R^d$, stepsize $\gamma > 0$
      \STATE Set $m_g = 0$
      \STATE Workers from group $g$ start computing stochastic gradients at $v^0_g$
      \WHILE{True}
          \STATE Gradient $\nabla f(v^{m_g-\delta^{m_g}}_g; \eta^{m_g}_{g})$ arrives from worker $i$ with delay $\delta^{m_g}$
          \STATE Update: $v^{m_g + 1}_g = v^{m_g}_g - \gamma \nabla f(v^{m_g-\delta^{m_g}}_g; \eta^{m_g}_{g})$
          \STATE Worker $i$ begins calculating stochastic gradient at $v^{m_g + 1}_g$
          \STATE Update the iteration number $m_g = m_g + 1$
      \ENDWHILE
  \end{algorithmic}
\end{algorithm}

For this method, it is natural to take a main branch of the computation tree as
\begin{equation}
\begin{aligned}
  \label{eq:OjRVuNSYZjFplZrfxBaN}
  x^{1} &= x^{0} - \gamma \nabla f(v^{0-\delta^{0}}_1; \eta^{0}_{1}), \\
  &\vdots \\
  x^{m_1} &= x^{m_1 - 1} - \gamma \nabla f(v^{m_1 - 1 -\delta^{m_1 - 1}}_1; \eta^{m_1 - 1}_{1}),\\
  &\vdots \\
  x^{\sum_{g=1}^{s - 1} m_i + 1} &= x^{\sum_{g=1}^{s - 1} m_i} - \gamma \nabla f(v^{0-\delta^{0}}_s; \eta^{0}_{s}), \\
  &\vdots, \\
  x^{\sum_{g=1}^{s} m_i} &= x^{\sum_{g=1}^{s} m_i - 1} - \gamma \nabla f(v^{m_s - 1 -\delta^{m_s - 1}}_s; \eta^{m_s - 1}_{s}) \\
  &\vdots,
\end{aligned}
\end{equation}
where one can see that $x^{\sum_{g=1}^{s} m_i} \equiv x^{B} \equiv w^1,$ and $\{v^{j}_{g}\}$ is defined in Algorithm~\ref{alg:local_sgd_worker_function_mix_sgd}.

\begin{figure}[H]
  \centering
  \includegraphics[page=16,width=0.8\textwidth]{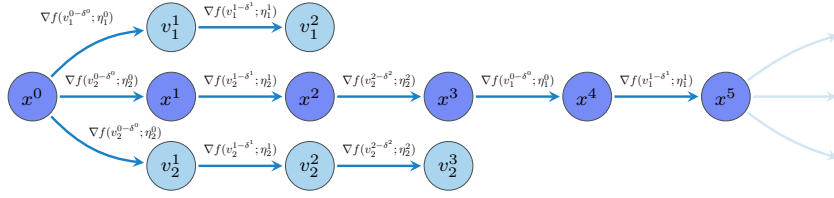}
  \caption{An example of a \algname{Local-Async SGD} computation tree with two groups and $B = 5$. One group performs $m_1 = 3$ steps of \algname{Asynchronous SGD}, while the other performs $m_2 = 2$ steps. Note that the maximum tree distance is $\textnormal{dist}(x^4, v^{1 - \delta^1}_1)$ when applying $\nabla f(v^{1 - \delta^1}_1; \eta^1_1)$ to $x^4$, and it equals $B - 1 = m_1 + m_2 - 1 = 4$. Then, the groups synchronize and continue from $x^5.$}
  \label{fig:local_sgd_worker_function_mix_sgd}
\end{figure}

\begin{theorem}
  Let Assumptions~\ref{ass:lipschitz_constant}, \ref{ass:lower_bound}, and \ref{ass:stochastic_variance_bounded} hold. Consider the computation tree of \algname{Local-Async SGD} (Alg.~\ref{alg:grouped_async_local_sgd}), then $\{x^k\}_{k \geq 0},$ defined in \eqref{eq:OjRVuNSYZjFplZrfxBaN}, is a main branch
  and $\frac{1}{K}\sum_{k=0}^{K-1}\Exp{\|\nabla f(x^k)\|^2} \le \varepsilon$ for all 
  $$
  K \geq \frac{4 B L \Delta}{\varepsilon} + \frac{8 \sigma^2 L \Delta}{\varepsilon^2}.
  $$
  with step size $\gamma = \min\{\frac{1}{4 B L}, \frac{\varepsilon}{4 \sigma^2 L}\}.$
  \label{thm:grouped_async_local_sgd}
\end{theorem}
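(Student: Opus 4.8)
The plan is to reduce the statement directly to the Main Theorem (Theorem~\ref{thm:main}), exactly as was done for \algname{Rennala SGD}, \algname{Ringmaster ASGD}, and \algname{Local SGD}. Concretely, I would take the main branch $\{x^k\}$ from \eqref{eq:OjRVuNSYZjFplZrfxBaN} together with the auxiliary sequence $z^k = v_g^{j-\delta^j}$, $\xi^k = \eta_g^j$, where $(g,j)$ is the group and intra-group index whose stochastic gradient creates the edge $x^k \to x^{k+1}$, and verify Conditions 1--3 with $R = B-1$. Since $R+1 = B$, the claimed iteration count and step size then follow immediately. As with the other methods, the whole argument is geometric and amounts to reading off the structure of the computation tree in Figure~\ref{fig:local_sgd_worker_function_mix_sgd}.

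Conditions 1 and 2 are routine. For Condition 1, the $\{\eta_g^j\}$ are i.i.d., and the main-branch ordering in \eqref{eq:OjRVuNSYZjFplZrfxBaN}---all of group $1$'s gradients, then all of group $2$'s, and so on---is a valid topological order of the computation DAG, because group $g$'s $j$-th step depends only on group $g$'s own earlier steps (intra-group indices below $j-\delta^j$) and on the round's synchronization point, never on the other groups. Hence $\xi^k = \eta_g^j$ is fresh relative to everything used to build $x^k$ and $z^k$, giving independence from $\{(x^{i+1}, z^{i+1}, \xi^i)\}_{i=0}^{k-1}$. For Condition 2, the same unrolling shows that $\textnormal{repr}(z^k) = \textnormal{repr}(v_g^{j-\delta^j})$ consists of the round-start representation together with group $g$'s gradients of intra-group index below $j-\delta^j$, all of which also appear in $\textnormal{repr}(x^k)$; thus $\textnormal{repr}(z^k) \subseteq \textnormal{repr}(x^k)$.

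The crux is Condition 3, where the two-level structure (asynchronous within each group, then concatenated across groups on the main branch) must be handled carefully. The key observation is that, within a single global round, every group $g \ge 2$ starts its asynchronous sequence afresh from the broadcast point $w^k$, so its side branch $v_g^0 = w^k, v_g^1, \dots$ meets the main branch only at $w^k$; consequently the closest common ancestor of the main-branch point to which $\nabla f(v_g^{j-\delta^j};\eta_g^j)$ is applied---namely $x^{\sum_{g'<g} m_{g'} + j}$---and of the auxiliary point $v_g^{j-\delta^j}$ is exactly the round start $w^k$. The number of edges from that main-branch point back to $w^k$ is $\sum_{g'<g} m_{g'} + j$, while the number of edges from $v_g^{j-\delta^j}$ back to $w^k$ is $j - \delta^j$, so $\textnormal{dist}(x^k, z^k) = \sum_{g'<g} m_{g'} + j \le \sum_{g'\le g} m_{g'} - 1 \le \sum_{g'=1}^{s} m_{g'} - 1 = B - 1$, where the last equality is precisely the stopping rule $\sum_{g=1}^{s} m_g = B$ (for $g=1$ the side branch coincides with the main branch and the distance is just $\delta^j \le m_1 - 1 \le B-1$). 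Thus the intra-group delays $\delta^j$ only shorten the auxiliary side of the distance and never violate the bound, giving $R = B-1$. With this verified, Theorem~\ref{thm:main} applies: the stated $\gamma = \min\{\tfrac{1}{4BL}, \tfrac{\varepsilon}{4\sigma^2 L}\}$ is a valid choice since it is no larger than each of $\tfrac{1}{2L}$, $\tfrac{1}{2RL}$, $\tfrac{\varepsilon}{4\sigma^2 L}$, and $\tfrac{4BL\Delta}{\varepsilon} + \tfrac{8\sigma^2 L\Delta}{\varepsilon^2}$ coincides with $\tfrac{4(R+1)L\Delta}{\varepsilon} + \tfrac{8\sigma^2 L\Delta}{\varepsilon^2}$. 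The main obstacle is exactly this clean identification of the common ancestor and the conversion of the synchronization rule into $R = B-1$; everything else is a direct invocation of the general theorem, with subsequent rounds reducing to the first by an index shift.
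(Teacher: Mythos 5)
Your proposal is correct and takes essentially the same route as the paper: you verify Conditions 1--3 of Theorem~\ref{thm:main} for the main branch \eqref{eq:OjRVuNSYZjFplZrfxBaN}, with the synchronization rule $\sum_{g=1}^{s} m_g = B$ converted into the bound $R = B-1$, which is precisely how the paper argues (it simply defers Conditions 1--2 to the \algname{Local SGD} proof of Theorem~\ref{thm:local_sgd} and invokes the same stopping-rule argument for Condition 3). Your write-up is in fact more explicit than the paper's two-sentence proof, since you spell out the closest-common-ancestor computation $\textnormal{dist}(x^k,z^k)=\sum_{g'<g} m_{g'}+j \le B-1$ for $g\ge 2$ and the $g=1$ case where the side branch coincides with the main branch.
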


\begin{proof}
  The proof closely follows that of Theorem~\ref{thm:local_sgd}, with the only difference being that the \emph{auxiliary branches} in Algorithm~\ref{alg:local_sgd_worker_function_mix_sgd} are constructed using asynchronous steps rather than local steps (compare Figure~\ref{fig:local_sgd_graph} and Figure~\ref{fig:local_sgd_worker_function_mix_sgd}). As in Theorem~\ref{thm:local_sgd}, the condition $\sum_{g=1}^{s} m_g = B$ ensures that $\sup_{k \geq 0} \textnormal{dist}(x^k, z^k) \leq B - 1.$
\end{proof}

\newpage

\subsection{\algname{Nested Local-Async SGD}}
\label{sec:hierarchy}

In this section, we formalize a hierarchical version of Algorithm~\ref{alg:grouped_async_local_sgd}. Our framework, Theorem~\ref{thm:main}, is flexible enough to support such a two-level structure, where each cluster consists of servers equipped with (4–8) GPUs. The GPUs run \algname{Asynchronous SGD}, the servers synchronize within their clusters, and finally, the clusters synchronize with each other.

In the following algorithm, all workers are partitioned into $\{G_{ij}\}$ groups, where $i$ is the cluster index and $j$ is the server index within the cluster. The set $G_{ij}$ contains the indices of the workers (GPUs).

\begin{algorithm}[H]
  \caption{\algname{Nested Local-Async SGD}}
  \label{alg:grouped_async_local_sgd_2}
  \begin{algorithmic}[1]
  \REQUIRE Initial model $w^0$, step size $\gamma$, parameters $B_i,$ global parameter $B,$ group partitions $\{G_{ij}\}$
  \FOR{$k = 0, 1, 2, \dots$}
      \STATE Broadcast $w^k$ to all clusters
      \FOR{each cluster $i$ \textbf{in parallel}}
        \STATE Set $w^{0}_i = w^k$
        \FOR{$p_i = 0, 1, 2, \dots$}
          \STATE Broadcast $w^{p_i}_i$ to all local groups
          \FOR{each server $j$ \textbf{in parallel}}
              \STATE Group $G_{ij}$ starts AsynchronousSGDGroup($w^{p_i}_i, \gamma$) from Algorithm~\ref{alg:local_sgd_worker_function_mix_sgd}
          \ENDFOR
          \STATE Cluster $i$ waits for the moment when $\sum_{j} m_{ij} = B_i$
          \STATE Ask the groups in cluster $i$ to stop running AsynchronousSGDGroup($w^{p_i}_i, \gamma$)
          \STATE Update $w^{p_i+1}_i = w^{p_i}_i - \gamma \sum_{j} \sum_{\ell=0}^{m_{ijp_i} - 1} \nabla f(v^{{\ell}-\delta^{{\ell}}}_{ijp_i}; \eta^{{\ell}}_{ijp_i})$
        \ENDFOR
      \ENDFOR
      \STATE Wait for the moment the total number of local steps in the clusters starting from the last broadcast is $B$
      \STATE Ask all groups in all servers to stop running AsynchronousSGDGroup($w^k, \gamma$)
      \STATE Update $w^{k+1} = w^{k} - \sum_{i} (w^{p_i}_i - w^{0}_i) = w^{k} - \gamma \sum_{i} \sum_{k=0}^{p_i - 1} \sum_{j} \sum_{\ell=0}^{m_{ijk} - 1} \nabla f(v^{{\ell}-\delta^{{\ell}}}_{ijk}; \eta^{{\ell}}_{ijk})$ 
  \ENDFOR
  \end{algorithmic}
\end{algorithm}

\begin{algorithm}[H]
  \caption{AsynchronousSGDGroup($w, \gamma$) in group $G_{ij}$}
  \label{alg:local_sgd_worker_function_mix_sgd_2}
  \begin{algorithmic}
      \STATE \textbf{Input:} point $v^0_{ijp_i} \in \R^d$, stepsize $\gamma > 0$
      \STATE Set $m_{ij} = 0$
      \STATE Workers from group $G_{ij}$ start computing stochastic gradients at $v^0_{ijp_i}$
      \WHILE{True}
          \STATE Gradient $\nabla f(v^{m_{ijp_i}-\delta^{m_{ijp_i}}}_{ijp_i}; \eta^{m_{ijp_i}}_{{ijp_i}})$ arrives from worker $i$ with delay $\delta^{m_{ijp_i}}$
          \STATE Update: $v^{m_{ijp_i} + 1}_{ijp_i} = v^{m_{ijp_i}}_{ijp_i} - \gamma \nabla f(v^{m_{ijp_i}-\delta^{m_{ijp_i}}}_{ijp_i}; \eta^{m_{ijp_i}}_{{ijp_i}})$
          \STATE Worker $i$ begins calculating stochastic gradient at $v^{m_{ijp_i} + 1}_{ijp_i}$
          \STATE Update the iteration number $m_{ijp_i} = m_{ijp_i} + 1$
      \ENDWHILE
  \end{algorithmic}
\end{algorithm}

We believe that analyzing this algorithm directly using classical optimization tools would be challenging due to heavy notations. However, using our framework and geometrical graph reasoning, we can easily prove the iteration rate of this algorithm. As in all previous cases, a main branch ${x^k}$ can be defined by taking each component of the sum $\sum_{i} \sum_{k=0}^{p_i - 1} \sum_{j} \sum_{\ell=0}^{m_{ijk} - 1} \nabla f(v^{{\ell}-\delta^{{\ell}}}_{ijk}; \eta^{{\ell}}_{ijk})$ and applying each stochastic gradient to $x^0, x^{1} = x^0 - \gamma \nabla f(v^{0}_{110}; \eta^{0}_{110}),$ and so on.

\begin{theorem}
  Let Assumptions~\ref{ass:lipschitz_constant}, \ref{ass:lower_bound}, and \ref{ass:stochastic_variance_bounded} hold. Consider the computation tree of \algname{Nested Local-Async SGD} (Alg.~\ref{alg:grouped_async_local_sgd_2}), then $\frac{1}{K}\sum_{k=0}^{K-1}\Exp{\|\nabla f(x^k)\|^2} \le \varepsilon$ for all 
  $$
  K \geq \frac{4 B L \Delta}{\varepsilon} + \frac{8 \sigma^2 L \Delta}{\varepsilon^2}.
  $$
  with step size $\gamma = \min\{\frac{1}{4 B L}, \frac{\varepsilon}{4 \sigma^2 L}\}$ for the main branch $\{x^k\}$ (slightly informally) defined above.
  \label{thm:grouped_async_local_sgd_2}
\end{theorem}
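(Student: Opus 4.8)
The plan is to reduce everything to Theorem~\ref{thm:main}, exactly as in the proofs of Theorems~\ref{thm:local_sgd} and~\ref{thm:grouped_async_local_sgd}, by taking the main branch $\{x^k\}$ obtained from unrolling the outer update one stochastic gradient at a time, in the order described above (each summand of $\gamma \sum_{i} \sum_{j} \sum_{\ell} \nabla f(v^{\ell-\delta^{\ell}}_{ijk'}; \eta^{\ell}_{ijk'})$ becomes one edge), and then verifying the three conditions. The auxiliary sequence $\{(z^k,\xi^k)\}$ is read off directly: for each $k$, the point $z^k$ is the (possibly stale) intra-group iterate $v^{\ell-\delta^{\ell}}_{ijk'}$ at which the gradient driving the edge into $x^{k+1}$ was computed, and $\xi^k$ is the corresponding $\eta^{\ell}_{ijk'}$. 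The entire difficulty is geometric and combinatorial and lies in establishing Condition~3, i.e.\ that $\textnormal{dist}(x^k,z^k) \le B-1$; Conditions~1 and~2 transfer almost verbatim from the earlier proofs.

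First I would dispatch Conditions~1 and~2. Condition~1 holds because all samples $\{\eta^{\ell}_{ijk'}\}$ are i.i.d.\ and, by the construction of the main branch, the sample $\xi^k$ labeling the edge into $x^{k+1}$ is drawn strictly after both $x^k$ and $z^k$ have been formed; hence it is independent of $\{(x^{i+1}, z^{i+1}, \xi^i)\}_{i=0}^{k-1}$. Condition~2 holds because every stochastic gradient used to build the intra-group iterate $z^k$ is one of the summands that \algname{Nested Local-Async SGD} accumulates and eventually applies to the main branch \emph{before} the edge into $x^{k+1}$; therefore $\textnormal{repr}(z^k) \subseteq \textnormal{repr}(x^k)$, precisely as in Theorem~\ref{thm:grouped_async_local_sgd}.

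The core step is bounding $R$. The decisive structural fact is the \emph{global} stopping rule: between two consecutive outer broadcasts, the total number of local steps, summed over all clusters $i$, servers $j$, and inner cluster rounds indexed by $p_i$, equals $B$. Thus each outer round contributes a contiguous block of exactly $B$ edges to the main branch, and the outer iterate $w^k$ at the start of that round is a common ancestor of every $x^k$ and every $z^k$ produced inside the round. Consequently the number of edges from $x^k$ to this ancestor and from $z^k$ to this ancestor are each at most $B-1$, so $\textnormal{dist}(x^k, z^k) \le B-1$ for all $k$, mirroring Figure~\ref{fig:local_sgd_worker_function_mix_sgd}. This yields $R = B-1$ in Condition~3, and plugging this into Theorem~\ref{thm:main} gives the stated bound $K \ge \frac{4 B L \Delta}{\varepsilon} + \frac{8 \sigma^2 L \Delta}{\varepsilon^2}$ with $\gamma = \min\{\frac{1}{4 B L}, \frac{\varepsilon}{4 \sigma^2 L}\}$.

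The main obstacle I anticipate is making this last bound fully rigorous across the two levels of nesting: one must argue that neither the intra-group staleness $\delta^{\ell}$ (from \algname{Asynchronous SGD} running inside each server) nor the intermediate cluster-level rounds $p_i$ can push a $z^k$ outside its current outer block of $B$ edges. The clean way to see this is that staleness and inner rounds only reorder or delay edges \emph{within} the current outer block — they never reach across a global synchronization, because every gradient they produce is counted toward the same budget $B$. Once one checks that the nested indexing does not create an ancestor earlier than $w^k$ for any in-round pair $(x^k,z^k)$, the distance bound and hence the whole theorem follow.
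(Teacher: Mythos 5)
Your proposal takes essentially the same route as the paper's proof: reduce to Theorem~\ref{thm:main} by checking Conditions 1--2 by construction and establishing Condition 3 via the global stopping rule, which makes the round-start iterate $w^k$ a common ancestor of every in-round pair $(x^k, z^k)$ and caps both tree distances by the budget $B$. The paper's own argument is just a terser statement of this (it records the slightly looser bound $R \le B$ where you get $B-1$); your only inaccuracy---each outer round contributes \emph{at most}, not exactly, $B$ edges to the main branch, since gradients from inner rounds still in progress at the global synchronization are discarded---is harmless, because the distance bound only requires an upper bound on the block length.
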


\begin{proof}
  Similarly to the previous proofs, Conditions 1 and 2 are satisfied by the construction of the algorithm. Using geometric graph reasoning, Condition 3 is satisfied with $R \leq B$ due to the requirement that ``the total number of local steps in the clusters starting from the last broadcast is $B$.'' This ensures that the distance between the points of the main branch and the corresponding points of the auxiliary sequence defined by $v^{\cdot}_{\cdot}$ does not exceed $B$.
\end{proof}

\begin{remark}
  \label{remark:gpu}
  One can see that the converge rate does not depend on $\{B_i\}.$ Theoretically, it is sufficient to take $B_i = \infty.$ However, practically, it may be better to take $B_i < \infty$ to ensure that the GPUs synchronize more often and share information with others, but it can lead to communication overhead and less efficient utilization of the GPUs.
\end{remark}

\newpage

\subsection{\algname{Meta Local SGD}}
\label{sec:adaptive_local_sgd}

Compared to previous algorithms, \algname{Meta Local SGD} is an abstract or meta-method, as it includes one abstract step: ``Wait if needed and take any set of workers~$S$.'' This step is not explicitly defined, allowing users to apply any strategy they prefer. It may be random, where the algorithm chooses a uniformly random subset; it may follow a condition as in \algname{Local SGD}, where the algorithm waits until $\sum_{i=1}^{n} M_i = B$; or it may be based on the current communication speeds of the workers, where the algorithm selects the workers with the fastest communication speeds at the current optimization moment.

However, as we explain in the main part, this can lead to a computation tree with a large~$R$. That is why we check the condition $\max_{j \in [n]} d_j + \sum_{i=1}^{n} M_i < B$ in the algorithm, where $\{d_i\}$ are the current distances to the head of the main branch and $\{M_i\}$ are the local steps performed by each worker.
If this condition is satisfied, we can take any set of workers~$S$. Otherwise, we find a set of workers $S = \{j \in [n]\,|\,d_j + \sum_{i=1}^{n} M_i = B\}$ and ask them to send their calculated stochastic gradients. The latter case is required to synchronize workers with ``very old stochastic gradients''. Intuitively, if we do not synchronize them, their stochastic gradients may become too outdated and harmful to the optimization process.

\begin{algorithm}[h]
  \caption{\algname{Meta Local SGD}}
  \label{alg:adaptive_local_sgd}
  \begin{algorithmic}[1]
      \STATE \textbf{Input:} point $w^0 \in \R^d$, stepsize $\gamma > 0,$ parameter $B \in \N$
      \STATE Set an auxiliary distance variable $d_i = 0$ for all $i \in [n]$
      \STATE Workers start running local steps at $w^0$ with Alg.~\ref{alg:adaptive_local_sgd_worker}
      \FOR{$k = 0, 1, \dots$}
        \IF{$\max_{j \in [n]} d_j + \sum_{i=1}^{n} M_i < B$}
          \STATE {\color{mydarkgreen} Wait if needed and take any set of workers $S$ \hfill (Soft Sync)} \\
          (Here, we do not specify the selection method, it could be random or based on the current communication speeds. One can choose any strategy.)
        \ELSE
          \STATE {\color{mydarkred} Find a set of workers $S = \{j \in [n]\,|\,d_j + \sum_{i=1}^{n} M_i = B\}$ \hfill (Hard Sync)}
        \ENDIF
        \STATE Ask workers from $S$ to send the calculated stochastic gradients and stop the loops in Alg.~\ref{alg:adaptive_local_sgd_worker}
        \STATE Receive $\gamma \sum_{i \in S} \sum_{p = 0}^{M_i - 1} \nabla f(z^{p}_{i}; \eta^{p}_{i})$
        \STATE Update: $w^{k+1} = w^{k} - \gamma \sum_{p = 0}^{M_i - 1} \nabla f(z^{p}_{i}; \eta^{p}_{i})$
        \STATE Worker from $S$ start running local steps at $w^{k+1}$ with Alg.~\ref{alg:adaptive_local_sgd_worker}
        \STATE Set $d_i = 0$ for all $i \in S$ 
        \STATE Update $d_i = d_i + \sum_{j \in S} M_j$ for all $i \not\in S$
      \ENDFOR
  \end{algorithmic}
\end{algorithm}

\begin{algorithm}[h]
  \caption{\texttt{LocalSGDWorker}($w, \gamma$) in worker $i$}
  \label{alg:adaptive_local_sgd_worker}
  \begin{algorithmic}[1]
    \STATE $z^{0}_i = w$
    \STATE $M_i = 0$
    \WHILE{True}
        \STATE Calculate $\nabla f(z^{M_i}_i; \eta^{M_i}_i), \quad \eta^{M_i} \sim \mathcal{D}_{\xi}$
        \IF{$\max_{j \in [n]} d_j + \sum_{i=1}^{n} M_i < B$}
          \STATE $z^{M_i + 1}_i = z^{M_i}_i - \gamma \nabla f(z^{M_i}_i; \eta^{M_i}_i)$
          \STATE $M_i = M_i + 1$
        \ENDIF
    \ENDWHILE
  \end{algorithmic}
\end{algorithm}

A main branch in the computation tree can be defined as follows. Assume that $S^k = \{i_1, \dots, i_{p_k}\}$ is the set of workers participating in iteration~$k$. Then, the computation tree with the main branch $\{x^k\}$ can be constructed as
\begin{equation}
\begin{aligned}
  x^{1} &= x^{0} - \gamma \nabla f(z^{0}_{i_1}; \eta^{0}_{i_1}), \\
  x^{2} &= x^{1} - \gamma \nabla f(z^{1}_{i_1}; \eta^{1}_{i_1}), \\
  &\vdots \\
  x^{M_{i_1}} &= x^{M_{i_1} - 1} - \gamma \nabla f(z^{M_{i_1} - 1}_{i_1}; \eta^{M_{i_1} - 1}_{i_1}), \\
  &\vdots \\
  x^{\sum_{j = 1}^{p_k - 1} M_{i_j} + 1} &= x^{\sum_{j = 1}^{p_k - 1} M_{i_j}} - \gamma \nabla f(z^{0}_{i_{p_k}}; \eta^{0}_{i_{p_k}}), \\
  &\vdots \\
  x^{\sum_{j = 1}^{p_k} M_{i_j}} &= x^{\sum_{j = 1}^{p_k} M_{i_j} - 1} - \gamma \nabla f(z^{M_{i_{p_k}} - 1}_{i_{p_k}}; \eta^{M_{i_{p_k}} - 1}_{i_{p_k}}), \\
  &\vdots
  \label{eq:LHrbSkGWBtmHHQL}
\end{aligned}
\end{equation}
Notice that the end of each iteration block can be written as
\begin{align*}
w^1 \equiv x^{\sum_{j = 1}^{p_0} M_{i_j}}, \quad w^2 \equiv x^{\sum_{j = 1}^{p_0} M_{i_j} + \sum_{j = 1}^{p_1} M_{i_j}}, \quad \text{and so on}.
\end{align*}

\begin{figure}[h]
  \centering
  \includegraphics[page=17,width=0.6\textwidth]{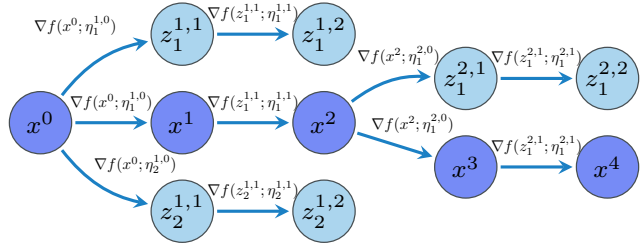}
  \caption{An example of the computation tree for \algname{Meta Local SGD} with two workers. In this example, the first worker completes its first set of local steps, $x^0 \rightarrow z^{1,1}_2 \rightarrow z^{1,2}_2$, and sends the stochastic gradients, which are used to calculate $x^1$ and $x^2$. A similar sequence of steps is repeated by the first worker to produce $x^2 \rightarrow z^{2,1}_2 \rightarrow z^{2,2}_2$, followed by $x^3$ and $x^4$. At the same time, the second worker has only completed $x^0 \rightarrow z^{1,1}_1 \rightarrow z^{1,2}_1$ and has not yet synchronized or sent the corresponding stochastic gradients. At this moment in time, the number of local steps is $M_2 = 2$ and $d_2 = 4$, because $d_2$ is the number of edges between the current main branch head $x^4$ and the point $x^0$, where the local branch of the second worker started. At the same time, $M_1 = 0$ and $d_1 = 0$, because the first worker has just started the third set of local steps at $x^4$ and has not yet calculated local stochastic gradients.}
\end{figure}

\begin{theorem}
  Let Assumptions~\ref{ass:lipschitz_constant}, \ref{ass:lower_bound}, and \ref{ass:stochastic_variance_bounded} hold. Consider the computation tree of \algname{Meta Local SGD} (Alg.~\ref{alg:adaptive_local_sgd}), then $\{x^k\}_{k \geq 0},$ defined in \eqref{eq:LHrbSkGWBtmHHQL}, is a main branch
  and $\frac{1}{K}\sum_{k=0}^{K-1}\Exp{\|\nabla f(x^k)\|^2} \le \varepsilon$ for all 
  $$
  K \geq \frac{4 B L \Delta}{\varepsilon} + \frac{8 \sigma^2 L \Delta}{\varepsilon^2}.
  $$
  with step size $\gamma = \min\{\frac{1}{4 B L}, \frac{\varepsilon}{4 \sigma^2 L}\}.$
\end{theorem}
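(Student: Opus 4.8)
The plan is to reduce everything to Theorem~\ref{thm:main} by verifying its three conditions for the main branch $\{x^k\}_{k\ge 0}$ specified in \eqref{eq:LHrbSkGWBtmHHQL}, in complete analogy with the proofs of Theorems~\ref{thm:local_sgd} and~\ref{thm:async_plus_local}. The auxiliary sequence is read directly off \eqref{eq:LHrbSkGWBtmHHQL}: when the block belonging to a synchronized worker $i_\ell$ is unrolled, the auxiliary points $z^k$ are its local iterates $z^{0}_{i_\ell},\dots,z^{M_{i_\ell}-1}_{i_\ell}$ and the $\xi^k$ are the corresponding i.i.d.\ samples $\eta^{\cdot}_{i_\ell}$ used in Algorithm~\ref{alg:adaptive_local_sgd_worker}.

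Conditions~1 and~2 are routine and I would dispose of them first. Condition~1 holds because the $\{\eta^{p}_i\}$ are i.i.d.\ and, by the construction \eqref{eq:LHrbSkGWBtmHHQL}, each sample $\eta^{p}_{i_\ell}$ is fresh relative to every point appearing earlier in the branch, so it is independent of $\{(x^{i+1},z^{i+1},\xi^i)\}_{i=0}^{k-1}$. Condition~2 is the same tree/representation argument used for \algname{Local SGD}: every stochastic gradient entering a local iterate $z^{p}_{i}$ is itself unrolled onto the main branch when worker $i$ synchronizes, hence $\textnormal{repr}(z^k)\subseteq\textnormal{repr}(x^k)$.

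The crux is Condition~3, and this is where the Hard Sync mechanism and the counters $\{d_i\}$ do the work. The first step is the invariant that $d_i$ equals the number of main-branch edges from the current head back to the node at which worker $i$ started its current local branch; this follows by induction from the two update lines of Algorithm~\ref{alg:adaptive_local_sgd} (reset to $0$ on synchronization, increment by $\sum_{j\in S}M_j$ otherwise, which is exactly how far the head advances). The second step is that $\max_{j} d_j + \sum_{i} M_i \le B$ holds at all times, because both the Hard Sync test and the guard inside Algorithm~\ref{alg:adaptive_local_sgd_worker} prevent $\sum_i M_i$ from growing once the threshold is reached, while the Soft Sync freedom only synchronizes earlier and cannot violate the bound. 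Finally, for any auxiliary point $z^{p}_{i}$ with $i\in S$, the closest common ancestor of the head and $z^{p}_i$ is worker $i$'s branching point, so Definition~\ref{def:dist} gives $\textnormal{dist}(x^k,z^k)=d_i+(\text{edges unrolled before }i)+p\le d_i+\sum_{j\in S}M_j-1\le d_i+\sum_i M_i-1\le B-1$. Hence Condition~3 holds with $R=B-1$, and Theorem~\ref{thm:main} yields the claimed rate and step size.

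The main obstacle I expect is the bookkeeping in this last step: when several workers are synchronized simultaneously and their blocks are unrolled one after another, a worker processed later inherits the extra edges contributed by the earlier blocks, so the naive distance $d_i+p$ must be inflated to $d_i+\sum_{j\in S}M_j-1$; the reason this remains harmless is precisely the invariant $d_i+\sum_i M_i\le B$, which dominates the inflated quantity. Verifying that this invariant is genuinely preserved under arbitrary (possibly adversarial) Soft Sync choices, and that the worker-side guard prevents any overshoot of $B$ between server iterations, is the only delicate point; the rest is an instantiation of the geometric machinery already established.
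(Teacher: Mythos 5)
Your proposal is correct and follows essentially the same route as the paper: verify Conditions 1--2 routinely, then establish Condition 3 via the invariant $\max_j d_j + \sum_i M_i \le B$ maintained by the worker-side guard and the Hard/Soft Sync logic, exactly as the paper does. The only (harmless) difference is that your careful bookkeeping yields the slightly tighter bound $R = B-1$, whereas the paper settles for $R = B$; both satisfy Theorem~\ref{thm:main} and give the stated rate.
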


\begin{proof}
  Similarly to the previous proofs, it is clear that Conditions 1 and 2 from Theorem~\ref{thm:main} are satisfied for the main branch~\eqref{eq:LHrbSkGWBtmHHQL}.

It remains to show that $\textnormal{dist}(x^k, z^k) \le B$ for all $k \geq 0$. In the algorithm, we track two key sets of variables: $\{d_i\}$ and $\{M_i\}$. The variable $M_i$ denotes the current number of local steps performed by worker~$i$, while $d_i$ represents the number of edges between the current end of the main branch and the point where worker~$i$ began its local updates. When worker $i \notin S$, the distance $d_i$ increases as follows: $d_i = d_i + \sum_{j \in S} M_j,$ since the workers in $S$ extend the main branch with their accumulated local updates.

The algorithm is constructed so that the quantity $\max_{j \in [n]} d_j + \sum_{i=1}^{n} M_i$ remains bounded by $B$ throughout the entire optimization process, ensuring that Condition~3 is satisfied with $R = B$. To clarify, assume that $i \in S$ in Algorithm~\ref{alg:adaptive_local_sgd}. In the worst-case scenario, all other workers $j \in S$, with $j \ne i$, apply their local updates, increasing the tree distance from worker~$i$’s branch to the main branch by at most $\sum_{j \in S, j \ne i} M_j$. Thus, the updated tree distance becomes at most $d_i + \sum_{j \in [n], j \ne i} M_j.$
Since worker~$i$ has also performed $M_i$ local steps, the tree distance is bounded by $d_i + \sum_{j \in [n], j \ne i} M_j + M_i \le B.$
\end{proof}

\newpage

\subsection{\algname{Dual-Process SGD}}
\label{sec:bi_local_sgd}

We now present a new method, \algname{Dual-Process SGD}, which is very similar to \algname{Local SGD}. In fact, when communication is free, the two methods are equivalent. However, \algname{Local SGD} requires all workers to send the sum of stochastic gradients only at the end of each round. In contrast, in \algname{Dual-Process SGD}, workers do not wait until the end of the round; instead, they begin communicating sequentially as soon as possible.

Initially, each worker waits for the first stochastic gradients with index $0$ and immediately sends them once available. Then, while these are being transmitted, the workers continue their local computations. After the server receives the gradients with index $0$, the workers begin sending the next batch of stochastic gradients, starting from index $1$ up to the latest index they have computed at that moment. This process continues until the server has received a total of $B$ stochastic gradients, accumulated through the communicated sums. This logic is implemented in Algorithm~\ref{alg:bi_local_sgd_worker_function}.

\begin{algorithm}[H]
  \caption{\algname{Dual-Process SGD}}
  \label{alg:bi_local_sgd}
  \begin{algorithmic}[1]
  \REQUIRE Initial model $w^0$, step size $\gamma$, parameter $B$
  \FOR{$k = 0, 1, 2, \dots$}
      \STATE Broadcast $w^k$ to all workers
      \FOR{each worker $i \in [n]$ \textbf{in parallel}}
          \STATE Worker $i$ starts \texttt{DualProcessLocalSGDWorker}($w^k, \gamma$) from Algorithm~\ref{alg:bi_local_sgd_worker_function}
      \ENDFOR
      \STATE Start receiving the sum from the workers
      \STATE Wait for the moment when the total \# of received gradients $\sum_{i=1}^{n} M_i = B$
      \STATE Ask workers to stop running \texttt{DualProcessLocalSGDWorker}($w^k, \gamma$)
      \STATE Update $w^{k+1} = w^{k} - \gamma \sum_{i=1}^{n} \sum_{j=0}^{M_i - 1} \nabla f(z^{k,j}_i; \eta^{k,j}_i)$
  \ENDFOR
  \end{algorithmic}
\end{algorithm}

\begin{algorithm}[H]
\caption{\texttt{DualProcessLocalSGDWorker}($w, \gamma$) in worker $i$ at round $k$}
\label{alg:bi_local_sgd_worker_function}
\begin{algorithmic}[1]
    \STATE $z^{k,0}_i = w$
    \STATE $\widetilde{M}_i = \bar{M}_i = M_i = 0$
    \STATE \textbf{Launch in parallel the following two processes:}
    \STATE \textbf{Process 1:}
    \WHILE{True}
        \STATE Calculate $\nabla f(z^{k,\widetilde{M}_i}_i; \eta^{k,\widetilde{M}_i}_i), \quad \eta^{k,\widetilde{M}_i}_i \sim \mathcal{D}_{\xi}$
        \STATE $z^{k,\widetilde{M}_i + 1}_i = z^{k,\widetilde{M}_i}_i - \gamma \nabla f(z^{k,\widetilde{M}_i}_i; \eta^{k,\widetilde{M}_i}_i)$
        \STATE $\widetilde{M}_i = \widetilde{M}_i + 1$
    \ENDWHILE
    \STATE
    \STATE \textbf{Process 2:}
    \WHILE{True}
        \STATE Wait until at least one new stochastic gradient is computed in Process 1.
        \STATE Set temporary variable $\bar{M}_i = \widetilde{M}_i$
        \STATE Send $\sum\limits_{j = M_i}^{\bar{M}_i - 1} \nabla f(z^{k,j}_i; \eta^{k,j}_i)$
        \STATE Wait until the transmission is complete
        \STATE Update $M_i = \bar{M}_i$
    \ENDWHILE
\end{algorithmic}
\end{algorithm}

The computation tree of \algname{Dual-Process SGD} defined in \eqref{eq:pbHgXapdjZGUQs} and \eqref{eq:yapgNwHcHepGaY} is similar to \algname{Local SGD}.

\begin{theorem}
  \label{thm:bi_process}
  Let Assumptions~\ref{ass:lipschitz_constant}, \ref{ass:lower_bound}, and \ref{ass:stochastic_variance_bounded} hold. Consider the computation tree (\eqref{eq:pbHgXapdjZGUQs} and \eqref{eq:yapgNwHcHepGaY}) of \algname{Dual-Process SGD}, then $\{x^k\}_{k \geq 0}$ is a main branch
  and $\frac{1}{K}\sum_{k=0}^{K-1}\Exp{\|\nabla f(x^k)\|^2} \le \varepsilon$ for all 
  $$
  K \geq \frac{4 B L \Delta}{\varepsilon} + \frac{8 \sigma^2 L \Delta}{\varepsilon^2}.
  $$
  with step size $\gamma = \min\{\frac{1}{2 B L}, \frac{\varepsilon}{4 \sigma^2 L}\}.$
\end{theorem}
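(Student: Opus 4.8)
The plan is to reduce the analysis to the already-established Theorem~\ref{thm:local_sgd} for \algname{Local SGD}, exploiting the fact—noted just before the statement—that \algname{Dual-Process SGD} produces exactly the same computation tree (\eqref{eq:pbHgXapdjZGUQs} and \eqref{eq:yapgNwHcHepGaY}) as \algname{Local SGD}. The two methods differ only in their communication protocol: \algname{Local SGD} transmits the accumulated local gradients once at the end of a round, whereas \algname{Dual-Process SGD} streams them incrementally through its second process. Since the computation tree is defined purely by which stochastic gradients are applied to which base points, and not by when or how they are communicated, the tree structure—and hence the main branch $\{x^k\}$ and its auxiliary sequence $\{(z^k, \xi^k)\}$—coincide with those of \algname{Local SGD}. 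Thus the entire argument of Theorem~\ref{thm:local_sgd} transfers, and it suffices to re-verify the three conditions of Theorem~\ref{thm:main}.

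First I would confirm that $\{x^k\}_{k \geq 0}$ is a main branch and read off the auxiliary sequence from \eqref{eq:yapgNwHcHepGaY}, exactly as in the \algname{Local SGD} proof. Second, Condition 1 holds because the local stochastic gradients $\{\eta^{k,j}_i\}$ are i.i.d.\ and each $\xi^k$ is computed at a point $z^k$ that does not depend on $\xi^k$; the dual-process structure only governs transmission and introduces no new dependence among the gradients. Third, Condition 2 (that $\textnormal{repr}(z^k) \subseteq \textnormal{repr}(x^k)$) holds because each received prefix of worker $i$'s local steps (indices $0$ through $M_i - 1$) forms a valid sub-path of the local branch, so every gradient entering $z^k$ also enters $x^k$.

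The crux is Condition 3, namely $\sup_{k \geq 0} \textnormal{dist}(x^k, z^k) \le R = B - 1$. Here the key is that the server's stopping rule in Algorithm~\ref{alg:bi_local_sgd} counts the \emph{received} gradients and halts exactly when $\sum_{i=1}^{n} M_i = B$; the uncommunicated gradients produced by Process~1 past this point are discarded and never enter the tree. Consequently each round contributes exactly $B$ edges to the main branch, so for any $x^k$ and its base $z^k$ the closest common ancestor is the round's starting global point, and the number of edges to it is at most $B - 1$. This is the step I expect to require the most care: one must argue that the asynchronous interleaving of the two processes cannot let a worker's contribution to a single round span more than $B - 1$ edges, which reduces precisely to the invariant that the server aggregates a total of $B$ gradients per round—identical to the bound established for \algname{Local SGD}. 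With all three conditions verified and $R = B - 1$, Theorem~\ref{thm:main} yields the claimed iteration rate and step size $\gamma = \min\{\tfrac{1}{2 B L}, \tfrac{\varepsilon}{4 \sigma^2 L}\}$.
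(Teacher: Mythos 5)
Your proposal is correct and follows exactly the paper's route: the paper's own proof is a one-line reduction stating that the argument is identical to that of Theorem~\ref{thm:local_sgd} because \algname{Dual-Process SGD} shares the same computation tree (\eqref{eq:pbHgXapdjZGUQs} and \eqref{eq:yapgNwHcHepGaY}) as \algname{Local SGD}. Your re-verification of Conditions 1--3 of Theorem~\ref{thm:main}, in particular that the stopping rule $\sum_{i=1}^{n} M_i = B$ on \emph{received} gradients still bounds the tree distance by $B-1$, just spells out the details the paper leaves implicit.
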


\begin{proof}
  The proof is exactly the same as in Theorem~\ref{thm:local_sgd} since the computation tree of \algname{Dual-Process SGD} is similar to \algname{Local SGD}.
\end{proof}

\newpage


\section{Computational Time Complexities of Algorithms under $h_i$-Fixed Computation Model}
\label{sec:proof_compt}

To compare methods, we consider the \emph{$h_i$-fixed computation model} \citep{mishchenko2022asynchronous}.
In this model, it is assumed that
\begin{equation}
\begin{aligned}
    \text{worker } i \text{ takes no more than } h_i \text{ seconds} \text{ to compute a single stochastic gradient}
    \label{eq:worker-time}
\end{aligned}
\end{equation}
and 
\begin{equation}
    \label{eq:sorted_t_i}
    0<h_1\le h_2 \le \cdots \le h_n,
\end{equation}
without loss of generality.

Note that it is possible to consider the \emph{universal computation model} \citep{tyurin2024tight} and capture virtually all possible computation behaviors of the workers. While the \emph{$h_i$-fixed computation model} may seem more restrictive, it turns out that all optimal methods \citep{maranjyan2025ringmaster} in the \emph{universal computation model} are also optimal in the \emph{$h_i$-fixed computation model}. Thus, for simplicity, we stick to the \emph{$h_i$-fixed computation model}.

\subsection{\algname{Rennala SGD}}

\begin{theorem}[\algname{Rennala SGD}]
  Consider Theorem~\ref{thm:rennala} and its conditions. Under the $h_i$-fixed computation model \eqref{eq:worker-time}, the computational time complexity of \algname{Rennala SGD} (Alg.~\ref{alg:rennala}) is
  \begin{align} 
    \cO\left(\min\limits_{m \in [n]} \left[\left(\frac{1}{m} \sum\limits_{i=1}^{m} \frac{1}{h_i}\right)^{-1} \left(\frac{L \Delta}{\varepsilon} + \frac{\sigma^2 L \Delta}{m \varepsilon^2}\right)\right]\right) 
    \label{eq:MFvzCXtfgWSefbzZ}
  \end{align}
  with $B = \max\left\{\left\lceil\frac{\sigma^2}{\varepsilon}\right\rceil, 1\right\}.$
  \label{thm:rennala_computation}
\end{theorem}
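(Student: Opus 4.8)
We want the wall-clock time complexity of Rennala SGD under the $h_i$-fixed model. We already have the iteration rate from Theorem~\ref{thm:rennala}: the method needs $K = \Theta\left(\nicefrac{B L \Delta}{\varepsilon} + \nicefrac{\sigma^2 L \Delta}{\varepsilon^2}\right)$ global steps, and with the choice $B = \max\{\lceil \sigma^2/\varepsilon\rceil, 1\}$ this collapses to $K = \Theta\left(\nicefrac{L \Delta}{\varepsilon}\right)$ global steps (since $B \cdot \frac{L\Delta}{\varepsilon} = \Theta(\frac{\sigma^2 L\Delta}{\varepsilon^2} + \frac{L\Delta}{\varepsilon})$). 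So the iteration count is fixed; all the work is in converting "one global step" into seconds.

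**The plan.** First I would isolate the cost of a single global iteration. In each global step, the while-loop collects $B$ fresh stochastic gradients (those with delay $\delta = 0$), where $n$ workers compute continuously and in parallel, each taking at most $h_i$ seconds per gradient. The key sub-problem is: given the $h_i$, how fast can $n$ asynchronous workers produce a batch of $B$ gradients? If we run for $T$ seconds, worker $i$ contributes roughly $\lfloor T/h_i \rfloor$ gradients, so the total is $\sum_{i=1}^n \lfloor T/h_i\rfloor \approx T \sum_i 1/h_i$. Setting this equal to $B$ gives a per-step time of roughly $\left(\sum_{i=1}^n 1/h_i\right)^{-1} B$. The central combinatorial insight, however, is that it is suboptimal to use all $n$ workers: using only the $m$ fastest workers and ignoring the slow ones can reduce the time, which is exactly why the complexity involves $\min_{m\in[n]}$. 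The clean way to capture this is a lemma (which I would cite or reprove from \citet{tyurin2023optimal}) stating that the time to collect $B$ gradients is $\Theta\!\left(\min_{m\in[n]}\left[\left(\frac{1}{m}\sum_{i=1}^m \frac{1}{h_i}\right)^{-1}\left(\frac{B}{m} + 1\right)\cdot\frac{1}{1}\right]\right)$ — more precisely $\Theta\!\left(\min_m\left[h_m + \left(\sum_{i=1}^m \frac{1}{h_i}\right)^{-1}B\right]\right)$, where the $h_m$ term accounts for the startup cost of waiting for at least one gradient from the $m$-th worker.

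**Assembling the bound.** Multiplying the per-step time by the number of global steps $K = \Theta(L\Delta/\varepsilon)$, the batch term contributes
\begin{align*}
\min_{m\in[n]}\left(\frac{1}{m}\sum_{i=1}^m \frac{1}{h_i}\right)^{-1}\frac{B}{m}\cdot\frac{L\Delta}{\varepsilon}
= \min_{m\in[n]}\left(\frac{1}{m}\sum_{i=1}^m \frac{1}{h_i}\right)^{-1}\frac{L\Delta}{\varepsilon}\cdot\frac{\sigma^2}{m\varepsilon},
\end{align*}
using $B = \Theta(\sigma^2/\varepsilon)$, which matches the second term $\frac{\sigma^2 L\Delta}{m\varepsilon^2}$ inside the target expression \eqref{eq:MFvzCXtfgWSefbzZ}; the startup term $h_m$ times $K$ produces the first term $\frac{L\Delta}{\varepsilon}$. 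Combining them under a single $\min_{m\in[n]}$ yields exactly \eqref{eq:MFvzCXtfgWSefbzZ}. I would be careful with the edge case $B = 1$ (when $\sigma^2 \le \varepsilon$), where the $\sigma^2/\varepsilon$ term is dominated by the constant and the formula still holds.

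**Main obstacle.** The genuinely delicate step is the per-step collection lemma — specifically establishing the $\min_{m}$ structure and the additive $h_m$ startup term rigorously, rather than the naive $\left(\sum_{i=1}^n 1/h_i\right)^{-1}B$ estimate that uses all workers. The subtlety is that floor effects ($\lfloor T/h_i\rfloor$ versus $T/h_i$) matter precisely when $B$ is small relative to $n$, and the optimization over $m$ arises because discarding slow workers avoids paying their large $h_i$ in the waiting time. I expect this to be the crux; fortunately it is exactly the content of the lower-bound/upper-bound analysis in \citet{tyurin2023optimal}, so I would invoke that result (or its proof structure) for the per-step time and then the multiplication by $K$ is routine.
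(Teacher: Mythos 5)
Your proposal is correct and follows essentially the same route as the paper: reduce the problem to a per-batch lemma (the time for the $n$ parallel workers to deliver $B$ fresh gradients, with the $\min_{m\in[n]}$ arising from discarding slow workers and a floor/staleness correction), then multiply by the $\Theta(\nicefrac{L\Delta}{\varepsilon})$ outer-iteration count coming from Theorem~\ref{thm:rennala} and the choice of $B$. The only differences are cosmetic: the paper proves the per-batch lemma directly via the counting bound $\sum_{i=1}^n \max\{\lfloor t/h_i\rfloor - 1, 0\}$ and states it as $T_{\textnormal{R}}(B) = 2\min_m\bigl[(\sum_{i=1}^m \nicefrac{1}{h_i})^{-1}(B+m)\bigr]$, which is equivalent up to constants to your form $\min_m\bigl[h_m + (\sum_{i=1}^m \nicefrac{1}{h_i})^{-1}B\bigr]$, rather than citing \citet{tyurin2023optimal}.
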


We start with the following lemma.

\begin{lemma}
  Let us define
  \begin{align}
    \label{eq:batch_time}
    T_{\textnormal{R}}(B) \eqdef 2 \min_{m\in[n]} \left[\left(\sum_{i=1}^m \frac{1}{h_i}\right)^{-1} (B + m)\right]
  \end{align}
  Under the $h_i$-fixed computation model \eqref{eq:worker-time}, the time required to calculate $x^{1}, \dots, x^{B}$ of the main branch is at most $T_{\textnormal{R}}(B)$ seconds, the time required to calculate $x^{B + 1}, \dots, x^{2B}$ is at most $T_{\textnormal{R}}(B)$ seconds, and so on.
  \label{lemma:rennala}
\end{lemma}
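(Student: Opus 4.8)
The plan is to show that a single ``round'' of \algname{Rennala SGD} — the collection of the $B$ stochastic gradients that generate $x^{kB+1}, \dots, x^{(k+1)B}$ — completes within $T_{\textnormal{R}}(B)$ seconds, uniformly over $k$. The starting point is that computing $x^1, \dots, x^B$ corresponds exactly to round $0$: although the computation tree unrolls the batch into a chain $x^0 \to x^1 \to \dots \to x^B$, all these gradients are evaluated at the single point $w^0 = x^0$, and similarly for subsequent rounds. The key structural observation is that within one round the iterate $w^k = x^{kB}$ is fixed, changing only at the end of the round when the aggregated update is applied. By the logic of Algorithm~\ref{alg:rennala}, a gradient is accepted (and increments $b$) precisely when it was computed at the current point ($\delta = 0$); otherwise it is discarded and its worker is restarted at the current $w^k$. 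Hence each worker discards \emph{at most one} gradient per round — the single ``in-flight'' gradient it was computing at an earlier iterate when the round began — after which every gradient it returns in that round is accepted.

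Next I would fix an arbitrary $m \in [n]$ and lower-bound the number of accepted gradients produced by the $m$ fastest workers $1, \dots, m$ (recall the ordering \eqref{eq:sorted_t_i}) within a time budget $T$ measured from the round's start. Since worker $i$ completes a gradient at least every $h_i$ seconds, in time $T$ it finishes at least $\flr{T / h_i} \ge T/h_i - 1$ gradients; discarding at most one stale gradient, it contributes at least $T/h_i - 2$ accepted ones. Summing over $i = 1, \dots, m$ shows that these workers alone supply at least
$$\sum_{i=1}^{m} \left(\frac{T}{h_i} - 2\right) = T \sum_{i=1}^{m} \frac{1}{h_i} - 2m$$
accepted gradients by time $T$.

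It then remains to insert the specific budget. Setting $T = T_m \eqdef 2 \left(\sum_{i=1}^{m} \nicefrac{1}{h_i}\right)^{-1} (B + m)$, the displayed lower bound becomes $2(B + m) - 2m = 2B \ge B$. Thus by time $T_m$ the workers have already produced at least $B$ accepted gradients, so the round terminates no later than $T_m$ seconds after it begins. Since this holds for \emph{every} $m \in [n]$, the round time is bounded by $\min_{m \in [n]} T_m = T_{\textnormal{R}}(B)$, which is exactly \eqref{eq:batch_time}. The argument applies verbatim to every round — the only round-dependent effect, the in-flight discard, is already absorbed by the uniform $-2$ per worker — yielding the claimed bound for $x^1, \dots, x^B$, then for $x^{B+1}, \dots, x^{2B}$, and so on.

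The main obstacle I anticipate is the bookkeeping around discarded gradients: one must argue carefully that the per-worker discard count in a round is at most one, which rests on $w^k$ being constant throughout the round and on a restarted worker always resuming at the \emph{current} iterate. The factor $2$ in the definition of $T_{\textnormal{R}}(B)$ is precisely what absorbs both the floor (the $-1$ in $\flr{T/h_i} \ge T/h_i - 1$) and this single discard (a further $-1$), so no sharper tracking of these lower-order effects is required.
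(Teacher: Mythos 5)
Your proof is correct and takes essentially the same route as the paper's: lower-bound the number of \emph{accepted} gradients contributed by the $m$ fastest workers in time $T$ by $T\sum_{i=1}^{m} \nicefrac{1}{h_i} - 2m$ (one $-1$ for the floor, one $-1$ for the at-most-one discarded stale gradient per worker per round), then substitute the budget to conclude that at least $B$ accepted gradients arrive within $T_{\textnormal{R}}(B)$ seconds in every round. The only cosmetic difference is that you fix an arbitrary $m$ and take the minimum at the end, whereas the paper instantiates the minimizer $m^*$ up front; your explicit justification of the single-discard-per-round claim is a slightly more careful rendering of what the paper asserts in one sentence.
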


\begin{proof}
  The idea of \algname{Rennala SGD} (Alg.~\ref{alg:rennala}) is pretty simple. Notice that all workers calculate stochastic gradients at the same point in parallel until the server collects a batch of size $B$ (condition $\delta = 0$ ensures that). Since they work in parallel, under the fixed computational model, after $t$ seconds the workers will calculate
  \begin{align}
    \sum_{i = 1}^n \max\left\{\flr{\frac{t}{h_i}} - 1, 0\right\}
    \label{eq:KmmgS}
  \end{align}
  stochastic gradients because $\flr{\frac{t}{h_i}}$ is the number of stochastic gradients computed by worker $i$ in $t$ seconds. We subtract $1$ because at most one stochastic gradient be can be ignored due to the condition $\delta = 0$ in Alg.~\ref{alg:rennala}.
  
  Notice that 
  \begin{align*}
    T_{\textnormal{R}}(B) = 2 \left(\sum_{i=1}^{m^*} \frac{1}{h_i}\right)^{-1} (B + m^*)
  \end{align*}
  for some $m^* \in [n].$ Substituting it to \eqref{eq:KmmgS}, we get
  \begin{align*}
    \sum_{i = 1}^n \max\left\{\flr{\frac{T_{\textnormal{R}}(B)}{h_i}} - 1, 0\right\} 
    &\geq \sum_{i = 1}^{m^*} \max\left\{\flr{\frac{T_{\textnormal{R}}(B)}{h_i}} - 1, 0\right\} \geq \sum_{i = 1}^{m^*} \flr{\frac{T_{\textnormal{R}}(B)}{h_i}} - m^* \\
    &\geq \sum_{i = 1}^{m^*} \frac{T_{\textnormal{R}}(B)}{h_i} - 2 m^* = 2 (B + m^*) - 2 m^* \geq B.
  \end{align*}
  Thus, after $T_{\textnormal{R}}(B)$ seconds, the server collects $B$ stochastic gradients, which is equivalent to calculating $x^{1}, \dots, x^{B}$ of the main branch. The same argument can be applied to the next $B$ point of the main branch, and so on.
\end{proof}

\begin{proof}[Proof of Theorem~\ref{thm:rennala_computation}]
  Due to Theorem~\ref{thm:rennala}, we know that 
  $\frac{1}{K}\sum_{k=0}^{K-1}\Exp{\|\nabla f(x^k)\|^2} \le \varepsilon$ for
  $$
  K = \left\lceil\frac{4 B L \Delta}{\varepsilon} + \frac{8 \sigma^2 L \Delta}{\varepsilon^2}\right\rceil.
  $$
  From Lemma~\ref{lemma:rennala}, we know that the time required to calculate $x^{1}, \dots, x^{B}$ of the main branch is at most $T_{\textnormal{R}}(B)$ seconds, the time required to calculate $x^{B + 1}, \dots, x^{2B}$ is at most $T_{\textnormal{R}}(B)$ seconds, and so on. Thus, the total time to find an $\varepsilon$--stationary point is
  \begin{align*}
    \cO\left(T_{\textnormal{R}}(B) \times \frac{K}{B}\right) = \cO\left(T_{\textnormal{R}}(B) \times \left(\frac{L \Delta}{\varepsilon} + \frac{\sigma^2 L \Delta}{B \varepsilon^2}\right)\right).
  \end{align*}
  Using the choice of $B,$
  \begin{align*}
    \cO\left(T_{\textnormal{R}}(B) \times \frac{K}{B}\right) 
    &= \cO\left(\min_{m\in[n]} \left[\left(\sum_{i=1}^m \frac{1}{h_i}\right)^{-1} (B + m)\right] \times \frac{L \Delta}{\varepsilon}\right) \\
    &= \cO\left(\min\limits_{m \in [n]} \left[\left(\frac{1}{m} \sum\limits_{i=1}^{m} \frac{1}{h_i}\right)^{-1} \left(\frac{L \Delta}{\varepsilon} + \frac{\sigma^2 L \Delta}{m \varepsilon^2}\right)\right]\right).
  \end{align*}
\end{proof}

\subsection{\algname{Ringmaster ASGD}}

\begin{theorem}[\algname{Ringmaster ASGD}]
  Consider Theorem~\ref{thm:ringmaster_asgd} and its conditions. Under the $h_i$-fixed computation model \eqref{eq:worker-time}, the computational time complexity of \algname{Ringmaster ASGD} is
  \begin{align*} 
    \cO\left(\min\limits_{m \in [n]} \left[\left(\frac{1}{m} \sum\limits_{i=1}^{m} \frac{1}{h_i}\right)^{-1} \left(\frac{L \Delta}{\varepsilon} + \frac{\sigma^2 L \Delta}{m \varepsilon^2}\right)\right]\right) 
  \end{align*}
  with $B = \max\left\{\left\lceil\frac{\sigma^2}{\varepsilon}\right\rceil, 1\right\}.$
  \label{thm:ringmaster_computation}
\end{theorem}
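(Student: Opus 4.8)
The plan is to mirror the two-part structure of the \algname{Rennala SGD} analysis (Theorem~\ref{thm:rennala_computation}): first invoke the iteration rate of Theorem~\ref{thm:ringmaster_asgd}, then establish a ``time-per-phase'' lemma that is the exact analogue of Lemma~\ref{lemma:rennala}, and finally combine the two. Concretely, I would set the delay threshold to $G = B = \max\{\lceil \sigma^2/\varepsilon\rceil, 1\}$, so that Theorem~\ref{thm:ringmaster_asgd} guarantees an $\varepsilon$--stationary point once the main branch has advanced by $K = \cO(GL\Delta/\varepsilon + \sigma^2 L\Delta/\varepsilon^2)$ steps. The only genuinely new ingredient is a bound on how long the main branch of Algorithm~\ref{alg:ringmasternew} takes to advance by $G$ steps under the $h_i$-fixed model.

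The key lemma I would prove is: for every $m\in[n]$, within any window of length $T_m := 2\left(\sum_{i=1}^m 1/h_i\right)^{-1}(G+m)$ seconds the main branch advances by at least $G$ steps; hence the time to advance by $G$ is at most $T_{\mathrm{Ring}}(G) := \min_{m\in[n]} T_m$. The crux --- the step that differs from Rennala, where all workers share a common point --- is the following staleness argument. Suppose, toward a contradiction, that during a window of length $T_m$ the head advances by fewer than $G$ steps. For any gradient whose computation interval lies in this window, its delay $\delta^k = \textnormal{dist}(w^k, z^0)$ equals exactly the number of head advances occurring during its computation, which is bounded by the total advance $< G$. Thus the acceptance test $\delta^k < G$ never rejects, so every completed gradient is accepted and advances the head. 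But in $T_m$ seconds the $m$ fastest workers alone complete at least $\sum_{i=1}^m \lfloor T_m/h_i\rfloor \ge T_m\sum_{i=1}^m 1/h_i - m = 2(G+m) - m \ge G$ gradients, forcing an advance of at least $G$ --- contradicting the assumption.

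With the lemma in hand, the remainder is bookkeeping identical in spirit to Rennala: the main branch reaches index $K$ in at most $\lceil K/G\rceil \cdot T_{\mathrm{Ring}}(G)$ seconds. Plugging in $K = \cO(GL\Delta/\varepsilon + \sigma^2 L\Delta/\varepsilon^2)$ and $G = \max\{\lceil\sigma^2/\varepsilon\rceil,1\}$ gives $K/G = \cO(L\Delta/\varepsilon)$ in both regimes $\sigma^2/\varepsilon \ge 1$ and $\sigma^2/\varepsilon < 1$, so the total time is $\cO\!\left(\frac{L\Delta}{\varepsilon}\min_m \big(\sum_{i=1}^m 1/h_i\big)^{-1}(G+m)\right)$. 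Since $G\frac{L\Delta}{\varepsilon} = \cO\!\left(\frac{L\Delta}{\varepsilon} + \frac{\sigma^2 L\Delta}{\varepsilon^2}\right)$ and $\frac{L\Delta}{\varepsilon}\le \frac{mL\Delta}{\varepsilon}$ for $m\ge 1$, distributing the product and rewriting $\left(\sum_{i=1}^m 1/h_i\right)^{-1}m = \left(\frac1m\sum_{i=1}^m 1/h_i\right)^{-1}$ yields exactly \eqref{eq:MFvzCXtfgWSefbzZ}.

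I expect the main obstacle to be making the staleness/contradiction argument fully rigorous: in particular, correctly handling gradients already in flight at the start of a window and pinning down the exact correspondence between a gradient's delay $\delta^k$ and the number of head advances during its computation interval (so that ``total advance $< G$'' does imply ``no rejection''). Everything downstream is routine algebra, and the structural parallel to Lemma~\ref{lemma:rennala} suggests the constants and the form $(\sum 1/h_i)^{-1}(G+m)$ carry over verbatim.
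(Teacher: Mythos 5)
Your proposal is correct and follows the same overall route as the paper: fix the delay threshold equal to $B = \max\{\lceil\sigma^2/\varepsilon\rceil,1\}$, invoke the iteration rate of Theorem~\ref{thm:ringmaster_asgd}, bound the wall-clock time of each block of $B$ consecutive main-branch updates, and multiply by the number of blocks. The one genuine difference is the provenance of the time-per-block bound: the paper does not prove it, but imports it as Lemma~\ref{lem:time_R} (Lemma 4.1 of \citet{maranjyan2025ringmaster}) together with Corollary~\ref{cor:time_R}, whereas you re-derive it via a staleness/contradiction argument, making the proof self-contained. Your derivation is essentially right and is the natural analogue of the paper's own Lemma~\ref{lemma:rennala} for \algname{Rennala SGD}, but the gap you flag is real: among the $\sum_{i=1}^m \lfloor T_m/h_i\rfloor$ gradients completed during a window, up to one per worker was already in flight when the window began, carries delay accumulated \emph{before} the window, and hence may be rejected even under your hypothesis that the head advances fewer than $G$ times inside the window, so ``every completed gradient is accepted'' does not hold for those. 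The repair is exactly the device used in Lemma~\ref{lemma:rennala}: count only $\sum_{i=1}^m \max\{\lfloor T_m/h_i\rfloor - 1, 0\} \geq T_m\sum_{i=1}^m 1/h_i - 2m = 2(G+m) - 2m = 2G \geq G$ accepted gradients, so the factor $2$ built into $T_m$ already absorbs the correction and your constants survive. With that one fix your lemma, and hence the whole proof, goes through, and your concluding algebra reproducing the stated complexity matches the paper's bookkeeping.
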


We use Lemma~4.1 from \citep{maranjyan2025ringmaster}.

\begin{lemma}(\citep{maranjyan2025ringmaster})
  \label{lem:time_R}
  Let the workers' computation times satisfy the \emph{$h_i$-fixed computation model} (\eqref{eq:worker-time} and \eqref{eq:sorted_t_i}).
  Let $B$ be the delay threshold of Alg.~\ref{alg:ringmasternew}.
  The time required to complete any $B$ consecutive iterate updates of Alg.~\ref{alg:ringmasternew} is at most
  \begin{equation}
      \label{eq:time_R}
      T_A(B) \eqdef 2 \min\limits_{m \in [n]} \left[\left(\frac{1}{m} \sum\limits_{i=1}^m \frac{1}{\tau_i}\right)^{-1} \left( 1 + \frac{R}{m} \right)\right].
  \end{equation}
\end{lemma}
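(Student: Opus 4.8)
The plan is to mirror the argument already used for \algname{Rennala SGD} in Lemma~\ref{lemma:rennala}, adapting the gradient-accounting to the delay-rejection mechanism of \algname{Ringmaster ASGD}. The first move is to notice that the stated bound is really the same quantity as $T_{\textnormal{R}}$: writing $\left(\frac1m\sum_{i=1}^m \frac1{h_i}\right)^{-1}\left(1+\frac Bm\right)=\left(\sum_{i=1}^m \frac1{h_i}\right)^{-1}(B+m)$, we see that under the $h_i$-fixed model (with $\tau_i\equiv h_i$ and the delay threshold playing the role of $R$, so $R\equiv B$) the bound $T_A(B)$ coincides with $T_{\textnormal{R}}(B)$ from \eqref{eq:batch_time}. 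Hence it suffices to show that, starting from the instant any iterate $w^k$ becomes current, at most $T_{\textnormal{R}}(B)$ seconds elapse before $w^{k+B}$ is produced, for every $k$.

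I would argue by contradiction. Fix $k$, reset the clock so that $w^k$ is current at time $0$, and suppose that after $t \eqdef T_{\textnormal{R}}(B)$ seconds strictly fewer than $B$ accepted updates have occurred, i.e.\ the current index stays below $k+B$ throughout $[0,t]$. The \emph{key step}, and the only place where the Ringmaster rejection rule enters, is the following claim: under this no-progress hypothesis, \emph{every} stochastic gradient that both begins and ends within $[0,t]$ is accepted. Indeed, such a gradient is based on some iterate $w^{j}$ with $j\ge k$ (it started after the reset), and at its completion the current index is some $j' < k+B$; therefore its delay is $\delta = j'-j < B$, so it passes the test $\delta<B$ in Alg.~\ref{alg:ringmasternew} and triggers an update. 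In words: low total progress forces low delays, so within-window computation cannot be wasted.

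The counting then parallels Lemma~\ref{lemma:rennala}. Each accepted gradient yields exactly one update, and discounting for each worker the single computation that may be in flight across time $0$ (started before the reset), worker $i$ completes at least $\flr{t/h_i}-1$ gradients entirely inside $[0,t]$, all accepted by the claim above. Summing over the fastest $m^\ast$ workers, where $m^\ast$ is the minimizer defining $T_{\textnormal{R}}(B)$, and letting $N$ denote the number of updates in $[0,t]$,
\[
N \;\ge\; \sum_{i=1}^{m^\ast}\!\left(\flr{t/h_i}-1\right)\;\ge\; t\sum_{i=1}^{m^\ast}\frac1{h_i}-2m^\ast \;=\; 2(B+m^\ast)-2m^\ast \;=\; 2B \;\ge\; B,
\]
which contradicts $N<B$. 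Therefore $w^{k+B}$ is produced within $T_{\textnormal{R}}(B)=T_A(B)$ seconds for every $k$, which is the assertion.

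The hard part will be the bookkeeping at the window boundary rather than the arithmetic: I must justify cleanly the ``at most one in-flight gradient per worker at time $0$'' bound (the $-1$ term) and verify that the no-rejection claim holds uniformly over all of $[0,t]$ under the no-progress hypothesis, not merely at isolated instants. I would also need to pin down the indexing convention of Alg.~\ref{alg:ringmasternew}—that a worker restarts at the \emph{current} iterate immediately upon completing a gradient—so that any gradient started after the reset genuinely has base index $\ge k$, which is what makes the delay bound $\delta<B$ go through.
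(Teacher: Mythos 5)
Your proposal cannot coincide with the paper's own argument for a simple reason: the paper does not prove Lemma~\ref{lem:time_R} at all --- it imports it verbatim as Lemma~4.1 of \citep{maranjyan2025ringmaster} and only uses its conclusion (via Corollary~\ref{cor:time_R}) in the proof of Theorem~\ref{thm:ringmaster_computation}. What you have written is therefore a genuine, self-contained proof of an otherwise black-boxed statement, and it is correct. You rightly resolve the notational slips in the statement (the $\tau_i$ should be read as $h_i$ and the $R$ as $B$, so that $T_A(B)$ is literally the same quantity as $T_{\textnormal{R}}(B)$ in \eqref{eq:batch_time}), and your argument then runs the paper's own Lemma~\ref{lemma:rennala} counting scheme with one extra ingredient that is exactly what the rejection rule of Alg.~\ref{alg:ringmasternew} requires: under the no-progress hypothesis (current index stays below $k+B$ on $[0,t]$), any gradient that starts after the reset has base index $j \geq k$ and arrives when the index is $j' < k+B$, hence has delay $j'-j < B$ and is accepted. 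This ``low progress forces low delays, so no within-window work is wasted'' claim is the only place the asynchronous mechanism enters, and it is airtight: workers never idle in Alg.~\ref{alg:ringmasternew} (they restart at the current iterate immediately after every arrival, accepted or not), each computation takes at most $h_i$ seconds, so at most the single in-flight gradient per worker at time $0$ escapes the count --- which is precisely your $-1$ term --- and the arithmetic $\sum_{i=1}^{m^*}\left(\flr{t/h_i}-1\right) \geq t\sum_{i=1}^{m^*} h_i^{-1} - 2m^* = 2B$ closes the contradiction. The two ``hard parts'' you flag at the end (boundary bookkeeping and uniformity of the no-rejection claim over $[0,t]$) are already handled by the reasoning you give, since the hypothesis $N(t)<B$ together with monotonicity of the update counter makes the delay bound hold at every completion instant in the window, not just at isolated times. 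The net effect of your route is a tightness gain for the paper itself: it makes Section~\ref{sec:proof_compt} self-contained rather than dependent on an external lemma, at the cost of about a page of the same style of geometric/counting argument the paper already uses for \algname{Rennala SGD}.
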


\begin{corollary}
  In view of Lemma~\ref{lem:time_R}, Under the $h_i$-fixed computation model \eqref{eq:worker-time}, the time required to calculate $x^{1}, \dots, x^{B}$ of the main branch is at most $T_A(B)$ seconds, the time required to calculate $x^{B + 1}, \dots, x^{2B}$ is at most $T_A(B)$ seconds, and so on.
  \label{cor:time_R}
\end{corollary}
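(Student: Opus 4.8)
The plan is to observe that the statement is a direct re-interpretation of Lemma~\ref{lem:time_R}, once we identify the main branch of the computation tree with the iterate sequence produced by Algorithm~\ref{alg:ringmasternew}. Recall from \eqref{eq:JYByrgqAiCTpxi} that for \algname{Ringmaster ASGD} the main branch is defined by $x^k = w^k$ for all $k \geq 0$. Hence advancing one step along the main branch, $x^k \to x^{k+1}$, corresponds exactly to one accepted iterate update $w^k \to w^{k+1}$ of the algorithm, and the corollary is essentially a change of vocabulary from ``iterate update'' to ``main-branch node''.

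First I would make this correspondence precise. Each accepted update (the branch of Alg.~\ref{alg:ringmasternew} with $\delta^k < G$) produces exactly one new node $x^{k+1} = w^{k+1}$ on the main branch, whereas an ignored stale gradient (the branch with $\delta^k \geq G$) does not advance the iteration counter $k$ and therefore creates no new main-branch node. Consequently, computing the points $x^1, \dots, x^B$ is literally the same event as performing the first $B$ consecutive iterate updates of the algorithm, and computing $x^{B+1}, \dots, x^{2B}$ is the same as performing iterate updates $B+1$ through $2B$, i.e., another block of $B$ \emph{consecutive} updates.

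Then I would simply invoke Lemma~\ref{lem:time_R}, which bounds the wall-clock time of \emph{any} $B$ consecutive iterate updates by $T_A(B)$ under the $h_i$-fixed computation model \eqref{eq:worker-time}. Applying it to the first block yields the bound $T_A(B)$ for $x^1, \dots, x^B$; applying it to the second block yields $T_A(B)$ for $x^{B+1}, \dots, x^{2B}$; and repeating the argument for each subsequent block of $B$ main-branch nodes gives the ``and so on'' clause. No new estimate is required beyond the lemma.

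I expect the only real subtlety to be the bookkeeping around ignored stale gradients: one must confirm that the time possibly spent computing gradients that are later discarded is already absorbed into the bound $T_A(B)$. Since Lemma~\ref{lem:time_R} counts wall-clock time needed to complete $B$ \emph{accepted} updates (rather than $B$ gradient computations), this wasted effort is accounted for by the lemma itself, so no separate argument about discarded gradients is needed. Everything else is an immediate translation of notation, which is why the result is stated as a corollary.
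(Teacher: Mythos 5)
Your proposal is correct and matches the paper's reasoning exactly: the paper gives no separate proof for this corollary, treating it as an immediate translation of Lemma~\ref{lem:time_R} via the identification $x^k = w^k$ from \eqref{eq:JYByrgqAiCTpxi}, which is precisely the correspondence you make explicit (including the observation that ignored stale gradients create no main-branch node and that their cost is already absorbed into the lemma's bound on $B$ consecutive accepted updates).
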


\begin{proof}[Proof of Theorem~\ref{thm:ringmaster_computation}]
  The proof of Theorem~\ref{thm:ringmaster_computation} is similar to the proof of Theorem~\ref{thm:rennala_computation}. From Corollary~\ref{cor:time_R}, we know that the time required to calculate $x^{1}, \dots, x^{B}$ of the main branch is at most $T_{\textnormal{A}}(B)$ seconds, the time required to calculate $x^{B + 1}, \dots, x^{2B}$ is at most $T_{\textnormal{A}}(B)$ seconds, and so on. Thus, the total time to find an $\varepsilon$--stationary point is
  \begin{align*}
    \cO\left(T_{\textnormal{A}}(B) \times \frac{K}{B}\right) = \cO\left(\min\limits_{m \in [n]} \left[\left(\frac{1}{m} \sum\limits_{i=1}^{m} \frac{1}{h_i}\right)^{-1} \left(\frac{L \Delta}{\varepsilon} + \frac{\sigma^2 L \Delta}{m \varepsilon^2}\right)\right]\right).
  \end{align*}
\end{proof}

\subsection{\algname{Local SGD}}

\begin{theorem}[\algname{Local SGD}]
  Consider Theorem~\ref{thm:local_sgd} and its conditions. Under the $h_i$-fixed computation model \eqref{eq:worker-time}, the computational time complexity of \algname{Local SGD} (Alg.~\ref{alg:local_sgd}) is
  \begin{align*} 
    \cO\left(\min\limits_{m \in [n]} \left[\left(\frac{1}{m} \sum\limits_{i=1}^{m} \frac{1}{h_i}\right)^{-1} \left(\frac{L \Delta}{\varepsilon} + \frac{\sigma^2 L \Delta}{m \varepsilon^2}\right)\right]\right) 
  \end{align*}
  with $B = \max\left\{\left\lceil\frac{\sigma^2}{\varepsilon}\right\rceil, 1\right\}.$
  \label{thm:local_computation}
\end{theorem}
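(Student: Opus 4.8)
The plan is to mirror the structure of the proofs of Theorems~\ref{thm:rennala_computation} and~\ref{thm:ringmaster_computation}: combine the iteration rate from Theorem~\ref{thm:local_sgd} with a bound on the wall-clock time needed to advance the main branch through one global round of \algname{Local SGD}. By Theorem~\ref{thm:local_sgd} it suffices to compute $K = \lceil \frac{4BL\Delta}{\varepsilon} + \frac{8\sigma^2 L\Delta}{\varepsilon^2}\rceil$ nodes of the main branch. The construction in~\eqref{eq:pbHgXapdjZGUQs}--\eqref{eq:yapgNwHcHepGaY} together with the stopping rule $\sum_{i=1}^n M_i = B$ guarantees that the main branch grows by exactly $B$ nodes in each global round, so the number of rounds is exactly $K/B$, and the total time will be $\cO(T_{\textnormal{L}}(B)\times K/B)$ once $T_{\textnormal{L}}(B)$ denotes an upper bound on the duration of one round.

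First I would establish the per-round time bound, the analogue of Lemma~\ref{lemma:rennala}. Set
$$T_{\textnormal{L}}(B) \eqdef 2 \min_{m\in[n]} \left[\left(\sum_{i=1}^m \frac{1}{h_i}\right)^{-1} (B + m)\right].$$
Within a round every worker runs \texttt{LocalSGDWorker}$(w^k,\gamma)$ independently and in parallel, producing one local step every at most $h_i$ seconds; after $t$ seconds worker $i$ has produced at least $\lfloor t/h_i\rfloor$ local steps, of which at most one is lost when the workers are asked to stop (per footnote~\ref{foot:local_sgd_stop}). Hence after $t$ seconds the collectively usable number of local steps is at least $\sum_{i=1}^n \max\{\lfloor t/h_i\rfloor - 1, 0\}$, and the round terminates once this reaches $B$. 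Plugging $t = T_{\textnormal{L}}(B)$ and using $\lfloor x\rfloor \ge x - 1$ restricted to the optimal index set $[m^*]$, the identical calculation as in Lemma~\ref{lemma:rennala} gives $\sum_{i=1}^{m^*}(T_{\textnormal{L}}(B)/h_i - 2) = 2(B+m^*) - 2m^* \ge B$, so each global round takes at most $T_{\textnormal{L}}(B)$ seconds.

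Finally I would assemble the pieces exactly as in the proof of Theorem~\ref{thm:rennala_computation}: the total time is
$$\cO\!\left(T_{\textnormal{L}}(B)\times\frac{K}{B}\right) = \cO\!\left(T_{\textnormal{L}}(B)\left(\frac{L\Delta}{\varepsilon}+\frac{\sigma^2 L\Delta}{B\varepsilon^2}\right)\right),$$
and substituting $B = \max\{\lceil\sigma^2/\varepsilon\rceil,1\}$ makes $\frac{\sigma^2}{B\varepsilon}\le 1$, collapsing the second term into the first. Rewriting $(\sum_{i=1}^m 1/h_i)^{-1} = \frac{1}{m}(\frac{1}{m}\sum_{i=1}^m 1/h_i)^{-1}$ and distributing $B+m$ accordingly, and using $B = \Theta(\sigma^2/\varepsilon)$, yields the claimed bound $\cO(\min_{m\in[n]}[(\frac1m\sum_{i=1}^m 1/h_i)^{-1}(\frac{L\Delta}{\varepsilon}+\frac{\sigma^2 L\Delta}{m\varepsilon^2})])$.

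The main obstacle is the per-round lemma, and specifically verifying that the rate of \emph{useful} work in \algname{Local SGD} is governed by the same quantity $\sum_i\lfloor t/h_i\rfloor$ as in \algname{Rennala SGD}. The difference is that here the local steps are \emph{chained} within each worker rather than computed at a shared point, so I must confirm that (i) each worker's local steps each cost at most $h_i$ seconds regardless of the chaining, and (ii) the stopping condition $\sum_i M_i = B$ indeed corresponds to extending the main branch by exactly $B$ edges, so that no usable gradient is wasted except the at-most-one-per-worker in-flight step discarded at synchronization. Once this bookkeeping is settled, the remaining estimates are identical to the \algname{Rennala SGD} case.
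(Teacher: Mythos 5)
Your proposal is correct and follows essentially the same route as the paper: the paper's Lemma~\ref{lemma:local_sgd} defines the identical quantity $T_{\textnormal{L}}(B)$ and bounds the usable gradients per round by $\sum_{i=1}^n \max\{\lfloor t/h_i\rfloor - 1, 0\}$ (with the same at-most-one-discarded-gradient-per-worker accounting via footnote~\ref{foot:local_sgd_stop}), and the paper then assembles the total time as $\cO(T_{\textnormal{L}}(B)\cdot K/B)$ exactly as in Theorem~\ref{thm:rennala_computation}. Your explicit attention to the chaining of local steps and to the correspondence between $\sum_i M_i = B$ and $B$ main-branch edges is bookkeeping the paper leaves implicit, but it does not change the argument.
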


\begin{lemma}
  Let us define
  \begin{align}
    T_{\textnormal{L}}(B) \eqdef 2 \min_{m\in[n]} \left[\left(\sum_{i=1}^m \frac{1}{h_i}\right)^{-1} (B + m)\right]
  \end{align}
  Under the $h_i$-fixed computation model \eqref{eq:worker-time}, the time required to calculate $x^{1}, \dots, x^{B}$ of the main branch is at most $T_{\textnormal{L}}(B)$ seconds, the time required to calculate $x^{B + 1}, \dots, x^{2B}$ is at most $T_{\textnormal{L}}(B)$ seconds, and so on.
  \label{lemma:local_sgd}
\end{lemma}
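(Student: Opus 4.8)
The plan is to mirror the counting argument of Lemma~\ref{lemma:rennala}, exploiting the fact that $T_{\textnormal{L}}(B)$ is defined exactly as $T_{\textnormal{R}}(B)$ and that the local steps of \algname{Local SGD} only ever \emph{add} to the running count $\sum_{i=1}^n M_i$. The key observation is purely combinatorial: under the $h_i$-fixed computation model \eqref{eq:worker-time}, worker $i$ completes one local step (one stochastic gradient) every $h_i$ seconds, so after $t$ seconds it has performed at least $\flr{t/h_i}$ local steps. Hence at time $t$ the server has observed
\[
\sum_{i=1}^n M_i \ge \sum_{i=1}^n \flr{\frac{t}{h_i}}
\]
local steps across all workers. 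In contrast to \algname{Rennala SGD}, no gradient is ever discarded inside a round --- there is no $\delta = 0$ test, since each worker advances its own local iterate rather than recomputing at a shared stale point --- so no $-1$ correction per worker is needed here, making the bound if anything more generous.

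Next I would instantiate $t = T_{\textnormal{L}}(B)$ and let $m^* \in [n]$ attain the minimum in the definition of $T_{\textnormal{L}}(B)$, so that $T_{\textnormal{L}}(B) = 2 \left(\sum_{i=1}^{m^*} 1/h_i\right)^{-1}(B + m^*)$. Restricting the sum to the first $m^*$ workers and using $\flr{a} \ge a - 1$ gives
\[
\sum_{i=1}^n \flr{\frac{T_{\textnormal{L}}(B)}{h_i}} \ge \sum_{i=1}^{m^*} \left(\frac{T_{\textnormal{L}}(B)}{h_i} - 1\right) = T_{\textnormal{L}}(B) \sum_{i=1}^{m^*} \frac{1}{h_i} - m^* = 2(B + m^*) - m^* \ge B.
\]
Thus the stopping condition $\sum_{i=1}^n M_i = B$ is necessarily met by time $T_{\textnormal{L}}(B)$, and since a full global round produces exactly $B$ edges of the main branch (by $\sum_i M_i = B$), the points $x^1, \dots, x^B$ are computed within $T_{\textnormal{L}}(B)$ seconds. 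Because each subsequent round restarts all workers from the freshly broadcast $w^{k+1}$ under the same time model, the identical argument bounds the time for $x^{B+1}, \dots, x^{2B}$, and so on inductively.

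The main obstacle I anticipate is not the counting inequality but the bookkeeping at the synchronization boundary: when $\sum_i M_i$ first reaches $B$, several workers may be partway through a gradient, and the footnote to Algorithm~\ref{alg:local_sgd} permits several conventions for handling them (discard, or defer to the next round). I would argue that these conventions are irrelevant to the \emph{upper} bound, since they only affect gradients produced \emph{after} the round has logically closed; the time at which $\sum_i \flr{t/h_i}$ first reaches $B$ is unaffected. A secondary subtlety is that $\sum_i M_i$ is an integer step function that could in principle jump past the exact value $B$ if two workers complete simultaneously; I would handle this by reading ``$\sum_i M_i = B$'' as the first time the count reaches at least $B$ (equivalently, the server halts a worker the instant the target is hit), which does not change the time estimate. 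With these remarks the lemma follows, and it feeds into the time-complexity theorem for \algname{Local SGD} exactly as $T_{\textnormal{R}}(B)$ does in the proof of Theorem~\ref{thm:rennala_computation}.
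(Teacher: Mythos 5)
Your proof follows essentially the same route as the paper's: both reduce the lemma to the counting argument of Lemma~\ref{lemma:rennala}, instantiate $t = T_{\textnormal{L}}(B)$ at the minimizer $m^*$, restrict the sum to the first $m^*$ workers, and apply $\flr{a} \ge a - 1$; the per-round repetition of the argument is also identical.

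The one flaw is your claim that the stopping conventions of footnote~\ref{foot:local_sgd_stop} are irrelevant to the upper bound because they ``only affect gradients produced after the round has logically closed.'' That reasoning is backwards: under the convention where a worker is allowed to finish an in-flight gradient which is then discarded, the wasted computation eats into the \emph{next} round's budget --- the worker can spend up to $h_i$ seconds of round $k+1$ completing an obsolete gradient from round $k$, so its useful count after $t$ seconds of the new round is only $\flr{t/h_i} - 1$, not $\flr{t/h_i}$, and the time at which the useful count reaches $B$ \emph{is} affected. This is precisely why the paper's proof states the second bound $\sum_{i=1}^{n} \max\left\{\flr{\nicefrac{t}{h_i}} - 1, 0\right\}$, i.e.\ \eqref{eq:xpSINXLbaoIj2}, alongside the interruptible-case bound \eqref{eq:xpSINXLbaoIj}, rather than dismissing the conventions. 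The lemma still holds because the same arithmetic as in Lemma~\ref{lemma:rennala} absorbs the correction: $T_{\textnormal{L}}(B)\sum_{i=1}^{m^*} \nicefrac{1}{h_i} - 2m^* = 2(B + m^*) - 2m^* = 2B \ge B$. So your conclusion is salvageable with no new ideas, but to cover the algorithm as actually implementable (non-interruptible workers) you should rerun your displayed inequality with the extra $-m^*$, exactly as in the Rennala lemma you mirror.
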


\begin{proof}
  The idea is the same as in Lemma~\ref{lemma:rennala}. All workers calculate stochastic gradients in parallel, with the only difference being that the points at which they compute the stochastic gradients differ due to the local steps. If the server can stop the workers, then after $t$ seconds it is possible to collect
\begin{align} \sum_{i=1}^{n} \flr{\frac{t}{h_i}} \label{eq:xpSINXLbaoIj} \end{align}
stochastic gradients. If it is infeasible to stop the calculations (see footnote~\ref{foot:local_sgd_stop}), then after $t$ seconds it is possible to collect
\begin{align} \sum_{i = 1}^n \max\left\{\flr{\frac{t}{h_i}} - 1, 0\right\}, \label{eq:xpSINXLbaoIj2} \end{align}
where we subtract $1$ because at most one stochastic gradient can be ignored if it is nonrelevant. Similarly to Lemma~\ref{lemma:rennala}, substituting $T_{\textnormal{L}}(B)$ into \eqref{eq:xpSINXLbaoIj} and \eqref{eq:xpSINXLbaoIj2}, one can show that $T_{\textnormal{L}}(B)$ is sufficient to collect $B = \sum_{i=1}^{n} M_i$ stochastic gradients, or, in other words, to calculate $x^{1}, \dots, x^{B}$ of the main branch. The same argument can be applied to the next $B$ points of the main branch, and so on.
\end{proof}

\begin{proof}[Proof of Theorem~\ref{thm:local_computation}]
The proof essentially the same as the proof of Theorem~\ref{thm:rennala_computation}.
\end{proof}

\subsection{\algname{Local-Async SGD}}

\begin{theorem}[\algname{Local-Async SGD}]
  Consider Theorem~\ref{thm:grouped_async_local_sgd} and its conditions. Under the $h_i$-fixed computation model \eqref{eq:worker-time}, the computational time complexity of \algname{Local-Async SGD} (Alg.~\ref{alg:grouped_async_local_sgd}) is
  \begin{align*} 
    \cO\left(\min\limits_{m \in [n]} \left[\left(\frac{1}{m} \sum\limits_{i=1}^{m} \frac{1}{h_i}\right)^{-1} \left(\frac{L \Delta}{\varepsilon} + \frac{\sigma^2 L \Delta}{m \varepsilon^2}\right)\right]\right) 
  \end{align*}
  with $B = \max\left\{\left\lceil\frac{\sigma^2}{\varepsilon}\right\rceil, 1\right\}.$
  \label{thm:grouped_async_local_sgd_comp}
\end{theorem}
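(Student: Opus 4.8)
The plan is to follow the three–step template already used for \algname{Rennala SGD} (Theorem~\ref{thm:rennala_computation}) and \algname{Local SGD} (Theorem~\ref{thm:local_computation}): first establish a per-block running-time bound, then multiply it by the number of blocks dictated by the iteration rate of Theorem~\ref{thm:grouped_async_local_sgd}, and finally substitute $B = \max\{\lceil \sigma^2/\varepsilon\rceil, 1\}$ and simplify. First I would prove the analogue of Lemma~\ref{lemma:local_sgd}, defining
\begin{equation*}
  T_{\textnormal{LA}}(B) \eqdef 2\min_{m\in[n]}\left[\left(\sum_{i=1}^m \frac{1}{h_i}\right)^{-1}(B+m)\right],
\end{equation*}
and claiming that $T_{\textnormal{LA}}(B)$ seconds suffice to produce each consecutive block of $B$ main-branch nodes $x^1,\dots,x^B$, then $x^{B+1},\dots,x^{2B}$, and so on.

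The key observation is that within every group the \texttt{AsynchronousSGDGroup} routine performs exactly one update $v^{m_g+1}_g = v^{m_g}_g - \gamma\nabla f(\cdot)$ per arriving stochastic gradient, so each completed gradient increments exactly one counter $m_g$ by one, and by the construction in \eqref{eq:OjRVuNSYZjFplZrfxBaN} it appends exactly one edge to the main branch. Hence the number of main-branch nodes produced in $t$ seconds equals the total number of gradients completed by all $n$ workers, which---allowing the discard of at most one in-flight gradient per worker at the synchronization boundary (cf.\ footnote~\ref{foot:local_sgd_stop})---is at least $\sum_{i=1}^n \max\{\flr{t/h_i} - 1, 0\}$. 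Substituting $t = T_{\textnormal{LA}}(B)$ and repeating verbatim the arithmetic of Lemma~\ref{lemma:rennala} (pick the minimizing $m^*$, lower-bound the floor, and cancel the $-2m^*$ against the $+2m^*$) yields at least $B$ completed gradients, i.e.\ at least $B$ main-branch nodes, within $T_{\textnormal{LA}}(B)$ seconds; since the synchronization fires precisely when $\sum_g m_g = B$, this is exactly one block.

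With the block-time lemma in hand, the remainder of the argument is identical to the proof of Theorem~\ref{thm:rennala_computation}. Theorem~\ref{thm:grouped_async_local_sgd} gives $K = \lceil 4BL\Delta/\varepsilon + 8\sigma^2 L\Delta/\varepsilon^2\rceil$, so the total running time is $\cO(T_{\textnormal{LA}}(B)\times K/B) = \cO(T_{\textnormal{LA}}(B)\times (L\Delta/\varepsilon + \sigma^2 L\Delta/(B\varepsilon^2)))$, and plugging in $B = \max\{\lceil\sigma^2/\varepsilon\rceil,1\}$ collapses this to $\cO(\min_{m\in[n]}[(\frac{1}{m}\sum_{i=1}^m 1/h_i)^{-1}(L\Delta/\varepsilon + \sigma^2 L\Delta/(m\varepsilon^2))])$, the claimed bound.

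The main obstacle is justifying the counting step in the presence of asynchrony. Unlike the delay-free local steps of \algname{Local SGD}, here gradients arrive with delays $\delta^{m_g}$, so one must argue that the delays influence only the \emph{geometry} of the tree---the tree distance $R$, already controlled in Theorem~\ref{thm:grouped_async_local_sgd} through the condition $\sum_g m_g = B$---and not the \emph{rate} at which the main branch grows. The decoupling fact---that every completed gradient, whatever its delay, advances exactly one $m_g$ and hence contributes exactly one main-branch edge---is what makes the $\sum_i\flr{t/h_i}$ count valid and reduces the timing analysis to the \algname{Rennala}/\algname{Local SGD} case; I would state this explicitly rather than leave it implicit, since it is the only place where the asynchronous inner loop could, a priori, break the parallel-throughput accounting.
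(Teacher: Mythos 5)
Your proposal is correct and follows essentially the same route as the paper: the paper's own proof consists of the analogous block-time lemma (Lemma~\ref{lemma:grouped_async_local_sgd}, with the same $T_{\textnormal{LA}}(B)$) proved by reduction to the arithmetic of Lemmas~\ref{lemma:rennala} and \ref{lemma:local_sgd}, followed by the same $\cO(T_{\textnormal{LA}}(B)\times K/B)$ accounting as in Theorem~\ref{thm:rennala_computation}. Your explicit statement of the decoupling fact---that each completed gradient increments exactly one $m_g$ and hence one main-branch edge, so delays affect only the tree geometry and not the throughput count---is exactly the point the paper leaves implicit, and is a welcome clarification rather than a deviation.
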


\begin{lemma}
  Let us define
  \begin{align}
    T_{\textnormal{LA}}(B) \eqdef 2 \min_{m\in[n]} \left[\left(\sum_{i=1}^m \frac{1}{h_i}\right)^{-1} (B + m)\right]
  \end{align}
  Under the $h_i$-fixed computation model \eqref{eq:worker-time}, the time required to calculate $x^{1}, \dots, x^{B}$ of the main branch is at most $T_{\textnormal{LA}}(B)$ seconds, the time required to calculate $x^{B + 1}, \dots, x^{2B}$ is at most $T_{\textnormal{LA}}(B)$ seconds, and so on.
  \label{lemma:grouped_async_local_sgd}
\end{lemma}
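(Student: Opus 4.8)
The plan is to mirror the proofs of Lemma~\ref{lemma:rennala} and Lemma~\ref{lemma:local_sgd}, since $T_{\textnormal{LA}}(B)$ is defined by exactly the same expression as $T_{\textnormal{R}}(B)$ and $T_{\textnormal{L}}(B)$. The overall strategy is: (i) count how many stochastic gradients the $n$ workers can produce collectively in $t$ seconds under the $h_i$-fixed model; (ii) argue that, within a single global round, essentially every produced gradient advances the main branch by one node; and (iii) substitute $t = T_{\textnormal{LA}}(B)$ and verify that the resulting count is at least $B$, which is precisely the number of nodes $x^1,\dots,x^B$ generated in one round (recall $\sum_{g=1}^s m_g = B$).

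First I would establish the step that is genuinely specific to \algname{Local-Async SGD}: no computed gradient is discarded on account of staleness within a round. The inner routine AsynchronousSGDGroup (Algorithm~\ref{alg:local_sgd_worker_function_mix_sgd}) contains no delay-threshold test---unlike \algname{Ringmaster ASGD}, every arriving gradient is immediately applied via $v^{m_g+1}_g = v^{m_g}_g - \gamma\nabla f(\cdot)$ and increments the group counter $m_g$. Consequently each gradient computed by any of the $n$ workers contributes exactly one unit to $\sum_{g=1}^s m_g$, i.e.\ one new edge of the main branch in \eqref{eq:OjRVuNSYZjFplZrfxBaN}. This reduces the timing question to the same parallel-counting problem already solved for \algname{Local SGD}.

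With that in hand, the counting is verbatim from Lemma~\ref{lemma:local_sgd}: after $t$ seconds the workers have jointly produced at least $\sum_{i=1}^n \max\{\flr{t/h_i}-1,\,0\}$ usable gradients (the $-1$ accounting for at most one in-progress gradient lost per worker when the round is halted; see footnote~\ref{foot:local_sgd_stop}). Writing the minimizer in $T_{\textnormal{LA}}(B)$ as $m^*$, so that $T_{\textnormal{LA}}(B)\sum_{i=1}^{m^*}\nicefrac{1}{h_i}=2(B+m^*)$, and plugging $t=T_{\textnormal{LA}}(B)$ into this bound, I would reproduce the chain
\begin{align*}
  \sum_{i=1}^{n}\max\!\left\{\flr{\tfrac{T_{\textnormal{LA}}(B)}{h_i}}-1,\,0\right\}
  \ \ge\ \sum_{i=1}^{m^*}\frac{T_{\textnormal{LA}}(B)}{h_i}-2m^*
  \ =\ 2(B+m^*)-2m^*\ \ge\ B,
\end{align*}
so $T_{\textnormal{LA}}(B)$ seconds suffice to collect the $B$ gradients of one round. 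Since each subsequent round restarts all groups from the current head of the main branch, the identical bound applies to $x^{B+1},\dots,x^{2B}$, and so on.

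The main obstacle I anticipate is purely conceptual rather than computational: justifying cleanly that the asynchronous delays inside a group never cause a usable gradient to be thrown away within a round. This is what makes \algname{Local-Async SGD} behave, for timing purposes, exactly like the synchronous \algname{Local SGD} and lets the Rennala-style counting go through unchanged. Once that structural fact is nailed down, the remaining algebra is routine.
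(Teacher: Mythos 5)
Your proposal is correct and follows essentially the same route as the paper: the paper's own proof is a two-line reference to Lemmas~\ref{lemma:rennala} and \ref{lemma:local_sgd}, noting only that the computation points differ due to the asynchronous steps within groups, and you have simply filled in the same parallel-counting argument (the $\max\{\lfloor t/h_i\rfloor-1,0\}$ bound, the minimizer $m^*$, and the substitution of $T_{\textnormal{LA}}(B)$) that those lemmas use. Your additional observation that AsynchronousSGDGroup applies every arriving gradient without a staleness test---so each computed gradient advances $\sum_g m_g$ by one---is exactly the structural fact the paper leaves implicit, and it is stated correctly.
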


\begin{proof}
  The idea is the same as in Lemmas~\ref{lemma:rennala} and \ref{lemma:local_sgd}. All workers calculate stochastic gradients in parallel, with the only difference being that the points at which they compute the stochastic gradients differ due to the asynchronous steps in the groups. Similarly, one can show that $T_{\textnormal{LA}}(B)$ is sufficient time to calculate $B = \sum_{g=1}^{s} m_g$ stochastic gradients in Algorithm~\ref{alg:grouped_async_local_sgd}, or, equivalently, to calculate $x^{1}, \dots, x^{B}$ of the main branch. The same argument can be applied to the next $B$ point of the main branch.
\end{proof}

\begin{proof}[Proof of Theorem~\ref{thm:grouped_async_local_sgd_comp}]
The proof essentially the same as the proof of Theorem~\ref{thm:rennala_computation}.
\end{proof}

\subsection{\algname{Nested Local-Async SGD}}

\begin{theorem}[\algname{Nested Local-Async SGD}]
  Consider Theorem~\ref{thm:grouped_async_local_sgd_2} and its conditions. Under the $h_i$-fixed computation model \eqref{eq:worker-time}, the computational time complexity of \algname{Nested Local-Async SGD} (Alg.~\ref{alg:grouped_async_local_sgd_2}) is
  \begin{align*} 
    \cO\left(\min\limits_{m \in [n]} \left[\left(\frac{1}{m} \sum\limits_{i=1}^{m} \frac{1}{h_i}\right)^{-1} \left(\frac{L \Delta}{\varepsilon} + \frac{\sigma^2 L \Delta}{m \varepsilon^2}\right)\right]\right) 
  \end{align*}
  with $B = \max\left\{\left\lceil\frac{\sigma^2}{\varepsilon}\right\rceil, 1\right\}$ and $B_i = \infty$\footnote{It is possible to take $B_i < \infty$, but the computational time complexity may decrease due to less utilization of workers. For simplicity, in this theorem, we take $B_i = \infty$. See also Remark~\ref{remark:gpu}.} for all $i \in [n].$
\end{theorem}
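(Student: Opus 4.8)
The plan is to mirror the proof template already used for \algname{Rennala SGD}, \algname{Local SGD}, and \algname{Local-Async SGD}: first establish a timing lemma showing that each block of $B$ consecutive main-branch points costs a bounded amount of wall-clock time, and then combine this with the iteration rate from Theorem~\ref{thm:grouped_async_local_sgd_2}. Concretely, I would define
\begin{align*}
  T_{\textnormal{NLA}}(B) \eqdef 2 \min_{m\in[n]} \left[\left(\sum_{i=1}^m \frac{1}{h_i}\right)^{-1} (B + m)\right]
\end{align*}
and prove the analogue of Lemma~\ref{lemma:grouped_async_local_sgd}: under the $h_i$-fixed computation model \eqref{eq:worker-time}, the time required to compute $x^{1}, \dots, x^{B}$ of the main branch is at most $T_{\textnormal{NLA}}(B)$ seconds, and likewise for each subsequent block of $B$ points.

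The key observation driving the lemma is that with $B_i = \infty$ the inner cluster-level synchronization condition $\sum_{j} m_{ij} = B_i$ is never triggered, so every server in every cluster runs \texttt{AsynchronousSGDGroup} (Alg.~\ref{alg:local_sgd_worker_function_mix_sgd_2}) without interruption. Hence all $n$ workers compute stochastic gradients continuously and in parallel, each at its own rate $1/h_i$, and the two-level hierarchy collapses to a flat asynchronous computation in which the global server merely waits until the total number of local steps (summed over all clusters) reaches $B$. After $t$ seconds at least $\sum_{i=1}^n \max\{\flr{t/h_i} - 1, 0\}$ relevant gradients have been produced—subtracting at most one per worker to account for a gradient that may be discarded at the global synchronization moment—and substituting $t = T_{\textnormal{NLA}}(B)$ together with the minimizing index $m^*$ gives, by the same arithmetic as in the proof of Lemma~\ref{lemma:rennala}, a lower bound of $B$. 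This certifies that $T_{\textnormal{NLA}}(B)$ seconds suffice to advance the main branch by $B$ nodes.

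With the lemma in hand, the theorem follows exactly as in the proof of Theorem~\ref{thm:rennala_computation}. Theorem~\ref{thm:grouped_async_local_sgd_2} guarantees $\frac{1}{K}\sum_{k=0}^{K-1}\Exp{\|\nabla f(x^k)\|^2} \le \varepsilon$ for $K = \cO\left(\nicefrac{B L \Delta}{\varepsilon} + \nicefrac{\sigma^2 L \Delta}{\varepsilon^2}\right)$, and since each block of $B$ main-branch points costs at most $T_{\textnormal{NLA}}(B)$ seconds, the total time is $\cO\left(T_{\textnormal{NLA}}(B) \times \nicefrac{K}{B}\right) = \cO\left(T_{\textnormal{NLA}}(B) \times \left(\nicefrac{L \Delta}{\varepsilon} + \nicefrac{\sigma^2 L \Delta}{B \varepsilon^2}\right)\right)$. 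Substituting $B = \max\left\{\lceil \nicefrac{\sigma^2}{\varepsilon}\rceil, 1\right\}$ absorbs the second term into the first and recovers the harmonic-mean form $\cO\left(\min_{m \in [n]} \left[\left(\frac{1}{m}\sum_{i=1}^{m}\frac{1}{h_i}\right)^{-1}\left(\frac{L\Delta}{\varepsilon} + \frac{\sigma^2 L\Delta}{m\varepsilon^2}\right)\right]\right)$, precisely as in Theorem~\ref{thm:rennala_computation}.

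The main obstacle I anticipate is the timing lemma's justification rather than the final arithmetic: I must argue carefully that the nested structure with $B_i = \infty$ introduces no idle time beyond the flat asynchronous case, i.e.\ that servers never stall waiting for cluster-level or server-level synchronization that would reduce the effective per-worker throughput below $1/h_i$. Once it is clear that all workers stream gradients uninterrupted into a single global counter, the counting argument is identical to the established lemmas, and the $-1$ correction per worker (for at most one discarded gradient) is precisely what is already handled in Lemma~\ref{lemma:rennala} and Lemma~\ref{lemma:local_sgd}. I would therefore state the lemma's proof by explicit reduction to those earlier lemmas, emphasizing only this reduction step and invoking the prior computations for the remainder.
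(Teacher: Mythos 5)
Your proposal is correct and follows essentially the same route as the paper, whose entire proof is a one-line reference stating that the argument is ``essentially the same as the proof of Theorem~\ref{thm:rennala_computation}.'' You simply make explicit what that reference leaves implicit: the observation that $B_i = \infty$ disables cluster-level synchronization so all $n$ workers stream gradients uninterrupted, the resulting timing lemma with the same counting arithmetic as Lemma~\ref{lemma:rennala}, and the final $\cO\left(T(B) \times \nicefrac{K}{B}\right)$ combination with the iteration rate of Theorem~\ref{thm:grouped_async_local_sgd_2}.
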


\begin{proof}
The proof essentially the same as the proof of Theorem~\ref{thm:rennala_computation}.
\end{proof}

\subsection{\algname{Async-Local SGD}}

\begin{theorem}[\algname{Async-Local SGD}]
  Consider Theorem~\ref{thm:async_plus_local} and its conditions. Under the $h_i$-fixed computation model \eqref{eq:worker-time}, the computational time complexity of \algname{Async-Local SGD} (Alg.~\ref{alg:async_plus_local}) is 
  \begin{align*} 
    \cO\left(\min\limits_{m \in [n]} \left[\left(\frac{1}{m} \sum\limits_{i=1}^{m} \frac{1}{h_i}\right)^{-1} \left(\frac{L \Delta}{\varepsilon} + \frac{\sigma^2 L \Delta}{m \varepsilon^2}\right)\right]\right) 
  \end{align*}
  with $B = \max\left\{\left\lceil\frac{\sigma^2}{\varepsilon}\right\rceil, 1\right\}$ and $M = \max\left\{\left\lceil\frac{\sigma^2}{n \varepsilon}\right\rceil, 1\right\}.$
  \label{thm:async_plus_local_comp}
\end{theorem}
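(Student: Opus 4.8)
The plan is to follow the same template as the proof of Theorem~\ref{thm:rennala_computation}: combine the iteration count from Theorem~\ref{thm:async_plus_local} with a wall-clock \emph{time lemma} that bounds how long it takes to grow the main branch \eqref{eq:kfHzyNb} by $B$ consecutive vertices. From Theorem~\ref{thm:async_plus_local} and the choices $B=\max\{\ceil{\nicefrac{\sigma^2}{\varepsilon}},1\}$, $M=\max\{\ceil{\nicefrac{\sigma^2}{n\varepsilon}},1\}$ we have $M\le B$, so $B+M=\Theta(B)$ and the required iteration count is $K=\cO\!\left(\nicefrac{(B+M)L\Delta}{\varepsilon}+\nicefrac{\sigma^2 L\Delta}{\varepsilon^2}\right)=\cO\!\left(\nicefrac{\sigma^2 L\Delta}{\varepsilon^2}+\nicefrac{L\Delta}{\varepsilon}\right)$. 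Partitioning the main branch into blocks of $B$ steps, there are $\nicefrac{K}{B}=\cO\!\left(\nicefrac{L\Delta}{\varepsilon}\right)$ blocks (in the regime $\sigma^2\ge\varepsilon$), and the total time equals (time per block) $\times$ $\nicefrac{K}{B}$.

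For the time lemma I would exploit that \algname{Async-Local SGD} is exactly \algname{Ringmaster ASGD} at the granularity of \emph{batches}: by Alg.~\ref{alg:local_sgd_worker_function_mix}, each worker $i$ produces a batch of $M$ stochastic gradients in $M h_i$ seconds, and incorporating one non-stale batch (the server update step) advances the main branch by exactly $M$ vertices. Treating each batch as a single ``super-gradient'' of a super-worker whose per-step cost is $M h_i$, the delay threshold $\delta^k<B$ measured in main-branch edges becomes a super-step delay threshold of $\ceil{\nicefrac{B}{M}}$. Applying Lemma~\ref{lem:time_R} with per-step cost $M h_i$ and delay threshold $\ceil{\nicefrac{B}{M}}$, and noting that $\ceil{\nicefrac{B}{M}}$ super-steps advance the main branch by $B$ edges, the time to grow the main branch by $B$ vertices is at most
$$T_{\text{AL}}(B)=\cO\!\left(\min_{m\in[n]}\Big[\Big(\sum_{i=1}^m \tfrac{1}{h_i}\Big)^{-1}\big(B+mM\big)\Big]\right),$$
where the $mM$ term accounts for the at most one in-progress (hence possibly discarded) batch per active worker, exactly as the $+m$ term does in Lemmas~\ref{lemma:rennala} and \ref{lemma:local_sgd}.

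Combining the two pieces and rewriting $\big(\sum_{i=1}^m 1/h_i\big)^{-1}=\tfrac1m\big(\tfrac1m\sum_{i=1}^m 1/h_i\big)^{-1}$ gives total time
$$\cO\!\left(\min_{m\in[n]}\Big[\Big(\tfrac1m\sum_{i=1}^m \tfrac{1}{h_i}\Big)^{-1}\Big(\tfrac{B}{m}+M\Big)\Big]\times\tfrac{L\Delta}{\varepsilon}\right)
=\cO\!\left(\min_{m\in[n]}\Big[\Big(\tfrac1m\sum_{i=1}^m \tfrac{1}{h_i}\Big)^{-1}\Big(\tfrac{\sigma^2 L\Delta}{m\varepsilon^2}+\tfrac{\sigma^2 L\Delta}{n\varepsilon^2}\Big)\Big]\right).$$
Because the minimum is taken over $m\le n$, the term $\nicefrac{\sigma^2 L\Delta}{n\varepsilon^2}$ is dominated by $\nicefrac{\sigma^2 L\Delta}{m\varepsilon^2}$, so the whole expression is bounded above by the claimed $\cO\!\big(\min_{m}[(\tfrac1m\sum 1/h_i)^{-1}(\nicefrac{L\Delta}{\varepsilon}+\nicefrac{\sigma^2 L\Delta}{m\varepsilon^2})]\big)$ (the extra $\nicefrac{L\Delta}{\varepsilon}$ inside the target only enlarges it). The degenerate case $\sigma^2<\varepsilon$, where $B=M=1$, is checked directly: then $K=\cO(\nicefrac{L\Delta}{\varepsilon})$, $T_{\text{AL}}(1)=\cO(\min_m (\tfrac1m\sum 1/h_i)^{-1})$, and the product reduces to the $\nicefrac{L\Delta}{\varepsilon}$-dominated term of the target.

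The main obstacle is the time lemma rather than the final arithmetic: one must control the stochastic gradients discarded because of staleness under fully asynchronous batch updates. I expect the cleanest route to be the super-worker reduction above, which offloads this staleness bookkeeping onto the already-established Ringmaster estimate Lemma~\ref{lem:time_R}; deriving $T_{\text{AL}}(B)$ from scratch by a direct count (in the style of Lemma~\ref{lemma:rennala}) is possible but messier, since local batching and variable delays interact. The ceiling/floor rounding in $\nicefrac{B}{M}$ and in the per-worker gradient counts is routine and does not affect the $\cO$-rate.
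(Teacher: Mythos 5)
Your proposal is correct and it reaches the paper's bound by the same overall decomposition (iteration count from Theorem~\ref{thm:async_plus_local} divided into blocks of $B$ main-branch vertices, times a per-block time bound, then the same arithmetic with the choices of $B$ and $M$), but your proof of the key time lemma takes a genuinely different route. The paper's Lemma~\ref{lemma:async_plus_local_comp} proves the bound $T_{\textnormal{AL}}(B,M)=\cO\bigl(\min_{m\in[n]}\bigl[\bigl(\sum_{i=1}^m \nicefrac{1}{h_i}\bigr)^{-1}(B+Mm)\bigr]\bigr)$ from scratch, by contradiction: after $t$ seconds the main branch has advanced by at least $\sum_{i=1}^{n}\max\bigl\{M\lfloor \nicefrac{t}{(Mh_i)}\rfloor - M,\,0\bigr\}$ vertices, where the subtraction of $M$ encodes that during any $B-1$ consecutive updates each worker's batch can be discarded at most once, and substituting $T_{\textnormal{AL}}(B,M)$ forces at least $B$ new vertices. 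You instead reduce \algname{Async-Local SGD} to \algname{Ringmaster ASGD} at batch granularity (super-workers with per-step cost $Mh_i$ and delay threshold $\lceil \nicefrac{B}{M}\rceil$) and invoke the existing Lemma~\ref{lem:time_R}. This reduction is sound: every accepted update appends exactly $M$ edges to the main branch \eqref{eq:kfHzyNb}, so the edge-delay condition $\delta^k<B$ is equivalent to the super-step delay being below $\lceil \nicefrac{B}{M}\rceil$, and $M\lceil \nicefrac{B}{M}\rceil\le B+M\le 2B$ recovers the same $T_{\textnormal{AL}}(B,M)$ up to constants. Your route buys reuse of established machinery and offloads the staleness bookkeeping onto an already-proven result; the paper's direct count is self-contained, makes the discarded-batch accounting explicit, and does not depend on the precise statement of Lemma~\ref{lem:time_R} (which, as printed in the paper, contains typos: $\tau_i$ where $h_i$ is meant, and $R$ where the threshold is meant). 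One small imprecision in your closing arithmetic: the displayed ``equality'' for the total time drops an $\nicefrac{L\Delta}{\varepsilon}$ contribution in the regime $\varepsilon\le\sigma^2<n\varepsilon$, where $M=1$ but $\nicefrac{\sigma^2}{(n\varepsilon)}<1$; as you yourself note, this is harmless because the target complexity contains $\nicefrac{L\Delta}{\varepsilon}$, but that step should be stated as an upper bound rather than an equality.
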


\begin{lemma}
  Let us define
  \begin{align}
    T_{\textnormal{AL}}(B,M) \eqdef 2 \min_{m\in[n]} \left[\left(\sum_{i=1}^m \frac{1}{h_i}\right)^{-1} (B + M m)\right]
  \end{align}
  Under the $h_i$-fixed computation model \eqref{eq:worker-time}, the time required to calculate $x^{1}, \dots, x^{B}$ of the main branch is at most $T_{\textnormal{AL}}(B,M)$ seconds, the time required to calculate $x^{B + 1}, \dots, x^{2B}$ is at most $T_{\textnormal{AL}}(B,M)$ seconds, and so on.
  \label{lemma:async_plus_local_comp}
\end{lemma}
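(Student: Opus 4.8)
The plan is to mirror the counting argument of Lemmas~\ref{lemma:rennala} and~\ref{lemma:local_sgd}, with the single structural change that in \algname{Async-Local SGD} each worker sends a completed \emph{round} of $M$ stochastic gradients rather than a single gradient, and each accepted round advances the main branch by exactly $M$ nodes. I would fix a block of $B$ consecutive main-branch nodes $x^{N+1}, \dots, x^{N+B}$ (starting with $N = 0$) and show that the head of the main branch cannot remain inside this block for longer than $T_{\textnormal{AL}}(B,M)$ seconds. Arguing by contradiction, suppose the head stays inside the block throughout a window of length $t$. Under the $h_i$-fixed computation model \eqref{eq:worker-time}, worker $i$ completes at least $\flr{t/(M h_i)} \geq t/(M h_i) - 1$ rounds in $t$ seconds, i.e.\ at least $t/h_i - M$ stochastic gradients packaged into complete rounds.

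The next step is the acceptance bookkeeping, which I expect to be the main obstacle. Because Algorithm~\ref{alg:async_plus_local} forces every worker to restart its local steps from the current head after each send (both in the accepted and in the rejected branch), the only ``stale'' round a worker can have is the one it was executing when the window opened, which was based at a point before $x^N$. Every round it starts afterwards is based at a point inside the block, so while the head remains in $[x^N, x^{N+B})$ such a round completes with tree distance $\delta^k < B$ and is therefore accepted by the test $\delta^k < B$. Hence each worker has at most one completed round that is rejected and at most one round still in progress at the end of the window, wasting at most $2M$ gradients in total. This is exactly where the $M m$ term in the definition of $T_{\textnormal{AL}}(B,M)$ comes from, and the factor $2$ in that definition is the slack that absorbs these two wasted rounds per worker, so no sharpening of the floor estimates is needed.

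Combining the two counts, the $m$ fastest workers contribute at least $\sum_{i=1}^{m}\left(t/h_i - 2M\right) = t \sum_{i=1}^{m} 1/h_i - 2 M m$ accepted-round gradients. Choosing the index $m^*$ that attains the minimum in the definition of $T_{\textnormal{AL}}(B,M)$ and substituting $t = T_{\textnormal{AL}}(B,M) = 2 \left(\sum_{i=1}^{m^*} 1/h_i\right)^{-1}(B + M m^*)$ yields $2(B + M m^*) - 2 M m^* = 2B \geq B$. Since each accepted round advances the head by $M$ nodes, this means the head would have moved by at least $2B > B$ nodes and thus exited the block, contradicting the assumption that it stayed inside. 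Therefore the time to produce $x^{N+1}, \dots, x^{N+B}$ is at most $T_{\textnormal{AL}}(B,M)$, and since the argument depends only on the block structure and on the restart rule of Algorithm~\ref{alg:async_plus_local}, it applies verbatim to the next block $x^{B+1}, \dots, x^{2B}$ and to every subsequent one, which is the claim.
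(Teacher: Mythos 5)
Your proposal is correct and follows essentially the same route as the paper's proof: a contradiction argument that counts completed rounds of $M$ gradients per worker via $\lfloor t/(M h_i)\rfloor$, charges at most one rejected round plus one incomplete round per worker (the $-2M$ slack absorbed by the factor $2$ in $T_{\textnormal{AL}}$), sums over the minimizing prefix $m^*$, and substitutes $t = T_{\textnormal{AL}}(B,M)$ to get at least $2B \geq B$ accepted gradients. Your justification that only the round in progress when the block opens can be rejected—because every later round starts at the current head inside the block and the restart rule keeps $\delta^k < B$—is a slightly more explicit rendering of the paper's remark that $\delta^k \geq B$ can occur at most once per worker during $B-1$ consecutive updates, but it is the same argument.
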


\begin{proof}
  Let us fix $B$ and $M \geq 1.$ Note that 
  \begin{align}
    T_{\textnormal{AL}}(B,M) \eqdef 2 \min_{m\in[n]} \left[\left(\sum_{i=1}^m \frac{1}{h_i}\right)^{-1} (B + M m)\right] = 2 \left(\sum_{i=1}^{m^*} \frac{1}{h_i}\right)^{-1} (B + M m^*)
    \label{eq:jyojpUFuuTnWRxYhSd}
  \end{align}
  for some $m^* \in [n],$ which depends on $B$ and $M.$

  For any $k \geq 1$, consider the sequence $x^{k}, \dots, x^{k + B}$ on the main branch. Using a proof by contradiction, assume that it requires more than $T_{\textnormal{AL}}(B,M)$ seconds to calculate $x^{k + 1}, \dots, x^{k + B}.$ Thus, the algorithm can progress up to $x^{k + B - 1}$ after $T_{\textnormal{AL}}(B,M)$ seconds.
  
  In Algorithm~\ref{alg:async_plus_local}, each worker computes $M$ stochastic gradients and sends their sum to the server. The server then performs the update $w^{k+1} = w^{k} - \gamma \sum_{p = 0}^{M - 1} \nabla f(z^{p}_{i_k}; \eta^{p}_{i_k}),$ which is equivalent to extending the main branch by $M$ points. Therefore, after $t$ seconds, the main branch will have progressed by at least
  \begin{align} 
    \sum_{i=1}^{n} \max\left\{M \left\lfloor\frac{t}{M h_i}\right\rfloor - M, 0\right\}, 
    \label{eq:pjvGDHlntkmUMbtMmnp}
  \end{align}
  points (which is less than $B$ by assumption). This is because worker $i$ requires at most $M h_i$ seconds to compute $M$ stochastic gradients before sending them to the server. Note that during any $B - 1$ consecutive updates on the main branch, the server may ignore $M$ gradients from each worker at most once, because $\delta^k$ can be $\geq B$ at most once during $B - 1$ consecutive updates. This explains the subtraction of $M$ in the formula.

  Substituting $T_{\textnormal{AL}}(B,M)$ to \eqref{eq:pjvGDHlntkmUMbtMmnp},
  \begin{align*}
    &\sum_{i=1}^{n} \max\left\{M \left\lfloor\frac{T_{\textnormal{AL}}(B,M)}{M h_i}\right\rfloor - M, 0\right\} \geq \sum_{i=1}^{m^*} \max\left\{M \left\lfloor\frac{T_{\textnormal{AL}}(B,M)}{M h_i}\right\rfloor - M, 0\right\} \\
    &\geq \sum_{i=1}^{m^*} M \left\lfloor\frac{T_{\textnormal{AL}}(B,M)}{M h_i}\right\rfloor - M m^* \geq \sum_{i=1}^{m^*} \frac{T_{\textnormal{AL}}(B,M)}{h_i} - 2 M m^*
  \end{align*}
  because $\flr{x} \geq x - 1$ for all $x \in \R.$ Using \eqref{eq:jyojpUFuuTnWRxYhSd},
  \begin{align*}
    &\sum_{i=1}^{n} \max\left\{M \left\lfloor\frac{T_{\textnormal{AL}}(B,M)}{M h_i}\right\rfloor - M, 0\right\} \geq 2 (B + M m^*) - 2 M m^* \geq B.
  \end{align*}
  Thus, after $T_{\textnormal{AL}}(B,M)$ seconds, the server collects $B$ stochastic gradients. It is equivalent to calculating $x^{k + 1}, \dots, x^{k + B},$ which contradicts the assumption.
\end{proof}

\begin{proof}[Proof of Theorem~\ref{thm:async_plus_local_comp}]
  Due to Theorem~\ref{thm:async_plus_local}, we know that 
  $\frac{1}{K}\sum_{k=0}^{K-1}\Exp{\|\nabla f(x^k)\|^2} \le \varepsilon$ for
  $$
  K = \left\lceil\frac{4 (B + M - 1) L \Delta}{\varepsilon} + \frac{8 \sigma^2 L \Delta}{\varepsilon^2}\right\rceil.
  $$
  From Lemma~\ref{lemma:rennala}, we know that the time required to calculate $x^{1}, \dots, x^{B}$ of the main branch is at most $T_{\textnormal{AL}}(B,M)$ seconds, the time required to calculate $x^{B + 1}, \dots, x^{2B}$ is at most $T_{\textnormal{AL}}(B,M)$ seconds, and so on. Thus, the total time to find an $\varepsilon$--stationary point is
  \begin{align*}
    \cO\left(T_{\textnormal{AL}}(B,M) \times \frac{K}{B}\right) = \cO\left(T_{\textnormal{AL}}(B,M) \times \left(\frac{L \Delta (B + M)}{B \varepsilon} + \frac{\sigma^2 L \Delta}{B \varepsilon^2}\right)\right).
  \end{align*}
  Using the choice of $B$ and $M,$ we obtain $M \leq B$ and 
  \begin{align*}
    \cO\left(T_{\textnormal{R}}(B) \times \frac{K}{B}\right) 
    &= \cO\left(T_{\textnormal{AL}}(B,M) \times \left(\frac{L \Delta}{\varepsilon} + \frac{\sigma^2 L \Delta}{B \varepsilon^2}\right)\right) \\
    &= \cO\left(\min_{m\in[n]} \left[\left(\sum_{i=1}^m \frac{1}{h_i}\right)^{-1} (B + M m)\right] \times \left(\frac{L \Delta}{\varepsilon} + \frac{\sigma^2 L \Delta}{B \varepsilon^2}\right)\right) \\
    &= \cO\left(\min_{m\in[n]} \left[\left(\sum_{i=1}^m \frac{1}{h_i}\right)^{-1} (B + M m)\right] \times \frac{L \Delta}{\varepsilon}\right)
  \end{align*}
  because $B \geq \frac{\sigma^2}{\varepsilon}.$ Since $M \leq \frac{\sigma^2}{n \varepsilon} + 1$ and $B \leq \frac{\sigma^2}{\varepsilon} + 1,$
  \begin{align*}
    \cO\left(T_{\textnormal{R}}(B) \times \frac{K}{B}\right) 
    &= \cO\left(\min_{m\in[n]} \left[\left(\sum_{i=1}^m \frac{1}{h_i}\right)^{-1} \left(1 + \frac{\sigma^2}{\varepsilon} + m + \frac{m \sigma^2}{n \varepsilon}\right)\right] \times \frac{L \Delta}{\varepsilon}\right) \\
    &= \cO\left(\min_{m\in[n]} \left[\left(\sum_{i=1}^m \frac{1}{h_i}\right)^{-1} \left(\frac{\sigma^2}{\varepsilon} + m\right)\right] \times \frac{L \Delta}{\varepsilon}\right) \\
    &= \cO\left(\min_{m\in[n]} \left[\left(\frac{1}{m} \sum_{i=1}^m \frac{1}{h_i}\right)^{-1} \left(\frac{L \Delta}{\varepsilon} + \frac{\sigma^2 L \Delta}{m \varepsilon^2} \right)\right]\right),
  \end{align*}
  where we use that $m \leq n$ for all $m \in [n].$
\end{proof}

\subsection{\algname{Cycle SGD}}

\begin{theorem}[\algname{Cycle SGD}]
  Consider Theorem~\ref{thm:circular_local_sgd} and its conditions. Under the $h_i$-fixed computation model \eqref{eq:worker-time}, the computational time complexity of \algname{Cycle SGD} (Alg.~\ref{alg:circular_local_sgd}) is
  \begin{align*} 
    \cO\left(\max\limits_{i \in [n]} h_i \left(\frac{L \Delta}{\varepsilon} + \frac{\sigma^2 L \Delta}{m \varepsilon^2}\right)\right) 
  \end{align*}
  with $s = \min\left\{\max\left\{\ceil{\frac{n^2\varepsilon}{\sigma^2}}, 1\right\}, n\right\}.$
  \label{thm:circular_local_sgd_computation}
\end{theorem}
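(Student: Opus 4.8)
The plan is to follow the same two-part template used for \algname{Rennala SGD} and \algname{Local SGD}: first invoke the iteration rate of Theorem~\ref{thm:circular_local_sgd} to fix the number $K$ of main-branch points that must be computed, and then prove a timing lemma bounding how many seconds it takes to extend the main branch by that many points under the $h_i$-fixed model. From Theorem~\ref{thm:circular_local_sgd} it suffices to compute $K = \lceil \frac{8 n^2 L \Delta}{s \varepsilon} + \frac{8 \sigma^2 L \Delta}{\varepsilon^2}\rceil$ points of the main branch.

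The heart of the argument is the timing lemma, which I would organize into synchronous rounds corresponding to one iteration of the inner $g$-loop of Algorithm~\ref{alg:circular_local_sgd}. In each such round every one of the $n$ workers computes exactly one stochastic gradient, so a round lasts at most $\max_{i\in[n]} h_i$ seconds; afterwards one group $G_g$ synchronizes and injects all its accumulated local gradients into the main branch. The key bookkeeping fact (already used in the proof of Theorem~\ref{thm:circular_local_sgd}) is that each group synchronizes exactly once per cycle of $\lceil n/s\rceil$ rounds, so between two of its synchronizations a worker accumulates $M_i = \lceil n/s\rceil$ local steps. Hence one full cycle lasts at most $\lceil n/s\rceil \max_i h_i$ seconds and extends the main branch by $\sum_{i=1}^{n} M_i = n\lceil n/s\rceil$ points, an advancement rate of $n/\max_i h_i$ points per second. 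I would record this as: after $c$ full cycles the main branch has length at least $n(c-1)\lceil n/s\rceil$, where the ``$-1$'' absorbs the at-most-one-cycle worth of pending (computed-but-not-yet-synchronized) gradients and the initial ramp-up while the $M_i$ grow from $0$. Reaching length $K$ therefore costs at most $K\max_i h_i / n + 2\lceil n/s\rceil\max_i h_i$ seconds.

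Combining the two parts, the runtime is $\cO(K \max_i h_i / n)$ once the ramp-up term $\lceil n/s\rceil\max_i h_i \le \tfrac{2n}{s}\max_i h_i$ is checked to be of lower order (it is, since $K \ge \frac{8n^2L\Delta}{s\varepsilon}$ forces $K\max_i h_i/n \ge \frac{8nL\Delta}{s\varepsilon}\max_i h_i$, which dominates $\frac{2n}{s}\max_i h_i$ for small $\varepsilon$). Substituting $K$ gives $\cO\!\big(\max_i h_i (\frac{n L\Delta}{s\varepsilon} + \frac{\sigma^2 L\Delta}{n\varepsilon^2})\big)$. The last step is a three-case analysis of $s = \min\{\max\{\lceil n^2\varepsilon/\sigma^2\rceil,1\},n\}$ proving $\frac{n}{s} \le 1 + \frac{\sigma^2}{n\varepsilon}$ in every regime: if $s=n$ then $n/s=1$; if $s=\lceil n^2\varepsilon/\sigma^2\rceil \ge n^2\varepsilon/\sigma^2$ then $n/s \le \sigma^2/(n\varepsilon)$; and if $s=1$ then $n^2\varepsilon/\sigma^2<1$ forces $n < \sigma^2/(n\varepsilon)$. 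This collapses the first term into the second and yields $\cO\!\big(\max_i h_i(\frac{L\Delta}{\varepsilon} + \frac{\sigma^2 L\Delta}{n\varepsilon^2})\big)$, i.e.\ the claimed bound (with $m=n$).

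The main obstacle I anticipate is the timing lemma rather than the final algebra. Unlike \algname{Rennala SGD}, the main branch here does not advance in clean batches of a fixed size: the circular schedule means that at any instant different groups hold different numbers of pending gradients, and the per-round advancement $\sum_{i\in G_g}M_i$ is itself variable (and smaller during ramp-up). Making the ``$n\lceil n/s\rceil$ points per cycle'' claim rigorous requires tracking, per worker, how many computed-but-not-yet-synchronized gradients exist at a given time, and arguing this pending count never exceeds $\lceil n/s\rceil$, so that the main-branch length lower-bounds the total computed count minus an $\cO(n\cdot n/s)$ slack. A secondary care point is whether one reads Algorithm~\ref{alg:circular_local_sgd} as strictly synchronous per round (the interpretation I use) or as fully asynchronous; since we only need an upper bound, assuming a synchronous barrier costing $\max_i h_i$ per round is safe and keeps the counting clean, and it is consistent with \algname{Cycle SGD} being marked non-optimal in computational complexity in Table~\ref{tab:methods_summary_check}.
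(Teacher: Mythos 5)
Your proposal is correct and follows essentially the same route as the paper's proof: invoke Theorem~\ref{thm:circular_local_sgd} to fix $K = \lceil \nicefrac{8 n^2 L \Delta}{s \varepsilon} + \nicefrac{8 \sigma^2 L \Delta}{\varepsilon^2}\rceil$, bound each synchronous round by $\max_{i\in[n]} h_i$ seconds with the main branch advancing at rate $\Theta(n)$ points per round after a warm-up of $\lceil n/s\rceil$ rounds, absorb the warm-up using $\nicefrac{L\Delta}{\varepsilon} \geq \nicefrac{1}{2}$ (your ``for small $\varepsilon$'' is exactly the paper's without-loss-of-generality step, since otherwise $x^0$ is already $\varepsilon$-stationary), and finally use the choice of $s$ to collapse $\nicefrac{n L\Delta}{s\varepsilon}$ into $\cO(\nicefrac{L\Delta}{\varepsilon} + \nicefrac{\sigma^2 L\Delta}{n\varepsilon^2})$. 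Your per-cycle accounting (versus the paper's per-round accounting of $s\lceil n/s\rceil$ gradients per synchronization) and your explicit three-case analysis of $s$ --- which the paper compresses to ``due to the choice of $s$'' --- are equivalent, and your reading of the undefined $m$ in the statement as $m=n$ matches what the paper actually proves.
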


\begin{proof}
  According to Theorem~\ref{thm:circular_local_sgd}, 
  $\frac{1}{K}\sum_{k=0}^{K-1}\Exp{\|\nabla f(x^k)\|^2} \le \varepsilon$ for
  $$
  K = \left\lceil\frac{8 n^2 L \Delta}{s \varepsilon} + \frac{8 \sigma^2 L \Delta}{\varepsilon^2}\right\rceil.
  $$

  In the beginning, the algorithm has ``warm-up'', where, in the first iteration of the inner loop, the server collects $s$ stochastic gradients from $s$ workers, which is equivalent to calculating $x^{1}, \dots, x^{s}$ of the main branch. Then, the server collects $2 s$ stochastic gradients from the next group of $s$ workers because they calculated $s$ stochastic in the previous iteration. Starting from the $\ceil{\frac{n}{s}}$\textsuperscript{th} iteration, each group of $s$ workers will return $s \times \ceil{\frac{n}{s}}$ stochastic gradients in every subsequent iteration. Every iterations takes at most $\max\limits_{i \in [n]} h_i$ seconds, because they work in parallel and calculate one stochastic gradient.
  
  Thus, the total time to calculate $x^1, \dots, x^K$ and find an $\varepsilon$--stationary point is
  \begin{align*}
    &\cO\left(\underbrace{\max\limits_{i \in [n]} h_i \times \ceil{\frac{n}{s}}}_{\textnormal{``warm-up'' phase}} + \max\limits_{i \in [n]} h_i \times \frac{K}{\left(s \times \ceil{\frac{n}{s}}\right)}\right) \\
    &=\cO\left(\max\limits_{i \in [n]} h_i \times \frac{n}{s} + \max\limits_{i \in [n]} h_i \times \left(\frac{n L \Delta}{s \varepsilon} + \frac{\sigma^2 L \Delta}{n \varepsilon^2}\right)\right) \\
    &=\cO\left(\max\limits_{i \in [n]} h_i \times \left(\frac{n L \Delta}{s \varepsilon} + \frac{\sigma^2 L \Delta}{n \varepsilon^2}\right)\right).
  \end{align*}
  because $\frac{L \Delta}{\varepsilon} \geq \frac{1}{2}$ without loss of generality (if $\frac{L \Delta}{\varepsilon} < \frac{1}{2},$ then $x^0$ is an $\varepsilon$--stationary point). Finally,
  \begin{align*}
    &\cO\left(\max\limits_{i \in [n]} h_i \times \left(\frac{n L \Delta}{s \varepsilon} + \frac{\sigma^2 L \Delta}{n \varepsilon^2}\right)\right) = \cO\left(\max\limits_{i \in [n]} h_i \times \left(\frac{L \Delta}{\varepsilon} + \frac{\sigma^2 L \Delta}{n \varepsilon^2}\right)\right)
  \end{align*}
  due to the choice of $s.$

\end{proof}

\subsection{\algname{Dual-Process SGD}}

\begin{theorem}[\algname{Dual-Process SGD}]
  Consider Theorem~\ref{thm:bi_process} and its conditions. Under the $h_i$-fixed computation model \eqref{eq:worker-time}, the computational time complexity of \algname{Dual-Process SGD} (Alg.~\ref{alg:bi_local_sgd}) is
  \begin{align*} 
    \cO\left(\min\limits_{m \in [n]} \left[\left(\frac{1}{m} \sum\limits_{i=1}^{m} \frac{1}{h_i}\right)^{-1} \left(\frac{L \Delta}{\varepsilon} + \frac{\sigma^2 L \Delta}{m \varepsilon^2}\right)\right]\right) 
  \end{align*}
  with $B = \max\left\{\left\lceil\frac{\sigma^2}{\varepsilon}\right\rceil, 1\right\}.$
  \label{thm:bi_local_computation}
\end{theorem}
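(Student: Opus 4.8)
The plan is to mirror exactly the three-step template already used for \algname{Rennala SGD} (Theorem~\ref{thm:rennala_computation}) and \algname{Local SGD} (Theorem~\ref{thm:local_computation}): first prove a timing lemma for a single ``block'' of $B$ main-branch points, then divide the iteration bound $K$ from Theorem~\ref{thm:bi_process} into such blocks and multiply. The crucial starting observation is that Theorem~\ref{thm:bi_process} already tells us \algname{Dual-Process SGD} has the \emph{same} computation tree (\eqref{eq:pbHgXapdjZGUQs} and \eqref{eq:yapgNwHcHepGaY}) as \algname{Local SGD}, so the iteration rate is $K = \lceil 4BL\Delta/\varepsilon + 8\sigma^2 L\Delta/\varepsilon^2\rceil$ with $B = \max\{\lceil \sigma^2/\varepsilon\rceil, 1\}$. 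Everything then reduces to a timing estimate under the $h_i$-fixed model.

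First I would state the timing lemma asserting that $T_{\textnormal{DP}}(B) \eqdef 2\min_{m\in[n]}[(\sum_{i=1}^m 1/h_i)^{-1}(B+m)]$ seconds suffice to compute each consecutive block $x^1,\dots,x^B$, then $x^{B+1},\dots,x^{2B}$, and so on. The proof of this lemma is the parallel-counting argument of Lemma~\ref{lemma:local_sgd}: all $n$ workers compute stochastic gradients in parallel, so after $t$ seconds worker $i$ has produced $\lfloor t/h_i\rfloor$ gradients, giving at least $\sum_{i=1}^n \lfloor t/h_i\rfloor$ (or $\sum_{i=1}^n \max\{\lfloor t/h_i\rfloor - 1, 0\}$ if a worker's in-flight gradient cannot be interrupted) usable gradients in total. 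Plugging $t = T_{\textnormal{DP}}(B)$ and restricting the sum to the minimizing index $m^*$, the floor inequality $\lfloor x\rfloor \ge x - 1$ yields $\sum_{i=1}^{m^*} T_{\textnormal{DP}}(B)/h_i - 2m^* = 2(B+m^*) - 2m^* \ge B$, exactly as in Lemmas~\ref{lemma:rennala} and~\ref{lemma:local_sgd}. Collecting $B$ gradients is equivalent to extending the main branch by $B$ points, which proves the lemma.

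Finally I would assemble the total complexity as $\mathcal{O}(T_{\textnormal{DP}}(B)\times K/B)$. Substituting $K = \mathcal{O}(BL\Delta/\varepsilon + \sigma^2 L\Delta/\varepsilon^2)$ gives $\mathcal{O}(T_{\textnormal{DP}}(B)\times(L\Delta/\varepsilon + \sigma^2 L\Delta/(B\varepsilon^2)))$, and then the choice $B = \max\{\lceil\sigma^2/\varepsilon\rceil, 1\}$ (so that $\sigma^2/(B\varepsilon) \le 1$) collapses this, via the same algebra as in the proof of Theorem~\ref{thm:rennala_computation}, to the claimed bound $\mathcal{O}(\min_{m\in[n]}[(\tfrac1m\sum_{i=1}^m 1/h_i)^{-1}(L\Delta/\varepsilon + \sigma^2 L\Delta/(m\varepsilon^2))])$.

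The main obstacle---and the only place where \algname{Dual-Process SGD} genuinely differs from \algname{Local SGD}---is justifying that the two-process, send-as-soon-as-possible communication scheme does not reduce the effective gradient throughput and hence does not invalidate the counting bound. Here I would note that Process~1 (computation) in Algorithm~\ref{alg:bi_local_sgd_worker_function} runs in a loop entirely independent of Process~2 (transmission), so no worker ever idles waiting to communicate; under the $h_i$-fixed computation model, which charges only for gradient computation and treats communication as free, the gradient-production rate is therefore identical to that of \algname{Local SGD}. This is precisely why the paper can assert that ``the proof is essentially the same as the proof of Theorem~\ref{thm:rennala_computation}'': the communication advantage of \algname{Dual-Process SGD} is invisible in the purely computational model and surfaces only in the $(h,\tau)$ analysis of Section~\ref{sec:proof_communication_general}.
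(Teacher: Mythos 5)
Your proposal is correct and follows essentially the same route as the paper: the paper's proof is a one-line reduction stating that \algname{Dual-Process SGD} is equivalent to \algname{Local SGD} when communication times are ignored, and then invokes the proof of Theorem~\ref{thm:local_computation} (i.e., the parallel-counting timing lemma with $T(B) = 2\min_{m\in[n]}\bigl[\bigl(\sum_{i=1}^m \nicefrac{1}{h_i}\bigr)^{-1}(B+m)\bigr]$ followed by the block-assembly algebra of Theorem~\ref{thm:rennala_computation}), which is exactly what you unroll. Your closing paragraph justifying the reduction---Process~1 never idles for Process~2, so gradient throughput under the $h_i$-fixed model matches \algname{Local SGD}---is precisely the observation the paper relies on, just stated more explicitly.
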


\begin{proof}
    The proof is essentially the same as the proof of Theorem~\ref{thm:local_computation} since \algname{Dual-Process SGD} is equivalent to \algname{Local SGD} if the communication times are ignored.
\end{proof}

\newpage

\section{Total Time Complexities of Algorithms under $(h,\tau)$-Fixed Computation Model}
\label{sec:proof_communication}

To compare the communication complexities and the total time complexities of the methods, we now assume that it takes $\tau$ seconds to send a vector from a worker to a parameter server and $\tau$ seconds to send a vector from the server to the workers in the centralized setting. Alternatively, it takes $\tau$ seconds to send a vector to all other workers in the decentralized setting. Moreover, we assume that all workers have the same computational performance: worker $i$ takes $h$ seconds to compute a single stochastic gradient for all $i \in [n]$. We refer to this as the \emph{$(h,\tau)$-fixed computation model}.

Note that it is possible to assume that each worker has its own communication time bound $\tau_i$ and computation time bound $h_i$ and consider \emph{$(h_i,\tau_i)$-fixed computation model} \citep{tyurin2024shadowheart}. However, for simplicity, we assume $\tau_i = \tau$ and $h_i = h$ for all $i \in [n].$ See Section~\ref{sec:proof_communication_general} for a more general case \emph{$(h_i,\tau_i)$-fixed computation model}.

\subsection{\algname{Rennala SGD}}

\begin{theorem}
  Consider Theorem~\ref{thm:rennala} and its conditions. Under $(h,\tau)$-fixed computation model, the total time complexity of \algname{Rennala SGD} (Alg.~\ref{alg:rennala}) is
  \begin{align*} 
    \cO\left(\tau \times \frac{L \Delta}{\varepsilon} + h \times \left(\frac{L \Delta}{\varepsilon} + \frac{\sigma^2 L \Delta}{n \varepsilon^2}\right)\right) 
  \end{align*}
  with $B = \max\left\{\left\lceil\frac{\sigma^2}{\varepsilon}\right\rceil, 1\right\}.$
  \label{thm:rennala_comnunication}
\end{theorem}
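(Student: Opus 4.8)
The plan is to combine the iteration complexity of Theorem~\ref{thm:rennala} with a per-round wall-clock bound under the $(h,\tau)$-fixed computation model, exactly mirroring the structure of the proof of Theorem~\ref{thm:rennala_computation}. Recall that Theorem~\ref{thm:rennala} guarantees an $\varepsilon$-stationary point after
$$K = \left\lceil \frac{4BL\Delta}{\varepsilon} + \frac{8\sigma^2 L\Delta}{\varepsilon^2}\right\rceil$$
iterations of the main branch, which corresponds to $K/B$ global rounds, since each round advances the main branch by exactly $B$ edges (as in Lemma~\ref{lemma:rennala}). First I would bound the time of one such round and then multiply by $K/B$.

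For the time of a single round I would separately account for computation and communication. The computation part is handled as in Lemma~\ref{lemma:rennala}: specializing $h_i = h$ gives $\big(\sum_{i=1}^m 1/h_i\big)^{-1} = h/m$, so $T_{\textnormal{R}}(B) = 2\min_{m\in[n]} \tfrac{h}{m}(B+m)$, whose minimum over $m\in[n]$ is attained at $m=n$ and equals $O\!\left(h\max\{1, B/n\}\right)$. For the communication part, the key observation is that each worker accumulates its stochastic gradients into a single local vector $g_i^k$ and the server performs a single aggregation per round (e.g.\ one \texttt{AllReduce}, which Rennala SGD supports); since only one vector per worker is transmitted, this costs $O(\tau)$ seconds \emph{regardless of $B$}. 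Because local accumulation overlaps with computation, the per-round time is
$$O\!\left(\tau + h\max\left\{1, \tfrac{B}{n}\right\}\right).$$

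Then I would multiply by the number of rounds $K/B$ and substitute $B = \max\{\lceil\sigma^2/\varepsilon\rceil, 1\}$, yielding a total time of $\tfrac{K}{B}\,O\!\left(\tau + h(B/n + 1)\right) = O\!\left(\tau\tfrac{K}{B} + h\big(\tfrac{K}{n} + \tfrac{K}{B}\big)\right)$. Using $B \ge \sigma^2/\varepsilon$ gives $\tfrac{K}{B} = \Theta\!\left(\tfrac{L\Delta}{\varepsilon} + \tfrac{\sigma^2 L\Delta}{B\varepsilon^2}\right) = \Theta\!\left(\tfrac{L\Delta}{\varepsilon}\right)$, which produces the communication term $\tau\tfrac{L\Delta}{\varepsilon}$ and the first computation term $h\tfrac{L\Delta}{\varepsilon}$. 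Using $B \le \sigma^2/\varepsilon + 1$ gives $\tfrac{K}{n} = O\!\left(\tfrac{L\Delta}{\varepsilon} + \tfrac{\sigma^2 L\Delta}{n\varepsilon^2}\right)$, producing the second computation term $h\tfrac{\sigma^2 L\Delta}{n\varepsilon^2}$. Adding these gives the claimed bound.

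The main obstacle is establishing the $O(\tau)$ (rather than $O(B\tau/n)$) per-round communication cost. This hinges entirely on the local-accumulation structure of \algname{Rennala SGD}: because each worker sums its $\approx B/n$ gradients locally and only a single aggregated vector is communicated per round, the per-round communication is independent of $B$. Were gradients sent individually, the communication contribution would instead scale like $\tfrac{\sigma^2 L\Delta\,\tau}{n\varepsilon^2}$, which breaks the claimed bound when $\sigma^2/\varepsilon > n$. Once this per-round bound is in place, the remainder is the same substitution and case split ($\sigma^2/\varepsilon \le n$ versus $\sigma^2/\varepsilon > n$) already used for Theorem~\ref{thm:rennala_computation}, and the calculation is routine.
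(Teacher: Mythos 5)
Your proposal is correct and follows essentially the same route as the paper's proof: one aggregated communication per round of $B$ gradients gives a total communication cost of $\cO\left(\tau \cdot \nicefrac{K}{B}\right) = \cO\left(\tau \cdot \nicefrac{L\Delta}{\varepsilon}\right)$, while the computation term is inherited from the $h_i$-fixed-model analysis (Lemma~\ref{lemma:rennala}/Theorem~\ref{thm:rennala_computation}) specialized to $h_i = h$, and the final substitution of $B = \max\{\lceil\nicefrac{\sigma^2}{\varepsilon}\rceil, 1\}$ is identical. The only cosmetic difference is that you bound the per-round time and multiply by $\nicefrac{K}{B}$, whereas the paper sums the total communication and total computation times separately; these are equivalent.
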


\begin{proof}
  Note that the communication between of vectors happens every $B$ calculated stochastic gradients, which is equivalent to every $B$ updates of the main branch. Thus the total number of communications is 
  \begin{align*}
    \cO\left(\frac{K}{B}\right),
  \end{align*} 
  where $K = \Theta\left(\frac{B L \Delta}{\varepsilon} + \frac{\sigma^2 L \Delta}{\varepsilon^2}\right)$ due to Theorem~\ref{thm:rennala}. The total communication complexity is
  \begin{align*}
    \cO\left(\tau \times \frac{K}{B}\right) = \cO\left(\frac{\tau}{B} \times \left(\frac{B L \Delta}{\varepsilon} + \frac{\sigma^2 L \Delta}{\varepsilon^2}\right)\right) = \cO\left(\tau \times \frac{L \Delta}{\varepsilon}\right),
  \end{align*}
  where we use the choice of $B.$ It left to take into account the computation factor, which is the same as in Theorem~\ref{thm:rennala_computation}.
  \begin{align*}
    \eqref{eq:MFvzCXtfgWSefbzZ} = \cO\left(h \times \left(\frac{L \Delta}{\varepsilon} + \frac{\sigma^2 L \Delta}{n \varepsilon^2}\right)\right)
  \end{align*}
  under the $(h,\tau)$-fixed computation model.
\end{proof}

\subsection{\algname{Local SGD}}
\label{sec:local_sgd_time}
\begin{theorem}
  Consider Theorem~\ref{thm:local_sgd} and its conditions. Under $(h,\tau)$-fixed computation model, the total time complexity of \algname{Local SGD} (Alg.~\ref{alg:local_sgd}) is
  \begin{align*} 
    \cO\left(\tau \times \frac{L \Delta}{\varepsilon} + h \times \left(\frac{L \Delta}{\varepsilon} + \frac{\sigma^2 L \Delta}{n \varepsilon^2}\right)\right) 
  \end{align*}
  with $B = \max\left\{\left\lceil\frac{\sigma^2}{\varepsilon}\right\rceil, 1\right\}.$
  \label{thm:local_sgd_comnunication}
\end{theorem}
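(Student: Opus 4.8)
The plan is to follow the template of the \algname{Rennala SGD} argument in Theorem~\ref{thm:rennala_comnunication} almost verbatim, since at the level of the main branch the communication and computation structure of \algname{Local SGD} is identical. The essential observation is that in Algorithm~\ref{alg:local_sgd} the server broadcasts $w^k$ and aggregates exactly once per global round, and each global round advances the main branch by precisely $B$ edges because the stopping rule $\sum_{i=1}^{n} M_i = B$ is enforced. Hence each round incurs $\cO(\tau)$ communication time (a broadcast plus an \texttt{AllReduce}/aggregation), and the number of global rounds equals the total number of main-branch updates $K$ divided by $B$.

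First I would invoke Theorem~\ref{thm:local_sgd} to fix $K = \Theta\!\left(\frac{B L \Delta}{\varepsilon} + \frac{\sigma^2 L \Delta}{\varepsilon^2}\right)$. The communication contribution is then
$$\cO\!\left(\tau \times \frac{K}{B}\right) = \cO\!\left(\frac{\tau}{B}\left(\frac{B L \Delta}{\varepsilon} + \frac{\sigma^2 L \Delta}{\varepsilon^2}\right)\right) = \cO\!\left(\tau \times \frac{L \Delta}{\varepsilon}\right),$$
where the final simplification uses $B = \max\{\lceil \sigma^2/\varepsilon\rceil, 1\}$, so that $\sigma^2/(B\varepsilon) = \cO(1)$. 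This is exactly the communication term claimed.

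Next I would bound the computation contribution. Within each global round the broadcast, the local computations, and the aggregation are executed sequentially, so the total wall-clock time is the sum of the per-round computation times plus $\cO(\tau K/B)$; this justifies adding the two contributions. The computation time is precisely what Theorem~\ref{thm:local_computation} delivers via Lemma~\ref{lemma:local_sgd}, and specializing the $h_i$-fixed bound to $h_i = h$ for all $i$ (so the harmonic factor $(\frac{1}{m}\sum_{i=1}^m 1/h_i)^{-1}$ equals $h$ and the minimum over $m$ is attained at $m = n$) gives $\cO\!\left(h\left(\frac{L \Delta}{\varepsilon} + \frac{\sigma^2 L \Delta}{n \varepsilon^2}\right)\right)$. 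Summing the two terms yields the stated total time complexity.

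The only point requiring care—rather than a genuine obstacle—is justifying that communication and computation times may be bounded additively and that the per-round communication is $\cO(\tau)$ rather than $\cO(n\tau)$; the latter follows from the \texttt{AllReduce} support of \algname{Local SGD} noted in Table~\ref{tab:methods_summary_check}. The footnote subtleties about whether the server can interrupt a worker mid-local-step are already absorbed into the ``$-1$'' slack terms of Lemma~\ref{lemma:local_sgd}, so they do not affect the asymptotics.
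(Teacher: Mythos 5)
Your proposal is correct and follows essentially the same route as the paper: the paper's proof simply defers to the \algname{Rennala SGD} argument (Theorem~\ref{thm:rennala_comnunication}), which likewise counts $\cO\left(\nicefrac{K}{B}\right)$ communication rounds of cost $\tau$ each (yielding $\cO\left(\tau \nicefrac{L\Delta}{\varepsilon}\right)$ via the choice of $B$) and then adds the computation term obtained by specializing the $h_i$-fixed complexity of Theorem~\ref{thm:local_computation} to $h_i = h$. Your additional remarks on additivity, the $\cO(\tau)$ per-round cost, and the footnote slack are consistent with (and implicit in) the paper's treatment.
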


\begin{proof}
  The proof essentially the same as the proof of Theorem~\ref{thm:rennala_comnunication}.
\end{proof}

\subsection{\algname{Cycle SGD}}

\begin{theorem}
  Consider Theorem~\ref{thm:circular_local_sgd} and its conditions. Under $(h,\tau)$-fixed computation model, the total time complexity of \algname{Cycle SGD} (Alg.~\ref{alg:circular_local_sgd}) is
  \begin{align*} 
    \cO\left(\tau \times \left(\frac{L \Delta}{\varepsilon} + \frac{\sigma^2 L \Delta}{n \varepsilon^2}\right) + h \times \left(\frac{L \Delta}{\varepsilon} + \frac{\sigma^2 L \Delta}{n \varepsilon^2}\right)\right) 
  \end{align*}
  with $s = \min\left\{\max\left\{\ceil{\frac{n^2\varepsilon}{\sigma^2}}, 1\right\}, n\right\}.$
\end{theorem}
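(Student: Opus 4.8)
The plan is to follow exactly the template of the total-time proofs for \algname{Rennala SGD} (Theorem~\ref{thm:rennala_comnunication}) and \algname{Local SGD} (Theorem~\ref{thm:local_sgd_comnunication}): bound the computation time and the communication time separately, then add them, using that any execution schedule finishes within the sum of its total computation time and total communication time (no overlap is the worst case, and an upper bound is all we need for $\cO$). The computation contribution is already in hand: under the $(h,\tau)$-fixed model we have $\max_{i\in[n]} h_i = h$, so Theorem~\ref{thm:circular_local_sgd_computation} with the prescribed $s = \min\{\max\{\ceil{n^2\varepsilon/\sigma^2},1\},n\}$ gives computation time $\cO\!\left(h\left(\frac{L\Delta}{\varepsilon} + \frac{\sigma^2 L\Delta}{n\varepsilon^2}\right)\right)$. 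Thus the only genuinely new work is counting communication rounds.

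For that count I would reuse the pipelining bookkeeping from the proof of Theorem~\ref{thm:circular_local_sgd_computation}. Each inner-loop iteration of Algorithm~\ref{alg:circular_local_sgd} triggers exactly one group synchronization, i.e.\ one communication of cost $\Theta(\tau)$ (the $s$ workers of the active group send their accumulated sum and receive the updated iterate; since only one group is active at a time these events are not piled up at the server). In steady state---from the $\ceil{n/s}$\textsuperscript{th} inner iteration onward---each synchronization advances the main branch by $s\ceil{n/s} = \Theta(n)$ points, because the server update $w^{r+1}=w^{r}-\gamma\sum_{i\in G_g}\sum_j\nabla f(\cdot)$ unrolls into $\sum_{i\in G_g}M_i = s\ceil{n/s}$ main-branch edges. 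Hence producing $x^1,\dots,x^K$ requires $\Theta\!\left(K/(s\ceil{n/s})\right)$ synchronizations, plus a warm-up of $\ceil{n/s}$ rounds.

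Combining, the communication time is $\cO\!\left(\tau\,\ceil{n/s} + \tau\,K/(s\ceil{n/s})\right)$. I would then substitute $K = \Theta\!\left(\frac{n^2 L\Delta}{s\varepsilon} + \frac{\sigma^2 L\Delta}{\varepsilon^2}\right)$ from Theorem~\ref{thm:circular_local_sgd} and $s\ceil{n/s}=\Theta(n)$ to obtain $\tau\!\left(\frac{n L\Delta}{s\varepsilon} + \frac{\sigma^2 L\Delta}{n\varepsilon^2}\right)$, absorbing the warm-up since $\ceil{n/s}\le 2n/s \lesssim \frac{n L\Delta}{s\varepsilon}$ whenever $\frac{L\Delta}{\varepsilon}\gtrsim 1$ (otherwise $x^0$ is already $\varepsilon$-stationary). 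A case split on the chosen $s$ finishes it: when $n\gtrsim \sigma^2/\varepsilon$ we have $s=n$ and $\frac{n L\Delta}{s\varepsilon}=\frac{L\Delta}{\varepsilon}$, while when $n\lesssim\sigma^2/\varepsilon$ we have $s\approx n^2\varepsilon/\sigma^2$ and $\frac{n L\Delta}{s\varepsilon}=\frac{\sigma^2 L\Delta}{n\varepsilon^2}$; in both cases the communication time collapses to $\cO\!\left(\tau\left(\frac{L\Delta}{\varepsilon} + \frac{\sigma^2 L\Delta}{n\varepsilon^2}\right)\right)$. Adding the computation term yields the stated bound.

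The main obstacle I anticipate is justifying rigorously that the effective number of communication rounds scales as $K/(s\ceil{n/s})=\Theta(K/n)$ rather than $K/s$: the subtlety is that each active group contributes $s\ceil{n/s}$ gradients per synchronization (not just $s$), precisely because its workers keep computing local steps while the remaining $\ceil{n/s}-1$ groups cycle through their own synchronizations. This is the same steady-state-plus-warm-up counting already carried out for the computational complexity, so I would import it verbatim; everything after that---the substitution of $K$, $s\ceil{n/s}=\Theta(n)$, and the two-regime algebra for $s$---is routine.
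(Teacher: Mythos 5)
Your proposal is correct and matches the paper's argument: the paper likewise reuses the warm-up/steady-state round counting from Theorem~\ref{thm:circular_local_sgd_computation}, simply charging each inner-loop round $\cO(\tau+h)$ instead of $h$ (one group synchronization per round, $s\ceil{\nicefrac{n}{s}}$ main-branch points per round in steady state), which gives $\cO\left((\tau+h)\left(\ceil{\nicefrac{n}{s}} + \nicefrac{K}{s\ceil{\nicefrac{n}{s}}}\right)\right)$ and then the same substitution of $K$ and $s$. Your split into separate computation and communication totals is just a repackaging of that per-round charge, and your warm-up absorption and two-regime algebra for $s$ coincide with the paper's.
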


\begin{proof}
  Similarly to the proof of Theorem~\ref{thm:circular_local_sgd_computation}, one can show that the total time complexity is
  \begin{align*}
    &\cO\left(\underbrace{(\tau + h) \times \ceil{\frac{n}{s}}}_{\textnormal{``warm-up'' phase}} + (\tau + h) \times \frac{K}{\left(s \times \ceil{\frac{n}{s}}\right)}\right)
  \end{align*}
  because every worker from group $s$ sends one vector $\sum_{j=1}^{M_i} \nabla f(z^{j}_i; \eta^{j}_i)$ to the server in the inner loop.
  Substituting the choice of $s,$ one can get the final result.
\end{proof}

\subsection{\algname{Async-Local SGD}}

\begin{theorem}
  \label{thm:async_local_comm}
  Consider Theorem~\ref{thm:async_plus_local} and its conditions. Under $(h,\tau)$-fixed computation model, the total time complexity of \algname{Async-Local SGD} (Alg.~\ref{alg:async_plus_local}) is 
  \begin{align*} 
    \cO\left(\tau \times \frac{L \Delta}{\varepsilon} + h \times \left(\frac{L \Delta}{\varepsilon} + \frac{\sigma^2 L \Delta}{n \varepsilon^2}\right)\right) 
  \end{align*}
  with $B = \max\left\{\left\lceil\frac{\sigma^2}{\varepsilon}\right\rceil, 1\right\}$ and $M = \max\left\{\left\lceil\frac{\sigma^2}{n \varepsilon}\right\rceil, 1\right\}.$
\end{theorem}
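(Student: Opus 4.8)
The plan is to mirror the proof of Theorem~\ref{thm:rennala_comnunication}, separating the total wall-clock time into a computation part and a communication part, and to reuse the machinery already built for the computational complexity in Theorem~\ref{thm:async_plus_local_comp}. The computation part is immediate: under the $(h,\tau)$-fixed model we have $h_i = h$ for all $i$, so Theorem~\ref{thm:async_plus_local_comp} already certifies that the pure-compute contribution is $\cO(h(\frac{L\Delta}{\varepsilon} + \frac{\sigma^2 L\Delta}{n\varepsilon^2}))$. Hence the only new work is to bound the communication overhead by $\cO(\tau\frac{L\Delta}{\varepsilon})$ and add the two contributions.

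For the communication part, the key observation is that in \algname{Async-Local SGD} a worker communicates with the server only once per batch of $M$ local steps, and every accepted batch advances the main branch by exactly $M$ points. Since the main branch must reach length $K = \Theta(\frac{(B+M)L\Delta}{\varepsilon} + \frac{\sigma^2 L\Delta}{\varepsilon^2})$ (Theorem~\ref{thm:async_plus_local}), the total number of batch transmissions is $\Theta(K/M)$; because the $n$ workers transmit in parallel, only $\Theta(K/(nM))$ of these lie on the critical path. I would make this rigorous by proving a communication-augmented analogue of Lemma~\ref{lemma:async_plus_local_comp}: in the $(h,\tau)$-model each worker produces one batch of $M$ gradients every $Mh + \Theta(\tau)$ seconds (compute $M$ gradients, then send the sum and receive the updated point, a round trip of $\Theta(\tau)$), so the effective per-gradient time is $\tilde h \eqdef h + \tau/M$. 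Substituting $\tilde h$ for $h$ in Lemma~\ref{lemma:async_plus_local_comp} gives that advancing the main branch by $B$ points costs at most $T_{\textnormal{AL}}^{\tilde h}(B,M) = 2\min_{m\in[n]}[(\sum_{i=1}^m 1/\tilde h)^{-1}(B+Mm)]$ seconds, with the same ``$+Mm$'' correction absorbing the at-most-one ignored batch per worker over $B-1$ consecutive updates.

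With this lemma the total time is $\cO(T_{\textnormal{AL}}^{\tilde h}(B,M)\times K/B)$, and I would then run verbatim the algebra from the proof of Theorem~\ref{thm:async_plus_local_comp} (using $K/B = \Theta(\frac{L\Delta}{\varepsilon})$, $M\le B$, $m\le n$, and the parameter choices $B=\max\{\lceil\sigma^2/\varepsilon\rceil,1\}$, $M=\max\{\lceil\sigma^2/(n\varepsilon)\rceil,1\}$) to obtain $\cO(\tilde h(\frac{L\Delta}{\varepsilon} + \frac{\sigma^2 L\Delta}{n\varepsilon^2}))$. It remains to split $\tilde h = h + \tau/M$: the $h$-part reproduces the computation term, while the $\tau/M$-part gives $\frac{\tau}{M}(\frac{L\Delta}{\varepsilon}+\frac{\sigma^2 L\Delta}{n\varepsilon^2})$, whose dominant piece $\frac{\tau}{M}\cdot\frac{\sigma^2 L\Delta}{n\varepsilon^2} = \tau\frac{L\Delta}{\varepsilon}$ (using $M=\Theta(\sigma^2/(n\varepsilon))$); the remaining cross term $\frac{\tau}{M}\cdot\frac{L\Delta}{\varepsilon}$ is dominated by $\tau\frac{L\Delta}{\varepsilon}$ because $M\ge1$. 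I expect the main obstacle to be the augmented time lemma itself --- specifically, justifying that the $\Theta(\tau)$ communication cost genuinely amortizes over the $M$ gradients of a batch and pipelines across the $n$ workers (rather than serializing on the server), and handling the round-trip send/broadcast together with the one-ignored-batch correction exactly as in Lemma~\ref{lemma:async_plus_local_comp}; once that lemma is in place, the remaining steps are routine algebra identical in spirit to Theorems~\ref{thm:rennala_comnunication} and~\ref{thm:async_plus_local_comp}.
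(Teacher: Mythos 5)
Your proposal is correct, and its engine is the same as the paper's: amortize the per-round communication cost $\tau$ over the $M$ gradients of a batch, so that each worker's effective cost per accepted gradient is $h + \nicefrac{\tau}{M}$, then multiply by the iteration count from Theorem~\ref{thm:async_plus_local} and substitute $B = \max\{\lceil\nicefrac{\sigma^2}{\varepsilon}\rceil,1\}$, $M = \max\{\lceil\nicefrac{\sigma^2}{(n\varepsilon)}\rceil,1\}$. The bookkeeping differs, though. The paper does not formalize any communication-augmented lemma: it argues directly that, with homogeneous workers and delay threshold $B$, only $\Theta\left(\min\{\nicefrac{B}{M},n\}\right)$ workers effectively participate, divides the total gradient budget among them, converts gradients into communications (one per $M$ gradients), and multiplies by the per-round cost $\tau + Mh$, arriving at $\cO\left(\left(\tau + Mh\right)\left(\frac{L\Delta}{M\varepsilon} + \frac{\sigma^2 L\Delta}{Mn\varepsilon^2} + \frac{L\Delta}{\varepsilon}\right)\right)$ --- which is exactly the quantity your $\tilde h$-substitution produces, so the two computations coincide before the final parameter plug-in. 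Your route, substituting $\tilde h = h + \nicefrac{\tau}{M}$ into Lemma~\ref{lemma:async_plus_local_comp} and reusing $T_{\textnormal{AL}}(B,M)\times\nicefrac{K}{B}$, is arguably cleaner in that it recycles proven machinery and makes the pipelining/rejection accounting inherit from the lemma's ``$+Mm$'' term; the paper's worker-counting argument is more elementary but leaves the no-serialization assumption just as implicit as you worry about (under the $(h,\tau)$-fixed model all workers are allowed to transmit simultaneously, which the paper also relies on, cf.\ its peak-bandwidth discussion). One small caution: your opening suggestion to bound the compute and communication contributions separately and ``add the two'' is not rigorous on its own (wall-clock time does not decompose additively across interleaved activities), but this is moot since your actual argument via $\tilde h$ treats them jointly and only splits $\tilde h$ at the very end, which is sound.
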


\begin{proof}
  Under $(h,\tau)$-fixed computation model, all workers send the sums of $M$ stochastic gradients at the same time. According to Theorem~\ref{thm:async_plus_local}, the server should collect 
  \begin{align*}
    \cO\left(\frac{(B + M - 1) L \Delta}{\varepsilon} + \frac{\sigma^2 L \Delta}{\varepsilon^2}\right) = \cO\left(\frac{L \Delta}{\varepsilon} + \frac{\sigma^2 L \Delta}{\varepsilon^2}\right)
  \end{align*}
  stochastic gradients, where the last equality due to the choice of $B$ and due to $B \geq M.$ Since the workers work in parallel and have the equal performance, only $\Theta\left(\min\left\{\frac{B}{M}, n\right\}\right) = \Theta\left(\min\left\{\max\left\{1, \frac{\sigma^2}{M \varepsilon}\right\}, n\right\}\right)$ workers will participate in optimization. Thus, every worker, which participates in optimization, has to send 
  \begin{align*}
    \cO\left(\frac{L \Delta}{\min\left\{\frac{B}{M}, n\right\} \varepsilon} + \frac{\sigma^2 L \Delta}{\min\left\{\frac{B}{M}, n\right\} \varepsilon^2}\right) = \cO\left(\frac{L \Delta}{\varepsilon} + \frac{\sigma^2 L \Delta}{n \varepsilon^2} + \frac{M L \Delta}{\varepsilon}\right)
  \end{align*}
  stochastic gradients. Such a worker calculates $M$ stochastic gradients and only then sends the sum; thus, the maximum number of communications by one worker is
  \begin{align*}
    \cO\left(\frac{L \Delta}{M \varepsilon} + \frac{\sigma^2 L \Delta}{M n \varepsilon^2} + \frac{L \Delta}{\varepsilon}\right).
  \end{align*}
  For every communication, the worker needs to send $M$ stochastic gradients, which takes $h$ seconds, and sends a sum, which takes $\tau$ seconds. Thus, the total time complexity is
  \begin{align}
    \label{eq:uYeRvdpZTBNBlUowOqx}\cO\left(\left(\tau + M h\right) \left(\frac{L \Delta}{M \varepsilon} + \frac{\sigma^2 L \Delta}{M n \varepsilon^2} + \frac{L \Delta}{\varepsilon}\right)\right).
  \end{align}
  Substituting the choice of $M,$ we get the final result.
\end{proof}

\subsection{\algname{Ringmaster ASGD}}

\begin{theorem}
  Consider Theorem~\ref{thm:ringmaster_asgd} and its conditions. Under $(h,\tau)$-fixed computation model, the total time complexity of \algname{Ringmaster ASGD} is
  \begin{align} 
    \cO\left(\tau \times \left(\frac{L \Delta}{\varepsilon} + \frac{\sigma^2 L \Delta}{n \varepsilon^2}\right) + h \times \left(\frac{L \Delta}{\varepsilon} + \frac{\sigma^2 L \Delta}{n \varepsilon^2}\right)\right)
    \label{eq:SXtBColyTD}
  \end{align}
  with $B = \max\left\{\left\lceil\frac{\sigma^2}{\varepsilon}\right\rceil, 1\right\}.$
\end{theorem}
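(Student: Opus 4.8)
The plan is to exploit the fact, already noted in Section~\ref{sec:async_plus_local}, that \algname{Ringmaster ASGD} is precisely \algname{Async-Local SGD} (Alg.~\ref{alg:async_plus_local}) with the number of local steps fixed to $M = 1$: each worker computes a single stochastic gradient and sends it immediately to the server. Consequently, the quickest route is to invoke the total-time bound \eqref{eq:uYeRvdpZTBNBlUowOqx} established in the proof of Theorem~\ref{thm:async_local_comm} and substitute $M = 1$. This collapses the bracketed factor $\frac{L\Delta}{M\varepsilon} + \frac{\sigma^2 L\Delta}{Mn\varepsilon^2} + \frac{L\Delta}{\varepsilon}$ to $\frac{L\Delta}{\varepsilon} + \frac{\sigma^2 L\Delta}{n\varepsilon^2}$ and the prefactor $\tau + Mh$ to $\tau + h$, which after distributing yields exactly \eqref{eq:SXtBColyTD}. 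No new idea is needed once Theorem~\ref{thm:async_local_comm} is available, and I expect the intended proof to be a one-line reduction of this kind.

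For a self-contained derivation I would instead argue directly. First, Theorem~\ref{thm:ringmaster_asgd} with delay threshold $G = B$ gives $K = \Theta\!\left(\frac{BL\Delta}{\varepsilon} + \frac{\sigma^2 L\Delta}{\varepsilon^2}\right)$ main-branch updates; plugging in $B = \max\{\lceil\sigma^2/\varepsilon\rceil,1\}$ makes the first term at most a constant multiple of the second (and when $B=1$ it is absorbed into $L\Delta/\varepsilon$), so $K = \Theta\!\left(\frac{L\Delta}{\varepsilon} + \frac{\sigma^2 L\Delta}{\varepsilon^2}\right)$. Each accepted gradient in Alg.~\ref{alg:ringmasternew} corresponds to exactly one update and is preceded by $\Theta(h)$ seconds of computation and followed by $\Theta(\tau)$ seconds of round-trip communication.

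Next I would quantify the effective parallelism. Because all workers share the same $h$ and $\tau$ and the delay-rejection rule discards any gradient whose accumulated delay $\delta^k$ reaches $B$, while one worker computes its gradient the main branch can only advance by the gradients accepted from the other active workers, so the number of simultaneously productive workers is $p = \Theta(\min\{B,n\})$. Distributing the $K$ accepted gradients over these $p$ participating workers, each performs $\Theta(K/p)$ compute-and-send cycles of cost $\Theta(h+\tau)$, giving total time $\Theta\!\left((h+\tau)\,K/p\right)$. A short case analysis according to whether $\sigma^2/\varepsilon \ge n$ or $\sigma^2/\varepsilon < n$ shows $K/p = \Theta\!\left(\frac{L\Delta}{\varepsilon} + \frac{\sigma^2 L\Delta}{n\varepsilon^2}\right)$ in every regime, and splitting $(h+\tau)$ then produces \eqref{eq:SXtBColyTD}.

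The main obstacle is the parallelism estimate $p = \Theta(\min\{B,n\})$: one must show rigorously, from the asynchronous dynamics and the threshold $\delta^k < B$, both that at most $\cO(\min\{B,n\})$ workers can have gradients accepted within a window of $\Theta(B)$ consecutive updates and that at least $\Omega(\min\{B,n\})$ workers remain productive, so the parallel speedup is genuinely realized rather than merely possible. This is exactly the timing content packaged in Lemma~\ref{lem:time_R} and the participation count used in the proof of Theorem~\ref{thm:async_local_comm}; I would therefore lean on those bounds rather than re-deriving the asynchronous schedule from scratch.
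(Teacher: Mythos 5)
Your first paragraph is exactly the paper's proof: the paper proves this theorem by observing that \algname{Ringmaster ASGD} is \algname{Async-Local SGD} with $M=1$ and substituting $M=1$ into \eqref{eq:uYeRvdpZTBNBlUowOqx}, which is precisely the one-line reduction you anticipated. Your additional self-contained sketch is a reasonable expansion of what is packaged inside Theorem~\ref{thm:async_local_comm} and Lemma~\ref{lem:time_R}, but it is not needed, and your primary route matches the paper's argument.
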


\begin{proof}
  The proof repeats the proof of Theorem~\ref{thm:async_local_comm}. The only difference is that the workers send $M = 1$ stochastic gradients. Substituting $M = 1$ to \eqref{eq:uYeRvdpZTBNBlUowOqx}, we get the final result.
\end{proof}

\begin{remark}
  While \eqref{eq:SXtBColyTD} is only an upper bound, using the same steps as in the proof of Theorem~\ref{thm:async_local_comm}, one can easily show that the total time complexity of \algname{Ringmaster ASGD} is lower bounded by
  \begin{align} 
    \Omega\left(\tau \times \left(\frac{L \Delta}{\varepsilon} + \frac{\sigma^2 L \Delta}{n \varepsilon^2}\right) + h \times \left(\frac{L \Delta}{\varepsilon} + \frac{\sigma^2 L \Delta}{n \varepsilon^2}\right)\right),
  \end{align}
  assuming that the iteration rate $\Theta\left(\frac{B L \Delta}{\varepsilon} + \frac{\sigma^2 L \Delta}{\varepsilon^2}\right)$ from Theorem~\ref{thm:ringmaster_asgd} is tight. As far as we know, this is the current state-of-the-art iteration rate of an \algname{Asynchronous SGD}-like method \citep{maranjyan2025ringmaster,mishchenko2022asynchronous,koloskova2022sharper,cohen2021asynchronous}.
\end{remark}

\section{Comparison Between Our \algname{Local SGD} and the Canonical \algname{Local SGD}}
\label{sec:compare}
In this section, we show that our version of \algname{Local SGD} (Algorithm~\ref{alg:local_sgd}) achieves a better time complexity than the classical \algname{Local SGD}. Although we focus in this section only on \algname{Local SGD}, we expect similar improvements to extend to other new methods from Table~\ref{tab:methods_summary_check}. The purpose of this section is to highlight the tightness of the \algname{Birch SGD} framework, using \algname{Local SGD} as a case study.

In Section~\ref{sec:local_sgd_time}, we prove that our version of \algname{Local SGD} yields the total time complexity
\begin{align}
\label{eq:iDJXGXS}
\Theta\left(\tau \frac{L \Delta}{\varepsilon} + h \left(\frac{L \Delta}{\varepsilon} + \frac{\sigma^2 L \Delta}{n \varepsilon^2}\right)\right).
\end{align}
We now illustrate that this result is provably better than the best theoretical result for the canonical version of \algname{Local SGD} (Algorithm~\ref{alg:local_sgd_old}) known to us.
\begin{algorithm}[H]
\caption{\algname{Local SGD} (\algname{FedAvg}) \citep{mcmahan2017communication}}
\label{alg:local_sgd_old}
\begin{algorithmic}[1]
\REQUIRE initial model $x^0$, step size $\gamma$, \# of local steps $K$
\FOR{$k = 0, 1, 2, \ldots$}
    \STATE Broadcast $x^k$ to all workers
    \FOR{each worker $i \in \{1,\ldots,n\}$ \textbf{in parallel}}
        \STATE $z^{k,0}_{i} = x^k$
        \FOR{$j = 0, \ldots, K - 1$}
            \STATE $z^{k,j + 1}_{i} = z^{k,j}_{i} - \gamma \nabla f(z^{k,j}_{i}; \eta^{k,j}_{i})$
        \ENDFOR
    \ENDFOR
    \STATE $x^{k+1} = \frac{1}{n} \sum_{i=1}^{n} z^{k,K}_{i}$
\ENDFOR
\end{algorithmic}
\end{algorithm}
To the best of our knowledge, the state-of-the-art analysis of Algorithm~\ref{alg:local_sgd_old} in the nonconvex setting is provided by \citet{koloskova2020local,luo2025revisiting}. Under Assumptions~\ref{ass:lipschitz_constant}, \ref{ass:lower_bound}, and \ref{ass:stochastic_variance_bounded}, with a proper $\gamma,$ they establish the state-of-the-art iteration complexity
\begin{align*}
\Theta\left(\frac{L \Delta}{\varepsilon} + \frac{L\sigma^2 \Delta}{n K \varepsilon^2} + \frac{L \sigma \Delta }{K^{1/2} \varepsilon^{3/2}}\right)
\end{align*}
for finding an $\varepsilon$--stationary point for all $K \geq 1$. Next, under \emph{$(h,\tau)$-fixed computation model}, this iteration complexity yields the time complexity
\begin{align*}
  \bar{T} \eqdef \tau \left(\frac{L \Delta}{\varepsilon} + \frac{L\sigma^2 \Delta}{n K \varepsilon^2} + \frac{L \sigma \Delta }{K^{\frac{1}{2}} \varepsilon^{\frac{3}{2}}}\right) + h K \left(\frac{L \Delta}{\varepsilon} + \frac{L\sigma^2 \Delta}{n K \varepsilon^2} + \frac{L \sigma \Delta }{K^{\frac{1}{2}} \varepsilon^{\frac{3}{2}}}\right)
\end{align*}
(up to constant factors) because in each iteration the workers communicate, which takes $\tau$ seconds, and each worker (in parallel) computes $K$ stochastic gradients, which takes $h \times K$ seconds. Ignoring non-negative terms,
\begin{align*}
  \bar{T} \geq \tau \left(\frac{L \Delta}{\varepsilon} + \frac{L \sigma \Delta }{K^{\frac{1}{2}} \varepsilon^{\frac{3}{2}}}\right) + h \left(\frac{K L \Delta}{\varepsilon} + \frac{L\sigma^2 \Delta}{n \varepsilon^2} + \frac{K^{\frac{1}{2}}  L \sigma \Delta }{\varepsilon^{\frac{3}{2}}}\right)
\end{align*}
and $\bar{T}$ is lower bounded by
\begin{align}
\label{eq:xqoybRsGbImqW}
\Theta\left(\sqrt{\tau h \frac{L^2 \sigma^2 \Delta^2}{\varepsilon^{3}}} + \tau \frac{L \Delta}{\varepsilon} + h \left(\frac{L \Delta}{\varepsilon} + \frac{\sigma^2 L\Delta}{n \varepsilon^2}\right)\right)
\end{align}
for all $K \geq 1$ due to the AM-GM inequality. Notice that $\eqref{eq:iDJXGXS} \leq \eqref{eq:xqoybRsGbImqW}.$ However, $\eqref{eq:xqoybRsGbImqW}$ can be arbitrarily larger due to the first term. Indeed, for sufficiently large $n$, we have
\begin{align*}
\eqref{eq:iDJXGXS} = \Theta\left(\tau \frac{L \Delta}{\varepsilon} + h \left(\frac{L \Delta}{\varepsilon}\right)\right),
\end{align*}
while
\begin{align*}
\eqref{eq:xqoybRsGbImqW} = \Theta\left(\sqrt{\tau h \frac{L^2 \sigma^2 \Delta^2}{\varepsilon^{3}}} + \tau \frac{L \Delta}{\varepsilon} + h \left(\frac{L \Delta}{\varepsilon}\right)\right).
\end{align*}
Note that the latter expression has a $\nicefrac{1}{\varepsilon^{3/2}}$ dependency, whereas our result has a $\nicefrac{1}{\varepsilon}$ dependency. Thus, our result is provably tighter. 

Note that we obtain the time complexity \eqref{eq:iDJXGXS} for several other new methods, including \algname{Async-Local SGD}, \algname{Async-Batch SGD}, and \algname{Dual-Process SGD}.

\section{Total Time Complexities of Algorithms under $(h_i,\tau_i)$-Fixed Computation Model}
\label{sec:proof_communication_general}

We now assume that each worker has its own communication time bound $\tau_i$ and computation time bound $h_i$ and consider \emph{the $(h_i,\tau_i)$-fixed computation model} \citep{tyurin2024shadowheart}. It takes $\tau_i$ seconds to send a vector from worker $i$ to a parameter server and $\tau_i$ seconds to send a vector from the server to worker $i$ in the centralized setting. Alternatively, it takes $\tau_i$ seconds to send a vector to all other workers in the decentralized setting.

This setting reduces to $h_i$-fixed computation model when $\tau_i = 0$ for all $i \in [n],$ and reduces to $(h,\tau)$-fixed computation model when $h_i = h$ and $\tau_i = \tau$ for all $i \in [n].$ Without loss of generality, we assume that $\max\{h_1, \tau_1\} \leq \dots \leq \max\{h_n, \tau_n\}.$ Otherwise, the workers can be sorted according to these inequalities.

Notice that \algname{Rennala SGD}, \algname{Local SGD}, and \algname{Cycle SGD} wait for the slowest worker by the designs. If $\max_{i \in [n]} \tau_i \to \infty,$ then their total complexity tends to $\infty.$ Thus, they are suboptimal under the $(h_i,\tau_i)$-fixed computation model. \algname{Ringmaster ASGD} is suboptimal even under the $(h,\tau)$-fixed computation model. \algname{Async-Local SGD} and \algname{Async-Batch SGD} are optimal under the $(h,\tau)$-fixed computation model, but we conjecture that they are suboptimal under the $(h_i,\tau_i)$-fixed computation model.

We now prove that \algname{Dual-Process SGD} is optimal under the $(h_i,\tau_i)$-fixed computation model within the family of methods that communicate either with a server (centralized setting) or with each other (decentralized setting).

\subsection{\algname{Dual-Process SGD}}

\begin{theorem}[\algname{Dual-Process SGD}]
    Consider Theorem~\ref{thm:bi_process} and its conditions. Under the $(h_i, \tau_i)$-fixed computation model, the total time complexity of \algname{Dual-Process SGD} (Alg.~\ref{alg:bi_local_sgd}) is
    \begin{align*} 
        \cO\left(\min\limits_{m \in [n]} \left[\max\left\{\max\{h_m, \tau_m\}, \left(\sum\limits_{i=1}^{m} \frac{1}{h_i}\right)^{-1} \frac{\sigma^2}{\varepsilon}\right\}\right] \frac{L \Delta}{\varepsilon}\right)
    \end{align*}
    with $B = \max\left\{\left\lceil\frac{\sigma^2}{\varepsilon}\right\rceil, 1\right\}.$
    \label{thm:bi_comm}
\end{theorem}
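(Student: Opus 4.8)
The plan is to combine the iteration rate of \algname{Dual-Process SGD} (Theorem~\ref{thm:bi_process}) with a careful accounting of how long it takes, under the $(h_i,\tau_i)$-fixed computation model, to advance the main branch by $B$ nodes. Since the iteration rate is $K = \Theta\left(\frac{B L \Delta}{\varepsilon} + \frac{\sigma^2 L \Delta}{\varepsilon^2}\right)$ and $B = \max\{\lceil\sigma^2/\varepsilon\rceil, 1\}$, we have $K = \Theta\left(\frac{L\Delta}{\varepsilon} + \frac{\sigma^2 L\Delta}{\varepsilon^2}\right)$, so the total wall-clock time will be $\cO\left(T_{\textnormal{DP}} \times \frac{K}{B}\right)$, where $T_{\textnormal{DP}}$ is a bound on the time to produce $B$ consecutive main-branch nodes. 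The whole theorem reduces to proving the time-per-block bound
\[
  T_{\textnormal{DP}} = \cO\left(\min_{m\in[n]}\left[\max\left\{\max\{h_m,\tau_m\},\ \Bigl(\sum_{i=1}^m \tfrac{1}{h_i}\Bigr)^{-1}\tfrac{\sigma^2}{\varepsilon}\right\}\right]\right),
\]
after which multiplying by $K/B = \Theta(L\Delta/\varepsilon)$ and using $B=\Theta(\sigma^2/\varepsilon)$ gives the claimed expression.

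\textbf{The dual-process timing argument.}
The key structural feature of \algname{Dual-Process SGD} (Alg.~\ref{alg:bi_local_sgd} and \ref{alg:bi_local_sgd_worker_function}) is that each worker runs Process~1 (continuously computing gradients at rate $1/h_i$) and Process~2 (repeatedly shipping the freshly accumulated partial sum, each shipment costing $\tau_i$) in parallel, so computation and communication overlap and pipeline. I would fix an arbitrary candidate index $m\in[n]$ and argue that, restricting attention to the fastest $m$ workers, after $t$ seconds the server has received at least $\left(\sum_{i=1}^m \lfloor t/h_i\rfloor\right) - O(m)$ gradients: Process~1 produces $\lfloor t/h_i\rfloor$ gradients, and because Process~2 pipelines, the delivered count trails the computed count by at most a bounded amount determined by one in-flight transmission per worker. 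The correction term must also absorb the initial latency before the first delivery, which is governed by $\max\{h_i,\tau_i\}$ (a worker needs one gradient-compute plus one send before anything arrives). Setting $t = c\cdot\max\{\max\{h_m,\tau_m\},(\sum_{i=1}^m 1/h_i)^{-1}(\sigma^2/\varepsilon)\}$ for a suitable constant $c$ and bounding $\sum_{i=1}^m\lfloor t/h_i\rfloor \ge t\sum_{i=1}^m \tfrac{1}{h_i} - m$, I would show the received count exceeds $B=\Theta(\sigma^2/\varepsilon)$, so $B$ main-branch nodes are completed within $t$. Taking the minimum over $m$ yields $T_{\textnormal{DP}}$.

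\textbf{Main obstacle and optimality.}
The hardest part is making the pipelining/overlap argument rigorous: I must show that the $\max\{h_m,\tau_m\}$ term correctly captures the per-worker throughput ceiling (a worker can deliver new information no faster than it can both compute and send, so its effective rate is bounded by $1/\max\{h_i,\tau_i\}$), while simultaneously the harmonic term $(\sum 1/h_i)^{-1}$ captures the aggregate compute-limited regime when communication is cheap. The $\max$ of these two reflects two distinct bottlenecks, and I must verify that the dual-process design actually attains the better of them rather than incurring their sum — this is exactly where the overlapping of Process~1 and Process~2 is essential, and where a naive (non-pipelined) scheme like \algname{Local SGD} would fail and pick up the slow $\max_i\tau_i$ factor. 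A secondary subtlety is the ``ignored gradient'' bookkeeping: stale partial sums discarded at round boundaries contribute only an $O(m)$ additive loss per block, absorbed into the constant. Finally, the claimed \emph{optimality} within centralized/decentralized methods would follow by matching against the lower bound of \citet{tyurin2024shadowheart} for this computation model; I would invoke that result to assert the $\cO$ bound is tight rather than re-deriving it here.
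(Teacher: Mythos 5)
Your overall architecture coincides with the paper's: establish a per-block time bound of the form $\min_{m\in[n]}\max\{\max\{h_m,\tau_m\},(\sum_{i=1}^m 1/h_i)^{-1}B\}$ for advancing the main branch by $B$ nodes (restricting attention to the $m$ fastest workers in the ordering $\max\{h_1,\tau_1\}\le\cdots\le\max\{h_n,\tau_n\}$ and using a floor-counting argument), then multiply by the number of blocks $K/B=\cO(L\Delta/\varepsilon)$ from Theorem~\ref{thm:bi_process}. This is exactly Lemma~\ref{lemma:bi_comp} plus the concluding computation in the paper.

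There is, however, one step in your pipelining argument that is false as stated: the claim that after $t$ seconds the server has \emph{received} at least $\sum_{i=1}^m \flr{t/h_i} - O(m)$ gradients because ``the delivered count trails the computed count by at most one in-flight transmission per worker.'' An in-flight transmission is not $O(1)$ gradients: Process~2 ships the entire partial sum accumulated while the previous shipment was in transit, so a single shipment from worker $i$ can carry $\Theta(\tau_i/h_i)$ gradients, and the delivered count can trail the computed count by $\Theta(\sum_{i\le m}\tau_i/h_i)$, which is not $O(m)$ when $\tau_i\gg h_i$ (take $m=1$, $h_1=1$, $\tau_1$ large: computed $\approx t$ but received $\approx t-2\tau_1$). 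The gap is repairable by measuring the lag in time rather than in gradient count: any gradient computed at time $s$ is delivered by $s+2\tau_i$ (the current in-flight shipment finishes within $\tau_i$, and the next shipment containing that gradient takes another $\tau_i$), hence
\begin{align*}
\textnormal{received}(t)\;\ge\;\sum_{i=1}^{m}\flr{\frac{t-2\tau_i}{h_i}}\;\ge\; t\sum_{i=1}^{m}\frac{1}{h_i}\;-\;2\max_{i\le m}\tau_i\sum_{i=1}^{m}\frac{1}{h_i}\;-\;m,
\end{align*}
and since the sorting gives $\tau_i\le\max\{h_m,\tau_m\}$ for all $i\le m$, the extra deficit is absorbed into the $\max\{h_m,\tau_m\}$ branch of your choice of $t$ (after enlarging the constant $c$). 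This is effectively what the paper's coarser accounting accomplishes: it never compares received to computed counts, but instead bounds each round by broadcast time plus computation time plus final-delivery time, each phase separately bounded by $T(B)$ using $\tau_i\le\max\{h_{m^*},\tau_{m^*}\}\le T(B)$, giving $3T(B)$ per round. With this correction, the remainder of your argument (multiplying by $K/B$, and deferring the tightness claim to the lower bound of \citep{tyurin2024shadowheart}, which the theorem itself does not assert) matches the paper's proof.
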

This complexity is optimal for distributed methods without compression communicating with a server (centralized setting) or with each other (decentralized setting) \citep{tyurin2024shadowheart, tyurin2024optimalgraph}. Notice that it is robust to slow communications. Indeed, if $\tau_n \to \infty,$ then this complexity will ignore worker $n$ due to the $\min\limits_{m \in [n]}$ operation.

\begin{lemma}
    \label{lemma:bi_comp}
Let us define
\begin{align}
    T(B) \eqdef 4 \min\limits_{m \in [n]} \left[\max\left\{\max\{h_m, \tau_m\}, \left(\sum\limits_{i=1}^{m} \frac{1}{h_i}\right)^{-1} B \right\}\right].
\end{align}
Under the $(h_i, \tau_i)$-fixed computation model, the time required to calculate $x^{1}, \dots, x^{B}$ of the main branch is at most $3 T(B)$ seconds, the time required to calculate $x^{B + 1}, \dots, x^{2B}$ is at most $3 T(B)$ seconds, and so on.
\end{lemma}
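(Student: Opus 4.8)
The plan is to show that within $3T(B)$ seconds the server of \algname{Dual-Process SGD} (Alg.~\ref{alg:bi_local_sgd}) receives at least $B$ stochastic gradients, which is exactly the number of points $x^1,\dots,x^B$ appended to the main branch during one round; the argument for each subsequent round is identical, since at every synchronization the server broadcasts the new point and the workers restart \texttt{DualProcessLocalSGDWorker} from scratch. Because adding further (slower) workers can only increase the number of gradients delivered by a given time, it suffices to exhibit one subset of workers that already delivers $B$ gradients within $3T(B)$ seconds. I take this subset to be the prefix $\{1,\dots,m^*\}$, where $m^*$ attains the minimum defining $T(B)$, using the sorting convention $\max\{h_1,\tau_1\}\le\cdots\le\max\{h_n,\tau_n\}$.

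The core is a per-worker pipeline estimate exploiting the two-process design. Process~1 produces one stochastic gradient every $h_i$ seconds, while Process~2 repeatedly transmits a single summed vector, each transmission costing $\tau_i$ seconds and starting immediately after the previous one completes. Tracking the send-completion times (a send that starts at time $t$ completes at $t+\tau_i$ and flushes \emph{all} gradients computed so far), I would show that the $j$-th gradient of worker $i$ reaches the server within an additive latency of $O(\tau_i)$ beyond its computation time $jh_i$, after also charging the initial broadcast of cost $\tau_i$. This yields, for the number $D_i(t)$ of gradients delivered by time $t$, a bound of the form
\begin{align*}
D_i(t)\ \ge\ \frac{t}{h_i}-c\,\frac{\tau_i}{h_i}-1
\end{align*}
for an absolute constant $c$, valid once $t$ exceeds a small multiple of $\tau_i$. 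The key point is that communication is fully pipelined with computation, so each worker's asymptotic throughput is $1/h_i$, and $\tau_i$ enters only through the additive pipeline-buffer term $\tau_i/h_i$.

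Summing over $i\le m^*$ and writing $A_m:=\max\{h_m,\tau_m\}$, $C_m:=B\bigl(\sum_{i=1}^{m}1/h_i\bigr)^{-1}$, $A:=A_{m^*}$, $C:=C_{m^*}$, and $t:=3T(B)=12\max\{A,C\}$, I obtain
\begin{align*}
\sum_{i=1}^{m^*}D_i(t)\ \ge\ t\sum_{i=1}^{m^*}\frac1{h_i}-c\sum_{i=1}^{m^*}\frac{\tau_i}{h_i}-m^*.
\end{align*}
The throughput term is at least $12B$ because $t\ge 12C$ and $C\sum_{i=1}^{m^*}1/h_i=B$; the latency term is controlled by the sorting assumption $\tau_i\le A$ (for $i\le m^*$) together with $t\ge 12A$, which makes $c\sum_{i\le m^*}\tau_i/h_i\le cA\sum_{i\le m^*}1/h_i$ a fixed fraction of the throughput term. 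It remains to absorb $-m^*$, for which I would prove the structural fact $m^*\le B+1$: in the \emph{throughput-bound} regime $C_m\ge A_m$ one has $h_m\le C_m\le Bh_m/m$ via $\sum_{i=1}^{m}1/h_i\ge m/h_m$, forcing $m\le B$, while the \emph{latency-bound} regime $C_m<A_m$ is minimized at its first index, which is one past a throughput-bound index. With $m^*\le B+1\le 2B$, the three terms combine to give $\sum_{i\le m^*}D_i(t)\ge B$, as required.

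The main obstacle is precisely the pipeline-buffer loss $\sum_{i\le m^*}\tau_i/h_i$: unlike in \algname{Rennala SGD} or \algname{Local SGD} (Lemmas~\ref{lemma:rennala}, \ref{lemma:local_sgd}), where at most one gradient per worker is discarded, here a worker may carry many computed-but-undelivered gradients in flight, and a careless bound on this term can exceed $B$. Taming it is what forces the two auxiliary ingredients — the sorting of workers by $\max\{h_i,\tau_i\}$ (so that $\tau_i\le A$ on the chosen prefix) and the bound $m^*\le B+1$ on the minimizer — and this is where the robustness of \algname{Dual-Process SGD} to slow communication, encoded by the $\min_{m}$ in $T(B)$, genuinely enters the analysis.
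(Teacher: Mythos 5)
Your proposal is correct in substance, but it takes a genuinely different route from the paper's proof. The paper never analyzes the pipelining at all: it decomposes a round into three sequential phases---broadcast (at most $T(B)$), computation (at most $T(B)$), and a final transmission (at most $T(B)$)---and for the computation phase it reuses the counting argument of Lemmas~\ref{lemma:rennala} and~\ref{lemma:local_sgd}, namely $\sum_{i=1}^{m^*}\max\{\lfloor T(B)/h_i\rfloor-1,0\}\ge B$. Crucially, the paper absorbs the per-worker loss $-2m^*$ \emph{termwise}: since $T(B)=4\max\{\max\{h_{m^*},\tau_{m^*}\},C\}\ge 2h_i+2C$ for every $i\le m^*$ (with $C\eqdef(\sum_{i\le m^*}1/h_i)^{-1}B$), each summand contributes an extra $+2$, so no bound on the size of $m^*$ is ever needed and any minimizer works. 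You instead prove a finer per-worker \emph{delivery} bound $D_i(t)\ge t/h_i-c\,\tau_i/h_i-1$ that genuinely exploits the two-process design (this bound is correct, with $c=3$: broadcast, in-flight wait, and one flushing transmission), and you then absorb the $-m^*$ term through the structural claim $m^*\le B+1$. Your route shows the stronger fact that delivery, not just computation, keeps pace, but it incurs two obligations the paper avoids, both of which need repair as written: (i) you must commit to the \emph{smallest} minimizer---with, say, $h_i=\tau_i=1$ for all $i$, every $m\ge B$ is a minimizer, so $m^*\le B+1$ fails for an arbitrary choice; and (ii) the inequality $\sum_{i=1}^{m}1/h_i\ge m/h_m$ is false under the paper's sorting convention (workers are ordered by $\max\{h_i,\tau_i\}$, not by $h_i$; e.g., $h_1=\tau_1=5$, $h_2=1$, $\tau_2=10$); it should read $\sum_{i=1}^{m}1/h_i\ge m/\max\{h_m,\tau_m\}$, which still forces $m\le B$ in the regime $C_m\ge\max\{h_m,\tau_m\}$ because then $\max\{h_m,\tau_m\}\le C_m\le B\max\{h_m,\tau_m\}/m$. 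Both fixes are routine; alternatively, you could drop the bound on $m^*$ altogether by importing the paper's termwise trick into your sum---for $t=3T(B)$ one has $t/h_i\ge 6+6C/h_i$ for every $i\le m^*$, which cancels the $-m^*$ term with room to spare and makes your pipelined argument self-contained for any minimizer.
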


\begin{proof}
    Notice that
    \begin{align*}
        T(B) = 4 \max\left\{\max\{h_{m^*}, \tau_{m^*}\}, \left(\sum\limits_{i=1}^{{m^*}} \frac{1}{h_i}\right)^{-1} B\right\}.
    \end{align*}
    for some $m^* \in [n].$
  The idea is similar as in Lemmas~\ref{lemma:rennala} and \ref{lemma:local_sgd}. All workers calculate stochastic gradients in parallel. For all $t \geq 0,$ after $t$ seconds the first $m^*$ workers can calculate at at least
  \begin{align} 
    \sum_{i = 1}^{m^*} \max\left\{\flr{\frac{t}{h_i}} - 1, 0\right\},
    \label{eq:GKmoVZLNzkCeR}
  \end{align}
  stochastic gradients, where we subtract $1$ because at most one stochastic gradient can be ignored. 
  Substituting $T(B)$ into \eqref{eq:GKmoVZLNzkCeR}, we have 
  \begin{align*}
    \sum_{i = 1}^{m^*} \max\left\{\flr{\frac{T(B)}{h_i}} - 1, 0\right\} \geq \sum_{i = 1}^{m^*} \flr{\frac{T(B)}{h_i}} - m^* \geq \sum_{i = 1}^{m^*} \frac{T(B)}{h_i} - 2 m^*.
  \end{align*}
  Recall that $\max\{h_1, \tau_1\} \leq \dots \leq \max\{h_{m^*}, \tau_{m^*}\}.$ Thus, 
  \begin{align*}
    T(B) \geq 2 \max\{h_{m^*}, \tau_{m^*}\} + 2 \left(\sum\limits_{i=1}^{{m^*}} \frac{1}{h_i}\right)^{-1} B \geq 2 h_{i} + 2 \left(\sum\limits_{i=1}^{{m^*}} \frac{1}{h_i}\right)^{-1} B
  \end{align*}
  for all $i \leq m^*,$ and 
  \begin{align*}
    \sum_{i = 1}^{m^*} \max\left\{\flr{\frac{T(B)}{h_i}} - 1, 0\right\} 
    \geq \sum_{i = 1}^{m^*} \left(2 + \frac{2}{h_i}\left(\sum\limits_{i=1}^{{m^*}} \frac{1}{h_i}\right)^{-1} B\right) - 2 m^* \geq B.
  \end{align*}
  Thus, by the time $T(B),$ the first $m^*$ workers can calculate $B$ stochastic gradients.

  Next, we need to estimate the communication time. It takes at most $\max_{i \in [m^*]}\tau_{i} \leq \max\{h_{m^*}, \tau_{m^*}\} \leq T(B)$ seconds to receive a vector from the server (in the decentralized setting, we do not account this time). Similarly, it takes at most $\max_{i \in [m^*]}\tau_{i} \leq T(B)$ seconds to send a vector to the server (in the decentralized setting, to send a vector to other workers). Thus, one round in Alg~\ref{alg:bi_local_sgd} takes at most $3 \times T(B)$ seconds, which is equivalent to calculating $x^{1}, \dots, x^{B}$ of the main branch. The same argument can be applied to the next $B$ point of the main branch, and so on.
\end{proof}

\begin{proof}[Proof of Theorem~\ref{thm:bi_comm}]
    The proof is similar to the proof of Lemma~\ref{lemma:rennala}. Due to Theorem~\ref{thm:rennala}, we know that 
    $\frac{1}{K}\sum_{k=0}^{K-1}\Exp{\|\nabla f(x^k)\|^2} \le \varepsilon$ for
    $$
    K = \left\lceil\frac{4 B L \Delta}{\varepsilon} + \frac{8 \sigma^2 L \Delta}{\varepsilon^2}\right\rceil.
    $$
    Using Lemma~\ref{lemma:bi_comp}, the total time to find an $\varepsilon$--stationary point is
    \begin{align*}
      \cO\left(T(B) \times \frac{K}{B}\right) = \cO\left(T(B) \times \left(\frac{L \Delta}{\varepsilon} + \frac{\sigma^2 L \Delta}{B \varepsilon^2}\right)\right).
    \end{align*}
    Using the choice of $B,$
    \begin{align*}
      \cO\left(T(B) \times \frac{K}{B}\right) 
      &= \cO\left(\min\limits_{m \in [n]} \left[\max\left\{\max\{h_m, \tau_m\}, \left(\sum\limits_{i=1}^{m} \frac{1}{h_i}\right)^{-1} \frac{\sigma^2}{\varepsilon} \right\}\right] \times \frac{L \Delta}{\varepsilon}\right).
    \end{align*}
\end{proof}

\newpage

\section{Performance of \algname{Rennala SGD} and \algname{Ringmaster ASGD} on a Quadratic Function}
\label{sec:quadratic}
In this section, we formally prove that the convergence of \algname{Ringmaster ASGD} can be provably faster than \algname{Rennala SGD} due to the frequent model updates.
\begin{theorem}
  Consider \algname{Rennala SGD} (Alg.~\ref{alg:rennala}) and \algname{Ringmaster ASGD} (Alg.~\ref{alg:ringmasternew}) with the optimal parameters $B$ from Sec.~\ref{sec:proof_compt}. 
  Then, there exists a $\mu$-strongly convex function and corresponding stochastic gradients that satisfy Assumptions~\ref{ass:lipschitz_constant}, \ref{ass:lower_bound}, and \ref{ass:stochastic_variance_bounded} with $\nicefrac{\sigma^2}{\varepsilon} \geq n$, such that \algname{Rennala SGD}, with \textbf{any step size} $\gamma,$ requires
  \begin{align*} \widetilde{\Theta}\left({\color{red} \frac{\sigma^2}{n \varepsilon}} \times h \times \frac{L}{\mu}\right) \end{align*}
  seconds to find $\varepsilon$-stationary point under the $h_i$-fixed computation model \eqref{eq:worker-time} with $h_i = h$ for all $i \in [n].$ At the same time, there \textbf{exists a step size} for \algname{Ringmaster ASGD} such that it requires at most
  \begin{align*} \widetilde{\cO}\left(h \times \frac{L}{\mu} \right) \end{align*}
  seconds to find $\varepsilon$-stationary point.
\end{theorem}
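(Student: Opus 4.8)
The plan is to exhibit a single hard instance and analyze both methods coordinate-wise, exploiting separability. I take $f(x,y)=\tfrac{\mu}{2}x^2+\tfrac{L}{2}y^2$ as in the main text and, crucially, place \emph{all} stochastic noise in the slow coordinate: $\nabla f(x,y;\xi)=(\mu x+\xi,\,Ly)$ with $\mathbb{E}[\xi]=0$ and $\mathbb{E}[\xi^2]=\sigma^2$. This is $\mu$-strongly convex and satisfies Assumptions~\ref{ass:lipschitz_constant}--\ref{ass:stochastic_variance_bounded} ($L$-smooth, $f^\ast=0$, variance $\sigma^2$). I work in the regime $n\le \sigma^2/\varepsilon\le nL/\mu$, so that $\sigma^2/\varepsilon\ge n$ as required and the Ringmaster noise-floor constraint below is feasible; the resulting separation factor $\sigma^2/(n\varepsilon)$ then ranges over $[1,L/\mu]$. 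The guiding intuition is that the fast ($y$) coordinate caps the usable step size, while the slow ($x$) coordinate is the bottleneck: \algname{Rennala SGD} spends its entire batch $B$ at one point, so its stability cap is $\gamma B\le c/L$, whereas \algname{Ringmaster ASGD} is only capped by its effective delay, $\gamma\lesssim c/(Ln)$, allowing a $B/n$-times larger effective step.

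For the \algname{Rennala SGD} lower bound, the iterates decouple and, with $B=\lceil\sigma^2/\varepsilon\rceil$, obey $x^{k+1}=(1-\gamma B\mu)x^k-\gamma\sum_{i=1}^B\xi^{k,i}$ and $y^{k+1}=(1-\gamma BL)y^k$. For \emph{any} $\gamma$: if $\gamma BL\ge 2$ the deterministic $y$-iteration does not contract ($|1-\gamma BL|\ge1$), so $\|\nabla f\|^2\ge L^2(y^0)^2>\varepsilon$ for a large enough $y^0$ and no $\varepsilon$-point is ever reached; hence $\gamma BL<2$, which forces the slow-coordinate per-step contraction $\gamma B\mu=(\mu/L)\gamma BL<2\mu/L$. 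Since $\mathbb{E}[(x^k)^2]\ge(\mathbb{E}[x^k])^2=(1-\gamma B\mu)^{2k}(x^0)^2$, attaining $\mu^2\mathbb{E}[(x^k)^2]\le\varepsilon$ needs $k=\widetilde{\Omega}(L/\mu)$ macro-updates (using $-\log(1-u)\le 2u$ on $[0,\tfrac12]$). Under the $h_i$-fixed model with $h_i=h$, each macro-update collects $B$ same-point gradients and costs $\ge(B/n)h$ seconds (the counting of Lemma~\ref{lemma:rennala}), giving the bound $\widetilde{\Omega}\!\big(\tfrac{\sigma^2}{n\varepsilon}h\tfrac{L}{\mu}\big)$; the natural $\gamma=1/(BL)$ matches it, with noise floor $\tfrac{\mu\sigma^2}{2BL}=\tfrac{\mu\varepsilon}{2L}\le\varepsilon$, yielding the claimed $\widetilde{\Theta}$.

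For the \algname{Ringmaster ASGD} upper bound, with all speeds equal and threshold $G=B\ge n$ no gradient is ever discarded, and the schedule becomes periodic: after a start-up phase every update uses a gradient of \emph{constant} delay $\delta^k=n-1$. The decoupled recursions are the constant-delay systems $y^{k+1}=y^k-\gamma L\,y^{k-(n-1)}$ and $x^{k+1}=x^k-\gamma\mu\,x^{k-(n-1)}-\gamma\xi^k$. Choosing $\gamma=c/(Ln)$ for small $c$, stability of the delayed recurrence $u^{k+1}=u^k-a\,u^{k-D}$ (valid when $a(D+1)\lesssim1$, here $a(D+1)=\gamma Ln\lesssim1$) gives geometric decay of $y$ and, for $x$, an effective contraction $1-\Theta(\gamma\mu)$ per step with stationary variance $\Theta(\gamma\sigma^2/\mu)$, so $\|\nabla_x f\|^2\approx\gamma\mu\sigma^2=\Theta(\mu\sigma^2/(Ln))\le\varepsilon$ precisely in the chosen regime. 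Reaching $\varepsilon$ then needs $\widetilde{\cO}(1/(\gamma\mu))=\widetilde{\cO}(Ln/\mu)$ updates, and since $n$ workers of speed $h$ produce $n/h$ updates per second, the wall-clock time is $\widetilde{\cO}\!\big(\tfrac{h}{n}\cdot\tfrac{Ln}{\mu}\big)=\widetilde{\cO}(hL/\mu)$.

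The main obstacle is the rigorous treatment of the constant-delay stochastic recurrence governing \algname{Ringmaster ASGD}: I must establish (i) that the symmetric schedule really yields the exact delay $n-1$, (ii) geometric stability together with the $1-\Theta(\gamma\mu)$ rate, and (iii) the $\Theta(\gamma\sigma^2/\mu)$ noise floor uniformly in the delay. I would obtain (ii) from the characteristic polynomial $z^{D+1}-z^{D}+a=0$ (or a quadratic Lyapunov potential summed over the last $D$ iterates via a small-gain argument), and (iii) from a stationary second-moment recursion, checking that the delay contributes only lower-order $O(\gamma\mu D)=O(\mu/L)$ corrections. The \algname{Rennala SGD} side is comparatively routine once the coordinate decoupling and the ``any-$\gamma$'' case split are in place.
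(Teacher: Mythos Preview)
Your construction is correct but takes a genuinely different route from the paper, and the comparison is instructive. The paper uses the same quadratic $f(x,y)=\tfrac{\mu}{2}x^2+\tfrac{L}{2}y^2$, but its key simplification is to set the stochastic gradients equal to the \emph{true} gradient, i.e., actual variance zero. This still satisfies Assumption~\ref{ass:stochastic_variance_bounded} for any declared $\sigma^2>0$, and since the algorithms pick $B=\Theta(\sigma^2/\varepsilon)$ from the declared bound, the separation is driven purely by the stability cap on the fast coordinate. With zero noise the Ringmaster analysis becomes a clean deterministic induction: the paper shows directly that with $\gamma=\tfrac{1}{2Ln}$ the iterates satisfy $x^{pn}\le(1-n\gamma\mu)^p x^0$, bypassing all delayed-recurrence machinery and noise-floor bookkeeping.

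Your approach of injecting genuine noise $\sigma^2$ into the slow coordinate is arguably more ``honest'' as a hardness instance, and your Rennala lower bound via the $y$-stability cap plus Jensen on $\mathbb{E}[x^k]$ goes through cleanly. The cost shows up on the Ringmaster side: you must (i) justify the constant-delay schedule (correct after one round, as you note), (ii) control the delayed stochastic recurrence, and (iii) verify the noise floor $\gamma\mu\sigma^2\lesssim\varepsilon$, which forces the extra regime restriction $\sigma^2/\varepsilon\le nL/\mu$ that the paper's zero-noise construction avoids entirely. None of this is a gap---your acknowledged obstacles are real but tractable---but the paper's trick of decoupling the declared $\sigma^2$ from the actual variance buys a much shorter, fully elementary proof with no regime restriction beyond $\sigma^2/\varepsilon\ge n$.
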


\begin{proof}
  In this construction, we take $ f : \mathbb{R}^2 \to \mathbb{R} $ such that 
  \begin{align}
    f(w \equiv (x, y)) = \frac{\mu x^2}{2} + \frac{L y^2}{2}
    \label{eq:wyFdEHmlCZV}
  \end{align}
  for all $ x, y \in \mathbb{R} $. Moreover, we assume that the stochastic gradients $ \nabla f(w; \xi) $ are equal to the true gradient $ \nabla f(w) $; thus, there is no randomness. Note that \emph{a priori}, both methods do not have this information and therefore must choose $ B = \Theta\left(\frac{\sigma^2}{\varepsilon}\right),$ even though the effective variance is zero.

  By the design of \algname{Rennala SGD}, its algorithm is equivalent to the following steps:
  \begin{align}
    w^{t+1} = w^{t} - \gamma B \nabla f(w^{t})
    \label{eq:SkGZWJehKCRTC}
  \end{align}
  because the workers calculate $B$ gradients in every global round. Each round takes 
  \begin{align*}
    \Theta\left(h \times \frac{B}{n}\right) = \Theta\left(h \frac{\sigma^2}{n \varepsilon}\right)
  \end{align*}
  seconds, because the workers have the computation speed $h$ and $B = \Theta\left(\frac{\sigma^2}{\varepsilon}\right)$ in Theorem~\ref{thm:rennala_computation}.

  It is well known that the sequence~\eqref{eq:SkGZWJehKCRTC} requires
  \begin{align*}
    \widetilde{\Theta}\left(\frac{L}{\mu}\right)
  \end{align*}
  iterations (up to logarithmic factors) to find an $\varepsilon$-solution or $\varepsilon$-stationary point with the function \eqref{eq:wyFdEHmlCZV}, even when the step size $\gamma$ can be tuned. Thus, the computational time complexity of \algname{Rennala SGD} is
  \begin{align*}
    \widetilde{\Theta}\left(\frac{\sigma^2}{n \varepsilon} \times h \times \frac{L}{\mu}\right)
  \end{align*}
  seconds.

  Consider now the steps of \algname{Ringmaster ASGD} w.r.t. the first argument $x$. In this algorithm, we take $\gamma = \frac{1}{2 L n}.$ In the case when the computation time is equal for all workers, the first $n$ steps are
  \begin{align*}
    &x^1 = x^0 - \gamma \mu x^0 = (1 - \gamma \mu) x^0, \\
    &x^2 = x^1 - \gamma \mu x^0 = (1 - 2 \gamma \mu) x^0, \\
    &\vdots \\
    &x^n = x^{n - 1} - \gamma \mu x^0 = (1 - n \gamma \mu) x^0,
  \end{align*}
  because the workers start calculating at the same point and return the gradients at the same time. Notice that $0 \leq x^n \leq \dots \leq x^2 \leq x^1.$ Then, the first worker starts calculating at $x^1,$ the seconds worker starts calculating at $x^2,$ and so on. Therefore, the next steps are
  \begin{align*}
    x^{n + 1} &= x^n - \gamma \mu x^{1} \\
    &= (1 - n \gamma \mu) x^0 - \gamma \mu (1 - \gamma \mu) x^0 = (1 - (n + 1) \gamma \mu + \gamma^2 \mu^2) x^0, \\
    x^{n + 2} &= x^{n+1} - \gamma \mu x^{2} \\
    &= (1 - (n + 1) \gamma \mu + \gamma^2 \mu^2) x^0 - \gamma \mu (1 - 2 \gamma \mu) x^0 = (1 - (n + 2) \gamma \mu + 3 \gamma^2 \mu^2) x^0, \\
    &\vdots \\
    x^{2 n} &= x^{2 n - 1} - \gamma \mu x^{n} =  \left(1 - 2 n \gamma \mu + \frac{n(n+1)}{2} \gamma^2 \mu^2\right) x^0 \leq \left(1 - n \gamma \mu\right)^2 x^0.
  \end{align*}
  For $\gamma = \nicefrac{1}{2 L n},$ we have $0 \leq x^{2n} \leq \dots \leq x^{n + 1} \leq x^n \leq \dots \leq x^2 \leq x^1.$
  Using mathematical induction, assume that $0 \leq x^{k n} \leq \dots \leq x^1$ for some $k \geq 1$ and $x^{p n} \leq \left(1 - n \gamma \mu\right)^p x^0$ for all $p \leq k,$ which is true for $k = 2$ (base case). We now prove it for $k + 1.$ \algname{Ringmaster ASGD} calculates $x^{k n + 1}$ as follows:
  \begin{align*}
    x^{k n + 1} &= x^{k n} - \gamma \mu x^{(k - 1) n + 1},
  \end{align*}
  which ensures that $x^{k n + 1} \leq (1 - \gamma \mu) x^{k n} \leq x^{k n}$ and $x^{k n + 1} \geq x^{(k - 1) n + 1} - \gamma \mu x^{(k - 1) n + 1} \geq 0$ for $\gamma = \nicefrac{1}{2 L n}.$ We can continue:
  \begin{align*}
    x^{k n + 2} &= x^{k n + 1} - \gamma \mu x^{(k - 1) n + 2},
  \end{align*}
  which ensures that 
  \begin{align*}
    x^{k n + 2} &\leq x^{k n + 1} - \gamma \mu x^{k n} \leq (1 - \gamma \mu) x^{k n} - \gamma \mu x^{k n} = (1 - 2 \gamma \mu) x^{k n}
  \end{align*}
  and $x^{k n + 2} \geq x^{(k - 1) n + 2} - \gamma \mu x^{(k - 1) n + 2}.$ Continuing, we have
  \begin{align*}
    x^{(k + 1) n} = x^{(k + 1) n - 1} - \gamma \mu x^{k n}.
  \end{align*}
  One can show that
  \begin{align*}
    x^{(k + 1) n} \leq (1 - (n - 1) \gamma \mu) x^{k n} - \gamma \mu x^{k n} \leq (1 - n \gamma \mu) x^{k n},
  \end{align*}
  and $x^{(k + 1) n} \geq x^{k n} - \gamma \mu x^{k n} \geq 0.$ We have proved the next case, $k + 1,$ of the mathematical induction.
  
  Thus, the sequence $\{x^{p n}\}_{p \geq 2}$ monotonically decreases with the rate 
  \begin{align*}
    x^{p n} \leq \left(1 - n \gamma \mu\right)^p x^0.
  \end{align*}
  Using the same reasoning, we one can show the similar result holds for the second argument $y$ of the function but with $L$ instead of $\mu.$

  Recall that it takes $h$ seconds to calculate $x^{n}$ because $n$ workers work in parallel, it takes $h$ seconds to calculate $x^{2 n},$ and so on. Thus, the computational time complexity of \algname{Ringmaster ASGD} is
  \begin{align*}
    \widetilde{\cO}\left(h \times \frac{L}{\mu}\right)
  \end{align*}
  with step size $\gamma = \frac{1}{2 L n}.$
\end{proof}

\end{document}